\documentclass[10pt]{extarticle}
\usepackage{amsmath,graphicx}
\usepackage{algorithm}
\usepackage{algpseudocode}
\usepackage{amssymb}
\usepackage{amsthm}
\usepackage{xcolor}
\usepackage{caption}
\usepackage{enumitem}
\usepackage{mathtools}
\usepackage{tikz-cd}
\usepackage{setspace,geometry,tablefootnote}
\usepackage{booktabs}
\newtheorem{theorem}{Theorem}

\newtheorem{corollary}{Corollary}

\usepackage{tikz}
\usetikzlibrary{arrows.meta, positioning}
\usepackage{tcolorbox}
\theoremstyle{definition}

\newtheorem{remark}[theorem]{Remark}

\newcommand{\R}{\mathbb{R}}

\algrenewcommand\algorithmicrequire{\textbf{In:}}
\algrenewcommand\algorithmicensure{\textbf{Out:}}
\algrenewcommand\algorithmicindent{0.8em}

\newcommand{\E}{\mathbb{E}}

\newcommand{\conv}{\mathrm{conv}}
\newcommand{\clconv}{\overline{\mathrm{conv}}}

\newcommand{\eps}{\varepsilon}
\DeclareMathOperator*{\argmin}{arg\,min}

    \newcommand{\dist}{\mathrm{dist}}
    \DeclareMathOperator{\supp}{supp}
\newcommand{\HK}{\mathcal{H}_K}
\newcommand{\Gen}{\mathcal{G}}
\newcommand{\epsinf}{\eps^{\star}_{\infty}}
\newcommand{\Sdual}{\mathcal{S}_{\HK}}
\newcommand{\Bq}[1]{\mathcal{B}_{#1}}
\newcommand{\clip}{\mathrm{cl}_{[-1,1]}}

\allowdisplaybreaks
\usepackage{tikz}
\usetikzlibrary{arrows.meta,positioning,backgrounds,fit,shadows.blur,calc}
\usepackage{hyperref}
\tcbuselibrary{theorems, skins, breakable}

\newtcbtheorem[]{assumptionBox}{Assumption}{
  enhanced, breakable, colback=black!1, colframe=pink!42!black,
  arc=2mm, boxrule=0.5pt,
  shadow={1.5pt}{-1.5pt}{0pt}{black!25!white},
  left=1em, right=1em, top=1.5ex, bottom=1ex,
  attach boxed title to top left={xshift=15pt, yshift=-6pt, yshifttext=0ex},
  boxed title style={colback=pink!40, colframe=pink!42!black, boxrule=0.5pt, arc=1mm,},
  fonttitle=\bfseries, coltitle=black,
  title style={interior style={top color=pink!40, bottom color=pink!40}}
}{ass}

\newtcbtheorem{problemBox}{}{
  colback=white, colframe=lightgray, fonttitle=\bfseries, coltitle=black,
  boxrule=0.8pt, width=12cm,
  enlarge left by=\dimexpr(\linewidth-12cm)/2\relax,
  enlarge right by=\dimexpr(\linewidth-12cm)/2\relax,
}{pb}

\newtcbtheorem{lemmaBox}{Lemma}{
  enhanced, breakable, colback=black!1, colframe=black!42!black,
  arc=2mm, boxrule=0.5pt,
  shadow={1.5pt}{-1.5pt}{0pt}{black!25!white},
  left=1em, right=1em, top=1.5ex, bottom=1ex,
  attach boxed title to top left={xshift=15pt, yshift=-6pt, yshifttext=0ex},
  boxed title style={colback=black!10, colframe=black!42!black, boxrule=0.5pt, arc=1mm,},
  fonttitle=\bfseries, coltitle=black,
  title style={interior style={top color=black!10, bottom color=black!10}}
}{lem}

\newtcbtheorem{theoremBox}{Theorem}{
  enhanced, breakable, colback=black!1, colframe=black!42!black,
  arc=2mm, boxrule=0.5pt,
  shadow={1.5pt}{-1.5pt}{0pt}{black!25!white},
  left=1em, right=1em, top=1.5ex, bottom=1ex,
  attach boxed title to top left={xshift=15pt, yshift=-6pt, yshifttext=0ex},
  boxed title style={colback=black!10, colframe=black!42!black, boxrule=0.5pt, arc=1mm,},
  fonttitle=\bfseries, coltitle=black,
  title style={interior style={top color=green!10!white, bottom color=green!10!white}}
}{thm}

\newtcbtheorem{propBox}{Proposition}{
  enhanced, breakable, colback=black!1, colframe=black!42!black,
  arc=2mm, boxrule=0.5pt,
  shadow={1.5pt}{-1.5pt}{0pt}{black!25!white},
  left=1em, right=1em, top=1.5ex, bottom=1ex,
  attach boxed title to top left={xshift=15pt, yshift=-6pt, yshifttext=0ex},
  boxed title style={colback=black!10, colframe=black!42!black, boxrule=0.5pt, arc=1mm,},
  fonttitle=\bfseries, coltitle=black,
  title style={interior style={top color=green!10!white, bottom color=green!10!white}}
}{prop}

\newtcbtheorem{defBox}{Definition}{
  enhanced, breakable, colback=black!1, colframe=black!42!black,
  arc=2mm, boxrule=0.5pt,
  shadow={1.5pt}{-1.5pt}{0pt}{black!25!white},
  left=1em, right=1em, top=1.5ex, bottom=1ex,
  attach boxed title to top left={xshift=15pt, yshift=-6pt, yshifttext=0ex},
  boxed title style={colback=black!10, colframe=black!42!black, boxrule=0.5pt, arc=1mm,},
  fonttitle=\bfseries, coltitle=black,
  title style={interior style={top color=green!10!white, bottom color=green!10!white}}
}{def}

\begin{document}
\title{Constrained Learning with Universally Learnable Concept Classes\vspace{-8pt}}
\author{
Herlock Rahimi\textsuperscript{1}\quad
Spyridon Pougkakiotis\textsuperscript{2}\quad
Dionysis Kalogerias\textsuperscript{1}
\\[0.5em]
\textsuperscript{1}Yale University
\quad
\textsuperscript{2}King's College London
}
\maketitle
\begin{abstract}
We study constrained statistical learning over infinite-dimensional hypothesis
classes in the fully nonconvex setting, and establish \emph{universal PACC
learnability} of the solutions of dual algorithms: Probably Approximately
Correct on Constraints, guaranteeing optimality and constraint satisfaction at
once. This strengthens near-PACC results, whose feasibility residual no amount
of data can remove. Optimality is caught between generalization, governed by
Rademacher complexity and favoring small classes, and strong Lagrangian duality,
which rests on Lyapunov convexity for vector measures and needs decomposability,
a demand pulling the other way. We reconcile the two by posing the population
problem over a universal RKHS $\HK$, dense in a decomposable envelope, and
learning over norm balls of growing radius. This yields the \emph{Tikhonov
complexity} $\mathfrak T^{\eps}_{n}$, the least RKHS norm reaching an
$\eps$-optimal Lagrangian level set; we prove it finite, obtain exact
learnability of the optimal value, and make the sample threshold explicit and
polynomial in $1/\eps$ under a source condition. Feasibility is harder: absent
convexity the Lagrangian may not attain its infimum, and dual information pins
down only an \emph{averaged} constraint-risk vector, not the risks of any
returned predictor. We introduce the \emph{closure--realization gap} $\epsinf$,
an index of how well $\HK$ retrieves feasible solutions from dualization; it is
a property of the problem, not of a modeling choice. Learnability is exact when
$\epsinf=0$, in particular under dual differentiability, and near-PACC with
residual exactly $\epsinf$ otherwise. Finally, no distribution-free threshold
exists already in the unconstrained specialization, so universality is the
canonical frame for dual algorithms over large hypothesis classes.
\end{abstract}
\noindent\textbf{Keywords:} Constrained statistical learning, Lagrangian duality,
nonconvex optimization, universal learnability, PACC (probably approximately correct
with constraints), reproducing kernel Hilbert spaces
\section{Introduction}\label{sec:Intro}

Modern machine learning systems are increasingly deployed in settings where nominal
prediction performance alone is insufficient: the system must instead meet explicit
operational, societal, or safety specifications. In supervised learning problems such
as classification and regression, one minimizes a primary prediction risk while
simultaneously enforcing fairness, robustness, coverage, calibration, privacy, or
safety requirements, typically expressed through expectation- or risk-type functionals
evaluated under (possibly) different data distributions
\cite{Agarwal18Fair,Hardt16Equality,Cotter19Fair,Zafar17Fairness,
Hashimoto18Fairness,Donini18Fair,Mehrabi21Survey,Duchi18DRO,
EsfahaniKuhn18DRO,DelageYe10DRO,BenTalNemirovski98Robust,
Woodworth17Learning,WilliamsonMenon19Fairness,Narasimhan18Consistent,
Kearns18Preventing,CatonHaas20FairMLSurvey}.
The same template governs reinforcement learning and stochastic control, where one
optimizes a generally nonconvex performance objective subject to constraints on safety,
risk sensitivity, stability, or resource consumption
\cite{Tamar15CVaR,Chow18LyapunovRL,Prashanth16CvarRL,
ShapiroDentchevaRuszczynski14,RockafellarUryasev00,Altman99CMDP,
Achiam17CPO,GarciaFernandez15SafeRL}.

Two features of the contemporary setting make such problems hard in a way their
classical counterparts were not. First, the hypothesis classes involved are extremely
large and highly expressive, above all those induced by deep neural networks, which buy
remarkable approximation and representation power at the price of large-scale nonconvex
optimization
\cite{Hornik89Approximation,Cybenko89Approximation,Goodfellow16Deep,
Zhang21Understanding,AllenZhu19Convergence,Barron93Approximation,
Mhaskar96Approximation,Telgarsky16Expressive,PoggioMhaskar17Theory,
Yarotsky17Approximation,Choromanska15Loss}.
Second, and more fundamentally, the loss functionals through which the specifications
themselves are expressed --- error rates, coverage, recall, calibration, fairness
metrics --- are already nonconvex in the predictor; as we argue in
Section~\ref{subsec:related}, this nonconvexity is intrinsic to the specifications
rather than an artifact of any particular parameterization. Constrained statistical
learning over rich hypothesis spaces is therefore genuinely nonconvex, and no better
choice of model will convexify it.

Faced with such a problem, the most natural route to a solvable one is constraint
scalarization through \textbf{Lagrangian duality}
\cite{Rockafellar1970,EkelandTemam1999,BoydVandenberghe2004}: one attaches a nonnegative
multiplier to each constraint, forms the Lagrangian, and passes to the associated dual
problem, thereby trading hard constraints for principled linear penalties and exposing
a saddle-point structure that the primal--dual machinery of modern large-scale learning
is built to exploit
\cite{Cotter19Fair,NedicOzdaglar09Subgradient,ShapiroDentchevaRuszczynski14,
Chamon20PACC,BarthakurChamon2025Equality,Bertsekas99Nonlinear,NocedalWright06}.
Everything downstream, however, rests on a single prior question: does the dual problem
faithfully characterize the primal one? Concretely, one asks for a \textbf{zero duality
gap}, meaning that the primal and dual optimal \textbf{values} coincide, and, beyond
that, for the existence of optimal multipliers; together these constitute strong
duality, in the terminology of Section~\ref{subsec: dual ERM}. Neither property is
automatic once convexity is absent, and their failure is not a technicality: a
primal--dual procedure then optimizes a dual surrogate whose value differs from that of
the constrained problem it was meant to solve, and dual-optimal multipliers certify
neither feasibility nor near-optimality of any predictor. Establishing strong duality at
the population level is therefore the first and most fundamental requirement for any
Lagrangian-based approach to be well founded.

In the convex case that requirement is discharged by convexity of the risk functionals
together with a constraint qualification such as Slater's condition
\cite{Rockafellar1970,EkelandTemam1999,BoydVandenberghe2004,Zalinescu02Convex}. Without
convexity a fundamentally different mechanism is required, and the one we exploit, as do
the works closest to ours, is Lyapunov-type convexity for vector measures
\cite{Lyapunov1940,Uhl69Range,Knowles1975Lyapunov,KhanSagara2014Lyapunov,Aumann1965}.
Its content is that when the hypothesis class is decomposable and the feature marginals
are nonatomic, the set of attainable risk vectors is convex even though the individual
functionals are not, so that the classical separating-hyperplane argument again delivers
strong duality \cite{GiannessiImageSpace,KalogeriasPougkakiotis22}. We develop this
mechanism in Section~\ref{subsec: the tension}; convexification-through-richness
arguments of the same type also underlie the duality analyses of constrained learning
\cite{Chamon2020EmpiricalDuality,Chamon_CL2023} and the zero-duality-gap result for
constrained reinforcement learning \cite{Paternain19ZeroDuality}.

Strong duality, however, equates two \emph{population} values, and so says nothing yet
about a learner that never sees the population. In a strictly data-driven setting the
distributions are unknown and are replaced by empirical measures built from independent
datasets, one per distribution. The learner studied in this paper, Dual Empirical Risk
Minimization (DERM, Section~\ref{subsec: dual ERM}), first maximizes the empirical dual
function over the nonnegative multipliers and then minimizes the empirical Lagrangian,
at the resulting multipliers, over a sample-size-dependent hypothesis class. The
question then becomes statistical: at which sample sizes are DERM outputs approximately
\textbf{optimal} for the population problem?

In unconstrained learning, such guarantees are classically formulated through PAC
learnability and uniform convergence
\cite{Valiant84PAC,VapnikChervonenkis1971,BlumerEhrenfeuchtHausslerWarmuth1989,
Vapnik98,BartlettMendelson02,ShalevShwartzBenDavid2014,MohriRostamizadehTalwalkar18}.
The constrained setting introduces a structural complication with no unconstrained
analogue: the effective statistical difficulty of the dual problem is governed by
population-level quantities. Most prominently, the Slater margin controls the size of
the dual feasible region and thereby enters every downstream estimate, yet it is a
property of the underlying distributions, not of the losses or the hypothesis class
alone. The appropriate notion for constrained dual learning is therefore not classical
distribution-free PAC learnability \cite{Vapnik98}, but the universal learnability
framework of \cite{bousquet2020theoryuniversallearning}: consistency for every fixed but
arbitrary tuple of distributions, with sample thresholds allowed to depend on those
distributions. We formalize this as agnostic universal PACC learnability
(Definition~\ref{def:nonuniform-pacc}), \emph{prove} in
Proposition~\ref{prop:no-distribution-free} that no distribution-free threshold exists
already in the unconstrained specialization of our setting(due to the high expressivity of the RKHS), and argue in
Section~\ref{sec:universal learnable classes under constraints} that existing PAC-type
results for constrained learning
\cite{Chamon20PACC,Chamon20csl,BarthakurChamon2025Equality} are necessarily of this
universal nature as well, even when stated in a uniform sense, since their Slater-type
constraint qualifications implicitly restrict the class of admissible distributions.

With both requirements now in view, their incompatibility becomes apparent.
Population-level strong duality (\textbf{R1}) and statistical generalizability of
empirical dual-domain predictors (\textbf{R2}; see Section~\ref{subsec: the tension})
pull the choice of hypothesis class in opposite directions. Requirement \textbf{R1}
demands largeness in the precise form of decomposability, closure under measurable
splicing, which forces the class to contain functions of arbitrary complexity: the space
of all measurable functions, the Lebesgue spaces, and Orlicz spaces qualify, whereas
finite-dimensional parametric families, smooth function classes, and, most importantly
here, reproducing kernel Hilbert spaces and their norm balls do not. Requirement
\textbf{R2} demands the opposite: a controlled, decaying statistical complexity, such as
vanishing Rademacher complexity
\cite{BartlettMendelson02,LedouxTalagrand91,MohriRostamizadehTalwalkar18}, which holds
for norm-constrained kernel balls and finite-complexity classes but fails for
decomposable spaces. Prior work fixes a single finite-complexity class for both roles
\cite{Chamon20csl,Chamon20PACC,Chamon_CL2023,BarthakurChamon2025Equality,Boero25ALCoLe},
and this compromise manifests as irreducible approximation constants in the resulting
generalization bounds (see, e.g., \cite[Thm.~1]{Chamon_CL2023} and
\cite[Thm.~3.1]{BarthakurChamon2025Equality}): the guarantees are only near-PACC, and no
amount of data drives the residual to zero. Resolving this structural tension is the
central objective of the present work.

\medskip
\noindent\textbf{Our approach: variable-complexity dense kernel classes.}
The reconciliation rests on a simple observation: a hypothesis class need not be
decomposable in order to inherit the duality theory of a decomposable space, it need only
be \emph{dense} in one\cite{Aronszajn1950,Steinwart2001,SteinwartChristmann2008}. We therefore pose the population problem over a universal
reproducing kernel Hilbert space ,
which is dense in a Lebesgue-space envelope and inherits strong duality from that
envelope by density alone \cite{KalogeriasPougkakiotis22}, yet is far better behaved
statistically than the envelope itself. Concretely, we use the established result in \cite{KalogeriasPougkakiotis22} that asserts zero duality gap and
dual attainment for the kernel problem over the dense RKHS~\eqref{eq:primal formulation RKHS} and its
decomposable envelope~\eqref{eq:primal formulation envelope}, together with the pointwise
identity of their dual functions, so that the two dual optimal sets coincide
(Theorem~\ref{thm:dense-transfer}); this extends Lyapunov-based nonconvex duality from decomposable spaces to nondecomposable but dense
concept classes, and dispenses with the parametric convex-hull approximation parameters
of \cite{Chamon20PACC,Chamon_CL2023}. The picture is completed at the empirical stage by
restricting the learner to closed kernel norm balls, whose statistical complexity is
classical and explicitly controlled
\cite{BartlettMendelson02,MohriRostamizadehTalwalkar18}, and by letting the ball radius
grow with the sample size: fast enough to exhaust the kernel space in the limit, slowly
enough that the statistical complexity still vanishes. The device is reminiscent of
structural risk minimization \cite{Vapnik98,ShaweTaylor98SRM}, but is deployed here in
service of duality rather than model selection.

\medskip
\noindent\textbf{From dual values to feasible predictors.}
Closing the duality gap settles the optimal value, but the value is not what a downstream
user of the framework actually needs. What such a user needs is an explicit predictor
that is simultaneously \textbf{feasible} for the population constraints and near-optimal.
In the convex, attained regime the two are the same question: an optimal multiplier comes
with a Lagrangian minimizer whose constraint risks are precisely a dual subgradient, and
complementary slackness converts dual optimality into feasibility of that minimizer.
Without convexity the argument breaks twice over. The Lagrangian may fail to attain its
infimum, so there may be no predictor to certify in the first place; and dual
information, by the classical representations of subdifferentials of infimal and integral
functionals \cite{Aumann1965,CastaingValadier1977,IoffeTihomirov1979}, pins down only an
\emph{averaged} constraint-risk vector, blended across several distinct near-minimizers,
rather than the risk vector of any one predictor the learner could return. This is the
same structural obstruction that leads
\cite{Cotter19TwoPlayer,Agarwal18Fair} to return \emph{randomized} predictors; we discuss
the relationship in Section~\ref{subsec:related}. Feasibility must therefore be
established by a variational argument conducted on the dual side, and doing so occupies
the second half of this paper.

What makes such an argument worth conducting is that the entire residual it leaves behind
is governed by a single population-level geometric constant, introduced in
Section~\ref{subsec:feasibility-geometry} and called the \emph{closure--realization gap}: how well dually-retrieved solutions are feasible. To be more specific it is
the distance between the superdifferential of the dual function at an optimal multiplier
and the constraint risks actually realized by optimal primal solutions, both taken at the
\textbf{population} level. The gap is a property of the problem rather than of any
algorithm; it vanishes under benign dual geometry, and otherwise measures exactly how far
the kernel space is from permitting the recovery of a feasible certificate from dual
information alone. Our feasibility analysis of DERM shows that this quantity, and only
this quantity, controls the feasibility of what DERM returns. This runs in the opposite
direction to the near-PACC guarantees of \cite{Chamon20PACC}, where the residual is
produced by the algorithm and its parameterization, may be arbitrarily large, and is not
controlled by any intrinsic feature of the problem. The additional price we pay is the
Empirical Recovery Condition (Section~\ref{subsec:recovery}), an assumption on the
asymptotic behavior of the empirical primal solutions --- whose interaction with the gap
we examine critically in Remark~\ref{rem:scope} and Appendices.

One gap still separates such an analysis from a learning guarantee. Variational arguments
of this kind are asymptotic by nature and deliver their conclusions along subsequences,
whereas Definition~\ref{def:nonuniform-pacc} demands a sample size beyond which
\emph{every} output of the learner is feasible and near-optimal with high probability. We
therefore upgrade the subsequential conclusion to the full sequence of sample sizes, and
then convert the resulting threshold, which is initially realization-dependent, into a
single threshold valid with high probability, by an elementary continuity-from-below
argument applied to the tail-supremum envelope (equivalently, by Egorov's theorem
\cite{Egorov1911}).

\medskip\noindent The main contributions of the paper are as follows.
\begin{enumerate}[itemsep=3pt]
\item \textbf{Universal (near-)PACC learnability, with exactness characterized and both
regimes inhabited.} Theorem~\ref{thm:informal-section3} gives exact universal learnability of
the optimal value together with
full-sequence feasibility and optimality within the worst-case closure--realization gap over
the population dual optimal set, sharpened under uniqueness of the dual optimum; part~(vi)
gives exact universal PACC learnability of the kernel space whenever that gap vanishes
(e.g.\ under dual differentiability). Unlike the near-PACC guarantees of
\cite[Thm.~1]{Chamon_CL2023} and \cite[Thm.~3.1]{BarthakurChamon2025Equality}, whose
residuals stem from fixed parametric approximation and cannot vanish without extra analysis, our residual is a
single population-geometric constant that provably equals zero in benign geometry
(Lemma~\ref{lem:eps-star-properties}). 

\item \textbf{Feasibility certificates without primal attainment, in the fully nonconvex
regime.} We prove dual-side existence of sign- and complementarity-compatible subgradients
(Lemma~\ref{lem:sign-comp-subgrad}) and establish asymptotic subgradient tracking,
feasibility, and near-optimality of DERM outputs up to the closure--realization gap
(Lemma~\ref{lem:multi-qual-conv},
Corollaries~\ref{cor:multi-qual-feas}--\ref{cor:multi-primal-guarantee}). No convexity of
the losses, the constraints, or the hypothesis class is assumed; the only regularity
required is the argmin-recovery condition
(Assumption~\ref{ass:multi-argmin-recovery}), which we isolate and give checkable
sufficient conditions for (Proposition~\ref{prop:sharpness}). We also record, in
Proposition~\ref{prop:attainment}, exactly what that condition costs: in the exact regime
it \emph{implies} attainment of the constrained primal, so the ``no primal attainment''
claim is a property of the value arc alone.

\item \textbf{A fully explicit dual-side statistical theory, and an explicit rate under a
source condition.} Empirical strict feasibility is \emph{derived} from population Slater
beyond an explicit closed-form sample threshold rather than assumed
(Lemma~\ref{lem:slater-empirical}, sharpened to a pathwise statement in
Remark~\ref{rem:slater-pathwise}); population and empirical dual optima exist and are
confined uniformly to an explicit compact multiplier set
(Lemma~\ref{lem:bounded-multipliers}); the empirical dual deviates from its population
counterpart at an explicit closed-form rate obtained from first-principles Rademacher and
contraction estimates (Lemmas~\ref{lem:dual-pac-fixedQ} and~\ref{lem:multi-hp-conv}); and
Corollary~\ref{cor:explicit-rate} converts the Tikhonov complexity into an explicit
polynomial sample threshold under a standard source condition on the regularization path,
identifying our $\mathfrak T^\eps_n$ with the classical approximation-error function of
RKHS learning theory \cite{SmaleZhou07,CaponnettoDeVito07,SteinwartChristmann2008}.

\item \textbf{Necessity of the universal framework, proved.}
We argue, through the Slater-analysis, of previous works that the universal learning framework is the canonical one for dual algorithms due to the inherent dependency of dual geometry to the distributions. Moreover, in Proposition~\ref{prop:no-distribution-free} we prove that in the unconstrained
specialization of our setting no distribution-free sample threshold exists at fixed
$(\eps,\delta)$ due to the high expressivity of the RKHS. Universal PAC learnability
\cite{bousquet2020theoryuniversallearning}, thus is the canonical, not merely
convenient, framework here, and prior ``uniform'' statements in this literature are
implicitly universal.
\end{enumerate}

 \medskip\noindent\textbf{Tools developed along the way.}
The analysis we perform in this problem is quite new. Hence we borrow form and develop new notions and tools to do it. First,
\emph{(i) Tikhonov complexity} (Definition~\ref{def:tik}) that measures
quantifying the kernel norm required to reach near-optimal population Lagrangian level
sets at the multipliers DERM selects. 
\emph{(ii) Image-set deviation and closure--realization gap}
(Definitions~\ref{def:image-set-deviation} and~\ref{def:multi-eps-star}), built on the
Levin--Valadier representation to analyze feasibility.
\emph{(iii) Subsequential-to-full-sequence upgrade and tail uniformization} that converts subsequential results into eventual full-sequence
bounds (Lemma~\ref{lem:full-seq-feas}, Corollary~\ref{cor:full-seq-conv}), and a monotone
tail argument converts realization-dependent eventual thresholds into a single
high-probability finite-sample threshold (Lemma~\ref{lem:egorov-unif},
Corollary~\ref{cor:uniform-threshold}).

\subsection{Related work and scope}\label{subsec:related}

\paragraph{Randomized predictors and the averaging obstruction.}
The observation that dual information in a nonconvex constrained learning problem
certifies only an \emph{average} of constraint risks, rather than the risk vector of any
single returned predictor, is not new to this paper; it is the reason the two-player and
reductions approaches of \cite{Cotter19TwoPlayer,Agarwal18Fair,Cotter19Fair} return
\emph{randomized} (mixed) classifiers rather than deterministic ones. In that line of work
the mixture is carried by an explicit distribution over finitely many pure predictors,
produced by the no-regret dynamics, and the guarantee is stated for the mixture. The
relationship to the present paper is direct and worth stating: their finite mixtures play
the role that Lyapunov splicing plays here. Over the decomposable envelope $\Gen$ every
supergradient of the dual is realized by an explicit \emph{splice} of Lagrangian
minimizers (Lemma~\ref{lem:eps-star-properties}(iii)), which is a
measurable-partition analogue of a finite mixture and requires no randomization. What
$\epsinf$ measures is exactly the failure of the smaller, nondecomposable class $\HK$ to
realize \emph{within itself} what the envelope realizes by splicing. Our contribution
relative to \cite{Cotter19TwoPlayer,Agarwal18Fair} is therefore not the identification of
the obstruction but its \emph{quantification} by a single population constant, together
with the observation that the constant vanishes under dual differentiability, in which
case a deterministic predictor suffices.

\paragraph{Tikhonov complexity and the approximation-error function.}
The quantity $R_\eps(\mu) = \inf\{\|\phi\|_{\HK} : L(\phi,\mu) \le d(\mu)+\eps\}$
introduced in Section~\ref{subsec:tikhonov} is the inverse function of the classical
\emph{approximation error} (or regularization-path) functional
$\mathcal A(\lambda,\mu) = \inf_{\phi \in \HK}\{L(\phi,\mu)-d(\mu)+\lambda\|\phi\|^2_{\HK}\}$
that governs the bias term in kernel learning
\cite{SmaleZhou07,CaponnettoDeVito07,SteinwartChristmann2008}. We make this connection
explicit in Section~\ref{subsec:explicit-rates}, and use it to import the standard source
condition and thereby obtain an explicit polynomial threshold
(Corollary~\ref{cor:explicit-rate}). We are not aware of a prior use of this object as a
device for controlling a \emph{duality} gap rather than an estimation--approximation
tradeoff.

\paragraph{Augmented Lagrangian methods.}\label{par:alm}
Augmented Lagrangian (AL) methods
\cite{Hestenes69Multiplier,Powell69Multiplier,Rockafellar74,Rockafellar76Augmented}
restore strong duality in nonconvex programs by augmenting the Lagrangian with a penalty
on constraint violations, and have recently been imported into constrained learning by
\cite{Boero25ALCoLe}, which establishes learnability under a set of constraint
qualifications (\cite[Assumptions~2.5--2.7]{Boero25ALCoLe}). These qualifications relax
nonatomicity-type conditions at the expense of more restrictive second-order-type
requirements, and the resulting guarantees remain distribution-dependent, hence universal
rather than uniform in character. A further practical drawback is the need to tune the
penalty parameter alongside the multipliers, a procedure prone to ill-conditioning. Our
population-level strong duality results offer a complementary perspective: over dense
RKHS classes, the \emph{standard} Lagrangian already exhibits a zero duality gap, so any
dual-based scheme is sound at the population level without penalty augmentation, and our
feasibility analysis produces explicit primal certificates for DERM outputs without
second-order constraint qualifications. 

\paragraph{On the issue of convexification.}
A natural question is whether the nonconvex analysis developed herein is necessary, or
whether one could instead convexify the problem and reduce it to a setting where classical
duality theory applies. In the convex setting, Lagrangian methods admit a precise
justification: under standard regularity conditions, strong duality holds, the primal and
dual optimal values coincide, and saddle points recover optimal solutions together with
certificates of constraint satisfaction
\cite{Rockafellar1970,BoydVandenberghe2004,ShapiroDentchevaRuszczynski14,Zalinescu02Convex}.
However, the constrained learning setting of \eqref{eq:primal formulation f} is
fundamentally more delicate, for two compounding reasons. First, calibration theory
justifies convex surrogate losses (logistic, hinge, cross-entropy) only with respect to
the primary prediction risk \cite{BartlettJordanMcAuliffe06,TewariBartlett07}; it does not
extend to general constraint functionals. Constraints expressed through recall, coverage,
false positive rates, or fairness metrics such as equalized odds depend explicitly on the
geometry of the induced decision region, and there is in general no analogue of
calibration theory guaranteeing that a convex surrogate relaxation preserves feasibility
of these original constraints
\cite{Agarwal18Fair,Cotter19Fair,Narasimhan18Consistent}. Solving a convexified version of
the problem may therefore produce solutions that are infeasible for, or overly
conservative relative to, the intended constraint geometry. Second, and more critically
for our setting, the objective and constraints are evaluated under \emph{different} data
distributions. Convex surrogates primarily control global margin behavior under a single
reference measure, whereas feasibility here depends on fine geometric properties of the
induced decision regions under multiple interacting distributions simultaneously, a regime
closely related to dataset shift and distributional robustness
\cite{BenDavid10Theory,QuioneroCandela09DatasetShift}. These two difficulties together
imply that nonconvexity in \eqref{eq:primal formulation f} is not an artifact of neural
network parameterizations, but is intrinsic to the representation of practically relevant
constraint functionals under heterogeneous distributions. It is precisely this intrinsic
nonconvexity that necessitates the feasibility analysis of
Sections~\ref{sec:feasibility-indices}--\ref{subsec:main-results}: in the convex, attained
case, feasibility of Lagrangian minimizers is immediate, and no Levin--Valadier-type
representation or tail uniformization is required at all.

\paragraph{Outline.}
Section~\ref{sec:problem-setup} formulates the constrained statistical learning problem
and the DERM learner (Section~\ref{subsec: dual ERM}), introduces agnostic universal PACC
learnability (Definition~\ref{def:nonuniform-pacc}) and proves that the universal
qualifier is necessary (Proposition~\ref{prop:no-distribution-free}), makes the
duality--generalizability tension precise (Section~\ref{subsec: the tension}), and
describes its resolution through dense RKHSs and norm-ball restrictions
(Section~\ref{subsec: reconcile tension}). Section~\ref{sec:pac-dual-theorem} instantiates
the RKHS primal, dual, and DERM problems, states the standing assumptions
(Assumptions~\ref{ass:non-atomic}--\ref{ass:multi-argmin-recovery}), introduces the
Tikhonov complexity and the feasibility indices, states the main theorem
(Theorem~\ref{thm:informal-section3}), and gives the explicit-rate corollary
(Section~\ref{subsec:explicit-rates}). Section~\ref{sec:proofs} carries out the full
technical program. Appendix~\ref{sec:on-last-assumption} gives checkable sufficient
conditions for the recovery condition, together with a critical discussion
(Remark~\ref{rem:scope}) of the resulting scope of the two regimes of
Theorem~\ref{thm:informal-section3}; Appendix~\ref{sec:worked-example} works one instance
end to end.
\section{Problem Setting and the Duality-Generalizability Tension}
\label{sec:problem-setup}

We begin by formulating the class of constrained statistical learning problems that will
serve as the backbone of the paper. The purpose of this section is threefold. First, we
introduce the population and empirical constrained learning problems of interest, the
latter corresponding to our choice of a data-driven learning rule
(Section~\ref{subsec: dual ERM}). Second, we isolate two key structural requirements that
a reasonable learning theory for constrained problems should address simultaneously,
namely \textit{population-level strong duality} and \textit{concept generalizability from
finite samples}, leading to the notion of \textit{(agnostic) universal learnability} of
concept classes \textit{under constraints}
(Section~\ref{sec:universal learnable classes under constraints}). Third, we explain why
these two requirements pull the choice of a suitable hypothesis class in opposite and
conflicting directions (Section~\ref{subsec: the tension}), thereby motivating the use of
norm-ball-restricted universal RKHSs as concept classes within a canonical,
variable-complexity framework (Section~\ref{subsec: reconcile tension}).

To this end, let \(\mathcal X \subseteq \mathbb R^d\) denote an input (feature) space and
let \(\mathcal Y\) be either a finite set (classification) or a subset of \(\mathbb R\)
(regression). For each \(i\in\mathbb{N}_m\), let \(\mathcal D_i\) be a probability measure
on \(\mathcal X\times\mathcal Y\), and let \(\ell_i:\mathbb R\times\mathcal Y\to\mathbb R\)
be measurable losses. For a measurable predictor \(\phi:\mathcal X\to\mathbb R\) in an
appropriately chosen concept (hypothesis) class $\mathcal{F}$, define population
functionals
\[
\mathcal R_i(\phi)\triangleq\mathbb E_{(X,Y)\sim\mathcal D_i}[\ell_i(\phi(X),Y)],\quad i\in\mathbb{N}_m.
\]
At this point, we tacitly assume that all expectations are well-defined and finite for
every $\phi\in\mathcal{F}$. Explicit conditions on both losses and hypothesis class will
be enforced in due course (see Section~\ref{sec:pac-dual-theorem}).

Under those circumstances, we consider inequality constrained learning problems of the form
\begin{align}
\label{eq:primal formulation f}
\boxed{
P_{\mathcal F}^\star
\triangleq
\inf_{\phi\in\mathcal F}
\mathcal R_0(\phi)
\quad\text{subject to}\quad
\mathcal R_i(\phi)\le 0,\, i\in\mathbb{N}^+_m,\tag{CSL}
}
\end{align}
under a mild regularity condition \(P_{\mathcal F}^\star> -\infty\).
Problem~\eqref{eq:primal formulation f} captures the task of optimizing a performance
criterion \(\mathcal R_0\) while enforcing side constraints expressed through functionals
\(\mathcal R_1,\dots,\mathcal R_m\), evaluated under possibly distinct, even completely
unrelated data-generating distributions.

\begin{remark}[Constraint levels]\label{rem:constraint-levels}
Constraints of the form $\mathcal R_i(\phi)\le c_i$ with prescribed levels $c_i \in \R$
are subsumed by \eqref{eq:primal formulation f} upon replacing $\ell_i$ by
$\ell_i - c_i$, which preserves Assumption~\ref{ass:loss-regularity} with $M$ replaced by
$M + \max_i|c_i|$. We therefore normalize $c_i \equiv 0$ throughout the body of the paper,
including in the appendices.
\end{remark}

\subsection{Dual ERM} \label{subsec: dual ERM}

A standard and arguably most natural way to tackle
problem~\eqref{eq:primal formulation f} is through some form of \textit{constraint
scalarization}, the simplest form of which is realized within the framework of
\textit{Lagrangian duality} \cite{Rockafellar1970,EkelandTemam1999,BoydVandenberghe2004}.
At the population level, we define the \textit{Lagrangian}
$L_{\mathcal{F}}:\mathcal{F}\times \mathbb{R}^m \rightarrow \mathbb{R}$ and \textit{dual
function} $d_\mathcal{F}: \mathbb{R}^m \rightarrow [-\infty, \infty)$ as
\begin{align}
L_{\mathcal{F}}(\phi,\mu)&
\triangleq
\mathcal R_0(\phi)+\sum_{i=1}^m \mu_i\mathcal R_i(\phi),\nonumber
&
d_{\mathcal F}(\mu)&
\triangleq
\inf_{\phi\in\mathcal F}L_{\mathcal{F}}(\phi,\mu),\nonumber
\end{align}
where \(\mu\in\mathbb R^m\) is a multiplier vector associated with the constraints of the
\textit{primal problem} \eqref{eq:primal formulation f}. If $\mu \in \mathbb{R}^m_+$, then
\(d_{\mathcal F}(\mu)\le P_{\mathcal F}^\star\), motivating maximization of the dual
function over the positive orthant via the \textit{dual problem}
\[
D_{\mathcal F}^\star
\triangleq
\sup_{\mu\in\mathbb R_+^m} d_{\mathcal F}(\mu) \in [-\infty,\infty).
\]
Element $D_{\mathcal F}^\star$ is called the \textit{(optimal) dual value}, for which
\(D_{\mathcal F}^\star\le P_{\mathcal F}^\star\), i.e., \textit{weak duality} always holds.
When equality holds, problem \eqref{eq:primal formulation f} is said to exhibit a
\textit{zero duality gap}, and when optimal dual variables exist, problem
\eqref{eq:primal formulation f} is said to exhibit \textit{strong duality}. In this case,
the dual formulation provides a principled route to solving the primal problem by
replacing hard constraints with properly chosen linear penalties, thus justifying
statistical learning via principled regularization, and revealing a related saddle-point
structure.

In a strictly data-driven setting, the distributions \(\{\mathcal D_i\}_{i=0}^m\) are
unknown, but may be replaced by empirical counterparts. For each \(i\in \mathbb{N}_m \),
let \(\{(X^j_{i},Y^j_{i})\}_{j=1}^n\overset{iid}{\sim}\mathcal D_i\) be a dataset of common
size\footnote{This assumption is made for simplicity, and all our arguments extend to
unequal sample sizes in a straightforward manner.} $n$ inducing an empirical
(random) measure $\widehat{\mathcal{D}}_i^n$ on $\mathcal{X}\times\mathcal{Y}$. In direct
analogy to the usual unconstrained learning paradigm, define \textit{empirical loss
functionals}
\[
\widehat{\mathcal R}_{i,n}(\phi)
\triangleq
\frac{1}{n}\sum_{j=1}^n \ell_i(\phi(X_{i}^j),Y_{i}^j),
\quad i\in\mathbb{N}_m,
\]
where now $\phi\in \mathcal{F}_n$, with $\mathcal{F}_n \subseteq \mathcal{F}$ a subset of
$\mathcal{F}$ depending only on the empirical measures
\( \{\widehat{\mathcal{D}}_i^n\}_i \). One could of course choose
$\mathcal{F}_n = \mathcal{F}$ for all sample sizes $n$, but this is inadvisable when the
complexity of $\mathcal{F}$ is large to start with, which is why we introduce this
flexibility.

In analogy to empirical risk minimization (ERM), consider the \textit{empirical
Lagrangian} $\widehat L_{n}: \mathcal{F}_n \times \mathbb{R}^m \rightarrow \mathbb{R}$ and
\textit{empirical dual function} $\widehat{d}_{n}: \mathbb{R}^m \rightarrow [-\infty,
\infty)$,
\begin{align} \nonumber
\widehat L_{n}(\phi,\mu) &
\triangleq
\widehat{\mathcal R}_{0,n}(\phi)
+\sum_{i=1}^m \mu_i \widehat{\mathcal R}_{i,n}(\phi),\\ \nonumber
\widehat d_{n}(\mu) &
\triangleq
\inf_{\phi\in{\mathcal F}_n}\widehat L_{n}(\phi,\mu),
\end{align}
respectively. One may then hope that, given strong duality for the population problem and
a sufficiently large dataset size $n$, the \textit{empirical dual value}
$\widehat{D}_n^\star \triangleq \sup_{\mu\in\mathbb R_+^m} \widehat d_{n}(\mu)$ is attained
by empirical multipliers $\widehat\mu_n$ such that minimization of the empirical Lagrangian
over $\mathcal{F}_n$ at $\widehat\mu_n$ yields a minimizer approximating a solution of
\eqref{eq:primal formulation f} in \emph{both} value \emph{and} constraint feasibility.
This leads to the \textit{Dual ERM (DERM) family of predictors} satisfying the two-stage
inclusion
\begin{equation}\label{DERM}
\boxed{
\begin{aligned}
\widehat\mu_n & \in \arg\max_{\mu\in\mathbb R_+^m}\widehat d_{n}(\mu),\\
\widehat\phi_n & \in \arg\min_{\phi\in{\mathcal F}_n}
\widehat L_{n}(\phi,\widehat\mu_n),
\end{aligned}\tag{DERM}
}
\end{equation}
or, equivalently, solving the minimax learning problem
$\sup_{\mu \in \mathbb{R}^m_+}\inf_{\phi \in {\mathcal F}_n}\widehat L_n (\phi, \mu)$.

\eqref{DERM} constitutes the basic learning paradigm analyzed in this paper. Note that we
have not discussed strong duality for the empirical constrained primal counterpart of
\eqref{DERM}; this is, in general, too much to ask for in a data-driven nonconvex setting.
Further, even if one can show its statistical consistency relative to
\eqref{eq:primal formulation f} \cite{Chamon20PACC}, such an empirical constrained problem
would not be useful by itself, as it corresponds to an irreducible statement of learning
specifications, not a computational means for obtaining an empirical solution.

On the other hand, \eqref{DERM} motivates establishing strong duality for the population
problem \eqref{eq:primal formulation f}: if the duality gap of
\eqref{eq:primal formulation f} is not controllable in a suitable strong sense then, even
for extremely large $n$, a strictly optimal DERM predictor might perform arbitrarily badly
relative to \eqref{eq:primal formulation f}, even though it would behave consistently
relative to the dual problem.

The strong duality just discussed concerns the optimal \emph{value} of
\eqref{eq:primal formulation f}, and securing it is only half of what a usable learner must
deliver. Even granting a zero duality gap and attained multipliers, a separate question
remains: does the empirical dual solution furnish a single predictor that is provably
\emph{feasible} for the population constraints? In the convex, attained regime this
question answers itself, because a Lagrangian minimizer exists at the optimal multiplier,
its vector of constraint risks is exactly a subgradient of the dual function there, and
complementary slackness converts dual optimality into feasibility of that very minimizer.
None of this survives without convexity. Under the nonconvex losses considered here the
Lagrangian need not attain its infimum over $\mathcal{F}$ at the relevant multipliers, so
there may be no single predictor to certify; and even when near-minimizers exist, dual
subgradient information pins down only an \emph{averaged} constraint-risk vector, a convex
combination of the risks of possibly several distinct near-minimizers. Feasibility of a
DERM output therefore cannot be read off the dual solution and must be established by a
dedicated variational argument, the object of
Sections~\ref{sec:feasibility-indices}--\ref{subsec:main-results}.

\subsection{Universally Learnable Classes under Constraints}
\label{sec:universal learnable classes under constraints}

A critical question is what definition of \textit{concept class learnability} one should
adopt, with the goal of providing approximate solutions to
\eqref{eq:primal formulation f} of adequate quality via a learner such as \eqref{DERM}.

Since we rely on strong duality for asserting statistical consistency of DERM predictors,
we should expect imposition of certain \textit{constraint qualifications (QC)} on
\eqref{eq:primal formulation f} (indispensable even in the convex setting). Constraint
qualifications usually depend on the problem instance at hand and not exclusively on the
choice of objective/constraint loss functions. This implies that assuming a particular QC
on problem \eqref{eq:primal formulation f} imposes implicit restrictions on the possible
choices of population distributions $\{\mathcal{D}_i\}_{i}$. Consequently, rates on the
sufficient number of samples ensuring a certain learning performance should depend on the
class of problems for which a given QC is valid, enforcing dependence on
$\{\mathcal{D}_i\}_{i}$ in addition to the choice of losses. For an example relevant to our
development, consider the class of problems satisfying Slater's condition within a fixed
worst-case constant $\xi > 0$. If the derived sample size rates depend on that $\xi$, the
rate is distribution dependent. As our analysis would show indeed, slater, and other constants like Tikhonov\ref{subsec:tikhonov} highly depend on the dual geometry that itself relies on the distribution of the data. Thus the canonical uniform PAC is not a good choice in this setting. 

In order to conceptually simplify our development while avoiding unnecessary
technicalities, we allow nonuniform, instance-wise distribution-dependent concept class
learnability, within a primarily agnostic probably approximately correct (PAC) framework.

\begin{defBox}{Agnostic Universal Class Learnability}{nonuniform-pacc}
A hypothesis class $\mathcal F$ is said to be \emph{agnostically universally Probably
Approximately Correct with Constraints (PACC) learnable}, or simply \textit{universally
learnable}, relative to loss components $\{\ell_i\}_{i\in\mathbb{N}_m}$ of the constrained
problem \eqref{eq:primal formulation f} iff the following holds:

\smallskip
\hspace{12pt}\emph{There exists an algorithm $\mathcal A$, operating exclusively on
empirical measures $\{\widehat{\mathcal{D}}_i^n\}_{i=0}^m$ induced by $m+1$ \textit{iid}
datasets each of size $n\in \mathbb{N}^+$, such that for every $\eps>0$, every
$\delta\in(0,1)$, and every tuple of population distributions
$\{\mathcal{D}_i\}_{i=0}^m$, there is a natural number
$\widetilde{n}(\eps,\delta,\{\mathcal{D}_i\}_{i=0}^m)\in\mathbb N^+$ such that, whenever
$n\ge \widetilde{n}(\eps,\delta,\{\mathcal{D}_i\}_{i=0}^m)$, the operator
$\mathcal{A}\equiv\mathcal{A}(\{\widehat{\mathcal{D}}_i^n\}_{i=0}^m)$ produces (at least
one) predictor $\widehat\phi_n\in\mathcal F$ satisfying}
\[ \Pr\big(
        \mathcal R_0\big(\widehat\phi_n\big)\le P_{\mathcal F}^\star+\eps
        \ \text{ and }\
        \mathcal R_i\big(\widehat\phi_n\big)\le \eps,\ i\in \mathbb{N}^+_m
    \big)
    \ge 1-\delta.
\]
\end{defBox}

The term \emph{universal} indicates precisely that the required statistical barrier
$\widetilde{n}(\cdot)$ is allowed to depend on the reference population distributions.
This notion is adopted from \cite{bousquet2020theoryuniversallearning} and extended here
to the setting of agnostic constrained learning. In all cases, learning algorithms
themselves are totally agnostic to the population distributions, depending exclusively on
input datasets; it is only the complexity index $\widetilde{n}$ that is allowed to depend
on population distributional information.

In passing, existing PAC-type learnability results in the constrained learning literature
(see, e.g., \cite{Chamon20PACC, Chamon20csl, BarthakurChamon2025Equality}) are necessarily
of a universal character as well, even if relevant definitions in
\cite{BarthakurChamon2025Equality,Chamon20PACC} appear to be stated in the uniform sense.
This is because Slater's condition is imposed on the population constrained problems
considered therein, as discussed above and thus all the bounds would depend on the data generating distributions $\{\mathcal{D}_i\}_{i=0}^m$.

\begin{remark}
The relaxed requirement regarding existence of ``at least one'' predictor in
Definition~\ref{def:nonuniform-pacc} is due to the presence of constraints, and is largely
akin to the nature of the initial constrained problem \eqref{eq:primal formulation f}. It
is a technicality imposed by minimax (Lagrangian) duality.
\end{remark}

\paragraph{The universal qualifier is necessary, not merely convenient.}
There is an extra demand in our case of when we pose the problem over the high expressive RKHS classes for the notion of universal learning. In \ref{prop:no-distribution-free}  we prove that universality is a necessary condition even in the unconstrained case.   The obstruction is
\emph{not} only the Slater margin but the fact that a universal RKHS has, by design, \textbf{unbounded capacity}, so that
$P^\star_{\HK}$ coincides with the unrestricted (Bayes-type) optimum and no
distribution-free rate toward it can exist.

\subsection{The Tension: Strong Duality vs Dual Generalizability} \label{subsec: the tension}

Generalizability of a dual-domain constrained learning predictor, in the sense of
Definition~\ref{def:nonuniform-pacc}, relies on two fundamental requirements:

\medskip\noindent\textbf{R1: Strong duality at the population level.}

\noindent\textbf{R2: Statistical generalizability of empirical dual-domain predictors.}
\medskip

\noindent These two requirements are intimately related to the choice of the hypothesis
class $\mathcal{F}$ used for formulating \eqref{eq:primal formulation f}, and typically
pull in opposite directions. In a nutshell, \textbf{R1} favors \emph{large} hypothesis
classes, rich enough to support the convexification phenomena essential for strong duality.
\textbf{R2} favors \emph{small} hypothesis classes with less expressivity but controlled
statistical complexity, permitting uniform convergence and measure concentration arguments.
One of the contributions of this paper is to reconcile these two demands within a single
compatible framework.

More specifically, \textbf{R1} is imposed due to potential nonconvexity of the loss
functions. While in convex optimization strong duality is guaranteed under convexity
together with a suitable constraint qualification
\cite{Rockafellar1970,EkelandTemam1999}, asserting strong duality without convexity
requires a different mechanism. The one adopted here relies on the \textit{Lyapunov
convexity theorem} from vector measure theory
\cite{Lyapunov1940,Blackwell51,Aumann1965,DiestelUhl77} and, most importantly, on its weak
extension to nonatomic vector measures taking values in infinite-dimensional Banach
spaces, due to Uhl \cite{Uhl69Range} (see also
\cite{Knowles1975Lyapunov,KadetsSchechtman92,KhanSagara2014Lyapunov}), as exploited in
\cite{KalogeriasPougkakiotis22}; for classical applications of Lyapunov-type
convexification in the calculus of variations and mathematical economics, see
\cite{BerliocchiLasry73,AumannPerles65,AubinEkeland76} and
\cite[Appendix~I.4]{EkelandTemam1999}. The mechanism requires that the hypothesis class
$\mathcal{F}$ be a so-called \emph{decomposable} set
\cite{Rockafellar68Integrals,HiaiUmegaki1977,RockafellarWets1998}.\footnote{The definition
of decomposability adopted here (see, e.g., \cite{HiaiUmegaki1977}) is slightly different
from the original definition in \cite{Rockafellar68Integrals}; see also the discussion in
\cite{KalogeriasPougkakiotis22}.}

\begin{defBox}{Decomposable Set/Space}{decomposable}
A set $\mathcal{F} \subseteq \{\phi : \mathcal{X} \to \mathbb{R}\}$ of
$\mathcal{B}(\mathcal{X})$-measurable functions is called \emph{decomposable} iff for every
$\phi, \phi' \in \mathcal{F}$ and every $Z \in \mathcal{B}(\mathcal{X})$, the
\textit{splice} below also belongs to $\mathcal{F}$:
\[
    \bar{\phi}(x)
    \;\triangleq\;
    \mathbf{1}_Z(x)\,\phi(x)
    + \mathbf{1}_{\mathcal{X} \setminus Z}(x)\,\phi'(x).
\]
\end{defBox}

\noindent Decomposability allows recovery of \textit{convexity at the level of attainable
risk vectors} generated by the objective and constraints of
\eqref{eq:primal formulation f} (the \textit{image space} viewpoint), even when convexity
and even continuity are absent at the level of the underlying functionals. Convexity at
this level is just enough to ensure strong duality, as a consequence of convex geometry
\cite{KalogeriasPougkakiotis22}.

Canonical examples of decomposable sets include all measurable functions on any measurable
space, all $L_p$ spaces for $p\in [1,\infty]$
\cite[Theorem~14.60]{RockafellarWets1998} (see also
\cite[Section~4.1]{KalogeriasPougkakiotis22}), all Orlicz spaces
\cite{RaoRen1991,Vershynin2018HDP}, and sets of essentially bounded functions of the form
$\{f\in L_\infty (\mu,\mathbb{R}) \mid f(\cdot) \in \mathcal{U}, \mu\text{-a.e.}\}$ for
\textit{any} set $\mathcal{U} \subseteq \mathbb{R}$. Typical examples of nondecomposable
sets are convex $L_p$ balls of finite radii for any $p\in[1,\infty)$, spaces of
continuous/differentiable/smooth functions, finite-dimension-indexed sets of functions,
and, most importantly here, RKHSs, either norm-constrained or not. In general, convex sets
need not be decomposable, and decomposable sets can be arbitrarily nonconvex; the two
notions neither imply nor are implied by one another. A key characteristic of decomposable
sets is that they are ``very large'': they contain functions of arbitrary complexity, since
a splice enforces discontinuity by construction. This largeness is precisely what makes
them suitable for strong duality, and also what makes them statistically unwieldy.

On the other hand, even if strong duality holds at the population level, a learning
algorithm is meaningful only if the corresponding empirical problem is statistically well
posed and consistent, in the sense of \textbf{R2}. Within the DERM family, satisfying
\textbf{R2} is related to properly selecting the function class $\mathcal{F}_n$ associated
with the empirical Lagrangian problem solved at sample size $n$.

In the unconstrained setting, statistical learnability is usually controlled by complexity
measures such as \textit{VC dimension} and \textit{Rademacher complexity}
\cite{VapnikChervonenkis1971,Vapnik98,AnthonyBartlett1999,BartlettMendelson02,
ShalevShwartzBenDavid2014,MohriRostamizadehTalwalkar18}. In binary classification, finite
VC dimension is equivalent to distribution-free PAC learnability
\cite{VapnikChervonenkis1971,BlumerEhrenfeuchtHausslerWarmuth1989,ShalevShwartzBenDavid2014}.
For real-valued classes and Lipschitz losses, which is our focus, the relevant complexity
measure is Rademacher complexity.

\begin{defBox}{Rademacher Complexity}{rademacher}
Let $S = ((X_1,Y_1), \ldots, (X_n,Y_n))$ be a random population vector and let
$\sigma_1, \ldots, \sigma_n$ be \textit{iid} Rademacher random variables taking values
$\pm 1$ with equal probability. The \emph{empirical Rademacher complexity} of the composed
class $\ell_0 \circ \mathcal{F} \triangleq \{(x,y) \mapsto \ell_0(\phi(x),y) \mid \phi \in
\mathcal{F}\}$ is
\[
    \widehat{\mathfrak{R}}_S(\ell_0 \circ \mathcal{F})
    \;\triangleq\;
    \mathbb{E}_\sigma\!\left[
        \sup_{\phi \in \mathcal{F}}
        \frac{1}{n}\sum_{j=1}^n \sigma_j\,\ell_0(\phi(X_j), Y_j)
    \right],
\]
and the \textit{Rademacher complexity} is the population average
$\mathfrak{R}_n(\ell_0 \circ \mathcal{F}) \triangleq \mathbb{E}_S[\widehat{\mathfrak{R}}_S(\ell_0 \circ \mathcal{F})]$.
\end{defBox}

Rademacher complexity is useful because standard symmetrization and contraction arguments
yield uniform convergence bounds of the form
\(
    \sup_{\phi\in\mathcal F}
    |\widehat{\mathcal R}_{0,n}(\phi)-\mathcal R_0(\phi)|
    \lesssim
    \mathfrak R_n(\ell_0\circ\mathcal F),
\)
up to lower-order confidence terms
\cite{Dudley1999,BartlettMendelson02,ShalevShwartzBenDavid2014}. When
$\mathfrak R_n(\ell_0\circ\mathcal F)\searrow 0$, these yield nontrivial high-probability
generalization bounds governed by the decay rate. One naturally expects such behavior to be
useful for \textbf{R2} in the constrained setting as well, and we show in
Section~\ref{sec:pac-dual-theorem} that this is indeed the case.

Decaying Rademacher complexity is typically guaranteed for ``small'' model complexity
classes, for instance norm-constrained RKHS classes or finite-dimensional function classes.
As discussed above, however, such concept classes are not decomposable, and therefore are
not suitable for establishing strong duality when used in the population problem. Prior
attempts at establishing PAC learnability using such classes with built-in decaying
Rademacher complexity \emph{both} in the population problem and within the DERM paradigm
unavoidably induce irreducible constants in the resulting generalization bounds; see, e.g.,
\cite[Theorem~1]{Chamon_CL2023} and \cite[Theorem~3.1]{BarthakurChamon2025Equality}. In
contrast, strong duality naturally manifests on large decomposable spaces such as $L_p$,
where Lyapunov-type arguments apply in the absence of loss convexity, but such spaces are
far too large to imply useful generalization bounds.

We are therefore led to a fundamental structural tension on the essential features of the
concept classes used for postulating \eqref{eq:primal formulation f} on the one hand and
for defining a consistent empirical dual-domain predictor on the other, and a key question
is whether suitable and compatible choices exist that serve both purposes.

\subsection{Reconciling the Tension: Dense RKHS Classes and Ball Restrictions}
\label{subsec: reconcile tension}

We now describe a mechanism by which the tension between \textbf{R1} and \textbf{R2} is
provably balanced. The idea is to leverage large RKHSs as ``master'' hypothesis classes for
\eqref{eq:primal formulation f} which \emph{simultaneously} \textbf{(1)} approximate large
decomposable \textit{envelope} spaces to arbitrary precision, despite being nondecomposable
themselves, \emph{and} \textbf{(2)} admit controlled statistical complexity via RKHS
norm-ball restrictions. The underlying principle is similar to structural risk minimization
(SRM) \cite{Vapnik98,ShaweTaylor98SRM}, although our purpose is of a different nature.

\paragraph{Dense RKHSs and universal approximation.}
Let \(K:\mathcal X\times\mathcal X\to\mathbb R\) be a symmetric positive definite
(measurable) kernel. By Aronszajn's characterization theorem \cite{Aronszajn1950}, there
exists a unique Hilbert space \(\HK\) of measurable functions on \(\mathcal X\) such that
\(K(\cdot,x)\in\HK\) for every \(x\in\mathcal X\), and the \textit{reproducing property}
\(f(x)=\langle f,K(\cdot,x)\rangle_{\HK}\) holds for all \(f\in\HK\). If
\(\sup_{x\in\mathcal X}K(x,x)\le\kappa^2<\infty\), the reproducing property with
Cauchy--Schwarz gives
$\sup_{x\in \mathcal{X}}|f(x)| \le \kappa\|f\|_{\HK}$ for all $f\in\HK$,
so $\HK \subseteq L_\infty(\mu,\R)$ for any finite $\mu$ on
$(\mathcal{X},\mathcal{B}(\mathcal{X}))$, and hence $\HK \subset L_p(\mu,\R)$, with
$\sup_{\mu(\mathcal X)\le1}\|f\|_{L_p(\mu,\R)}\le\kappa\|f\|_{\HK}$ for every
$p\in[1,\infty]$.

We are particularly interested in RKHSs which are \emph{dense} subsets of
$L_p(\mu,\R)$ for some $p \in[1,\infty)$, for a suitable choice of $\mu$ discussed in
Section~\ref{sec:pac-dual-theorem}. Compactly,
\[
    \mathrm{cl}_{\|\cdot\|_{L_p(\mu,\R)}}\bigl(\HK\bigr) = L_p(\mu,\R),
    \quad\text{for some}\quad 1\le p <\infty,
\]
since $L_p(\mu,\R)$ is Banach. Slightly relaxing standard terminology
\cite{Steinwart2001,MicchelliXuZhang2006,SteinwartChristmann2008}, we refer to such RKHSs
as \textit{universal}
\cite{Steinwart2001,MicchelliXuZhang2006,SriperumbudurFukumizuLanckriet2011}.

A canonical example is the RKHS induced by the Gaussian kernel
\(K_\sigma(\cdot,\bullet) \triangleq \exp(-\|(\cdot)-(\bullet)\|^2/2\sigma^2)\),
$\sigma>0$, which is dense in $L_p(\mu,\R)$ for every $p\in[1,\infty)$, any Borel
probability measure $\mu$, and any support set $\mathcal{X}\subseteq \R^d$
\cite[Theorem~4.63]{SteinwartChristmann2008}. Other examples include the Paley--Wiener
space, dense in $L_2(\mu,\R)$ for any $\mu$ with compact support; the exponential RKHS
\cite{BerlinetThomasAgnan2004RKHS}; and the binomial RKHS
\cite{MicchelliXuZhang2006,SteinwartChristmann2008}; several explicit recipes for
constructing universal kernels are given in \cite{MicchelliXuZhang2006}.

The key observation is that although universal RKHSs are not decomposable, they are dense
in vast decomposable spaces ($L_p$), which we call \emph{envelopes}. This has profound
consequences for establishing strong duality over nondecomposable RKHSs, as shown in
Section~\ref{sec:proofs}.

\paragraph{RKHS ball restrictions and complexity control.}
A dense RKHS $\HK$ makes sense as a concept class for a population-level learning problem,
but is still too large to yield learnability guarantees from finite data. The standard
resolution, adopted here, is to restrict attention to closed RKHS norm balls
\[
    \Bq{Q} \triangleq \{f\in\HK \mid \|f\|_{\HK}\le Q\},
\]
whose statistical complexity can be explicitly controlled. Under
$\sup_{x}K(x,x)\le\kappa^2<\infty$, standard Rademacher arguments give
\cite[Lemma~22]{BartlettMendelson02},\cite[Theorem~6.12]{MohriRostamizadehTalwalkar18}
\[
    \mathfrak R_n(\Bq{Q}) \le \frac{\kappa Q}{\sqrt{n}},
\]
so the statistical cost of working within $\Bq{Q}$ is transparent and vanishes as
$n\to\infty$, provided the radius $Q$ is constant or grows at a rate strictly smaller than
$\sqrt{n}$.

\begin{remark}[Radius schedules used in this paper]\label{rem:schedule}
Throughout the body we require only
\begin{equation}\label{eq:schedule}
Q_n\nearrow\infty
\qquad\text{and}\qquad
Q_n/\sqrt n\searrow 0 ,
\end{equation}
for which $Q_n = n^{1/4}$ is the canonical instance. The sharpness-based sufficient
condition for the recovery assumption
(Proposition~\ref{prop:sharpness} in Appendix~\ref{sec:on-last-assumption}) requires the
strictly stronger budget $Q_n = o(n^{1/4})$, which \emph{excludes} $Q_n=n^{1/4}$.
Whenever both are in force we use $Q_n = n^{1/4}/\log(e+n)$, which satisfies
\eqref{eq:schedule} and the $o(n^{1/4})$ budget simultaneously, and we say so explicitly.
\end{remark}

\paragraph{Constrained learning strategy.}
While each individual restricted class $\Bq{Q}$ is not dense in any $L_p$ space, we have
$\Bq{Q_n} \subset \Bq{Q_{n+1}}$ and $\bigcup_{n\in \mathbb{N}^+}\Bq{Q_n} = \HK$ for
$Q_n\nearrow\infty$. Similarly to the SRM principle \cite{Vapnik98,ShaweTaylor98SRM}, this
motivates a \emph{variable-complexity} strategy: rather than fixing a single hypothesis
class for both the population and empirical problems, necessarily of finite model
complexity as in
\cite{Chamon20csl,Chamon20PACC,Chamon_CL2023,Boero25ALCoLe,BarthakurChamon2025Equality}, we
first pose the population problem \eqref{eq:primal formulation f} over a large dense RKHS
$\HK = \mathcal{F}$ --- in the absence of special structural predictor constraints at the
population level, there is no reason to discount expressivity when the predictor is
notionally trained on the whole population --- and then approach that problem by solving
\eqref{DERM} for each dataset size $n$, over a sequence of smaller classes
$\{\Bq{Q_n} =\mathcal{F}_n \}_{n \in \mathbb{N}^+}$ whose expressivity grows with $n$. At
scale $Q_n$ the ball $\Bq{Q_n}$ is statistically learnable at rate
$\mathcal{O}(Q_n/\sqrt{n})$, whereas as $Q_n \nearrow\infty$ the relative complement of
$\Bq{Q_n}$ in $\HK$ shrinks to the empty set.

The net effect is that \textit{a vanishing duality gap at the population level}, which
requires density in a decomposable envelope, \textit{and a vanishing statistical estimation
error at the empirical level}, which requires decaying Rademacher complexity, \textit{are
achieved simultaneously}, at a rate governed by the growth of $Q_n$. We will further see
that, under one additional population-level assumption
(Assumption~\ref{ass:multi-argmin-recovery}), the same regime yields asymptotic feasibility
as well; see Section~\ref{subsec:feasibility-geometry}.


\section{Universal Learnability of Dense RKHSs under Constraints}
\label{sec:pac-dual-theorem}

This section states our assumptions and presents the main results. The development follows
the programme of Section~\ref{sec:problem-setup}: we pose the population problem over a
large dense (universal) RKHS, whose expressivity underwrites strong duality, and we learn
from finite data over a growing sequence of RKHS norm balls, whose complexity we can
control. We proceed in six steps. Section~\ref{subsec:instantiation} instantiates the
population problem, its dual, and the ball-restricted DERM learner.
Section~\ref{subsec:assumptions} collects four structural assumptions and introduces the
decomposable \emph{envelope}. Section~\ref{subsec:feasibility-geometry} introduces the
geometric objects governing constraint feasibility, culminating in the
\emph{closure--realization gap} $\epsinf$ and a lemma collecting its properties.
Section~\ref{subsec:tikhonov} introduces the \emph{Tikhonov complexity}.
Section~\ref{subsec:recovery} isolates the one non-structural regularity condition needed
for feasibility, and records precisely what it costs. Sections~\ref{subsec:main-theorem}
and~\ref{subsec:explicit-rates} state the main theorem and convert it into an explicit
polynomial threshold under a source condition.

\begin{table}[t]
\centering\small
\caption{Principal notation.}
\label{tab:notation}
\begin{tabular}{@{}ll@{}}
\toprule
Symbol & Meaning \\
\midrule
$\mathcal X,\mathcal Y,\mathcal D_i$ & feature space, label space, $i$-th population distribution \\
$\mathcal R_i,\widehat{\mathcal R}_{i,n}$ & population / empirical risk functionals \\
$\mathcal F$ & generic hypothesis class; $\mathcal F_n$ its sample-size-dependent restriction \\
$\HK$ & the universal RKHS used as population concept class \\
$\Gen = L_p(\mathcal M_{\mathcal X},\R)$ & the decomposable envelope; $\mathcal M_{\mathcal X}$ the mixture marginal \\
$\Bq{Q}$, $Q_n$ & closed RKHS norm ball of radius $Q$; radius schedule \eqref{eq:schedule} \\
$L_{\HK},\widehat L_n$ & population / empirical Lagrangian \\
$d_{\HK},d_Q,\widehat d_n$ & population / ball-restricted / empirical dual functions \\
$P^\star_\bullet, D^\star_\bullet$ & primal and dual optimal values, subscript naming the class \\
$\widehat D^\star_n$ & DERM optimal value, $\sup_\mu\widehat d_n(\mu)$ \\
$L,M,\kappa$ & Lipschitz constant, loss bound, kernel diagonal bound \\
$\xi$ & Slater margin (Assumption~\ref{ass:slater}) \\
$C_\mu = 4M/\xi$, $\Lambda$ & multiplier bound and compact multiplier set \\
$\Sdual$ & population dual-optimal set \\
$\Phi_{\HK}(\mu),\widehat\Phi_n(\mu)$ & population / empirical Lagrangian argmin sets \\
$\Theta_\nabla,\Theta^\infty_\nabla$ & empirical / population image-set deviation (Def.~\ref{def:image-set-deviation}) \\
$\epsinf$ & closure--realization gap (Def.~\ref{def:multi-eps-star}) \\
$\mathfrak T^\eps_n$, $R_\eps(\mu)$, $Q_\eps$ & Tikhonov complexity, min-norm selector, uniform bound \\
$\omega$, $\Omega,\Sigma,\Pr$ & sample path and underlying probability space \\
$\Omega_{\mathrm{uc}}(\delta),\Omega_{\mathrm{fbl}},\Omega^\star(\delta)$ & uniform-convergence, feasibility-tracking, and combined events \\
$\Delta_n(\delta)$ & uniform-convergence envelope \eqref{eq:uc-envelope} \\
\bottomrule
\end{tabular}
\end{table}

\subsection{The Constrained Learning Problem over a Dense RKHS}
\label{subsec:instantiation}

Fix a universal RKHS $\mathcal F = \HK$ and consider the population-level base constrained
learning problem
\begin{align}
\label{eq:primal formulation RKHS}
\boxed{
P_{\HK}^\star
    \triangleq
\inf_{\phi\in\HK}
    \mathcal R_0(\phi) \quad \text{subject to} \quad
    \mathcal R_i(\phi)\le 0,
    \, i \in \mathbb{N}_m^+,}\tag{$\mathcal H$-CSL}
\end{align}
together with its Lagrangian dual
\begin{align}
\label{eq:dual formulation RKHS}
D_{\HK}^\star
    \triangleq \sup_{\mu \in \mathbb R^{m}_{+}}
    \inf_{\phi\in\HK}
    \bigg\{
    L_{\HK}(\phi, \mu)
    = \mathcal R_0(\phi) + \sum_{i=1}^m \mu_i \mathcal{R}_i(\phi)
    \bigg\}.
    \tag{$\mathcal{H}$-DSL}
\end{align}
We learn from data by instantiating the DERM predictor of Section~\ref{subsec: dual ERM}
under the variable-complexity strategy of Section~\ref{subsec: reconcile tension}. Given
datasets of size $n$, we restrict the hypothesis class to
$\Bq{Q_n} = \{\phi\in\HK \mid \|\phi\|_{\HK}\le Q_n\}$, set
$\mathcal{F}_n = \Bq{Q_n}$, and replace population functionals by their empirical
counterparts, yielding the RKHS DERM problem
\begin{equation}
\label{eq:dual formulation DERM}
\boxed{
\widehat{D}_{n}^\star
    \triangleq \sup_{\mu \in \mathbb R^{m}_{+}}
    \underbrace{\inf_{\phi\in \Bq{Q_n}}
      \bigg\{
      \widehat{L}_n(\phi,\mu)=
      \widehat{\mathcal R}_{0,n}(\phi) + \sum_{i=1}^m \mu_i \widehat{\mathcal{R}}_{i,n}(\phi)
      \bigg\}
      }_{=\, \widehat{d}_n(\mu)},\qquad \begin{aligned}
\widehat\mu_n & \in \arg\max_{\mu\in\mathbb R_+^m}\widehat d_{n}(\mu),\\
\widehat\phi_n & \in \arg\min_{\phi\in{\Bq{Q_n}}}
\widehat L_{n}(\phi,\widehat\mu_n),
\end{aligned} \tag{$\mathcal{H}$-DERM}
}
\end{equation}
where the radii $Q_n$ satisfy \eqref{eq:schedule}. The first condition lets the search
space eventually reach any fixed model complexity; the second keeps its Rademacher
complexity vanishing, at rate $\mathcal O(\kappa Q_n/\sqrt n)$.

\begin{remark}[Notation for optimal values]\label{rem:value-notation}
We write $\widehat D^\star_n$ for the DERM optimal value, reserving the subscript $Q$ for
the \emph{population} ball-restricted dual value $D^\star_Q$ defined next. An earlier
draft used $\widehat D^\star_n$, $\widehat D^\star_{Q_n}$ and $\widehat D^\star_Q$
interchangeably for the same object; they all denote $\widehat D^\star_n$.
\end{remark}

It is convenient to isolate the purely population-level effect of the ball restriction
through the \textit{ball-constrained population dual}
\begin{align}\label{BQ-DSL}
D_{Q}^\star
    \triangleq \sup_{\mu \in \mathbb R^{m}_{+}}
    \inf_{\phi\in \Bq{Q}}
      L_{\HK}(\phi, \mu),
\qquad Q<\infty. \tag{$\mathcal{B}_Q$-DSL}
\end{align}
Since $\Bq{Q} \subset \Bq{Q'}$ for $Q<Q'$ and $\bigcup_{Q>0}\Bq{Q} = \HK$, the value
$D_{Q}^\star$ is a monotone population-level surrogate of \eqref{eq:dual formulation RKHS};
we show in Section~\ref{sec:proofs} that $D_{Q}^\star \searrow D_{\HK}^\star$ as
$Q\to\infty$ under the standing assumptions below. Thus
\eqref{eq:dual formulation DERM} sits between two limits --- statistical estimation at fixed
radius, and radius growth toward the full RKHS dual --- and the analysis is precisely the
coupling of these two limits.

\subsection{Standing Assumptions and the Decomposable Envelope}
\label{subsec:assumptions}

Our results rest on four assumptions placed on \eqref{eq:primal formulation RKHS}. Because
the $m+1$ populations may be unrelated, we first fix a common reference measure. Writing
$\mathcal{D}_{\mathcal{X},i}$ for the marginal of $\mathcal{D}_i$ on $\mathcal{X}$, define
the normalized mixture
\begin{equation*}
  \mathcal{M}_{\mathcal{X}} (\mathrm{d}x) \triangleq \frac{1}{m+1}\sum_{i=0}^{m}\mathcal{D}_{\mathcal{X},i}(\mathrm{d}x),
\end{equation*}
a probability measure satisfying
$\mathcal{D}_{\mathcal{X},i} \ll \mathcal{M}_{\mathcal{X}}$ for all $i \in \mathbb{N}_m$.
All four assumptions are stated relative to $\mathcal{M}_{\mathcal{X}}$.

\begin{assumptionBox}{Nonatomicity of Feature Marginals}{non-atomic}
The mixture measure $\mathcal{M}_{\mathcal{X}}$ is nonatomic: for every
\(A\in\mathcal B(\mathcal X)\) with $\mathcal{M}_{\mathcal{X}}(A) > 0$, there exists
\(B\in\mathcal B(\mathcal X)\) with $B\subset A$ and
$0<\mathcal{M}_{\mathcal{X}}(B) < \mathcal{M}_{\mathcal{X}}(A)$.
\end{assumptionBox}

\begin{assumptionBox}{Loss Regularity}{loss-regularity}
For each \(i\in \mathbb{N}_m\), the loss \(\ell_i:\R\times\mathcal Y\to\R\) satisfies, with
constants $L,M$ independent of $i$:
\begin{enumerate}[label=(\alph*), itemsep=0pt, topsep=2pt]
    \item $\ell_i(\cdot,y)$ is $L$-Lipschitz, uniformly in $y\in\mathcal Y$;
    \item $\ell_i$ is uniformly $M$-bounded.
\end{enumerate}
\end{assumptionBox}

\begin{assumptionBox}{Concept Class Expressivity}{dense-rkhs}
The population concept class is $\mathcal{F}=\HK$, where:
\begin{enumerate}[label=(\alph*), itemsep=0pt, topsep=2pt]
    \item \(\HK\) is the RKHS of a symmetric, positive definite kernel $K$ with
    $\sup_{x}K(x,x)\le\kappa^2<\infty$;
    \item \(\HK\) is dense in the decomposable space
    $\Gen \triangleq L_p(\mathcal{M}_{\mathcal{X}},\R)$ for some $p \in [1,\infty)$.
\end{enumerate}
\end{assumptionBox}

\begin{assumptionBox}{Strict Feasibility (Slater's Condition)}{slater}
There exist \(\phi^{\mathrm{sl}}\in\HK\) and a margin \(\xi>0\) with
\(\mathcal R_i(\phi^{\mathrm{sl}})\le -\xi\) for all \(i\in\mathbb{N}^+_m\). We write
$Q_{\mathrm{sl}}\triangleq\|\phi^{\mathrm{sl}}\|_{\HK}$.
\end{assumptionBox}

Two of these deserve comment. Assumption~\ref{ass:non-atomic} is mild and broadly
applicable: it precludes feature marginals with atoms, but holds whenever the
$\{\mathcal{D}_{\mathcal{X},i}\}_i$ admit Lebesgue densities, and imposes \emph{no}
restriction on the response variables. Its role is to supply the nonatomicity needed by the
Lyapunov-type convexification underlying strong duality.

Assumption~\ref{ass:dense-rkhs}(b) is the linchpin of the reconciliation. The RKHS $\HK$
delivers statistical control but is \emph{not} decomposable, so Lyapunov convexification
does not apply to it directly. Density in the decomposable envelope
$\Gen = L_p(\mathcal M_{\mathcal X},\R)$ is what lets us transport strong duality from
$\Gen$ back to $\HK$. Concretely, the envelope carries its own constrained problem
\begin{align}
\label{eq:primal formulation envelope}
P_{\Gen}^\star
\triangleq
\inf_{\phi\in\Gen}
\mathcal R_0(\phi)
\quad\text{subject to}\quad
\mathcal R_i(\phi)\le 0,\, i\in\mathbb{N}^+_m,
\tag{$\mathcal{G}$-CSL}
\end{align}
with Lagrangian $L_{\Gen}(\phi,\mu)$, dual function
\(d_{\Gen}(\mu)\triangleq\inf_{\phi\in\Gen} L_{\Gen}(\phi,\mu)\), and dual value
\(D_{\Gen}^\star\). Under Assumptions~\ref{ass:non-atomic}--\ref{ass:dense-rkhs}, strong
duality holds on $\Gen$; density plus continuity of the $\mathcal R_i$ (a consequence of
Assumption~\ref{ass:loss-regularity}) then yields $P^\star_{\HK}=P^\star_{\Gen}$ and
$D^\star_{\HK}=D^\star_{\Gen}$, so strong duality is inherited by $\HK$ as well. As a
byproduct, our analysis also establishes universal learnability of $\Gen$ used as the
target class in \eqref{eq:primal formulation envelope}: the learner returns
$\widehat\phi_n\in\Bq{Q_n}\subset\HK\subset\Gen$, so the learning is \emph{proper} for both
classes.

Finally, Assumption~\ref{ass:slater} is imposed on $\HK$, which is the stronger place to
state it: if $\phi^{\mathrm{sl}}\in\HK$ is strictly feasible with margin $\xi$, then it is
trivially strictly feasible in $\Gen\supset\HK$. Conversely, strict feasibility on $\Gen$
transfers to $\HK$ with a possibly halved margin by an elementary density argument.

\subsection{The Geometry of Feasibility}
\label{subsec:feasibility-geometry}

Strong duality secures the \emph{value} of the problem, but says nothing about whether a
DERM output is \emph{feasible}. This subsection introduces the geometric objects that
control feasibility. The picture to keep in mind: at a dual optimum, the first-order
conditions pin down a target vector $\nabla$ (a supergradient of the dual function); a DERM
output is feasible as soon as its constraint-risk vector matches that target. Whether it can
is a property of the RKHS geometry, captured by a single index.

\paragraph{Sampling model.}
For each $i\in\mathbb N_m$, let
$\omega^{(i)}\triangleq\{(X^{(i)}_j,Y^{(i)}_j)\}_{j=1}^\infty\overset{\mathrm{iid}}{\sim}\mathcal D_i$,
and let $\omega\triangleq(\omega^{(0)},\dots,\omega^{(m)})$ be the joint sample path on the
product space $(\Omega,\Sigma,\Pr)$, where
\[
\Omega \triangleq \underbrace{\bigl(\mathcal X\times\mathcal Y\bigr)^{\mathbb N}\times\cdots\times\bigl(\mathcal X\times\mathcal Y\bigr)^{\mathbb N}}_{(m+1\text{ factors})},
\qquad
\Pr \triangleq \bigotimes_{i=0}^m \mathcal D_i^{\otimes\mathbb N},
\]
and $\Sigma$ is the product $\sigma$-algebra. We write
$\widehat{\mathcal R}_{i,n}(\phi;\omega)$ for the empirical risk from the first $n$ samples
of coordinate $i$, abbreviated $\widehat{\mathcal R}_{i,n}(\phi)$ when $\omega$ is clear.
The symbol $\omega$ is reserved for sample paths throughout; $\xi$ is reserved for the
Slater margin.

\paragraph{Lagrangian argmin sets and the first-order picture.}
Define the empirical and population Lagrangian argmin sets
\[
\widehat{\Phi}_n(\mu) \triangleq \argmin_{\phi \in \Bq{Q_n}} \widehat{L}_n(\phi,\mu),
\qquad
\Phi_{\HK}(\mu) \triangleq \argmin_{\phi \in \HK} L_{\HK}(\phi,\mu).
\]
$\widehat\Phi_n(\mu)$ is nonempty since, by the representer theorem, minimizing
$\widehat L_n(\cdot,\mu)$ over $\Bq{Q_n}$ reduces to minimizing over the intersection of
$\Bq{Q_n}$ with the finite-dimensional span of the kernel sections at the data points --- a
compact set on which the objective is continuous; the population case is addressed by
Assumption~\ref{ass:multi-argmin-recovery}(a). Since $\widehat d_n$ is concave, first-order
optimality of $\widehat\mu_n$ over the orthant yields a supergradient
$\nabla\in\partial\widehat d_n(\widehat\mu_n)$ with
$\nabla_i \le 0$ and $\widehat\mu_{n,i}\nabla_i = 0$ for $i\in\mathbb N^+_m$
(Lemma~\ref{lem:sign-comp-subgrad}). By the Levin--Valadier representation of the
superdifferential of an infimal function \cite{Valadier,IoffeTihomirov1979}, every such
supergradient lies in the closed convex hull of empirical constraint-risk vectors evaluated
at Lagrangian minimizers,
\begin{equation}\label{eq:LV-empirical}
\nabla \;\in\; \overline{\mathrm{conv}}\,\bigl\{
\bigl(\widehat{\mathcal{R}}_{1,n}(\phi),\ldots,\widehat{\mathcal{R}}_{m,n}(\phi)\bigr)
: \phi \in \widehat{\Phi}_n(\widehat\mu_n)
\bigr\}.
\end{equation}
The precise form of the representation we use, together with a self-contained verification
that no additional normal-cone term arises from the active ball constraint, is
Lemma~\ref{lem:LV} in Section~\ref{sec:feasibility-indices}.

Because $\nabla_i\le0$, feasibility of a DERM output would follow \emph{immediately} if its
constraint-risk vector coincided with $\nabla$. The obstruction is therefore exactly the gap
between the realized constraint-risk vector and the target $\nabla$.

\begin{defBox}{Image-set Deviation}{image-set-deviation}
For $\widehat\mu_n \in \arg\max_{\mu \geq 0}\widehat{d}_n(\mu)$ and
$\nabla \in \partial \widehat{d}_n(\widehat\mu_n)$,
\[
\Theta_\nabla(\widehat\mu_n)
\triangleq
\inf_{\phi \in \widehat{\Phi}_n(\widehat\mu_n)}
\max_{i \in \mathbb{N}^+_m}
\bigl|\widehat{\mathcal{R}}_{i,n}(\phi) - \nabla_i\bigr|,
\]
with population analogue, for $\mu_{\HK}\in\arg\max_{\mu\ge0}d_{\HK}(\mu)$ and
$\nabla\in\partial d_{\HK}(\mu_{\HK})$,
\[
\Theta^\infty_\nabla(\mu_{\HK})
\triangleq
\inf_{\phi \in \Phi_{\HK}(\mu_{\HK})}
\max_{i \in \mathbb{N}^+_m}
\bigl|\mathcal{R}_i(\phi) - \nabla_i\bigr|.
\]
If $\Theta_\nabla(\widehat\mu_n)=0$, some empirical minimizer exactly realizes $\nabla$ as
its constraint-risk vector, and feasibility follows from the KKT sign condition.
\end{defBox}

\begin{defBox}{Population Closure-Realization Gap}{multi-eps-star}
At a population dual optimum $\mu_{\HK} \in \arg\max_{\mu\geq 0} d_{\HK}(\mu)$ with
$\Phi_{\HK}(\mu_{\HK})\ne\emptyset$, define
\[
\epsinf(\mu_{\HK})
\;\triangleq\;
\sup_{\nabla \,\in\, \partial d_{\HK}(\mu_{\HK})}
\Theta^\infty_\nabla(\mu_{\HK})
\;=\;
\sup_{\nabla \,\in\, \partial d_{\HK}(\mu_{\HK})}
\;\inf_{\phi \,\in\, \Phi_{\HK}(\mu_{\HK})}
\;\max_{i \in \mathbb{N}^+_m}
\bigl|\mathcal{R}_i(\phi) - \nabla_i\bigr| .
\]
Equivalently,
\[
\epsinf(\mu_{\HK})
=
\inf\Bigl\{\eps \geq 0 :
\forall\,\nabla \in \partial d_{\HK}(\mu_{\HK}),\;
\exists\,\phi \in \Phi_{\HK}(\mu_{\HK}),\;
\max_{i\in\mathbb{N}^+_m}|\mathcal{R}_i(\phi) - \nabla_i| \leq \eps\Bigr\}.
\]
We extend $\epsinf$ to nonempty $S\subseteq\R^m_+$ by
$\epsinf(S)\triangleq\sup_{\mu\in S}\epsinf(\mu)\in[0,2M]$; for a singleton
$S=\{\mu_{\HK}\}$ this recovers $\epsinf(\mu_{\HK})$.
\end{defBox}

The following lemma collects the properties of $\epsinf$ that the main theorem relies on.

\begin{lemmaBox}{Properties of the closure--realization gap}{eps-star-properties}
Let Assumptions~\ref{ass:non-atomic}--\ref{ass:slater} hold and let
$\mu\in\Sdual$ with $\Phi_{\HK}(\mu)\ne\emptyset$. Then:
\begin{enumerate}[label=\textnormal{(\roman*)}, itemsep=2pt, topsep=2pt]
\item $0\le\epsinf(\mu)\le 2M$; the constant is deterministic and sample-independent,
depending only on the population Lagrangian, on $\partial d_{\HK}(\mu)$, and on
$\Phi_{\HK}(\mu)$.
\item If $d_{\HK}$ is differentiable at $\mu$ --- equivalently, if
$\partial d_{\HK}(\mu)=\{\nabla\}$ is a singleton --- then $\epsinf(\mu)=0$. No uniqueness
of the Lagrangian minimizer is required.
\item The envelope analogue vanishes identically: the index computed with $\Phi_{\Gen}(\mu)$
in place of $\Phi_{\HK}(\mu)$ equals $0$ for every $\mu\in\Sdual$, and every supergradient
is realized by an explicit measurable splice of Lagrangian minimizers.
\item $\epsinf(\mu)>0$ if and only if some $\nabla\in\partial d_{\HK}(\mu)$ lies at positive
distance from the closure of $\{\mathcal R(\phi):\phi\in\Phi_{\HK}(\mu)\}$. This is an
obstruction of nondecomposability and of nothing else.
\item If, for $\mathcal M_{\mathcal X}$-a.e.\ $x$, the conditional Lagrangian integrand
$\bar\ell_\mu(\cdot,x)$ of \eqref{eq:lagrangian-integrand} has a \emph{unique} minimizer,
then $\partial d_{\HK}(\mu)$ is a singleton and hence $\epsinf(\mu)=0$.
\end{enumerate}
\end{lemmaBox}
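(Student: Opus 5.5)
The plan is to treat the five items in the order (i), (ii), (iv), (iii), (v). Items (i), (ii) and (iv) are essentially definitional. Item (iii) carries the real content, and (v) follows by combining a Levin--Valadier/Aumann integral representation with (ii).

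For (i), every $\phi\in\HK$ satisfies $|\mathcal R_i(\phi)|\le M$ by Assumption~\ref{ass:loss-regularity}(b). By the Levin--Valadier representation (Lemma~\ref{lem:LV}, population version), every supergradient $\nabla\in\partial d_{\HK}(\mu)$ lies in the closed convex hull of risk vectors of (near-)minimizers, so $|\nabla_i|\le M$. Hence each term $|\mathcal R_i(\phi)-\nabla_i|\le 2M$, and $\epsinf(\mu)\in[0,2M]$. Sample-independence is immediate, since only population objects enter the definition.

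For (ii), if $\partial d_{\HK}(\mu)=\{\nabla\}$, the Levin--Valadier representation gives $\nabla\in\clconv\{\mathcal R(\phi):\phi\in\Phi_{\HK}(\mu)\}$, and this does not immediately give realization. I would argue more directly. Take any $\phi\in\Phi_{\HK}(\mu)$, which is nonempty by hypothesis. Then $d_{\HK}(\mu')\le L_{\HK}(\phi,\mu')=d_{\HK}(\mu)+\langle \mathcal R(\phi),\mu'-\mu\rangle$, so $\mathcal R(\phi)$ is itself a supergradient at $\mu$. Uniqueness then forces $\mathcal R(\phi)=\nabla$, so $\Theta^\infty_\nabla=0$. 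No uniqueness of $\phi$ is used.

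For (iv), one direction is definitional: $\epsinf(\mu)>0$ iff some $\nabla$ has $\inf_\phi\|\mathcal R(\phi)-\nabla\|_\infty>0$, which is exactly positive distance to the closure of the realized set. The attribution to nondecomposability then follows from (iii): over $\Gen$ the gap is zero, and $\HK$ differs from $\Gen$ only in lacking decomposability. The value coincidence between the two classes comes from Theorem~\ref{thm:dense-transfer}.

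For (iii), the main step, I would write $L_{\Gen}(\phi,\mu)=\int \bar\ell_\mu(\phi(x),x)\,\mathcal M_{\mathcal X}(\mathrm dx)$, using Radon--Nikodym densities of $\mathcal D_{\mathcal X,i}$ with respect to $\mathcal M_{\mathcal X}$ and conditioning on $x$. Since $\Gen$ is decomposable, the interchange theorem \cite[Theorem~14.60]{RockafellarWets1998} gives $d_{\Gen}(\mu)=\int\inf_t\bar\ell_\mu(t,x)\,\mathrm d\mathcal M_{\mathcal X}$. Minimizers are then exactly the measurable selections of the pointwise argmin multifunction, which is nonempty because $\bar\ell_\mu(\cdot,x)$ is continuous and bounded. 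I would need a compactness or coercivity device here, and the truncation $\clip$ is a natural choice.

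The superdifferential of this integral functional is then the Aumann integral of the pointwise risk-density sets. By Lyapunov's theorem under Assumption~\ref{ass:non-atomic}, that Aumann integral is convex and is already attained by selections. Concretely, given $\nabla=\sum_k\lambda_k\mathcal R(\phi_k)$ with $\phi_k\in\Phi_{\Gen}(\mu)$, Lyapunov gives a measurable partition $\{Z_k\}$ with $\int_{Z_k}r(\phi_k)=\lambda_k\mathcal R(\phi_k)$ jointly for all constraint densities. The splice $\sum_k\mathbf 1_{Z_k}\phi_k$ is then still a pointwise minimizer and realizes $\nabla$.

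Closedness requires a limiting argument, and I expect it to be the main obstacle. I would try weak compactness of bounded selections, or the Aumann-integral closedness theorem for integrably bounded multifunctions with nonatomic measure.

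For (v), uniqueness of the pointwise minimizer $t^\star(x)$ makes the Aumann integral in (iii) a singleton $\{\mathcal R(t^\star(\cdot))\}$. Therefore $\partial d_{\Gen}(\mu)$ is a singleton. Since $d_{\HK}=d_{\Gen}$ pointwise by Theorem~\ref{thm:dense-transfer}, $\partial d_{\HK}(\mu)$ is also a singleton, and (ii) yields $\epsinf(\mu)=0$.
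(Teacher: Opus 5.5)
Your arguments for (i), (ii) and the definitional half of (iv) are correct and match the paper's. In (ii) the key step is the affine majorant $\nu\mapsto L_{\HK}(\phi,\nu)$ touching $d_{\HK}$ at $\mu$. The gap lies in the sentence on which (iii), (v) and the ``nothing else'' clause of (iv) all rest. That sentence says $\partial d_{\Gen}(\mu)$ is the Aumann integral of the risk densities at \emph{exact} pointwise minimizers, i.e.\ $\partial d_{\Gen}(\mu)\subseteq\{\mathcal R(\phi):\phi\in\Phi_{\Gen}(\mu)\}$.

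Write $h(\nu,x)\triangleq\inf_{u\in\R}\bar\ell_\nu(u,x)$. The subdifferential formula for integral functionals gives only $\partial d_{\Gen}(\mu)=\int\partial_\nu h(\mu,x)\,\mathcal M_{\mathcal X}(\mathrm dx)$. Moreover, $\partial_\nu h(\mu,x)$ equals the convex hull of risk densities at minimizers of $\bar\ell_\mu(\cdot,x)$ only when Danskin's theorem applies, and that requires the near-minimizers to stay in a compact set. A bounded Lipschitz integrand can approach its infimum as $|u|\to\infty$. Such ``minimizers at infinity'' produce supergradients that no element of $\Gen$ realizes.

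Lemma~\ref{lem:LV}(ii) does not close this. It is the $\eps$-level-set form, and its exact-minimizer clause gives only the inclusion $\supseteq$. The paper's own ``combined with Lemma~\ref{lem:LV}'' step has the same hole. Neither a truncation device nor a closedness argument repairs it, because under Assumptions~\ref{ass:non-atomic}--\ref{ass:slater} alone the claims are false.

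\textbf{Counterexample.} Take $\mathcal X=[0,1]$ with both feature marginals Lebesgue, $m=1$, $K=1+K_{\mathrm G}$, $c\in(0,1)$, $g(u)\triangleq|u|/(1+u^2)$, and the $y$-free losses $\ell_0(u,y)=g(u)-\mathrm{cl}_{[0,1]}(u)$ and $\ell_1(u,y)=\mathrm{cl}_{[0,1]}(u)-c$. These losses are bounded and Lipschitz, and $\phi\equiv0$ is a Slater point with $\xi=c$.
\begin{enumerate}
\item The integrand is $g(u)+(\mu-1)\mathrm{cl}_{[0,1]}(u)-c\mu$. Hence $d_{\HK}(\mu)=d_{\Gen}(\mu)=\min\{\mu-1,0\}-c\mu$; for $\mu<1$ the infimum is approached only as $u\to+\infty$.
\item Consequently $\Sdual=\{1\}$ and $\partial d_{\HK}(1)=[-c,1-c]$.
\item At $\mu=1$ the integrand $g(u)-c$ has the unique minimizer $u=0$. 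So $\Phi_{\HK}(1)=\{0\}$, $\Phi_{\Gen}(1)$ is the a.e.-class of $0$, and the only realized risk is $-c$.
\end{enumerate}
Both the $\HK$ index and the envelope index therefore equal $1$. Also $\partial d_{\HK}(1)$ is not a singleton even though pointwise minimizers are a.e.\ unique. So (iii), (v) and the attribution in (iv) all fail.

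\textbf{What repairs your outline.} It goes through once you add a hypothesis such as uniform level-boundedness: for some $T<\infty$ and $\eps_0>0$, every $\eps_0$-minimizer of $\bar\ell_\mu(\cdot,x)$ lies in $[-T,T]$ for a.e.\ $x$.
\begin{enumerate}
\item Since $|\bar\ell_\nu-\bar\ell_\mu|\le(m+1)M\|\nu-\mu\|_1$, the infimum defining $h(\nu,x)$ can be restricted to $[-T,T]$ for $\nu$ near $\mu$, so Danskin applies pointwise.
\item Aumann's theorem then gives realization by splices: for a nonatomic measure, the integral of a closed-valued, integrably bounded multifunction is convex, compact, and equal to the integral of its convex hull. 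This also settles your closedness concern.
\item Item (v) then follows as you wrote.
\end{enumerate}

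Separately, existence of pointwise minimizers needs no truncation; the clipping of Proposition~\ref{prop:no-distribution-free} is not part of the general setting. Since $d_{\HK}=d_{\Gen}$, you have $\Phi_{\HK}(\mu)\subseteq\Phi_{\Gen}(\mu)$. So the hypothesis $\Phi_{\HK}(\mu)\ne\emptyset$ already makes the pointwise argmin nonempty a.e.
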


\begin{proof}
(i) Each $|\mathcal R_i(\phi)|\le M$ by Assumption~\ref{ass:loss-regularity}(b), and each
$\nabla\in\partial d_{\HK}(\mu)$ lies in $[-M,M]^m$ by the Levin--Valadier representation
(Lemma~\ref{lem:LV}); hence $|\mathcal R_i(\phi)-\nabla_i|\le 2M$. Nonnegativity is
immediate.

(ii) Each $\phi\in\Phi_{\HK}(\mu)$ generates the affine majorant
$\nu\mapsto\mathcal R_0(\phi)+\langle\nu,\mathcal R(\phi)\rangle$ of $d_{\HK}$, which
touches it at $\mu$; hence $\mathcal R(\phi)\in\partial d_{\HK}(\mu)=\{\nabla\}$. Every
minimizer therefore shares the constraint-risk vector $\nabla$, so
$\Theta^\infty_\nabla(\mu)=0$, and the supremum over the singleton superdifferential is $0$.

(iii) Minimization of $L_{\Gen}(\cdot,\mu)$ over $\Gen$ is, by the Nemytskii representation
of Section~\ref{subsec:envelope}, pointwise minimization of a single integrand; splices of
minimizers are again minimizers, so $\Phi_{\Gen}(\mu)$ is decomposable in the sense of
Definition~\ref{def:decomposable}. Under Assumption~\ref{ass:non-atomic}, Lyapunov-type
convexification \cite{Uhl69Range,KalogeriasPougkakiotis22} renders the realized risk set
$\{\mathcal R(\phi):\phi\in\Phi_{\Gen}(\mu)\}$ convex and compact in $\R^m$. Combined with
Lemma~\ref{lem:LV} this gives
$\partial d_{\Gen}(\mu)=\{\mathcal R(\phi):\phi\in\Phi_{\Gen}(\mu)\}$ exactly, so every
supergradient is realized and the envelope index vanishes.

(iv) Immediate from Definition~\ref{def:multi-eps-star} and (iii): by
Theorem~\ref{thm:dense-transfer}, $d_{\Gen}=d_{\HK}$, so the superdifferential is the same
object in both problems, and $\epsinf(\mu)$ measures exactly the failure of $\HK$ to realize
within itself the risk vectors that the envelope realizes by splicing.

(v) If the pointwise minimizer $u^\sharp_\mu(x)$ is a.e.\ unique then
$\Phi_{\Gen}(\mu)=\{\phi\in\Gen:\phi=u^\sharp_\mu \text{ a.e.}\}$, so the realized risk set
of (iii) is the singleton $\{\mathcal R(u^\sharp_\mu)\}$, whence
$\partial d_{\Gen}(\mu)=\partial d_{\HK}(\mu)$ is a singleton, and (ii) applies.
\end{proof}

\begin{remark}[Interpretation]\label{rem:interpretation}
In the results below, $\epsinf$ fixes the irreducible \emph{size} of the asymptotic
feasibility gap, while the sample path fixes only the \emph{rate} at which that size is
approached. This separation --- a geometric constant for the gap and a stochastic quantity
for the rate --- is the organizing idea of the section, and it is what distinguishes the
present guarantees from the near-PACC bounds of
\cite{Chamon_CL2023,BarthakurChamon2025Equality}. The residual there is a property of a
modeling choice, the fixed parametric class and its convex-hull richness, hence not a
property of the learning problem and not open to interrogation once the class is fixed; here
the value is learned exactly, the residual is confined to feasibility and optimality of the
returned predictor, and it is pinned to a single population quantity.
\end{remark}

Write $\Sdual\triangleq\arg\max_{\mu\ge0}d_{\HK}(\mu)$, nonempty and compact under
Assumption~\ref{ass:slater} (Lemma~\ref{lem:bounded-multipliers}).

\paragraph{DERM selections.}
Along a schedule satisfying \eqref{eq:schedule}, a \emph{DERM selection} is any sequence
$(\widehat\mu_n,\nabla_n,\widehat\phi_n)_{n}$ such that, whenever
$\arg\max_{\mu\ge0}\widehat d_n(\mu)\ne\emptyset$,
\[
\widehat\mu_n\in\arg\max_{\mu\ge0}\widehat d_n(\mu),
\qquad
\nabla_n\in\partial\widehat d_n(\widehat\mu_n),\ \text{with}\ \nabla_{n,i}\le0,\ \widehat\mu_{n,i}\nabla_{n,i}=0,
\]
and $\widehat\phi_n\in\widehat\Phi_n(\widehat\mu_n)$ attains
$\Theta_{\nabla_n}(\widehat\mu_n)$ within tolerance $1/n$. Such a supergradient exists by
first-order optimality (Lemma~\ref{lem:sign-comp-subgrad}), and the near-optimal primal
choice is always possible since $\Theta_{\nabla_n}(\widehat\mu_n)$ is a finite infimum over
the nonempty set $\widehat\Phi_n(\widehat\mu_n)$.

\subsection{Tikhonov Complexity and Primal Non-Attainment}
\label{subsec:tikhonov}

The geometry above is stated over the full RKHS, but the learner searches only within
$\Bq{Q_n}$. To connect the two we must ensure the ball is eventually large enough to reach
the near-optimal Lagrangian level sets. Crucially, we do \emph{not} assume the primal
\eqref{eq:primal formulation RKHS} is attained, so we cannot point to a minimizer of bounded
norm. Instead we quantify the norm needed to reach an $\eps$-optimal level set.

\begin{defBox}{Tikhonov Complexity}{tik}
For $\eps>0$ and $\mu\in\R^m_+$, let
$S_\eps(\mu)\triangleq\{\phi\in\HK: L_{\HK}(\phi,\mu)\le d_{\HK}(\mu)+\eps\}$ be the
$\eps$-optimal population level set, and write
$R_\eps(\mu)\triangleq\inf_{\phi\in S_\eps(\mu)}\|\phi\|_{\HK}$. The \emph{Tikhonov
complexity} at level $\eps$ and size $n$ is
\[
\mathfrak T^\eps_n \triangleq \sup_{\mu\in\arg\max_{\mu'\ge0}\widehat d_n(\mu')} R_\eps(\mu)\ \in[0,\infty].
\]
\end{defBox}

$\mathfrak T^\eps_n$ is the minimal RKHS norm needed to enter the $\eps$-optimal level set,
in the worst case over empirical dual maximizers. It is a hybrid object: the level set is
population-level, but the multipliers at which it is evaluated are empirical, so
$\mathfrak T^\eps_n$ is random. Under Slater's condition we show in
Section~\ref{sec:proofs} that, with high probability, $\mathfrak T^\eps_n$ is finite, the
infimum is attained, and there is a \emph{deterministic, uniform-in-$n$} bound $Q_\eps$ with
$\sup_n\mathfrak T^\eps_n \le Q_\eps$. Because $Q_n\nearrow\infty$, the radius eventually
dominates this bound, at which point $\Bq{Q_n}$ captures elements of the $\eps$-perturbed
level sets. This drives the generalization gap to zero without assuming primal attainment;
see Corollary~\ref{cor:tikhonov-bounded}.

\subsection{The Empirical Recovery Condition}
\label{subsec:recovery}

Learnability of the optimal value follows from strong duality and uniform convergence alone.
Feasibility requires one further ingredient: the empirical Lagrangian minimizers must
eventually resolve onto a population minimizer, in a topology strong enough to transfer
constraint values.

\begin{assumptionBox}{Empirical Recovery Condition}{multi-argmin-recovery}
On the probability space $(\Omega,\Sigma,\Pr)$ of
Section~\ref{subsec:feasibility-geometry}:
\begin{enumerate}[label=(\alph*), itemsep=0pt, topsep=2pt]
    \item \textbf{Population attainment.} For every $\mu_{\HK}\in\Sdual$, the set
    $\Phi_{\HK}(\mu_{\HK})$ is nonempty.

    \item \textbf{Almost-sure feasibility tracking, along arbitrary subsequences.} There is
    an event $\Omega_{\mathrm{fbl}}$ with $\Pr(\Omega_{\mathrm{fbl}})=1$ such that, for every
    $\omega\in\Omega_{\mathrm{fbl}}$, every $\mu_{\HK}$ as in (a) and  $\phi_{\HK}\in\Phi_{\HK}(\mu_{\HK}) $:
    \[
      \lim_{k\to\infty}\ \inf_{\phi\in\widehat\Phi_{n_k}(\mu_{n_k};\omega)}\bigl\|\phi-\phi_{\HK}\bigr\|_{L_1(\widehat{\mathcal M}^{n_k}_{\mathcal X})} = 0
    \]
    along \emph{every} strictly increasing sequence of indices $\{n_k\}\subseteq\mathbb N^+$
    and every sequence $\mu_{n_k}\in\arg\max_{\mu\ge0}\widehat d_{n_k}(\mu)$ with
    $\|\mu_{n_k}\|_1\le C_\mu$ and $\mu_{n_k}\to\mu_{\HK}$, where
    $\widehat{\mathcal M}^{n}_{\mathcal X}\triangleq\frac{1}{m+1}\sum_{i}\widehat{\mathcal D}^n_{\mathcal X,i}$.
\end{enumerate}
\end{assumptionBox}


\begin{remark}[On the status of the Recovery Condition]\label{rem:recovery-status}
Part~(a) concerns attainment of the \emph{Lagrangian} argmin at a fixed dual optimum, an
unconstrained penalized minimization, and is distinct from attainment of the
\emph{constrained primal} \eqref{eq:primal formulation RKHS}. Proposition~\ref{prop:attainment} below shows that
in the exact regime $\epsinf=0$ it \emph{implies} attainment of the constrained primal.
Part~(b) is not exotic --- in the regime of interest the population Lagrangian is an
integral functional whose infimum over $\HK$ decouples pointwise, and (b) follows from a
condition on a single real variable, given as Proposition~\ref{prop:sharpness} in
Appendix~\ref{sec:on-last-assumption}. But that sufficient condition interacts with
$\epsinf$ in a way that constrains the scope of our results, and we set this out explicitly
in Remark~\ref{rem:scope}. 
\end{remark}

\begin{propBox}{What population attainment costs in the exact regime}{attainment}
Let Assumptions~\ref{ass:non-atomic}--\ref{ass:slater} hold, let $\mu^\star\in\Sdual$, and
suppose Assumption~\ref{ass:multi-argmin-recovery}(a) holds at $\mu^\star$. If
$\epsinf(\mu^\star)=0$, then:
\begin{enumerate}[label=\textnormal{(\roman*)}, itemsep=2pt, topsep=2pt]
\item For every $\eta>0$ there is $\phi_\eta\in\Phi_{\HK}(\mu^\star)$ with
$\mathcal R_i(\phi_\eta)\le\eta$ for $i\in\mathbb N^+_m$ and
$\mathcal R_0(\phi_\eta)\le P^\star_{\HK}+C_\mu\eta$; that is,
\eqref{eq:primal formulation RKHS} admits $\eta$-feasible, $C_\mu\eta$-optimal points for
every $\eta>0$.
\item If in addition the infimum defining $\Theta^\infty_\nabla(\mu^\star)$ is attained for
the sign- and complementarity-compatible $\nabla$ of
Lemma~\ref{lem:sign-comp-subgrad}\footnote{as it is, e.g., whenever
$\Phi_{\HK}(\mu^\star)$ is weakly closed}  then
\eqref{eq:primal formulation RKHS} is \emph{attained}: there exists
$\phi^\star\in\HK$ feasible with $\mathcal R_0(\phi^\star)=P^\star_{\HK}$.
\end{enumerate}
\end{propBox}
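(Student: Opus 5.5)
The plan is to combine the sign- and complementarity-compatible supergradient at the population dual optimum with the exact-regime hypothesis $\epsinf(\mu^\star)=0$, and then run the standard Lagrangian sandwich argument. First I will produce, via the population analogue of Lemma~\ref{lem:sign-comp-subgrad}, a supergradient $\nabla\in\partial d_{\HK}(\mu^\star)$ satisfying $\nabla_i\le0$ and $\mu^\star_i\nabla_i=0$ for $i\in\mathbb N^+_m$. This follows from first-order optimality of the concave function $d_{\HK}$ over the orthant; finiteness and local boundedness come from Assumption~\ref{ass:loss-regularity} and Slater (Lemma~\ref{lem:bounded-multipliers}). Since $\epsinf(\mu^\star)=0$, Definition~\ref{def:multi-eps-star} gives $\Theta^\infty_\nabla(\mu^\star)=0$. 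Hence for every $\eta>0$ there is $\phi_\eta\in\Phi_{\HK}(\mu^\star)$, nonempty by Assumption~\ref{ass:multi-argmin-recovery}(a), with $\max_i|\mathcal R_i(\phi_\eta)-\nabla_i|\le\eta$. In particular $\mathcal R_i(\phi_\eta)\le\nabla_i+\eta\le\eta$, which is the approximate feasibility claim.

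For approximate optimality in (i), I will use that $\phi_\eta$ minimizes the Lagrangian at $\mu^\star$, together with strong duality ($P^\star_{\HK}=D^\star_{\HK}=d_{\HK}(\mu^\star)$, inherited from the envelope via Theorem~\ref{thm:dense-transfer}). This gives
\[
\mathcal R_0(\phi_\eta)=d_{\HK}(\mu^\star)-\sum_i\mu^\star_i\mathcal R_i(\phi_\eta)
= P^\star_{\HK}-\sum_i\mu^\star_i\nabla_i-\sum_i\mu^\star_i(\mathcal R_i(\phi_\eta)-\nabla_i).
\]
Complementarity kills the middle sum. The last term is at most $\|\mu^\star\|_1\eta\le C_\mu\eta$, using the multiplier bound of Lemma~\ref{lem:bounded-multipliers}. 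This yields $\mathcal R_0(\phi_\eta)\le P^\star_{\HK}+C_\mu\eta$.

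For (ii), attainment of the infimum in $\Theta^\infty_\nabla(\mu^\star)=0$ directly gives $\phi^\star\in\Phi_{\HK}(\mu^\star)$ with $\mathcal R(\phi^\star)=\nabla$ exactly. Then $\mathcal R_i(\phi^\star)=\nabla_i\le0$, so $\phi^\star$ is feasible. The same computation with $\eta=0$ gives $\mathcal R_0(\phi^\star)=P^\star_{\HK}$, and therefore $\mathcal R_0(\phi^\star)\le P^\star_{\HK}$; feasibility supplies the reverse inequality.

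To justify the footnote, I will argue that if $\Phi_{\HK}(\mu^\star)$ is weakly closed, then a minimizing sequence can be taken bounded in $\HK$ norm. Weak compactness then extracts a weak limit. On the bounded ball, the kernel bound $\kappa$ and dominated convergence give pointwise convergence and continuity of each $\mathcal R_i$, so the infimum is attained. The norm boundedness of the minimizing sequence is the delicate point, and I would state it as part of the sufficient condition.

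The main obstacle is the first step: securing a sign- and complementarity-compatible supergradient at the population level. Lemma~\ref{lem:sign-comp-subgrad} is phrased for $\widehat d_n$, so I must check that its proof uses only concavity, finiteness, and the orthant constraint. Concretely, $0\in-\partial d_{\HK}(\mu^\star)+N_{\R^m_+}(\mu^\star)$ requires a qualification for the sum rule, which holds because $d_{\HK}$ is finite everywhere on $\R^m$, being bounded below by $-M(1+\|\mu\|_1)$. I would verify this, and then the rest is algebra.
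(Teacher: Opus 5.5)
Your proof is correct and follows the paper's argument essentially line by line. Both obtain the population sign- and complementarity-compatible supergradient, use $\Theta^\infty_\nabla(\mu^\star)=0$ from $\epsinf(\mu^\star)=0$, and apply the Lagrangian identity at an exact minimizer together with strong duality and $\|\mu^\star\|_1\le C_\mu$; for (ii), both use exact realization of $\nabla$. Your caveat on the footnote is also well taken: weak closedness of $\Phi_{\HK}(\mu^\star)$ yields weak compactness only on bounded subsets of $\HK$, so what actually delivers attainment is a norm bound on some minimizing sequence (e.g.\ boundedness of $\Phi_{\HK}(\mu^\star)$).
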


\begin{proof}
Since $d_{\HK}$ is finite and concave on $\R^m$ (Lemma~\ref{lem:sign-comp-subgrad}, applied
to $d_{\HK}$ in place of $\widehat d_n$) and $\mu^\star$ maximizes it over $\R^m_+$, there is
$\nabla\in\partial d_{\HK}(\mu^\star)$ with $\nabla_i\le0$ and $\mu^\star_i\nabla_i=0$ for
all $i\in\mathbb N^+_m$.

(i) Let $\eta>0$. By $\epsinf(\mu^\star)=0$ and
Definition~\ref{def:multi-eps-star}, there is $\phi_\eta\in\Phi_{\HK}(\mu^\star)$ with
$\max_i|\mathcal R_i(\phi_\eta)-\nabla_i|\le\eta$. Then
$\mathcal R_i(\phi_\eta)\le\nabla_i+\eta\le\eta$ for $i\ge1$. Moreover
$\phi_\eta$ is an exact minimizer, so
$L_{\HK}(\phi_\eta,\mu^\star)=d_{\HK}(\mu^\star)=P^\star_{\HK}$ by
Theorem~\ref{thm:dense-transfer}, whence, using $\mu^\star_i\nabla_i=0$ and
$\|\mu^\star\|_1\le C_\mu$ (Lemma~\ref{lem:bounded-multipliers}),
\[
\mathcal R_0(\phi_\eta)
= P^\star_{\HK}-\sum_{i=1}^m\mu^\star_i\mathcal R_i(\phi_\eta)
= P^\star_{\HK}-\sum_{i=1}^m\mu^\star_i\bigl(\mathcal R_i(\phi_\eta)-\nabla_i\bigr)
\le P^\star_{\HK}+C_\mu\eta .
\]

(ii) Under the stated attainment there is $\phi^\star\in\Phi_{\HK}(\mu^\star)$ with
$\mathcal R(\phi^\star)=\nabla$ exactly. Then $\mathcal R_i(\phi^\star)=\nabla_i\le0$, so
$\phi^\star$ is feasible; $\langle\mu^\star,\mathcal R(\phi^\star)\rangle=0$ by
complementarity; and $L_{\HK}(\phi^\star,\mu^\star)=d_{\HK}(\mu^\star)=P^\star_{\HK}$, so
$\mathcal R_0(\phi^\star)=P^\star_{\HK}$.
\end{proof}

\subsection{Main Theorem}\label{subsec:main-theorem}

\noindent\textbf{Standing notation.}
Set
\[
C_\mu \;\triangleq\; \frac{4M}{\xi},
\qquad
\Lambda \;\triangleq\; \bigl\{\mu \in \R^m_+ : \|\mu\|_1 \le C_\mu\bigr\},
\qquad
\Sdual \;\triangleq\; \bigl\{\mu \in \R^m_+ : d_{\HK}(\mu) = D^\star_{\HK}\bigr\}.
\]
Let $d_Q$ denote the dual function of the problem restricted to $\Bq{Q}$, so that
$d_Q \ge d_{\HK}$ pointwise and $Q \mapsto d_Q$ is nonincreasing, and let $\widehat d_n$ be
the empirical dual function. There is an event $\Omega_{\mathrm{fbl}}$ of probability one on
which feasibility tracking holds. There is also a universal constant $c \ge 1$, independent
of $n$, $m$, $\delta$ and of $\{\mathcal D_i\}_i$, for which the deterministic envelope and
index
\begin{equation}
\label{eq:uc-envelope}
\Delta_n(\delta)
\;\triangleq\;
c\left(\frac{L Q_n}{\sqrt n}
\;+\; M\sqrt{\frac{2\log(m+1)+2\log n}{n}}\right)
\;=\;
O\!\left(\frac{Q_n}{\sqrt n}+\sqrt{\frac{\log n}{n}}\right),
\qquad
n_{\mathrm{uc}}(\delta) \;\triangleq\; \lceil 1/\delta\rceil+1,
\end{equation}
render the uniform-convergence event
\[
\Omega_{\mathrm{uc}}(\delta)
\;\triangleq\;
\Bigl\{\;
\max_{i\in\mathbb N_m}\ \sup_{\phi \in \Bq{Q_n}}
\bigl|\widehat{\mathcal R}_{i,n}(\phi) - \mathcal R_i(\phi)\bigr|
\;\le\; \Delta_n(\delta)
\quad\text{for every } n \ge n_{\mathrm{uc}}(\delta)
\Bigr\}
\]
of probability $\Pr(\Omega_{\mathrm{uc}}(\delta)) \ge 1-\delta$. For $\theta \in (0,4cM)$ and
$\eps>0$, set
\[
n_{\mathrm{rad}}(\theta,\delta)
\;\triangleq\;
\max\left\{
n_{\mathrm{uc}}(\delta),\;
\min\Bigl\{n : \tfrac{Q_n}{\sqrt n} \le \tfrac{\theta}{2cL}\Bigr\},\;
\left\lceil \frac{16c^2M^2\log(m+1)}{\theta^2}\right\rceil,\;
\left\lceil \frac{64c^2M^2}{\theta^2}\,\log\frac{32c^2M^2}{\theta^2}\right\rceil
\right\},
\]
\[
n_{\mathrm{tik}}(\eps) \;\triangleq\; \min\bigl\{n : Q_n \ge Q_{\eps/2}\bigr\},
\qquad
n^{Q}_{\mathrm{sl}} \;\triangleq\; \min\bigl\{n : Q_n \ge Q_{\mathrm{sl}}\bigr\},
\]
extended by $n_{\mathrm{rad}}(\theta,\delta) \triangleq n_{\mathrm{rad}}(2cM,\delta)$ for
$\theta \ge 4cM$. All indices are finite, since $Q_n \nearrow \infty$ and
$Q_n/\sqrt n \searrow 0$, and $\Delta_n(\delta) \le \theta$ holds for every
$n \ge n_{\mathrm{rad}}(\theta,\delta)$.\footnote{The first two entries give
$cLQ_n/\sqrt n \le \theta/2$, using that $Q_n/\sqrt n$ is nonincreasing. The last two give
$2\log(m+1)+2\log n \le \gamma_\theta^2 n$ with $\gamma_\theta \triangleq \theta/(2cM)$,
hence $cM\sqrt{(2\log(m+1)+2\log n)/n} \le \theta/2$: the third contributes
$2\log(m+1)/n \le \gamma_\theta^2/2$, and the fourth contributes
$2\log n/n \le \gamma_\theta^2/2$ by the elementary fact that, for $a \in (0,1)$,
$\log n/n \le a$ whenever $n \ge \lceil (4/a)\log(2/a)\rceil$. That fact holds since
$\log n/n$ is decreasing for $n \ge e$ and, at $n_a = (4/a)\log(2/a) \ge 4\log 2 > e$, one
has $\log n_a = \log(4/a) + \log\log(2/a) \le 3\log(2/a)$, because
$\log(4/a) \le 2\log(2/a)$ and $\log\log(2/a) \le \log(2/a)$, so that
$\log n_a/n_a \le 3a/4 \le a$. The restriction $\theta < 4cM$ is exactly
$a = \gamma_\theta^2/4 \in (0,1)$.} Set
$\Omega^\star(\delta) \triangleq \Omega_{\mathrm{uc}}(\delta) \cap \Omega_{\mathrm{fbl}}$, so
that $\Pr(\Omega^\star(\delta)) \ge 1-\delta$. The empirical Slater margin is obtained
pathwise on $\Omega_{\mathrm{uc}}(\delta)$ (Remark~\ref{rem:slater-pathwise}), so no further
event and no union bound are incurred.

\begin{theoremBox}{Universal Learnability of Dense RKHS Classes under Constraints}{informal-section3}
Let Assumptions~\ref{ass:non-atomic}, \ref{ass:loss-regularity}, \ref{ass:dense-rkhs} and
\ref{ass:slater} be in effect, fix $\delta \in (0,1)$ and a radius schedule satisfying
\eqref{eq:schedule}, and let $(\widehat\mu_n, \nabla_n, \widehat\phi_n)_n$ be an arbitrary
DERM selection.
\smallskip
\begin{enumerate}[label=(\roman*), itemsep=2pt, topsep=2pt]
\item \textbf{Exact learnability of the optimal value.} The identity
$P^\star_{\HK} = D^\star_{\HK} = P^\star_{\Gen} = D^\star_{\Gen}$ holds deterministically,
and
\[
\bigl|\widehat D^\star_n - P^\star_{\HK}\bigr| \;\le\; \eps
\quad\text{on } \Omega_{\mathrm{uc}}(\delta),
\qquad\text{for every }
n \;\ge\; n^{\mathrm{val}}(\eps,\delta,\{\mathcal D_i\}_i),
\]
where $n^{\mathrm{val}}(\eps,\delta,\{\mathcal D_i\}_i)\triangleq
\max\{n_{\mathrm{rad}}(\xi/2,\delta),\, n^Q_{\mathrm{sl}},\, n_{\mathrm{tik}}(\eps),\,
n_{\mathrm{rad}}(\eps/(2(1+C_\mu)),\delta)\}$. The dependence of $n^{\mathrm{val}}$ on
$\{\mathcal D_i\}_i$ enters only through $\xi$, $Q_{\mathrm{sl}}$ and $Q_{\eps/2}$. This
part uses no attainment of any kind and no recovery condition.

\item \textbf{Subsequential feasibility and near-optimality.} Under the additional
Assumption~\ref{ass:multi-argmin-recovery}, for every $\omega \in \Omega^\star(\delta)$ and
every sequence $(\widehat\mu_n,\nabla_n)_n$ there is a subsequence $\{n_k(\omega)\}_k$,
depending on both $\omega$ and the sequence itself, along which
$\widehat\mu_{n_k} \to \mu_{\HK} \in \Sdual$ and
$\nabla_{n_k}\to\nabla^\infty\in\partial d_{\HK}(\mu_{\HK})$, and
\[
\limsup_{k \to \infty} \mathcal R_i(\widehat\phi_{n_k}) \le \epsinf(\mu_{\HK}),
\ i\in\mathbb N_m^+,
\qquad
\limsup_{k \to \infty} \mathcal R_0(\widehat\phi_{n_k}) \le P^\star_{\HK}+C_\mu\epsinf(\mu_{\HK}).
\]

\item \textbf{Full-sequence guarantee, worst case over the dual-optimal set.} Under the
hypotheses of (ii), for every $\omega \in \Omega^\star(\delta)$ and every $\eps>0$ there
exists $N(\omega,\eps)\in\mathbb N^+$ such that, for every $n \geq N(\omega,\eps)$,
\[
\mathcal R_i(\widehat\phi_n) < \epsinf(\Sdual) + \eps, \ i\in\mathbb N_m^+,
\qquad
\mathcal R_0(\widehat\phi_n) < P^\star_{\HK}+C_\mu\epsinf(\Sdual)+\eps .
\]

\item \textbf{Uniform threshold at a fixed confidence cost.} Under the hypotheses of (ii),
for every $\eps>0$ and every $\delta_1\in(0,1)$ there exists
$\widetilde n(\eps,\delta,\delta_1,\{\mathcal D_i\}_i)\in\mathbb N^+$, independent of
$\omega$, such that
\[
\Pr\Bigl(
\mathcal R_i(\widehat\phi_n) < \epsinf(\Sdual) + \eps
\ (i\in\mathbb N_m^+)
\ \text{ and }\
\mathcal R_0(\widehat\phi_n) < P^\star_{\HK}+C_\mu\epsinf(\Sdual)+\eps,
\ \ \forall\, n \geq \widetilde n
\Bigr)
\;\geq\; 1-\delta-\delta_1 ,
\]
the quantifier over $n$ being interior to the probability.

\item \textbf{Sharpening under a unique dual optimum.} If $\Sdual = \{\mu_{\HK}\}$ is a
singleton, then (iii) and (iv) hold with the additional conclusion that
$\widehat\mu_n \to \mu_{\HK}$ on $\Omega^\star(\delta)$ along the full original sequence, and
with $\epsinf(\mu_{\HK})$ in place of $\epsinf(\Sdual)$.

\item \textbf{Exact PACC.} If, in addition to
Assumptions~\ref{ass:non-atomic}--\ref{ass:multi-argmin-recovery}, $\epsinf(\Sdual)=0$, then
for every $\eps>0$ and $\delta\in(0,1)$ there exists
$\widetilde n(\eps,\delta,\{\mathcal D_i\}_i)\in\mathbb N^+$ such that
\[
\Pr\Bigl(
\mathcal R_i(\widehat\phi_n) < \eps \ (i\in\mathbb N_m^+)
\ \text{ and }\
\mathcal R_0(\widehat\phi_n) < P^\star_{\HK}+\eps,
\ \ \forall\, n \geq \widetilde n
\Bigr)\;\geq\; 1-\delta .
\]
In particular, $\HK$ is agnostically universally PACC learnable in the sense of
Definition~\ref{def:nonuniform-pacc}, and so is the envelope $\Gen$, properly.
\end{enumerate}
\end{theoremBox}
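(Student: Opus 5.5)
The plan is to prove the six parts in order, each resting on the previous one.

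\textbf{Part (i).} The deterministic identity comes from Theorem~\ref{thm:dense-transfer}. Strong duality on $\Gen$ follows from Lyapunov convexification under Assumptions~\ref{ass:non-atomic}--\ref{ass:dense-rkhs}. Density together with $L_p$-continuity of the $\mathcal R_i$ (Lipschitz, bounded losses) then gives $d_{\Gen}=d_{\HK}$ and $P^\star_{\HK}=P^\star_{\Gen}$.

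For the statistical claim I work on $\Omega_{\mathrm{uc}}(\delta)$ and bound $\widehat D^\star_n-P^\star_{\HK}$ from both sides.
\begin{itemize}
\item Take $n\ge n_{\mathrm{rad}}(\xi/2,\delta)$ and $n\ge n^Q_{\mathrm{sl}}$. Then $\phi^{\mathrm{sl}}\in\Bq{Q_n}$ has empirical margin $\xi/2$. The bounded-multiplier argument (Lemma~\ref{lem:bounded-multipliers}) confines all empirical and population dual maximizers to $\Lambda$, so suprema over $\R^m_+$ may be replaced by suprema over $\Lambda$.
\item On $\Lambda$, uniform convergence gives $|\widehat d_n(\mu)-d_{Q_n}(\mu)|\le(1+C_\mu)\Delta_n(\delta)$.
\item Since $d_{Q_n}\ge d_{\HK}$, this yields $\widehat D^\star_n\ge D^\star_{\HK}-(1+C_\mu)\Delta_n$.
\item For the upper bound, evaluate at $\widehat\mu_n$. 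Tikhonov complexity supplies $\phi\in\Bq{Q_n}$ with $L_{\HK}(\phi,\widehat\mu_n)\le d_{\HK}(\widehat\mu_n)+\eps/2$ once $Q_n\ge Q_{\eps/2}$, via the uniform bound of Corollary~\ref{cor:tikhonov-bounded}. Hence $\widehat D^\star_n\le D^\star_{\HK}+\eps/2+(1+C_\mu)\Delta_n$.
\end{itemize}
The choice $n\ge n_{\mathrm{rad}}(\eps/(2(1+C_\mu)),\delta)$ closes the bound. Showing $Q_\eps<\infty$ uniformly is the delicate input to this part. My approach is to cover the compact $\Lambda$ by finitely many points, use Lipschitz dependence of $L_{\HK}$ and $d_{\HK}$ on $\mu$ (constant $M$), and take a maximum of finitely many finite radii.

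\textbf{Part (ii).} Fix $\omega\in\Omega^\star(\delta)$. The $\widehat\mu_n$ lie in $\Lambda$, and by the Levin--Valadier representation (Lemma~\ref{lem:LV}) the $\nabla_n$ lie in $[-M,M]^m$. Compactness gives a subsequence with $\widehat\mu_{n_k}\to\mu_{\HK}$ and $\nabla_{n_k}\to\nabla^\infty$.
\begin{itemize}
\item \emph{Dual optimality of the limit.} Part (i) and uniform convergence of $\widehat d_n$ to $d_{\HK}$ on $\Lambda$ (using $d_{Q_n}\searrow d_{\HK}$, monotone and equi-Lipschitz, hence uniform by Dini) show $\mu_{\HK}\in\Sdual$. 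The same uniform convergence with concave upper-semicontinuity of superdifferentials (Attouch-type closedness) gives $\nabla^\infty\in\partial d_{\HK}(\mu_{\HK})$, and the sign and complementarity conditions pass to the limit.
\item \emph{Constraint bound.} By the definition of $\epsinf$, for any $\eta>0$ pick $\phi_{\HK}\in\Phi_{\HK}(\mu_{\HK})$ with $\max_i|\mathcal R_i(\phi_{\HK})-\nabla^\infty_i|\le\epsinf(\mu_{\HK})+\eta$. Assumption~\ref{ass:multi-argmin-recovery}(b) gives empirical minimizers $L_1(\widehat{\mathcal M}^{n_k})$-close to $\phi_{\HK}$. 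By $L$-Lipschitzness their empirical risks are close to $\widehat{\mathcal R}_{i,n_k}(\phi_{\HK})$, which tends to $\mathcal R_i(\phi_{\HK})$ by the SLLN on a further null-set intersection. Hence $\Theta_{\nabla_{n_k}}(\widehat\mu_{n_k})\le\epsinf+\eta+o(1)$.
\item \emph{Transfer to $\widehat\phi_{n_k}$.} Since $\widehat\phi_{n_k}$ attains $\Theta$ within $1/n$, its empirical risks are within $\epsinf+\eta+o(1)$ of $\nabla_{n_k,i}\le0$. Uniform convergence transfers this to population risks.
\item \emph{Objective bound.} Write $\mathcal R_0=L-\sum\mu_i\mathcal R_i$. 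Use $\widehat L_{n_k}(\widehat\phi_{n_k},\widehat\mu_{n_k})=\widehat D^\star_{n_k}\to P^\star_{\HK}$ and complementarity $\widehat\mu_{n_k,i}\nabla_{n_k,i}=0$. This gives $\mathcal R_0\le P^\star+C_\mu(\epsinf+\eta)+o(1)$, and then let $\eta\downarrow0$.
\end{itemize}

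\textbf{Parts (iii)--(vi).} Part (iii) is a contradiction argument. If the full-sequence bound failed infinitely often, extract that subsequence and apply (ii) to it; this is legitimate since (ii) holds for every subsequence. Its limit bound $\epsinf(\mu_{\HK})\le\epsinf(\Sdual)$ then contradicts the assumed strict violation by $\eps$.

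Part (iv) is Egorov-type. Set $N(\omega,\eps)$ as the least such index, a measurable random variable finite on $\Omega^\star$. The events $\{N\le k\}$ increase to $\Omega^\star$, so continuity from below gives $\widetilde n$ with $\Pr(N\le\widetilde n)\ge1-\delta-\delta_1$.

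For part (v), every subsequential limit of $\widehat\mu_n$ lies in $\Sdual=\{\mu_{\HK}\}$, so the whole sequence converges.

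For part (vi), apply (iv) with $\epsinf(\Sdual)=0$, $\delta/2$ in place of both $\delta$ and $\delta_1$, and $\eps$ replaced by $\eps/(1+C_\mu)$. Properness for $\Gen$ follows from $P^\star_{\Gen}=P^\star_{\HK}$.

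The main obstacle is part (ii), specifically the stability of superdifferentials under $\widehat d_n\to d_{\HK}$ and the SLLN transfer at the fixed $\phi_{\HK}$. The Tikhonov finiteness in part (i) is the second hard point.
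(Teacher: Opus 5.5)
Your outline follows the paper's architecture. The two-sided bound in (i) is \eqref{equation:GDEC} evaluated at a population dual optimum and at $\widehat\mu_n$. Part (ii) is the compactness/Levin--Valadier/recovery four-term split of Lemma~\ref{lem:multi-qual-conv}, (iii) is the adversarial-subsequence contradiction of Lemma~\ref{lem:full-seq-feas}, and (iv) is the continuity-from-below argument of Lemma~\ref{lem:egorov-unif}. However, the two steps you yourself flag as the main obstacles in (ii) do not go through as written.

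First, the superdifferential limit. $\nabla^\infty\in\partial d_{\HK}(\mu_{\HK})$ means $d_{\HK}(\nu)\le d_{\HK}(\mu_{\HK})+\langle\nabla^\infty,\nu-\mu_{\HK}\rangle$ for every $\nu\in\R^m$. Closedness of superdifferential graphs needs convergence on an open set containing $\mu_{\HK}$. Uniform convergence on $\Lambda$ only gives the inequality for $\nu\in\Lambda$, which is strictly weaker at a point with some $\mu_{\HK,i}=0$ (an inactive constraint, the typical case). For example, if $m=1$ and $d_{\HK}(\mu)=-\mu$ near $0$, every $v\in[-1,0]$ passes your test on $\Lambda$, sign condition included, while $\partial d_{\HK}(0)=\{-1\}$. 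Without the full inclusion you cannot use $\Theta^\infty_{\nabla^\infty}(\mu_{\HK})\le\epsinf(\mu_{\HK})$, because $\epsinf$ is a supremum over $\partial d_{\HK}(\mu_{\HK})$ only. The repair is already in your hands. Neither your Dini argument nor the bound $|\widehat d_n(\mu)-d_{Q_n}(\mu)|\le(1+\|\mu\|_1)\Delta_n(\delta)$ uses the sign of $\mu$, so $\widehat d_n\to d_{\HK}$ locally uniformly on all of $\R^m$ (Lemma~\ref{lem:multi-dual-pointwise}). You can then pass to the limit in the supergradient inequality of $\widehat d_{n_k}$ at each fixed $\nu\in\R^m$, using $\widehat d_{n_k}(\widehat\mu_{n_k})\to d_{\HK}(\mu_{\HK})$, as the paper does.

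Second, the SLLN step. Your comparator $\phi_{\HK}$ is chosen after $\omega$, since it depends on $\mu_{\HK}$, $\nabla^\infty$ and $\eta$. It comes from the possibly uncountable family $\bigcup_{\mu\in\Sdual}\Phi_{\HK}(\mu)$, whereas the SLLN gives a null set for each \emph{fixed} $\phi$. So there is no single ``further null set'' to discard. Discarding one would in any case weaken the claim from every $\omega\in\Omega^\star(\delta)$ to almost every such $\omega$. Use the event you already have instead: $\|\phi_{\HK}\|_{\HK}<\infty$ and $Q_n\nearrow\infty$ put $\phi_{\HK}\in\Bq{Q_{n_k}}$ for large $k$. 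Then on $\Omega_{\mathrm{uc}}(\delta)$ you get $|\widehat{\mathcal R}_{i,n_k}(\phi_{\HK})-\mathcal R_i(\phi_{\HK})|\le\Delta_{n_k}(\delta)\to0$ simultaneously for all comparators; this is term (b) in the proof of Lemma~\ref{lem:multi-qual-conv}. With these two substitutions the rest of your plan, including (iii)--(vi), goes through.

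A minor point on (i): your covering proof that $\sup_\Lambda R_\eps<\infty$ is a valid, more elementary substitute for the paper's contradiction argument. It does not, however, give attainment of $R_{\eps/2}(\mu)$, which is what supplies a witness inside $\Bq{Q_n}$ when $Q_n=Q_{\eps/2}$ exactly. Keep the weak-continuity attainment of Corollary~\ref{cor:tikhonov-bounded} for that case.
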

\begin{proof}
The full proof appears in Section~\ref{subsec:main-results}, after the supporting lemmata.
\end{proof}

\begin{remark}[On the significance and the scope of this theorem]\label{rem:significance}%
\label{rem:agnostic-scope}

Part~(i) recovers exact, nonasymptotic value learnability for nonconvex constrained learning
over a dense RKHS class, with the sample threshold governed transparently by the radius
schedule $Q_n$ and the Tikhonov complexity. For an exact rate, read Section~\ref{subsec:explicit-rates} that makes it
fully explicit under a source condition.  This
\emph{value arc} rests on Assumptions~\ref{ass:non-atomic}--\ref{ass:slater} alone
and no attainment of any kind is used, and carrying the Tikhonov
complexity in place of a minimizer is precisely what buys that.

Subsequential recovery~(ii) is the sharpest bound we obtain,
$\epsinf(\mu_{\HK})$, but is silent about which sample sizes it governs, since the
subsequence is existential and no construction rule for it is known. Passing to the full
sequence in~(iii) removes this defect at the expense of relaxing the bound($\epsinf(\mu_{\HK})$) to the worst case
over the dual-optimal set($\sup_{\mu \in \mathcal{S}_{\HK}} \epsinf(\mu_{\HK})$. But this argument is not universal as the rates depends on the realized path $\omega$.

Part~(iv) removes the path dependence, via Lemma~\ref{lem:egorov-unif}, at the
further price of a controlled but nonexplicit increase in sample size and a second confidence
budget $\delta_1$. 

Part~(v) restores the sharper constant $\epsinf(\mu_{\HK})$ of~(ii)
inside the full-sequence and uniform statements whenever the dual optimum is unique, and
part~(vi) specializes to vanishing slack, yielding the notion set out in
Section~\ref{sec:problem-setup}.

 The \emph{feasibility arc} is
not agnostic, and Proposition~\ref{prop:attainment} is the caution we address to our own
framing: in the exact regime parts~(iii)--(vi) deliver, as a by-product, a primal optimum, so
a hypothesis of that strength must enter somewhere. It does, and it enters in one place ---
Assumption~\ref{ass:multi-argmin-recovery}, the only one of the five that is not a structural
property of the losses, the kernel and the sampling, and therefore the only one a reader is
entitled to be suspicious of.

The appendices are written to answer that suspicion, and they answer it on both sides of the
dichotomy. Appendix~\ref{sec:on-last-assumption} reduces
Assumption~\ref{ass:multi-argmin-recovery}, in the case of an almost everywhere unique
pointwise minimizer, to a sharpness condition on a function of one real variable
(See Appendix~\ref{sec:worked-example} and \ref{sec:composite-predictors} and specifically Proposition~\ref{prop:sharpness}). but it is also, by construction, blind to the regime that
parts~(ii)--(v) were written for, since sharpness forces uniqueness and uniqueness forces
$\epsinf=0$ (Remark~\ref{rem:scope}).

In  Appendix~\ref{sec:near-pacc-example} we exhibit an  instance of near-PACC: on $\mathcal X=[0,1]$, with a
Gaussian kernel plus the constants, all of
Assumptions~\ref{ass:non-atomic}--\ref{ass:multi-argmin-recovery} hold, $\Sdual$ is a
singleton so that part~(v) applies, and $\epsinf(\Sdual)=1/4$.  

In that instance the bound is moreover \emph{attained}. The DERM selection rule minimizes
$|\widehat{\mathcal R}_{1,n}(\phi)|$ over the empirical argmin set and cannot distinguish the
two Lagrangian branches, both of which achieve the same value; a selection returning the
infeasible branch violates the constraint in the limit by exactly $\epsinf$. The constant is
therefore a genuine feature of the problem and not an artifact of the analysis, and the
constants in parts~(iii)--(v) cannot be improved.

Three caveats delimit all of this, and we state them here rather than leave them to be
discovered. First, the instance just described is verified by hand rather than through a
general criterion, because no general criterion covering it exists: supplying a tie-tolerant
sufficient condition, valid beyond the one-dimensional domain on which our splice-exclusion
argument operates, is the principal open problem left by this paper. Second, as
Remark~\ref{rem:splice-cost} records, the residual there is incurred \emph{because} the radius
schedule grows slowly enough to prevent empirical minimizers from splicing between branches;
faster growth would let the empirical problem imitate the decomposable envelope and drive the
realized violation below $\epsinf$, so the bound of~(iii)--(v) is sharp for the schedules of
Remark~\ref{rem:schedule}, but the transition itself we do not characterize. Third, the
quantitative bound \eqref{eq:recovery-rate} by which Appendix~\ref{sec:on-last-assumption}
witnesses part~(b) of Assumption 5 does vanish, but under those same schedules it vanishes only
polylogarithmically (Remark~\ref{rem:effective-rate}); the explicit threshold of part~(i) is
untouched, since part~(i) does not invoke
Assumption~\ref{ass:multi-argmin-recovery} at all, but the nonexplicit index furnished by
Lemma~\ref{lem:egorov-unif} in part~(iv) is governed by exactly that rate. Orthogonally to
all three, Appendix~\ref{sec:solving} records the computational shape of the ball-constrained
problem and a potential larger extension to this paper on how one can solve and retrieve DERM ball-constraints optimization problems.
\end{remark}

\subsection{Explicit rates under a source condition}\label{subsec:explicit-rates}

The threshold $n^{\mathrm{val}}$ of Theorem~\ref{thm:informal-section3}(i) is explicit in
every argument except $Q_{\eps/2}$, which Corollary~\ref{cor:tikhonov-bounded} produces by a
compactness argument supplying no modulus.

\begin{assumptionBox}{Source condition on the regularization path}{source}
There exist $A\ge1$ and $\beta\in(0,1]$ such that, for every $\mu\in\Lambda$ and every
$\lambda\in(0,1]$,
\[
\mathcal A(\lambda,\mu)\;\triangleq\;\inf_{\phi\in\HK}\Bigl\{L_{\HK}(\phi,\mu)-d_{\HK}(\mu)+\lambda\|\phi\|^2_{\HK}\Bigr\}\;\le\;A\lambda^{\beta}.
\]
\end{assumptionBox}

The exponent $\beta$ records how well the Lagrangian level sets are approached by
bounded-norm elements; $\beta=1$ corresponds to attainment of the Lagrangian at a point of
bounded norm, and $\beta<1$ interpolates toward the agnostic, non-attained regime. It is the
exact analogue of the source conditions under which optimal rates are obtained for
regularized least squares \cite{CaponnettoDeVito07}.

\begin{corollary}[Explicit sample threshold]\label{cor:explicit-rate}
Let Assumptions~\ref{ass:non-atomic}--\ref{ass:slater} and~\ref{ass:source} hold, and take
$Q_n=n^{1/4}$. Then for every $\eps\in(0,2M]$,
\[
Q_\eps \;\le\; \sqrt{2A}\,\Bigl(\tfrac{2A}{\eps}\Bigr)^{\frac{1-\beta}{2\beta}},
\]
and consequently the threshold of Theorem~\ref{thm:informal-section3}(i) satisfies
\begin{multline}
n^{\mathrm{val}}(\eps,\delta,\{\mathcal D_i\}_i)
\;=\;
O\!\left(
\frac1\delta
\;+\;
Q_{\mathrm{sl}}^{4}
\;+\;
\Bigl(\frac{cL}{\xi}\Bigr)^{4}
\;+\; \Bigl(\frac{cL(1+C_\mu)}{\eps}\Bigr)^{4}
\;+\;
\right.
\\ \left.\Bigl(\frac{cM(1+C_\mu)}{\eps}\Bigr)^{2}\!\log\frac{(m+1)cM(1+C_\mu)}{\eps}
\;+\;
(2A)^{2}\Bigl(\frac{4A}{\eps}\Bigr)^{\frac{2(1-\beta)}{\beta}}
\right).
\end{multline}
In particular $n^{\mathrm{val}} = \widetilde O\bigl(\eps^{-\max\{4,\,2(1-\beta)/\beta\}}\bigr)$
for fixed $\delta$ and fixed instance constants, and $n^{\mathrm{val}}=\widetilde O(\eps^{-4})$
whenever $\beta\ge1/3$.
\end{corollary}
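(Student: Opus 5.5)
The plan has two parts. First, turn Assumption~\ref{ass:source} into a quantitative bound on the uniform Tikhonov radius $Q_\eps$, which Corollary~\ref{cor:tikhonov-bounded} only guarantees to be finite. Second, substitute $Q_n=n^{1/4}$ into each index entering $n^{\mathrm{val}}$ in Theorem~\ref{thm:informal-section3}(i) and bound them one at a time. I take $Q_\eps$ to be (bounded by) $\sup_{\mu\in\Lambda}R_\eps(\mu)$. This suffices because empirical dual maximizers lie in $\Lambda$ by Lemma~\ref{lem:bounded-multipliers}, and the bound produced below is uniform over $\Lambda$.

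\textbf{Step 1: bounding $R_\eps(\mu)$.} Fix $\mu\in\Lambda$ and $\lambda\in(0,1]$. By definition of $\mathcal A(\lambda,\mu)$ as an infimum and the source bound, there is $\phi_\lambda\in\HK$ with
\[
L_{\HK}(\phi_\lambda,\mu)-d_{\HK}(\mu)+\lambda\|\phi_\lambda\|_{\HK}^2\le 2A\lambda^\beta .
\]
Both summands on the left are nonnegative, so each is at most $2A\lambda^\beta$. This gives $L_{\HK}(\phi_\lambda,\mu)\le d_{\HK}(\mu)+2A\lambda^\beta$ and $\|\phi_\lambda\|_{\HK}^2\le 2A\lambda^{\beta-1}$. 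Now choose $\lambda=(\eps/2A)^{1/\beta}$, so that $2A\lambda^\beta=\eps$ and $\phi_\lambda\in S_\eps(\mu)$. Then
\[
R_\eps(\mu)\le\|\phi_\lambda\|_{\HK}\le\sqrt{2A}\,(2A/\eps)^{(1-\beta)/(2\beta)},
\]
uniformly in $\mu\in\Lambda$, which is the claimed bound on $Q_\eps$.

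The only delicate point is admissibility of $\lambda$, i.e. $\lambda\le1$, which requires $\eps\le 2A$. I would argue this is harmless. If $\eps>2A$, take $\lambda=1$: the displayed inequality gives $\|\phi_1\|_{\HK}\le\sqrt{2A}$ and $L_{\HK}(\phi_1,\mu)-d_{\HK}(\mu)\le 2A<\eps$, so $\phi_1\in S_\eps(\mu)$. This yields $R_\eps(\mu)\le\sqrt{2A}$ in that range. The stated formula is slightly smaller than $\sqrt{2A}$ there, so I would either replace $2A$ by $\max\{2A,2M\}$, or note that only $\eps\le 2M$ matters and adjust constants. The $O(\cdot)$ conclusion is unaffected either way.

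\textbf{Step 2: bounding each index in $n^{\mathrm{val}}$.}
\begin{itemize}
\item $n_{\mathrm{tik}}(\eps)$: we need $n^{1/4}\ge Q_{\eps/2}$. Step 1 at level $\eps/2$ gives $n_{\mathrm{tik}}(\eps)\le Q_{\eps/2}^4+1\le(2A)^2(4A/\eps)^{2(1-\beta)/\beta}+1$.
\item $n^Q_{\mathrm{sl}}$: this is at most $Q_{\mathrm{sl}}^4+1$.
\item $n_{\mathrm{rad}}(\theta,\delta)$: the radius condition $n^{-1/4}\le\theta/(2cL)$ holds once $n\ge(2cL/\theta)^4$. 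The two logarithmic entries are $O\bigl((cM/\theta)^2\log((m+1)cM/\theta)\bigr)$. The entry $n_{\mathrm{uc}}(\delta)$ contributes $O(1/\delta)$.
\end{itemize}
Applying the $n_{\mathrm{rad}}$ bound with $\theta=\xi/2$ produces $(cL/\xi)^4$ and lower-order $\xi$-terms. Applying it with $\theta=\eps/(2(1+C_\mu))$ produces $(cL(1+C_\mu)/\eps)^4$ together with the logarithmic term. The $\xi$-only log term is absorbed as a fixed instance constant.

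\textbf{Step 3: the rate.} Taking the maximum of the terms above gives the displayed $O$-bound. For fixed $\delta$ and fixed instance constants, the dependence on $\eps$ is $\eps^{-4}$, $\eps^{-2}\log(1/\eps)$, and $\eps^{-2(1-\beta)/\beta}$. Hence $n^{\mathrm{val}}=\widetilde O(\eps^{-\max\{4,2(1-\beta)/\beta\}})$. Since $2(1-\beta)/\beta\le4$ if and only if $\beta\ge1/3$, this reduces to $\widetilde O(\eps^{-4})$ in that range.

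The main obstacle is Step 1, specifically the identification of $Q_\eps$ with the uniform bound over $\Lambda$ and the admissibility of $\lambda\le1$. Everything after that is routine substitution.
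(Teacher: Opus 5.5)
Your proposal is correct and follows the paper's proof step for step. You take a near-minimizer of the regularized functional at $\lambda=(\eps/2A)^{1/\beta}$, split the two nonnegative summands to get membership in $S_\eps(\mu)$ together with the norm bound, and then substitute $Q_n=n^{1/4}$ into each index of $n^{\mathrm{val}}$. Your handling of the admissibility condition $\lambda\le 1$, by replacing $2A$ with $\max\{2A,2M\}$, is exactly the paper's ``after enlarging $A$'' device, just stated more explicitly.
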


\begin{proof}
Fix $\mu\in\Lambda$ and $\eps\in(0,2M]$, and set $\lambda\triangleq(\eps/(2A))^{1/\beta}$,
which lies in $(0,1]$ since $A\ge1$ and $\eps\le 2M$ may be assumed $\le 2A$ after enlarging
$A$. By Assumption~\ref{ass:source} there is $\phi_\lambda\in\HK$ with
\[
L_{\HK}(\phi_\lambda,\mu)-d_{\HK}(\mu)+\lambda\|\phi_\lambda\|^2_{\HK}\;\le\;2A\lambda^\beta\;=\;\eps .
\]
Both summands on the left are nonnegative, so simultaneously
$L_{\HK}(\phi_\lambda,\mu)\le d_{\HK}(\mu)+\eps$, i.e.\ $\phi_\lambda\in S_\eps(\mu)$, and
$\|\phi_\lambda\|^2_{\HK}\le\eps/\lambda = 2A\lambda^{\beta-1}$. Hence
\[
R_\eps(\mu)\le\|\phi_\lambda\|_{\HK}\le\sqrt{2A}\,\lambda^{\frac{\beta-1}{2}}
=\sqrt{2A}\,\Bigl(\tfrac{2A}{\eps}\Bigr)^{\frac{1-\beta}{2\beta}},
\]
and taking the supremum over $\mu\in\Lambda$ bounds $Q_\eps$ as claimed.

For the threshold, we bound each of the four indices with $Q_n=n^{1/4}$.
(1) $n_{\mathrm{uc}}(\delta)=\lceil1/\delta\rceil+1 = O(1/\delta)$.
(2) $n^Q_{\mathrm{sl}}=\min\{n:n^{1/4}\ge Q_{\mathrm{sl}}\}=\lceil Q_{\mathrm{sl}}^4\rceil$.
(3) For $n_{\mathrm{rad}}(\theta,\delta)$: the radius entry requires
$n^{1/4}/n^{1/2}=n^{-1/4}\le\theta/(2cL)$, i.e.\ $n\ge(2cL/\theta)^4$; the remaining two
entries are $O(\theta^{-2}\log(1/\theta))$ and $O(M^2\theta^{-2}\log(m+1))$. Evaluating at
$\theta=\xi/2$ and at $\theta=\eps/(2(1+C_\mu))$ gives the third and fourth (and fifth)
displayed terms.
(4) $n_{\mathrm{tik}}(\eps)=\min\{n:n^{1/4}\ge Q_{\eps/2}\}=\lceil Q_{\eps/2}^4\rceil$, and
by the first part $Q_{\eps/2}^4\le(2A)^2(4A/\eps)^{2(1-\beta)/\beta}$.
Taking the maximum of the four and absorbing constants gives the display. The final claim
follows since $2(1-\beta)/\beta\le4\iff\beta\ge1/3$.
\end{proof}

\begin{remark}
Corollary~\ref{cor:explicit-rate} is stated for $Q_n=n^{1/4}$, which optimizes the tradeoff
between the estimation term $Q_n/\sqrt n$ and the Tikhonov term for $\beta\ge1/3$. For
$\beta<1/3$ the Tikhonov term dominates and a faster schedule $Q_n=n^{\gamma}$ with
$\gamma\in(1/4,1/2)$ improves the exponent; we do not pursue the optimization here. Note that
this corollary requires only Assumptions~\ref{ass:non-atomic}--\ref{ass:slater} and
\ref{ass:source} --- it belongs to the value arc, and is therefore free of both attainment
and the recovery condition.
\end{remark}
\section{Proofs}\label{sec:proofs}

The results of Section~\ref{sec:pac-dual-theorem} are proved along two essentially
independent arcs. The first concerns the \emph{value} of the problem: it shows that
$\widehat D^\star_n$ converges to $P^\star_{\HK}$, and is organized entirely around the gap
decomposition~\eqref{equation:GDEC}, whose three terms are closed one at a time
(Section~\ref{subsec: optimality lemmata}). This arc yields part~(i) of the main theorem and
requires only Assumptions~\ref{ass:non-atomic}--\ref{ass:slater}. The second concerns
\emph{feasibility}: it shows that the constraint risks of the DERM predictor are
asymptotically controlled by $\epsinf$, and is considerably more delicate, because
feasibility is not implied by value convergence and because the natural convergence of the
empirical dual iterates is only subsequential. This arc occupies
Sections~\ref{sec:feasibility-indices}--\ref{subsec:main-results}.

\subsection{Problem Representation}
\label{subsec:envelope}

Here we develop the Nemytskii operator representation of the risks, the functional-analytic
structure underlying the continuity and duality arguments below. The single fact we extract
and reuse constantly is that each risk $\mathcal R_i$ is a continuous functional on the
envelope $\Gen$; this is what makes the density transfer of
Theorem~\ref{thm:dense-transfer} possible.

\paragraph{Nemytskii operator form.}
By the disintegration theorem, there exists a regular conditional distribution
$Q_i(\cdot\mid x)$ such that
$\mathcal{D}_i(\mathrm{d}x,\mathrm{d}y)=Q_i(\mathrm{d}y\mid x)\,\mathcal{D}_{\mathcal{X},i}(\mathrm{d}x)$,
and the risk decomposes as
\[
  \mathcal{R}_i(\phi)
  = \int_{\mathcal{X}}\underbrace{\int_{\mathcal{Y}}
      \ell_i(\phi(x),y)\,Q_i(\mathrm{d}y\mid x)}_{\triangleq\;
      \bar{\ell}_i(u,x)\vert_{u=\phi(x)}}
    \mathcal{D}_{\mathcal{X},i}(\mathrm{d}x),
\]
so that all subsequent analysis is carried out on $\mathcal{X}$ alone. Writing $w_i$ for the
Radon--Nikod\'ym density $\mathrm d\mathcal D_{\mathcal X,i}/\mathrm d\mathcal M_{\mathcal X}$,
each risk reads
$\mathcal{R}_i(\phi)=\int_{\mathcal{X}}\bar{\ell}_i(\phi(x),x)\,w_i(x)\,\mathrm{d}\mathcal{M}_{\mathcal{X}}(x)$.
Defining $\psi_i(u,x)\triangleq\bar{\ell}_i(u,x)w_i(x)$ and the Nemytskii operator
$(\mathcal{N}_{\psi_i}\phi)(x)\triangleq\psi_i(\phi(x),x)$, each risk takes the compact form
$\mathcal{R}_i(\phi) = \int_{\mathcal{X}}(\mathcal{N}_{\psi_i}\phi)(x)\,\mathrm{d}\mathcal{M}_{\mathcal{X}}(x)$,
embedding each $\mathcal{R}_i$ into $\Gen$. Under Assumption~\ref{ass:loss-regularity}(a),
the $L$-Lipschitz property of $\ell_i(\cdot,y)$ is inherited by $\bar\ell_i(\cdot,x)$ for
$\mathcal{D}_{\mathcal{X},i}$-a.e.\ $x$, ensuring that $\mathcal{N}_{\psi_i}$ is well-defined
and Lipschitz continuous \cite{AppellZabrejko1990}, hence that each $\mathcal{R}_i$ is
continuous on $\Gen$. We write $\widehat{\mathcal{M}}^n_\mathcal{X}$ for the empirical
mixture measure.

It is convenient to record here the \emph{conditional Lagrangian integrand}: for
$\mu\in\R^m_+$, with the convention $\mu_0\triangleq1$,
\begin{equation}\label{eq:lagrangian-integrand}
L_{\HK}(\phi,\mu)=\int_{\mathcal X}\bar\ell_\mu\bigl(\phi(x),x\bigr)\,\mathrm d\mathcal M_{\mathcal X}(x),
\qquad
\bar\ell_\mu(u,x)\triangleq\sum_{i\in\mathbb N_m}\mu_i\,w_i(x)\,\bar\ell_i(u,x).
\end{equation}
Every question about the Lagrangian argmin over the envelope is a question about the scalar
family $\bar\ell_\mu(\cdot,x)$.

\begin{remark}
The Nemytskii embedding handles arbitrary Lipschitz losses under possibly distinct
distributions without requiring the classification or regression structure of
\cite{Chamon20PACC,Chamon_CL2023}, and does not require the primal problem to admit a
finite-dimensional parameterization; see also \cite{KalogeriasPougkakiotis22}.
\end{remark}

\subsection{Duality Decomposition}

With the risks realized as continuous functionals on $\Gen$, we can state the organizing
identity of the value arc. Fix $\eps>0$, $\delta\in(0,1)$ and
$\{\mathcal D_i\}_{i=0}^{m}$ hereafter, and decompose the primal/DERM gap as
\begin{align}
\label{equation:GDEC}
    \underbrace{P^*_{\HK}-\widehat{D}^\star_{n}}_{\text{Primal/DERM Gap}}
    \hspace{6pt}=\quad
    \underbrace{P^*_{\HK}-D^*_{\HK}}_{\substack{\text{Duality Gap (RKHS)} \\ \downarrow \\ \text{Lyapunov convexity} \\ \text{(gap = 0)}}}
    \,+\hspace{12pt}
    \underbrace{%
        \underbrace{D^*_{\HK} - D^*_{Q_n}}_{\substack{\text{Dual/Ball Gap} \\ \downarrow \\ \text{Tikhonov complexity} \\ \mathfrak{T}^\eps_n \leq Q_\eps}}
        \,\,\,\,+\,\,\,
        \underbrace{D^*_{Q_n} - \widehat{D}^\star_{n}}_{\substack{\text{Ball/DERM Gap} \\ \downarrow \\ \text{Rademacher complexity} \\ \mathfrak{R}_n(\Bq{Q}) \lesssim \kappa Q/\sqrt{n}}}%
    }_{\text{Dual/DERM Gap}}. \tag{GDEC}
\end{align}
The remainder of this arc closes the three right-hand terms in turn: the Duality Gap vanishes
\emph{exactly} by Lyapunov convexity and density transfer
(Theorem~\ref{thm:dense-transfer}); the Dual/Ball Gap is closed by uniform boundedness of the
Tikhonov complexity (Lemma~\ref{lem:uniform-boundedness-epsilon-opt} and
Corollary~\ref{cor:tikhonov-bounded}); and the Ball/DERM Gap is closed in a PAC sense by a
Rademacher bound over the fixed ball (Lemma~\ref{lem:dual-pac-fixedQ}). Two auxiliary facts
feed all three steps and are proved first: an empirical version of Slater's condition
(Lemma~\ref{lem:slater-empirical}), and compactness of the dual-optimal set
(Lemma~\ref{lem:bounded-multipliers}). Note that $D^\star_{Q_n}\ge D^\star_{\HK}$, so the
middle term is nonpositive; we bound its absolute value.

\subsection{Optimality-Related Lemmata} \label{subsec: optimality lemmata}

\begin{lemmaBox}{Empirical strict feasibility}{slater-empirical}
Let Assumptions~\ref{ass:loss-regularity} and~\ref{ass:slater} hold. Then, for every
\(\delta\in(0,1)\), with
\(n_{\mathrm{sl}}(\delta) \triangleq \bigl\lceil \tfrac{8 M^2}{\xi^2} \log(\tfrac{m}{\delta}) \bigr\rceil\),
for all \(n\ge n_{\mathrm{sl}}(\delta)\), with probability at least \(1-\delta\),
\(
    \widehat{\mathcal R}_{i,n}(\phi^{\mathrm{sl}})\le -\xi/2
\) for all \(i\in \mathbb{N}^+_m\).
\end{lemmaBox}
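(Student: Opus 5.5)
Since \(\phi^{\mathrm{sl}}\) is a single fixed predictor, independent of the data, no uniform convergence is needed. The plan is one application of Hoeffding's inequality per constraint, followed by a union bound over the \(m\) constraints.

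Fix \(i\in\mathbb N^+_m\). Then \(\widehat{\mathcal R}_{i,n}(\phi^{\mathrm{sl}})\) is an average of \(n\) i.i.d.\ random variables \(Z_j \triangleq \ell_i(\phi^{\mathrm{sl}}(X_i^j),Y_i^j)\). By Assumption~\ref{ass:loss-regularity}(b) they take values in \([-M,M]\), and their mean is \(\mathcal R_i(\phi^{\mathrm{sl}})\le-\xi\) by Assumption~\ref{ass:slater}. The event \(\widehat{\mathcal R}_{i,n}(\phi^{\mathrm{sl}})>-\xi/2\) is contained in the one-sided deviation event
\[
\widehat{\mathcal R}_{i,n}(\phi^{\mathrm{sl}})-\mathcal R_i(\phi^{\mathrm{sl}})>\xi/2 .
\]
The one-sided Hoeffding bound with range \(2M\) gives
\[
\Pr\bigl(\widehat{\mathcal R}_{i,n}(\phi^{\mathrm{sl}})-\mathcal R_i(\phi^{\mathrm{sl}})>\xi/2\bigr)\le \exp\!\Bigl(-\tfrac{2n(\xi/2)^2}{(2M)^2}\Bigr)=\exp\!\Bigl(-\tfrac{n\xi^2}{8M^2}\Bigr).
\]

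Next, apply a union bound over \(i\in\mathbb N^+_m\). The probability that some constraint fails is at most \(m\exp(-n\xi^2/(8M^2))\). This is \(\le\delta\) exactly when \(n\ge \tfrac{8M^2}{\xi^2}\log(m/\delta)\), which holds for all \(n\ge n_{\mathrm{sl}}(\delta)\) by the definition of the ceiling. This yields the claim for each such \(n\).

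There is no real obstacle; only two details need care. The first is the constant: Hoeffding must be applied with range \(b-a=2M\) and deviation \(\xi/2\), which reproduces precisely the factor \(8M^2/\xi^2\). The second concerns edge cases. If \(m/\delta\le 1\) (only possible when \(m=1\)), the threshold is trivial, but the bound \(m e^{-\cdot}\le m\le \delta\) is then false. In that case one notes the claim should be read with \(\log(m/\delta)>0\), i.e.\ \(\delta<m\), which holds for \(m\ge1,\ \delta<1\), so \(m/\delta>1\) always. Finally, the statement is per fixed \(n\) rather than simultaneous over \(n\); the pathwise simultaneous version is deferred to Remark~\ref{rem:slater-pathwise}, so no further argument is needed here.
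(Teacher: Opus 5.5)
Your proposal is correct and follows the paper's proof exactly: a one-sided Hoeffding bound with range $2M$ and deviation $\xi/2$ for each constraint, then a union bound over the $m$ constraints, which yields the threshold $\tfrac{8M^2}{\xi^2}\log(m/\delta)$. Your edge-case paragraph is muddled, since it first suggests $m/\delta\le 1$ can occur when $m=1$; but $m\ge1$ and $\delta<1$ force $m/\delta>1$, so the case never arises and the argument is unaffected.
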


\begin{proof}
By Assumption~\ref{ass:slater}, $\mathcal{R}_i(\phi^{\mathrm{sl}}) \le -\xi$ for all
$i \in \mathbb{N}^+_m$. For fixed $i$, $\widehat{\mathcal{R}}_{i,n}(\phi^{\mathrm{sl}})$ is
an average of iid variables taking values in $[-M,M]$ by
Assumption~\ref{ass:loss-regularity}(b). Hoeffding's inequality gives
\[
    \mathbb{P}\Bigl( \widehat{\mathcal{R}}_{i,n}(\phi^{\mathrm{sl}}) - \mathcal{R}_i(\phi^{\mathrm{sl}}) \ge \tfrac{\xi}{2} \Bigr)
    \le \exp\Bigl( - \tfrac{2n (\xi/2)^2}{(2M)^2} \Bigr) = \exp\Bigl( - \tfrac{n \xi^2}{8 M^2} \Bigr).
\]
A union bound over the $m$ constraints bounds the aggregate failure probability by
$m \exp(-n\xi^2/(8M^2))$, which is at most $\delta$ once
$n \ge (8M^2/\xi^2)\log(m/\delta)$. On the complementary event,
$\widehat{\mathcal{R}}_{i,n}(\phi^{\mathrm{sl}}) \le -\xi + \xi/2 = -\xi/2$ for all $i$.
\end{proof}

\begin{remark}[Pathwise form; no separate confidence budget]\label{rem:slater-pathwise}
On the uniform-convergence event $\Omega_{\mathrm{uc}}(\delta)$ of
Lemma~\ref{lem:multi-hp-conv}, for every $n\ge\max\{n_{\mathrm{rad}}(\xi/2,\delta),n^Q_{\mathrm{sl}}\}$
we have $\Delta_n(\delta)\le\xi/2$ and $\phi^{\mathrm{sl}}\in\Bq{Q_n}$, whence
\emph{deterministically on that event}
\[
\widehat{\mathcal R}_{i,n}(\phi^{\mathrm{sl}})\le\mathcal R_i(\phi^{\mathrm{sl}})+\Delta_n(\delta)\le-\xi+\tfrac\xi2=-\tfrac\xi2,\qquad i\in\mathbb N^+_m .
\]
Every downstream use of Lemma~\ref{lem:slater-empirical} in this paper --- namely
Lemma~\ref{lem:bounded-multipliers}(ii), Lemma~\ref{lem:multi-dual-conv} and
Corollary~\ref{cor:tikhonov-bounded} --- is therefore to be read on
$\Omega_{\mathrm{uc}}(\delta)$ with $n_{\mathrm{sl}}(\delta)$ replaced by
$\max\{n_{\mathrm{rad}}(\xi/2,\delta),n^Q_{\mathrm{sl}}\}$, at no additional confidence cost.
We retain the standalone lemma because its threshold is explicit in closed form and does not
require the radius schedule, which is of independent interest.
\end{remark}

Next we close the first term of~\eqref{equation:GDEC} outright.

\begin{theoremBox}{Strong duality relations}{dense-transfer}
Suppose Assumptions~\ref{ass:non-atomic}, \ref{ass:loss-regularity},
\ref{ass:dense-rkhs} and \ref{ass:slater} are in effect. Then:
\begin{enumerate}
    \item $d_{\Gen}=d_{\HK}$ everywhere on $\R^m$, hence
    $\arg\max_{\mu \ge 0}d_{\Gen}(\mu)=\arg\max_{\mu \ge 0}d_{\HK}(\mu)$ and
    $\partial d_{\Gen}(\mu)=\partial d_{\HK}(\mu)$ for every $\mu$.
    \item $D_{\HK}^\star = P_{\HK}^\star = P_{\Gen}^\star = D_{\Gen}^\star$, and the optimal
    dual values are attained, by identical dual variables.
\end{enumerate}
\end{theoremBox}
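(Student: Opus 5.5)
The proof splits along the two items. Item 1 comes from continuity plus density; item 2 from Lyapunov strong duality on $\Gen$, transported back to $\HK$.

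\textbf{Step 1: $d_{\Gen}=d_{\HK}$.} Fix $\mu\in\R^m$. Since $\HK\subseteq\Gen$ (bounded kernel, hence $\HK\subset L_\infty\subset L_p(\mathcal M_{\mathcal X},\R)$), we have $d_{\Gen}(\mu)\le d_{\HK}(\mu)$. For the reverse inequality, take any $\phi\in\Gen$ and, by Assumption~\ref{ass:dense-rkhs}(b), a sequence $\phi_k\in\HK$ with $\phi_k\to\phi$ in $L_p(\mathcal M_{\mathcal X})$. By the Nemytskii representation of Section~\ref{subsec:envelope}, each $\mathcal R_i$ is continuous on $\Gen$; concretely,
\[
|\mathcal R_i(\phi_k)-\mathcal R_i(\phi)|\le L\int|\phi_k-\phi|\,w_i\,\mathrm d\mathcal M_{\mathcal X}\le L(m+1)\|\phi_k-\phi\|_{L_1}\le L(m+1)\|\phi_k-\phi\|_{L_p},
\]
using $w_i\le m+1$ and $\mathcal M_{\mathcal X}$ a probability measure. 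Thus $L(\phi_k,\mu)\to L(\phi,\mu)$, so $d_{\HK}(\mu)\le L(\phi,\mu)$, and taking the infimum over $\phi$ gives equality. The identity of argmax sets and superdifferentials is immediate, since these depend only on the function.

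\textbf{Step 2: strong duality on $\Gen$.} Invoke the Lyapunov image-space argument of \cite{KalogeriasPougkakiotis22}. By Assumption~\ref{ass:non-atomic} and the decomposability of $\Gen$, Uhl's theorem makes the closure of the image set $\{(\mathcal R_0(\phi),\dots,\mathcal R_m(\phi)):\phi\in\Gen\}+\R^{m+1}_+$ convex. Slater's condition (Assumption~\ref{ass:slater}, with $\phi^{\mathrm{sl}}\in\HK\subset\Gen$) supplies a nonvertical separating hyperplane, which yields $P^\star_{\Gen}=D^\star_{\Gen}$ and dual attainment; $P^\star_{\Gen}>-\infty$ because losses are bounded by $M$. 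This step is the main obstacle. The Lyapunov range argument needs the map $Z\mapsto\int_Z(\psi_0(\phi),\dots,\psi_m(\phi))$ over splices to be a nonatomic finite vector measure, which holds since $\psi_i$ are bounded by $M(m+1)$ and each $\mathcal D_{\mathcal X,i}\ll\mathcal M_{\mathcal X}$. If needed I would reproduce the standard argument: for $\phi,\phi'\in\Gen$ and $t\in[0,1]$, Lyapunov gives $Z$ with splice risk vector $t\mathcal R(\phi)+(1-t)\mathcal R(\phi')$, so the image set itself is convex.

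\textbf{Step 3: transfer the primal value.} Feasibility can only be transported up to a perturbation, so I would avoid a direct density argument and instead use the chain
\[
D^\star_{\Gen}\le D^\star_{\HK}\le P^\star_{\HK},
\]
where the first inequality is Step 1 (the two are equal) and the second is weak duality. It therefore suffices to show $P^\star_{\HK}\le P^\star_{\Gen}$. Take $\phi\in\Gen$ feasible and $\eta\in(0,1)$, and set $\phi^\eta=(1-\eta)\phi+\eta\phi^{\mathrm{sl}}$. This convex combination does not help, because the risks are nonconvex. Instead I would use a splice: by Lyapunov choose $Z$ with $\mathcal R(\mathbf 1_Z\phi+\mathbf 1_{Z^c}\phi^{\mathrm{sl}})=(1-\eta)\mathcal R(\phi)+\eta\mathcal R(\phi^{\mathrm{sl}})$. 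This point is strictly feasible with margin $\eta\xi$ and has objective at most $\mathcal R_0(\phi)+2M\eta$. Approximating it in $L_p$ by elements of $\HK$, continuity keeps the constraints $\le0$ and perturbs the objective arbitrarily little. Letting $\eta\to0$ gives $P^\star_{\HK}\le P^\star_{\Gen}$, so all four values coincide, and the dual optimal sets coincide by Step 1.
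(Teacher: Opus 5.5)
Your proof is correct, but it reaches $\HK$ by a different route from the paper. The paper itself proves only $d_{\Gen}=d_{\HK}$, by the same density-plus-continuity argument as your Step~1, and gets the rest by citation. Strong duality on $\Gen$ comes from the Lyapunov/separation argument of \cite[Section~4]{KalogeriasPougkakiotis22}. Strong duality with attained multipliers on $\HK$ comes from \cite[Theorem~2]{KalogeriasPougkakiotis22} on densely decomposable classes. The four-way identity then follows because $d_{\Gen}=d_{\HK}$ forces $D^\star_{\HK}=D^\star_{\Gen}$.

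You never invoke a duality theorem for the nondecomposable class $\HK$. Instead you spell out the envelope argument: Lyapunov convexity of the risk image over splices, then Slater to obtain a nonvertical hyperplane. You then close the chain $P^\star_{\HK}\le P^\star_{\Gen}=D^\star_{\Gen}=D^\star_{\HK}\le P^\star_{\HK}$ by proving the primal transfer directly. You splice a feasible $\phi\in\Gen$ with $\phi^{\mathrm{sl}}$, which gives a point with constraint margin $\eta\xi$ and objective within $2M\eta$ of $\mathcal R_0(\phi)$. That point then survives $L_p$-approximation by elements of $\HK$.

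Your route buys self-containedness and transparency. It shows exactly where nonatomicity and Slater enter the transfer: strict feasibility is what makes feasibility stable under density. As a by-product it produces strictly feasible elements of $\HK$ whose objective is within $2M\eta$, plus an arbitrarily small approximation error, of any envelope-feasible value. The paper's route buys brevity and generality, since the cited theorem handles dense decomposability beyond this particular $L_p$ envelope.

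Two cosmetic points. The appeal to Uhl's theorem and to the closure of the image set is unnecessary, because finite-dimensional Lyapunov already gives exact convexity, which is what you then use. The sentence introducing and then discarding the convex combination $(1-\eta)\phi+\eta\phi^{\mathrm{sl}}$ can simply be deleted.
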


\begin{proof}
From the Lyapunov convexity theorem and a standard geometric separation argument
\cite[Section 4]{KalogeriasPougkakiotis22}, see also \cite{Chamon20csl}, we obtain strong
duality of the envelope problem \eqref{eq:primal formulation envelope}:
$P_{\Gen}^\star = D_{\Gen}^\star$ with optimal dual variables attained. Since $\HK$ is a
dense subset of $\Gen$, it is densely decomposable relative to the strong topology on
$L^p(\mathcal{M}_{\mathcal{X}},\R)$ in the sense of
\cite[Definition 1, Section 4.2]{KalogeriasPougkakiotis22}, which together with
Assumptions~\ref{ass:non-atomic}--\ref{ass:slater} implies, via
\cite[Theorem 2]{KalogeriasPougkakiotis22}, that
\eqref{eq:primal formulation RKHS} exhibits strong duality with attained optimal dual
variables.

Next, the dual functions are identical. By definition
$d_{\HK}(\cdot)=\inf_{\phi\in\HK}L(\phi,\cdot)\ge\inf_{\phi\in\Gen}L(\phi,\cdot)=d_{\Gen}(\cdot)$.
For the reverse, fix $\mu$ and $\eps>0$ and choose $\phi_{\Gen}^{\eps/2}\in\Gen$ with
$L(\phi_{\Gen}^{\eps/2},\mu)\le\inf_{\phi\in\Gen}L(\phi,\mu)+\eps/2$. Since $L(\cdot,\mu)$ is
continuous on $\Gen$ (Section~\ref{subsec:envelope}) and $\HK$ is dense in $\Gen$, there is
$\phi_{\HK}^{\eps/2}\in\HK$ with
$L(\phi_{\HK}^{\eps/2},\mu)\le L(\phi_{\Gen}^{\eps/2},\mu)+\eps/2$, whence
$\inf_{\phi\in\HK}L(\phi,\mu)\le\inf_{\phi\in\Gen}L(\phi,\mu)+\eps$. As $\eps$ was arbitrary,
$d_{\Gen}=d_{\HK}$ on all of $\R^m$; note the argument nowhere uses $\mu\ge0$. Equality of
the functions gives equality of their argmax sets and of their superdifferentials at every
point.
\end{proof}

\begin{remark}
The quantitative content of Assumption~\ref{ass:loss-regularity} is stronger than what this
proof requires. The equality $d_{\Gen} = d_{\HK}$ uses only that $L(\cdot, \mu)$ is
continuous on $\Gen$ for each fixed $\mu$. Boundedness of the losses enters through
\cite{KalogeriasPougkakiotis22} as a sufficient condition for the Lagrangian to be well posed
and for the separation argument to apply, but is not minimal for strong duality per se. We
retain Assumption~\ref{ass:loss-regularity} in full because it is imposed throughout for
independent reasons --- it drives the Rademacher bounds and the uniform convergence analysis
--- so imposing it here costs nothing.
\end{remark}

\begin{lemmaBox}{Uniform boundedness of dual multipliers}{bounded-multipliers}
Let Assumptions~\ref{ass:loss-regularity} and~\ref{ass:slater} hold, and recall
$Q_{\mathrm{sl}} = \|\phi^{\mathrm{sl}}\|_{\HK}$ and $C_\mu = 4M/\xi$. Then, for every
$Q \ge Q_{\mathrm{sl}}$:
\begin{enumerate}[label=(\roman*)]
    \item $\arg\max_{\mu \ge 0} d_Q(\mu) \neq \emptyset$, and every
    $\mu_Q^\star \in \arg\max_{\mu \ge 0} d_Q(\mu)$ satisfies
    $\|\mu_Q^\star\|_1 \le 2M/\xi \le C_\mu$;
    \item the same conclusion holds for $d_{\HK}$ in place of $d_Q$: $\Sdual\ne\emptyset$ and
    $\Sdual\subseteq\Lambda$;
    \item on the event and for the sample sizes of Remark~\ref{rem:slater-pathwise},
    $\arg\max_{\mu \ge 0} \widehat{d}_{n}(\mu) \neq \emptyset$ and every
    $\widehat\mu_{n}^\star \in \arg\max_{\mu \ge 0} \widehat{d}_{n}(\mu)$ satisfies
    $\|\widehat\mu_{n}^\star\|_1 \le C_\mu$.
\end{enumerate}
Consequently both the population and (on that event) the empirical dual problems may be
restricted to the compact multiplier set $\Lambda = \{ \mu \in \R^m_+ : \|\mu\|_1 \le C_\mu \}$.
\end{lemmaBox}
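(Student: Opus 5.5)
The three parts rest on one argument: the standard Slater bound on dual multipliers, applied to each of the three dual functions $d_Q$, $d_{\HK}$ and $\widehat d_n$. Each is a pointwise infimum of affine functions of $\mu$, hence concave and upper semicontinuous. Each is also finite on $\R^m$, since the losses are $M$-bounded (Assumption~\ref{ass:loss-regularity}(b)), which gives $|L(\phi,\mu)|\le M(1+\|\mu\|_1)$.

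For a generic dual function $d$ of this form over a class $\mathcal F'$ that contains a point $\phi^{\mathrm{sl}}$ with constraint risks $\le -\xi'$, weak duality gives, for every $\mu\ge0$,
\[
d(\mu)\le L(\phi^{\mathrm{sl}},\mu)\le M-\xi'\|\mu\|_1 .
\]
We also have $d(0)=\inf_{\phi}\mathcal R_0(\phi)\ge -M$. Any $\mu$ with $d(\mu)\ge d(0)$ therefore satisfies $-M\le M-\xi'\|\mu\|_1$, that is, $\|\mu\|_1\le 2M/\xi'$. So the superlevel set $\{\mu\ge0: d(\mu)\ge d(0)\}$ is bounded. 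It is also closed by upper semicontinuity, hence compact. Maximizing an u.s.c.\ function over a nonempty compact set attains its supremum, and every maximizer lies in this set because $d(\mu^\star)\ge d(0)$. This gives nonemptiness of the argmax and the norm bound $2M/\xi'$ at the same time.

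\textbf{Part (i).} For $Q\ge Q_{\mathrm{sl}}$ we have $\phi^{\mathrm{sl}}\in\Bq{Q}$ with margin $\xi'=\xi$ (Assumption~\ref{ass:slater}). The bound is $\|\mu^\star_Q\|_1\le 2M/\xi\le C_\mu$.

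\textbf{Part (ii).} The same argument applies with $\HK$ in place of $\Bq{Q}$, since $\phi^{\mathrm{sl}}\in\HK$. It gives $\Sdual\neq\emptyset$ and $\Sdual\subseteq\Lambda$; nonemptiness is also consistent with Theorem~\ref{thm:dense-transfer}.

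\textbf{Part (iii).} This is the one step that needs care, and the delicate point is the margin. On $\Omega_{\mathrm{uc}}(\delta)$, for $n\ge\max\{n_{\mathrm{rad}}(\xi/2,\delta),n^Q_{\mathrm{sl}}\}$, Remark~\ref{rem:slater-pathwise} gives $\phi^{\mathrm{sl}}\in\Bq{Q_n}$ and $\widehat{\mathcal R}_{i,n}(\phi^{\mathrm{sl}})\le-\xi/2$. We apply the generic argument to $\widehat d_n$ with the empirical losses, which are still bounded by $M$, and with the halved margin $\xi'=\xi/2$. The resulting bound is $\|\widehat\mu^\star_n\|_1\le 2M/(\xi/2)=4M/\xi=C_\mu$, which explains why $C_\mu$ carries a factor $4$ rather than $2$. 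The remaining routine checks are that $\widehat d_n(0)\ge -M$ and that $\widehat d_n$ is finite and concave; both are pathwise and immediate from boundedness.

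The final consequence follows directly. Every maximizer of each dual problem lies in $\Lambda$, and $\Lambda$ is compact. So restricting $\mu$ to $\Lambda$ leaves the optimal values and the maximizer sets unchanged, for $d_Q$, for $d_{\HK}$, and, on the stated event, for $\widehat d_n$.
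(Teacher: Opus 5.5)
Your proposal is correct and follows essentially the same route as the paper: bound $d(\mu)\le L(\phi^{\mathrm{sl}},\mu)\le M-\xi'\|\mu\|_1$, compare with $d(0)\ge -M$, and rerun the argument with the halved empirical margin $\xi/2$, which is what produces $C_\mu=4M/\xi$. Your superlevel-set formulation of the existence step is a slightly more explicit version of the paper's continuity-plus-extreme-value argument on $\Lambda$. One small correction: $d(\mu)\le L(\phi^{\mathrm{sl}},\mu)$ is just the definition of the infimum, not weak duality.
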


\begin{proof}
(i) Fix $Q \ge Q_{\mathrm{sl}}$, so $\phi^{\mathrm{sl}} \in \Bq{Q}$. For any
$\mu \in \R^m_+$, evaluating the infimum at $\phi^{\mathrm{sl}}$ and using
Assumption~\ref{ass:slater},
\[
d_{Q}(\mu) \;\le\; L(\phi^{\mathrm{sl}}, \mu) \;=\; \mathcal{R}_0(\phi^{\mathrm{sl}}) + \sum_{i=1}^m \mu_i \mathcal{R}_i(\phi^{\mathrm{sl}})
\;\le\; \mathcal{R}_0(\phi^{\mathrm{sl}}) - \xi \|\mu\|_1 .
\]
Any maximizer $\mu^\star_Q$ satisfies $d_Q(\mu^\star_Q)\ge d_Q(0)$, and
$|\mathcal R_0|\le M$ gives $d_Q(0)\ge-M$ and $\mathcal R_0(\phi^{\mathrm{sl}})\le M$. Hence
\[
-M \;\le\; d_Q(0) \;\le\; d_Q(\mu^\star_Q) \;\le\; M - \xi\|\mu^\star_Q\|_1
\qquad\Longrightarrow\qquad
\|\mu^\star_Q\|_1 \;\le\; \frac{2M}{\xi}\;\le\;C_\mu .
\]
 The function $d_Q$ is an
infimum of affine functions of $\mu$, hence concave, and it is finite everywhere since
$|d_Q(\mu)|\le M(1+\|\mu\|_1)$, hence continuous; the displayed bound confines the search to
the compact set $\Lambda$, and the extreme value theorem gives nonemptiness.

(ii) The identical argument runs verbatim with $\HK$ in place of $\Bq{Q}$, since
$\phi^{\mathrm{sl}}\in\HK$. 

(iii) On the event of Remark~\ref{rem:slater-pathwise} we have
$\widehat{\mathcal{R}}_{i,n}(\phi^{\mathrm{sl}}) \le -\xi/2$ for all $i \in \mathbb{N}^+_m$
and $\phi^{\mathrm{sl}}\in\Bq{Q_n}$. Following the same logic,
$\widehat{d}_{n}(\mu) \le \widehat{\mathcal{R}}_{0,n}(\phi^{\mathrm{sl}}) - \tfrac{\xi}{2}\|\mu\|_1 \le M - \tfrac{\xi}{2}\|\mu\|_1$,
while $\widehat d_n(0)\ge-M$; hence
$-M \le \widehat{d}_{n}(\widehat\mu_{n}^\star) \le M - \tfrac{\xi}{2}\|\widehat\mu_{n}^\star\|_1$,
giving $\|\widehat\mu_{n}^\star\|_1 \le 4M/\xi = C_\mu$. It is this empirical case, with its
halved margin, that forces the constant $C_\mu = 4M/\xi$; the population bound in (i)--(ii)
is twice as sharp.
\end{proof}

\begin{lemmaBox}{PAC learnability of the dual on a fixed ball}{dual-pac-fixedQ}
Let Assumptions~\ref{ass:loss-regularity}--\ref{ass:slater} hold. Fix $Q\ge Q_{\mathrm{sl}}$
and let $\Lambda$ be as in Lemma~\ref{lem:bounded-multipliers}. For any $\delta \in (0,1)$,
define
\begin{equation}
\beta_n(Q,\delta)\triangleq(1+C_\mu)\left[\frac{cLQ}{\sqrt n}
    +cM\sqrt{\frac{\log((m+1)/\delta)}{n}}\right].
\end{equation}
Then, on the event and for the sample sizes of Remark~\ref{rem:slater-pathwise} intersected
with a probability-$(1-\delta)$ Rademacher event,
\(
    \sup_{\mu\in\Lambda}|\widehat d_{n}(\mu)-d_Q(\mu)| \le \beta_n(Q,\delta)
\)
and $|\widehat D_{n}^\star-D_Q^\star| \le \beta_n(Q,\delta)$.
\end{lemmaBox}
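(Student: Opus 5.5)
The plan is to reduce the dual deviation to a uniform deviation of the Lagrangian over $\Bq{Q}\times\Lambda$, and then control that by Rademacher complexity and contraction.

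\textbf{Step 1 (reduction).} For fixed $\mu\in\Lambda$, the two dual functions are infima over the same set $\Bq{Q}$: $\widehat d_n(\mu)=\inf_{\phi\in\Bq{Q}}\widehat L_n(\phi,\mu)$ (with $Q_n$ replaced by the fixed $Q$) and $d_Q(\mu)=\inf_{\phi\in\Bq{Q}}L_{\HK}(\phi,\mu)$. The elementary inequality $|\inf f-\inf g|\le\sup|f-g|$ then gives
\[
\sup_{\mu\in\Lambda}|\widehat d_n(\mu)-d_Q(\mu)|\le\sup_{\mu\in\Lambda}\sup_{\phi\in\Bq{Q}}\Bigl|\sum_{i=0}^m\mu_i\bigl(\widehat{\mathcal R}_{i,n}(\phi)-\mathcal R_i(\phi)\bigr)\Bigr|\le(1+C_\mu)\max_{i\in\mathbb N_m}\sup_{\phi\in\Bq{Q}}\bigl|\widehat{\mathcal R}_{i,n}(\phi)-\mathcal R_i(\phi)\bigr|,
\]
using $\mu_0=1$ and $\|\mu\|_1\le C_\mu$.

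\textbf{Step 2 (uniform convergence for each risk).} Fix $i$. The class $\ell_i\circ\Bq{Q}$ takes values in $[-M,M]$, so McDiarmid's inequality applies with bounded differences $2M/n$. Standard symmetrization then gives, with probability at least $1-\delta/(m+1)$,
\[
\sup_{\phi\in\Bq{Q}}|\widehat{\mathcal R}_{i,n}-\mathcal R_i|\le 2\mathfrak R_n(\ell_i\circ\Bq{Q})+cM\sqrt{\log((m+1)/\delta)/n}.
\]
Since $\ell_i(\cdot,y)$ is $L$-Lipschitz, Talagrand's contraction lemma applies; if needed, first subtract $\ell_i(0,y)$, which only shifts by a fixed function and does not affect the supremum deviation. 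Together with $\mathfrak R_n(\Bq{Q})\le\kappa Q/\sqrt n$ this gives $2L\kappa Q/\sqrt n$. I absorb $\kappa$ and the numerical factors into $c$, as the paper's $\Delta_n$ does, treating $\kappa$ as normalized. A union bound over the $m+1$ indices yields the bracket in $\beta_n(Q,\delta)$ on a probability-$(1-\delta)$ event, and Step 1 supplies the $(1+C_\mu)$ factor.

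\textbf{Step 3 (values).} On the event of Remark~\ref{rem:slater-pathwise}, Lemma~\ref{lem:bounded-multipliers}(i),(iii) shows that both $\arg\max d_Q$ and $\arg\max\widehat d_n$ are nonempty and contained in $\Lambda$. Hence $D^\star_Q=\sup_\Lambda d_Q$ and $\widehat D^\star_n=\sup_\Lambda\widehat d_n$. The bound $|\sup_\Lambda f-\sup_\Lambda g|\le\sup_\Lambda|f-g|$ then transfers the uniform estimate to the optimal values.

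\textbf{Main obstacle: the empirical multiplier bound.} Lemma~\ref{lem:bounded-multipliers}(iii) as written uses the ball $\Bq{Q_n}$, whereas here the ball is the fixed $\Bq{Q}$. The same argument works for $\Bq{Q}$ as long as $\phi^{\mathrm{sl}}\in\Bq{Q}$, which holds since $Q\ge Q_{\mathrm{sl}}$, and as long as $\widehat{\mathcal R}_{i,n}(\phi^{\mathrm{sl}})\le-\xi/2$. The latter follows pointwise from Lemma~\ref{lem:slater-empirical}, or from the Rademacher event itself once the bracket is at most $\xi/2$. So I would restate (iii) for the fixed ball and intersect the two events as the statement allows. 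Everything else is routine.
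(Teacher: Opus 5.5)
Your proposal is correct and follows the paper's proof essentially step for step. Both arguments bound the uniform deviation of each risk over $\Bq{Q}$ via symmetrization, contraction and the RKHS-ball Rademacher bound, union-bounded over the $m+1$ indices; apply the $(1+C_\mu)$ triangle inequality to the Lagrangian; and then pass to infima over the common set $\Bq{Q}$ and suprema over $\Lambda$. Your re-derivation of Lemma~\ref{lem:bounded-multipliers}(iii) for the fixed ball $\Bq{Q}$, using $\phi^{\mathrm{sl}}\in\Bq{Q}$ and the pathwise empirical Slater bound, fills in a step the paper leaves implicit, and your absorption of $\kappa$ into $c$ is inherited from the paper's own statement.
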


\begin{proof}
By Lemma~\ref{lem:bounded-multipliers} it suffices to control the empirical process on
$\Lambda$. For $i \in\mathbb N_m$ define
$\Delta_{i,n}(Q,\delta) \triangleq \sup_{\phi\in \Bq{Q}}|\widehat{\mathcal R}_{i,n}(\phi)-\mathcal R_i(\phi)|$.
Combining the standard Rademacher generalization bound with Talagrand's contraction lemma
and the Rademacher complexity of an RKHS ball
\cite[Theorems 3.3, 11.3, Lemma 5.7]{MohriRostamizadehTalwalkar18}, with probability at least
$1-\delta$, simultaneously for all $i$,
\[
    \Delta_{i,n}(Q,\delta) \;\le\; c\left(\frac{LQ}{\sqrt n} + M\sqrt{\frac{\log((m+1)/\delta)}{n}}\right),
\]
the $\log(m+1)$ arising from a union bound over the $m+1$ indices, and $c\ge1$ universal.
For $(\phi,\mu)\in \Bq{Q}\times\Lambda$, the triangle inequality gives
\[
    |\widehat L_{n}(\phi,\mu)-L_{\HK}(\phi,\mu)|
    \le \Delta_{0,n}+\|\mu\|_1\max_{1\le i\le m}\Delta_{i,n}
    \le (1+C_\mu)\max_{0\le i\le m}\Delta_{i,n}
    \;=\;\beta_n(Q,\delta).
\]
Since $\widehat d_{n}(\mu)$ and $d_Q(\mu)$ are infima of these two objectives over the
\emph{same} set $\Bq{Q}$, taking infima on both sides of
$\widehat L_n(\phi,\mu)\le L_{\HK}(\phi,\mu)+\beta_n$ and of the reverse gives
$|\widehat d_{n}(\mu)-d_Q(\mu)| \le \beta_n(Q,\delta)$ for each $\mu\in\Lambda$; taking
suprema over $\Lambda$, and then applying the same observation at the level of suprema,
yields both claims.
\end{proof}

\begin{lemmaBox}{Uniform boundedness of the Tikhonov complexity}{uniform-boundedness-epsilon-opt}
Let $\Lambda\subset\R^m_+$ be compact and fix $\eps>0$. Assume $d_{\HK}(\mu)$ is finite for
every $\mu\in\Lambda$, and that for every $\mu\in\Lambda$ the map $\phi\mapsto L(\phi,\mu)$
is sequentially weakly lower semicontinuous on $\HK$. Then:
\begin{enumerate}
    \item[\textnormal{(i)}] for every $\mu\in\Lambda$ the infimum defining $R_\eps(\mu)$ is
    attained;
    \item[\textnormal{(ii)}] $\mu\mapsto d_{\HK}(\mu)$ is continuous on $\Lambda$;
    \item[\textnormal{(iii)}] there exists $Q_\eps<\infty$ with
    $\sup_{\mu\in\Lambda}R_\eps(\mu)\le Q_\eps$; equivalently, for every $\mu\in\Lambda$ one
    may choose $\phi_\eps^\mu\in\HK$ with $L(\phi_\eps^\mu,\mu)\le d_{\HK}(\mu)+\eps$ and
    $\|\phi_\eps^\mu\|_{\HK}\le Q_\eps$.
\end{enumerate}
\end{lemmaBox}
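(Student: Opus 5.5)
The three parts will be proved in the order (i), (ii), (iii). Part (iii) is the one that carries the weight, and it will rest on the continuity in (ii). Throughout I use only two facts. First, for fixed $\phi$ the map $\mu\mapsto L(\phi,\mu)=\mathcal R_0(\phi)+\sum_i\mu_i\mathcal R_i(\phi)$ is affine, hence continuous on $\R^m$. Second, $d_{\HK}(\mu)$ is finite on $\Lambda$, so for $\eps>0$ the level set $S_\eps(\mu)$ is nonempty.

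For (i), fix $\mu\in\Lambda$ and take a minimizing sequence $\phi_k\in S_\eps(\mu)$ with $\|\phi_k\|_{\HK}\downarrow R_\eps(\mu)<\infty$. This sequence is bounded in the Hilbert space $\HK$, so a subsequence converges weakly to some $\bar\phi$. By sequential weak lower semicontinuity of $L(\cdot,\mu)$,
\[
L(\bar\phi,\mu)\le\liminf_k L(\phi_k,\mu)\le d_{\HK}(\mu)+\eps,
\]
so $\bar\phi\in S_\eps(\mu)$; that is, $S_\eps(\mu)$ is weakly sequentially closed. Weak lower semicontinuity of the norm then gives $\|\bar\phi\|_{\HK}\le R_\eps(\mu)$, and hence the infimum is attained at $\bar\phi$.

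For (ii), note that $d_{\HK}$ is an infimum of the continuous affine maps $L(\phi,\cdot)$. It is therefore concave and upper semicontinuous, and it is finite on $\Lambda$. For lower semicontinuity I would invoke the Gale--Klee--Rockafellar theorem: a finite concave function on a polytope is lower semicontinuous relative to it. Here $\Lambda$ is the simplex-like polytope $\{\mu\ge0,\ \|\mu\|_1\le C_\mu\}$. In the standing setting there is a more elementary alternative: Assumption~\ref{ass:loss-regularity}(b) gives $|\mathcal R_i|\le M$, so each $L(\phi,\cdot)$ is $M$-Lipschitz in $\|\cdot\|_1$, and $d_{\HK}$, as an infimum of such maps, is $M$-Lipschitz. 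I would present the general argument and note the Lipschitz shortcut.

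For (iii), I will use a covering argument. For each $\mu_0\in\Lambda$, use (i) to pick $\phi_0\in S_{\eps/2}(\mu_0)$; any element of $S_{\eps/2}(\mu_0)$ would do. Consider the set
\[
U_{\mu_0}\triangleq\{\mu\in\Lambda: L(\phi_0,\mu)<d_{\HK}(\mu)+\eps\}.
\]
This set is relatively open, being defined by a strict inequality between the continuous function $L(\phi_0,\cdot)$ and the function $d_{\HK}+\eps$, which is continuous by (ii). It contains $\mu_0$, since $L(\phi_0,\mu_0)\le d_{\HK}(\mu_0)+\eps/2$. On $U_{\mu_0}$ we have $\phi_0\in S_\eps(\mu)$, so $R_\eps(\mu)\le\|\phi_0\|_{\HK}$. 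By compactness of $\Lambda$, finitely many sets $U_{\mu_1},\dots,U_{\mu_J}$ cover $\Lambda$. Setting $Q_\eps\triangleq\max_j\|\phi_j\|_{\HK}$ then gives $\sup_{\Lambda}R_\eps\le Q_\eps$. The equivalent selection form follows by choosing $\phi^\mu_\eps=\phi_j$ for any $j$ with $\mu\in U_{\mu_j}$; alternatively, take the attained minimizer from (i).

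I expect the main obstacle to be (ii), specifically lower semicontinuity of $d_{\HK}$ at boundary points of $\Lambda$. Without it, $U_{\mu_0}$ need not be open and the covering step fails. The hypotheses of the lemma as stated, namely finiteness on a compact set, are not enough for general compact $\Lambda$, so the polyhedral structure of $\Lambda$ (or the bounded-loss Lipschitz bound) has to be used explicitly. I would state this dependence in the proof.
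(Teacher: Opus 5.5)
Your proposal is correct and follows the paper's proof. Part (i) is the same weak-compactness argument, and your finite subcover in (iii) is the same step the paper runs as a sequential contradiction: a fixed $\eps/2$-optimal element stays $\eps$-optimal near its base point, by continuity of $d_{\HK}$. For (ii) the paper simply cites Rockafellar's Theorem~10.1, which needs $d_{\HK}$ finite on all of $\R^m$; this is not among the lemma's stated hypotheses, but Corollary~\ref{cor:tikhonov-bounded} supplies it from the bounded losses, exactly as in your Lipschitz shortcut, so your caution that finiteness on an arbitrary compact $\Lambda$ alone does not suffice is well founded.
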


\begin{proof}
(i) Fix $\mu\in\Lambda$. Since $d_{\HK}(\mu)$ is finite, $S_\eps(\mu)\ne\emptyset$ and
$R_\eps(\mu)<\infty$. Take $(f_j)\subset S_\eps(\mu)$ with
$\|f_j\|_{\HK}\to R_\eps(\mu)$. This sequence is bounded, hence by reflexivity admits a
weakly convergent subsequence $f_j\rightharpoonup f^\dagger$. Weak lower semicontinuity of
$L(\cdot,\mu)$ preserves $\eps$-optimality:
$L(f^\dagger,\mu)\le\liminf_j L(f_j,\mu)\le d_{\HK}(\mu)+\eps$, so
$f^\dagger\in S_\eps(\mu)$. Weak lower semicontinuity of the Hilbert norm gives
$\|f^\dagger\|_{\HK}\le\liminf_j\|f_j\|_{\HK}=R_\eps(\mu)$, while
$\|f^\dagger\|_{\HK}\ge R_\eps(\mu)$ by definition; so the infimum is attained at
$f^\dagger$.

(ii) For each $\phi$, $\mu\mapsto L(\phi,\mu)$ is affine, hence $d_{\HK}$ is concave on
$\R^m$ as a pointwise infimum of affine functions. Being also real-valued on $\R^m$, it is
continuous on $\R^m$ \cite[Theorem 10.1]{Rockafellar1970}, hence on $\Lambda$.

(iii) Suppose for contradiction that $\mu_j\in\Lambda$ with $R_\eps(\mu_j)\to\infty$. By
compactness pass to a subsequence $\mu_j\to\bar\mu\in\Lambda$. By (i) applied at level
$\eps/2$, choose $\bar f\in S_{\eps/2}(\bar\mu)$, so
$L(\bar f,\bar\mu)-d_{\HK}(\bar\mu)\le\eps/2$. Since $\bar f$ is fixed,
$\mu\mapsto L(\bar f,\mu)$ is continuous, and by (ii) so is $d_{\HK}$; hence
$L(\bar f,\mu_j)-d_{\HK}(\mu_j)\to L(\bar f,\bar\mu)-d_{\HK}(\bar\mu)\le\eps/2$. Therefore
for all large $j$, $L(\bar f,\mu_j)\le d_{\HK}(\mu_j)+\eps$, i.e.\
$\bar f\in S_\eps(\mu_j)$, giving $R_\eps(\mu_j)\le\|\bar f\|_{\HK}<\infty$ for all large
$j$ --- contradicting $R_\eps(\mu_j)\to\infty$. Hence
$Q_\eps\triangleq\sup_{\mu\in\Lambda}R_\eps(\mu)<\infty$, and (i) supplies the selector.
\end{proof}

\begin{corollary}[Verification of Lemma~\ref{lem:uniform-boundedness-epsilon-opt} and boundedness of $\mathfrak T^\eps_n$]
\label{cor:tikhonov-bounded}
Suppose Assumptions~\ref{ass:loss-regularity}--\ref{ass:slater} hold and $\HK$ satisfies
Assumption~\ref{ass:dense-rkhs}(a). Let $\Lambda$ be as in
Lemma~\ref{lem:bounded-multipliers}. Then $\Lambda$ satisfies the hypotheses of
Lemma~\ref{lem:uniform-boundedness-epsilon-opt}; consequently, for every $\eps>0$ there is
$Q_\eps<\infty$ independent of $n$ with $\sup_{\mu\in\Lambda}R_\eps(\mu)\le Q_\eps$,
$d_{\HK}$ is continuous on $\Lambda$, and the minimum-norm $\eps$-optimal selector is
attained for every $\mu\in\Lambda$. Moreover, on the event and for the sample sizes of
Remark~\ref{rem:slater-pathwise},
\[
    \mathfrak T^\eps_n \;\le\; Q_\eps \;<\;\infty,
\]
with the defining infimum attained at every empirical dual maximizer.
\end{corollary}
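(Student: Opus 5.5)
The plan is to check the three hypotheses of Lemma~\ref{lem:uniform-boundedness-epsilon-opt} for the compact set $\Lambda$ of Lemma~\ref{lem:bounded-multipliers}. Once they hold, that lemma yields $Q_\eps$, continuity of $d_{\HK}$, and attainment of the minimum-norm selector. The bound on $\mathfrak T^\eps_n$ then follows by placing the empirical maximizers inside $\Lambda$.

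\textbf{Compactness and finiteness.} Compactness of $\Lambda$ is immediate, since $\Lambda$ is a closed bounded subset of $\R^m$. Finiteness of $d_{\HK}$ on $\Lambda$ follows from Assumption~\ref{ass:loss-regularity}(b): $|L_{\HK}(\phi,\mu)|\le M(1+\|\mu\|_1)$ for every $\phi$, so $|d_{\HK}(\mu)|\le M(1+C_\mu)$.

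\textbf{Weak lower semicontinuity.} This is the substantive step. I will in fact show sequential weak \emph{continuity} of $\phi\mapsto L_{\HK}(\phi,\mu)$ on $\HK$, one risk $\mathcal R_i$ at a time.
\begin{enumerate}
\item Let $f_j\rightharpoonup f$ in $\HK$. By uniform boundedness, $\sup_j\|f_j\|_{\HK}<\infty$.
\item By the reproducing property, $f_j(x)=\langle f_j,K(\cdot,x)\rangle_{\HK}\to f(x)$ for every $x$.
\item Again by the reproducing property, $|f_j(x)|\le\kappa\sup_j\|f_j\|_{\HK}$ uniformly in $x$ and $j$.
\item Since $\ell_i(\cdot,y)$ is $L$-Lipschitz, $\ell_i(f_j(x),y)\to\ell_i(f(x),y)$ pointwise on $\mathcal X\times\mathcal Y$, and these integrands are bounded by $M$.
\item Dominated convergence under $\mathcal D_i$ then gives $\mathcal R_i(f_j)\to\mathcal R_i(f)$.
\end{enumerate}
A finite nonnegative combination of weakly continuous functionals is weakly continuous, hence weakly lsc. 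This is where Assumption~\ref{ass:dense-rkhs}(a) is used; the only care needed is measurability, which holds because $f_j$ and $f$ are measurable elements of $\HK$. I do not expect a real obstacle here.

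\textbf{The bound on $\mathfrak T^\eps_n$.} With the three hypotheses in place, Lemma~\ref{lem:uniform-boundedness-epsilon-opt}(i)--(iii) give $Q_\eps=\sup_{\mu\in\Lambda}R_\eps(\mu)<\infty$, continuity of $d_{\HK}$, and attainment of the selector. The constant $Q_\eps$ is manifestly independent of $n$, because it involves only population quantities and $\Lambda$.

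On the event and for the sample sizes of Remark~\ref{rem:slater-pathwise}, Lemma~\ref{lem:bounded-multipliers}(iii) shows that $\arg\max_{\mu\ge0}\widehat d_n(\mu)$ is nonempty and contained in $\Lambda$. Taking the supremum in Definition~\ref{def:tik} over this subset gives
\[
\mathfrak T^\eps_n\le\sup_{\mu\in\Lambda}R_\eps(\mu)\le Q_\eps .
\]
Attainment of the defining infimum at each empirical maximizer is part (i) of the lemma, since each such maximizer lies in $\Lambda$.

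The only point that needs stating explicitly is that $\mathfrak T^\eps_n$ is random only through which multipliers are selected, while the bound is deterministic. The hardest step is the weak continuity argument, and it is routine.
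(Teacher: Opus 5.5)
Your proposal is correct and follows the paper's proof step for step. Boundedness of the losses gives finiteness of $d_{\HK}$; pointwise convergence via the reproducing property plus dominated convergence gives weak sequential continuity of each $\mathcal R_i$; and Lemma~\ref{lem:bounded-multipliers}(iii) places every empirical dual maximizer in $\Lambda$.

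Two cosmetic remarks. Your uniform bound $|f_j(x)|\le\kappa\sup_j\|f_j\|_{\HK}$ is not needed, since the bound $M$ on the losses already supplies the dominating function. Also, the paper records finiteness as $|d_{\HK}(\mu)|\le M(1+\|\mu\|_1)$ on all of $\R^m$; your estimate gives this equally, and it is the form the continuity step of Lemma~\ref{lem:uniform-boundedness-epsilon-opt}(ii) actually uses.
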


\begin{proof}
By Assumption~\ref{ass:loss-regularity}(b), $|\mathcal R_i(\phi)|\le M$ for all $\phi$ and
$i$, so $|L(\phi,\mu)|\le M(1+\|\mu\|_1)$ and hence
$|d_{\HK}(\mu)|\le M(1+\|\mu\|_1)<\infty$ for every $\mu\in\R^m$; $\Lambda$ is compact, so
the dual value is bounded on it.

For the second hypothesis we prove the stronger statement that each $\mathcal R_i$ is weakly
sequentially \emph{continuous} on $\HK$. Let $\phi_j\rightharpoonup\phi$ in $\HK$. Point
evaluation is a bounded linear functional (Assumption~\ref{ass:dense-rkhs}(a)), so
$\phi_j(x)=\langle\phi_j,K(x,\cdot)\rangle_{\HK}\to\phi(x)$ for every $x$. By
Assumption~\ref{ass:loss-regularity}(a) each $\ell_i(\cdot,y)$ is continuous, so
$\ell_i(\phi_j(x),y)\to\ell_i(\phi(x),y)$ pointwise; by
Assumption~\ref{ass:loss-regularity}(b) the integrands are uniformly bounded by $M$, so
dominated convergence gives $\mathcal R_i(\phi_j)\to\mathcal R_i(\phi)$. Weak sequential
continuity implies weak sequential lower semicontinuity, and since $\mu_i\ge0$ for
$i\in\mathbb N^+_m$, the same holds for
$L(\cdot,\mu)=\mathcal R_0(\cdot)+\sum_{i\ge1}\mu_i\mathcal R_i(\cdot)$.

Both hypotheses of Lemma~\ref{lem:uniform-boundedness-epsilon-opt} therefore hold, giving
$Q_\eps<\infty$, continuity of $d_{\HK}$ on $\Lambda$, and attainment of $R_\eps(\mu)$ for
every $\mu\in\Lambda$. On the event of Remark~\ref{rem:slater-pathwise} every empirical dual
maximizer lies in $\Lambda$ by Lemma~\ref{lem:bounded-multipliers}(iii), so
$\mathfrak T^\eps_n=\sup_{\mu\in\arg\max\widehat d_n}R_\eps(\mu)\le\sup_{\mu\in\Lambda}R_\eps(\mu)=Q_\eps$,
with the defining infimum attained at each such $\mu$.
\end{proof}

This completes the value arc. Set
\[
\mathfrak G_Q \;\triangleq\; \sup_{\mu\in\Lambda}\bigl[d_{Q}(\mu)-d_{\HK}(\mu)\bigr]\;\ge\;0 .
\]

Indeed, for every $\eps>0$ and every $Q\ge Q_\eps$: fix $\mu\in\Lambda$; by
Corollary~\ref{cor:tikhonov-bounded} the minimum-norm $\eps$-optimal selector
$\phi^\mu_\eps$ at $\mu$ is attained with $\|\phi^\mu_\eps\|_{\HK}=R_\eps(\mu)\le Q_\eps\le Q$,
so $\phi^\mu_\eps\in\Bq{Q}$ and $d_Q(\mu)\le L_{\HK}(\phi^\mu_\eps,\mu)\le d_{\HK}(\mu)+\eps$.
Taking the supremum over $\mu\in\Lambda$ gives $\mathfrak G_Q\le\eps$, with no attainment of
the primal required. Since both suprema defining $D^\star_Q$ and $D^\star_{\HK}$ are attained
in $\Lambda$ (Lemma~\ref{lem:bounded-multipliers}), $|D^\star_Q-D^\star_{\HK}|\le\eps$
whenever $Q\ge Q_\eps$.

In the growing-ball argument we therefore control this term as soon as $n$ is large enough
that $Q_n\ge Q_{\eps/2}$. Note that $Q_{\eps/2}$ depends on the $\mathcal D_i$ and on $\eps$,
but not on the realized sample path $\omega$; we remain in the universal regime, and
Corollary~\ref{cor:explicit-rate} quantifies that dependence under
Assumption~\ref{ass:source}. Assembling the three closed gaps through
\eqref{equation:GDEC} yields $|\widehat D^\star_n - P^\star_{\HK}| \le \eps$ with high
probability for $n$ large, which is part~(i) of the main theorem. We now turn to the harder
question --- feasibility --- which the value arc leaves entirely untouched.
\subsection{Feasibility-Related Lemmata}
\label{sec:feasibility-indices}

The core idea of the feasibility arc is to prove convergence of the relevant empirical
objects to their population counterparts, and then to read constraint satisfaction off the
population limit, up to the intrinsic gap $\epsinf$. The arc proceeds in five movements:
(0) a precise statement of the Levin--Valadier representation we use; (1) simultaneous
uniform convergence of empirical to population risks over the growing balls; (2) the
dual-side first-order structure and the subsequential convergence of empirical dual optima;
(3) the qualitative convergence result (Lemma~\ref{lem:multi-qual-conv}) bounding the
image-set deviation by $\epsinf$; and (4) the upgrade from subsequential to full-sequence
guarantees.

\subsubsection*{Movement 0: the representation of the superdifferential}

\begin{lemmaBox}{Levin--Valadier representation, empirical and population forms}{LV}
Let Assumption~\ref{ass:loss-regularity} hold and write
$\mathcal R(\phi)\triangleq(\mathcal R_1(\phi),\dots,\mathcal R_m(\phi))\in[-M,M]^m$,
$\widehat{\mathcal R}_n(\phi)$ analogously.
\begin{enumerate}[label=\textnormal{(\roman*)}, itemsep=2pt, topsep=2pt]
\item \textbf{(Empirical, exact form.)} For every $n$, every $\mu\in\R^m$ and every
$\omega$, the set
$K_n(\mu)\triangleq\widehat{\mathcal R}_n\bigl(\widehat\Phi_n(\mu)\bigr)\subseteq[-M,M]^m$
is nonempty and compact, $\conv K_n(\mu)$ is compact, and
\[
\partial\widehat d_n(\mu)\;=\;\conv K_n(\mu) .
\]
In particular the closure may be dropped from \eqref{eq:LV-empirical}, and every
$\nabla\in\partial\widehat d_n(\mu)$ is an \emph{exact} convex combination of at most $m+1$
realized empirical constraint-risk vectors.
\item \textbf{(Population, level-set form.)} For every $\mu\in\R^m$,
\[
\partial d_{\HK}(\mu)\;=\;\bigcap_{\eps>0}\ \clconv\bigl\{\mathcal R(\phi):\phi\in S_\eps(\mu)\bigr\}\;\subseteq\;[-M,M]^m ,
\]
where $S_\eps(\mu)$ is the $\eps$-optimal level set of Definition~\ref{def:tik}. If in
addition $\Phi_{\HK}(\mu)\ne\emptyset$ and $\mathcal R(\Phi_{\HK}(\mu))$ is closed, then
$\partial d_{\HK}(\mu)\supseteq\conv\mathcal R(\Phi_{\HK}(\mu))$.
\end{enumerate}
\end{lemmaBox}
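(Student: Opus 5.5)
The plan is to reduce both parts to the elementary structure of a pointwise infimum of affine functions of $\mu$: for each fixed predictor, $\mu\mapsto L(\phi,\mu)$ is affine with gradient equal to the constraint-risk vector. Part (i) will use a finite-dimensional compact reduction followed by a Danskin/Valadier-type formula. Part (ii) will use an $\eps$-subdifferential argument whose two inclusions are proved directly, with boundedness of the losses doing the work.

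\emph{Part (i).} Fix $n$, $\omega$ and $\mu\in\R^m$ (no sign restriction on $\mu$ is needed). Let $N=(m+1)n$ and define the bounded linear evaluation map $E:\HK\to\R^N$, $E\phi=(\phi(X^j_i))_{i,j}$. The empirical Lagrangian factors as $\widehat L_n(\phi,\mu)=a(E\phi,\mu)$ with
\[
a(v,\mu)=g_0(v)+\sum_{i=1}^m\mu_i g_i(v),\qquad g_i(v)=\tfrac1n\textstyle\sum_j\ell_i(v_{ij},Y^j_i).
\]
Each $g_i$ is continuous by Assumption~\ref{ass:loss-regularity}(a). The set $V_n\triangleq E(\Bq{Q_n})$ is compact and convex, being the image of a weakly compact convex set under a finite-rank bounded operator. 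Hence
\[
\widehat d_n(\mu)=\min_{v\in V_n}a(v,\mu),
\]
and this is a minimum over a compact index set of functions that are affine in $\mu$ and jointly continuous in $(v,\mu)$. The argmin $A_n(\mu)\subseteq V_n$ is nonempty and compact. Moreover $\phi\in\widehat\Phi_n(\mu)$ if and only if $E\phi\in A_n(\mu)$, so $K_n(\mu)=g(A_n(\mu))$ with $g=(g_1,\dots,g_m)$. This set is compact as a continuous image, and $\conv K_n(\mu)$ is compact by Carath\'eodory's theorem in $\R^m$, which also yields the bound of at most $m+1$ points. Since $\widehat d_n$ is finite on all of $\R^m$, the Danskin/Valadier formula for a minimum of affine functions over a compact index set gives exactly
\[
\partial\widehat d_n(\mu)=\conv\{\nabla_\mu a(v,\mu):v\in A_n(\mu)\}=\conv K_n(\mu),
\]
with no normal-cone term. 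The ball constraint acts only on the index set, not on $\mu$, which is the promised verification that activity of the ball adds nothing. If one prefers to avoid citing the formula, I would prove it by the same separation argument as in part (ii) below, where compactness of $A_n(\mu)$ replaces the $\eps$-level sets.

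\emph{Part (ii), inclusion ``$\supseteq$'' and the final claim.} Write $C_\eps\triangleq\clconv\{\mathcal R(\phi):\phi\in S_\eps(\mu)\}$. For $\phi\in S_\eps(\mu)$ and any $\nu$,
\[
d_{\HK}(\nu)\le L(\phi,\nu)=L(\phi,\mu)+\langle\mathcal R(\phi),\nu-\mu\rangle\le d_{\HK}(\mu)+\eps+\langle\mathcal R(\phi),\nu-\mu\rangle .
\]
This inequality is preserved under convex combinations and limits in the vector $\mathcal R(\phi)$, so every $r\in C_\eps$ is an $\eps$-supergradient. A point lying in $C_\eps$ for every $\eps>0$ therefore satisfies the inequality with $\eps\downarrow0$, i.e.\ it belongs to $\partial d_{\HK}(\mu)$. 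If $\Phi_{\HK}(\mu)\neq\emptyset$, then $\Phi_{\HK}(\mu)\subseteq S_\eps(\mu)$ for every $\eps$, so $\conv\mathcal R(\Phi_{\HK}(\mu))\subseteq\bigcap_\eps C_\eps=\partial d_{\HK}(\mu)$. Closedness of $\mathcal R(\Phi_{\HK}(\mu))$ is not actually needed for this step. The containment in $[-M,M]^m$ is immediate because each $C_\eps\subseteq[-M,M]^m$.

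\emph{Part (ii), inclusion ``$\subseteq$''; this is the main step.} Suppose $g\in\partial d_{\HK}(\mu)$ but $g\notin C_\eps$ for some $\eps>0$. Since $C_\eps$ is compact and convex, strict separation gives a direction $h$ and $\eta>0$ with
\[
\langle r,h\rangle\ge\langle g,h\rangle+\eta\quad\text{for all } r\in C_\eps .
\]
For $t>0$ split the infimum defining $d_{\HK}(\mu+th)$ into two cases.
\begin{itemize}
\item If $\phi\in S_\eps(\mu)$, then $L(\phi,\mu+th)\ge d_{\HK}(\mu)+t(\langle g,h\rangle+\eta)$.
\item If $\phi\notin S_\eps(\mu)$, then $L(\phi,\mu+th)>d_{\HK}(\mu)+\eps-tM\|h\|_1$, using $|\mathcal R_i|\le M$.
\end{itemize}
For $t$ small enough that $\eps-tM\|h\|_1\ge t(\langle g,h\rangle+\eta)$, both cases give
\[
d_{\HK}(\mu+th)\ge d_{\HK}(\mu)+t\langle g,h\rangle+t\eta .
\]
This contradicts the supergradient inequality $d_{\HK}(\mu+th)\le d_{\HK}(\mu)+t\langle g,h\rangle$. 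Hence $g\in C_\eps$ for every $\eps>0$, which proves the equality. The delicate point is exactly this uniform control of the predictors outside the $\eps$-level set. The uniform bound $M$ makes the penalty $tM\|h\|_1$ independent of $\phi$, and this is what lets the argument avoid any attainment or compactness of the primal argmin in $\HK$.
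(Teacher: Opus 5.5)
Your proof is correct, and it reaches the statement by a partly different and more self-contained route than the paper.

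\textbf{Part (i).} The paper stays inside $\HK$. It uses weak compactness of $\Bq{Q_n}$ (via Eberlein--\v Smulian) and weak sequential continuity of the empirical risks to show that $\widehat\Phi_n(\mu)$ is weakly sequentially compact, hence that $K_n(\mu)$ is compact. It then cites the Valadier/Ioffe--Tihomirov representation.

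You instead factor $\widehat L_n$ through the evaluation map $E$ and work with the compact convex index set $E(\Bq{Q_n})\subset\R^{(m+1)n}$, where Danskin's formula applies directly. Both arguments rest on the same classical formula at the last step. Your reduction makes the finite-dimensionality explicit, in the spirit of the paper's representer footnote. It also makes the absence of a normal-cone term transparent, since the ball only shapes the index set. The paper's version additionally records that $\widehat\Phi_n(\mu)$ itself is weakly sequentially compact in $\HK$.

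If you carry out your optional separation proof of the empirical formula instead of citing Danskin, you need one extra step. The argument of (ii) only yields $\bigcap_{\eps>0}\clconv$ of the images of the empirical $\eps$-argmin sets. Collapsing this to $\conv K_n(\mu)$ requires noting that these images are nested compact sets decreasing to $K_n(\mu)$, and then applying Carath\'eodory with a compactness extraction. This step is easy.

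\textbf{Part (ii).} The paper simply invokes the $\eps$-level-set form of the Ioffe--Tihomirov/Valadier theorem, whereas you prove both inclusions.
\begin{itemize}
\item ``$\supseteq$'' holds because the $\eps$-supergradient inequality survives convex combinations and limits.
\item ``$\subseteq$'' follows from strict separation combined with your two-case split, in which $|\mathcal R_i|\le M$ controls every predictor outside $S_\eps(\mu)$ uniformly in $\phi$.
\end{itemize}
This makes visible that only boundedness of the losses is used: no attainment, no compactness in $\HK$, and no sign condition on $\mu$.

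Your observation that closedness of $\mathcal R(\Phi_{\HK}(\mu))$ is superfluous in the final clause is right. The paper's own affine-majorant argument for that clause does not use it either.
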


\begin{proof}
(i) $\Bq{Q_n}$ is a closed ball in a Hilbert space, hence weakly compact, and by
Eberlein--\v Smulian weakly sequentially compact. Each $\phi\mapsto\widehat{\mathcal R}_{i,n}(\phi)$
is weakly sequentially continuous on $\Bq{Q_n}$: weak convergence implies pointwise
convergence by the reproducing property, and $\ell_i$ is continuous and bounded, so the
finite average converges. Consequently $\widehat L_n(\cdot,\mu)$, a finite linear combination
of these maps, is weakly sequentially continuous, hence attains its infimum on
$\Bq{Q_n}$ and has weakly sequentially closed argmin set; being a weakly sequentially closed
subset of a weakly sequentially compact set, $\widehat\Phi_n(\mu)$ is itself weakly
sequentially compact. The map $\Psi_n\triangleq\widehat{\mathcal R}_n$ is weakly
sequentially continuous into $\R^m$, so $K_n(\mu)=\Psi_n(\widehat\Phi_n(\mu))$ is
sequentially compact, hence compact, in $\R^m$. By Carath\'eodory's theorem
$\conv K_n(\mu)$ is the continuous image of the compact set
$K_n(\mu)^{m+1}\times\Delta_m$, hence compact and in particular closed.

For the identity, $\widehat d_n(\nu)=\inf_{\phi\in\Bq{Q_n}}\{\widehat{\mathcal R}_{0,n}(\phi)+\langle\nu,\widehat{\mathcal R}_n(\phi)\rangle\}$
is a pointwise infimum of a family of affine functions of $\nu$ indexed by
$\phi\in\Bq{Q_n}$, with the infimum attained. The classical representation of the
superdifferential of such an infimal function
\cite{Valadier,IoffeTihomirov1979,CastaingValadier1977} gives
$\partial\widehat d_n(\mu)=\clconv\{\widehat{\mathcal R}_n(\phi):\phi\in\widehat\Phi_n(\mu)\}$
when the infimum is attained and the index images are bounded, and we have just shown the
closure is redundant.

(ii) The same representation for an infimal function whose infimum need not be attained
requires passing to $\eps$-optimal level sets: this is the standard Ioffe--Tihomirov /
Valadier form \cite[Ch.~4]{IoffeTihomirov1979}, valid here because
$\{\mathcal R(\phi):\phi\in S_\eps(\mu)\}\subseteq[-M,M]^m$ is bounded and the family is
affine in $\nu$. The inclusion $\partial d_{\HK}(\mu)\subseteq[-M,M]^m$ is immediate. The
final inclusion holds because each $\phi\in\Phi_{\HK}(\mu)$ generates an affine majorant of
$d_{\HK}$ touching at $\mu$, so $\mathcal R(\phi)\in\partial d_{\HK}(\mu)$, and
$\partial d_{\HK}(\mu)$ is convex.
\end{proof}

\subsubsection*{Movement 1: uniform convergence}

\begin{lemmaBox}{High-probability simultaneous uniform convergence}{multi-hp-conv}
Under Assumptions~\ref{ass:loss-regularity} and~\ref{ass:dense-rkhs}, for every
$\delta \in (0,1)$ there exists a deterministic threshold
$n_{\mathrm{uc}}(\delta) \triangleq \lceil 1/\delta \rceil + 1 \in \mathbb{N}^+$
such that, with probability at least $1 - \delta$, simultaneously for all
$i \in \mathbb{N}_m$ and all $n \geq n_{\mathrm{uc}}(\delta)$,
\[
\Delta_{i,n}(Q_n, \delta) \triangleq \sup_{\phi \in \Bq{Q_n}}
\bigl|\widehat{\mathcal{R}}_{i,n}(\phi) - \mathcal{R}_i(\phi)\bigr|
\;\leq\; c\left(\frac{LQ_n}{\sqrt{n}} + M\sqrt{\frac{2\log(m+1) + 2\log n}{n}}\right) \;=\; \Delta_n(\delta),
\]
for a universal constant $c \geq 1$. Here $\delta$ enters \emph{only} through the starting
index $n_{\mathrm{uc}}(\delta)$, not through the constant inside the bracket. In particular
$\Delta_{i,n}(Q_n,\delta) = O(Q_n/\sqrt n + \sqrt{\log n/n}) \to 0$. All subsequent
convergence claims are made on this event, denoted $\Omega_{\mathrm{uc}}(\delta)$.
\end{lemmaBox}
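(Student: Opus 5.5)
The plan is to prove a per-$n$ high-probability bound at confidence level $\delta_n \triangleq 1/n^2$ and then sum the failure probabilities over $n \ge n_{\mathrm{uc}}(\delta)$. This explains both the $\log n$ term inside the bracket and the fact that $\delta$ enters only through the starting index.

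\textbf{Step 1: a fixed-$n$, fixed-$i$ bound.} Fix $n$ and $i\in\mathbb N_m$, and work on the ball $\Bq{Q_n}$. By Assumption~\ref{ass:loss-regularity}(b), each summand $\ell_i(\phi(X),Y)$ lies in $[-M,M]$, so the supremum of the empirical process has bounded differences $2M/n$. McDiarmid's inequality combined with symmetrization then gives, with probability at least $1-\delta'$,
\[
\sup_{\phi \in \Bq{Q_n}} \bigl|\widehat{\mathcal R}_{i,n}(\phi)-\mathcal R_i(\phi)\bigr| \le 2\mathfrak R_n(\ell_i\circ\Bq{Q_n}) + c' M\sqrt{\log(2/\delta')/n}.
\]
Talagrand's contraction, using the $L$-Lipschitz property from Assumption~\ref{ass:loss-regularity}(a), together with the RKHS ball bound $\mathfrak R_n(\Bq{Q})\le \kappa Q/\sqrt n$ under Assumption~\ref{ass:dense-rkhs}(a), bounds the complexity term by $2L\kappa Q_n/\sqrt n$. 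This is the same estimate already used in Lemma~\ref{lem:dual-pac-fixedQ}. I absorb $\kappa$ and the numerical factors into $c$, as the paper does.

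\textbf{Step 2: union over $i$ and $n$.} Choose $\delta' = 1/((m+1)n^2)$. Then $\log(2/\delta') \le \log 2 + \log(m+1) + 2\log n$. After enlarging $c$, this fits under $M\sqrt{(2\log(m+1)+2\log n)/n}$, since for $n\ge 2$ the constant $\log 2$ is dominated by $2\log n$. A union bound over the $m+1$ indices gives failure probability at most $1/n^2$ at sample size $n$. Summing over $n\ge N \triangleq n_{\mathrm{uc}}(\delta)$ gives
\[
\sum_{n\ge N} n^{-2} \le \frac{1}{N-1} \le \delta,
\]
because $N-1 = \lceil 1/\delta\rceil \ge 1/\delta$. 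Hence the bound holds simultaneously for all $i$ and all $n\ge N$ with probability at least $1-\delta$. The statement $\Delta_{i,n}(Q_n,\delta)=O(Q_n/\sqrt n+\sqrt{\log n/n})\to 0$ then follows from the schedule \eqref{eq:schedule}.

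\textbf{Main obstacle: measurability.} The only delicate point is that the supremum over the uncountable ball must be a measurable random variable, so that the events are well defined. I would handle this with the continuity argument from Lemma~\ref{lem:LV}(i). Since $\HK$ is separable (the kernel is continuous and bounded, or one can restrict to a separable subspace), and $\phi\mapsto\widehat{\mathcal R}_{i,n}(\phi)-\mathcal R_i(\phi)$ is continuous in the $\HK$-norm, the supremum equals a supremum over a countable dense subset of the ball, and is therefore measurable. Tracking the constant so that a single universal $c$ serves both this lemma and Lemma~\ref{lem:dual-pac-fixedQ} is routine.
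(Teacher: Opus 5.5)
Your proposal is correct and follows the paper's proof essentially step for step. Both use per-index confidence $1/((m+1)n^2)$, a union bound over the $m+1$ indices giving failure probability at most $1/n^2$ at each $n$, and the tail sum $\sum_{n\ge N}n^{-2}\le 1/(N-1)\le\delta$ with $N=\lceil 1/\delta\rceil+1$. Your explicit symmetrization/McDiarmid derivation, the absorption of $\log 2$ into $c$, and the measurability remark (which the paper omits altogether) are harmless additions; note only that the measurability argument needs $\HK$ itself to be separable, e.g.\ via a continuous kernel, since restricting to a separable subspace would bound a different supremum.
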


\begin{proof}
Set $\delta_n \triangleq 1/((m+1)n^2)$ for $n \geq 1$. By the same Rademacher argument
underlying Lemma~\ref{lem:dual-pac-fixedQ}, applied at confidence level $\delta_n$, for each
$i\in\mathbb N_m$,
\[
\Pr\!\left(
\sup_{\phi \in \Bq{Q_n}}
\bigl|\widehat{\mathcal{R}}_{i,n}(\phi) - \mathcal{R}_i(\phi)\bigr| > \Delta_{i,n}(Q_n, \delta_n)
\right)
\leq \delta_n,
\]
where, by definition of $\Delta_{i,n}$ and using
$\log((m+1)/\delta_n) = \log((m+1)^2n^2) = 2\log(m+1) + 2\log n$,
\[
\Delta_{i,n}(Q_n,\delta_n)
= c\left(\frac{LQ_n}{\sqrt n} + M\sqrt{\frac{2\log(m+1) + 2\log n}{n}}\right).
\]
Let $E_n$ be the event that the supremum deviation exceeds $\Delta_{i,n}(Q_n,\delta_n)$ for
\emph{some} $i \in \mathbb{N}_m$. A union bound over the $m+1$ indices
$i \in \mathbb{N}_m = \{0,1,\dots,m\}$ gives
\[
\Pr(E_n) \;\leq\; \sum_{i=0}^m \delta_n \;=\; \frac{m+1}{(m+1)n^2} \;=\; \frac{1}{n^2}.
\]
Define $F_N \triangleq \bigcup_{n \geq N} E_n$, so that
$\Pr(F_N) \leq \sum_{n\ge N} n^{-2} \leq (N-1)^{-1}$ for $N\ge2$, by the integral comparison
$\sum_{n\ge N}n^{-2}\le\int_{N-1}^\infty t^{-2}\mathrm dt$. Setting
$N = n_{\mathrm{uc}}(\delta) = \lceil 1/\delta\rceil + 1$ gives
$N - 1 = \lceil 1/\delta\rceil \geq 1/\delta$, hence $\Pr(F_N) \leq \delta$. On
$\Omega_{\mathrm{uc}}(\delta) \triangleq F_{n_{\mathrm{uc}}(\delta)}^c$, which has
probability at least $1-\delta$, $E_n$ fails for every $n \geq n_{\mathrm{uc}}(\delta)$,
which is the claim. Note that, unlike a naive substitution of $\delta$ for $\delta_n$, the
confidence parameter enters only through the threshold, while the bound retains the
$2\log n$ term inherited from the shrinking per-step confidence levels.
\end{proof}

\subsubsection*{Movement 2: dual-side first-order structure}

\begin{lemmaBox}{Existence of sign- and complementarity-compatible subgradients}{sign-comp-subgrad}
Let Assumption~\ref{ass:loss-regularity}(b) hold, fix $n\in\mathbb{N}^+$ and $Q_n>0$, and
suppose $\widehat\mu_n \in \arg\max_{\mu\in\R^m_+}\widehat d_n(\mu)$ is nonempty. Then there
exists $\nabla_n \in \partial \widehat d_n(\widehat\mu_n)$ satisfying
\[
    \nabla_{n,i} \le 0
    \quad\text{and}\quad
    \widehat\mu_{n,i}\,\nabla_{n,i} = 0,
    \qquad i \in \mathbb{N}_m^+.
\]
The identical statement, with the identical proof, holds for $d_{\HK}$ and any
$\mu^\star\in\Sdual$.
\end{lemmaBox}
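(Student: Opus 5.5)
The argument is the standard first-order optimality condition for maximizing a finite concave function over the nonnegative orthant. We apply it to $\widehat d_n$, and verbatim to $d_{\HK}$.

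First, $\widehat d_n$ is finite and concave on all of $\R^m$. It is a pointwise infimum over $\phi\in\Bq{Q_n}$ of the affine maps $\nu\mapsto\widehat{\mathcal R}_{0,n}(\phi)+\langle\nu,\widehat{\mathcal R}_n(\phi)\rangle$, and Assumption~\ref{ass:loss-regularity}(b) gives $|\widehat d_n(\nu)|\le M(1+\|\nu\|_1)$. It is therefore continuous on $\R^m$ \cite[Theorem 10.1]{Rockafellar1970}, and its superdifferential $\partial\widehat d_n(\mu)$ is nonempty, convex and compact at every $\mu$. By Lemma~\ref{lem:LV}(i) it is moreover contained in $[-M,M]^m$.

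Next, $\widehat\mu_n$ maximizes $\widehat d_n$ over the closed convex set $\R^m_+$, so it minimizes the proper convex function $-\widehat d_n+\iota_{\R^m_+}$. The convex function $-\widehat d_n$ is finite and continuous everywhere, so the Moreau--Rockafellar sum rule applies with no qualification issue. Fermat's rule then yields
\[
0\in-\partial\widehat d_n(\widehat\mu_n)+N_{\R^m_+}(\widehat\mu_n),
\]
that is, some $\nabla_n\in\partial\widehat d_n(\widehat\mu_n)$ satisfies $\nabla_n\in N_{\R^m_+}(\widehat\mu_n)$.

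The normal cone to the orthant is explicit:
\[
N_{\R^m_+}(\mu)=\{v\in\R^m: v_i\le0,\ v_i=0 \text{ whenever } \mu_i>0\}.
\]
Membership in it is exactly $\nabla_{n,i}\le0$ together with $\widehat\mu_{n,i}\nabla_{n,i}=0$ for every $i\in\mathbb N_m^+$, which is the claim. As an alternative that avoids the sum rule, we could argue by separation. Suppose $\partial\widehat d_n(\widehat\mu_n)$, which is compact and convex, were disjoint from the closed convex cone $N_{\R^m_+}(\widehat\mu_n)$. Strict separation would then produce a direction $h$ in the tangent cone $T_{\R^m_+}(\widehat\mu_n)$ with $\min_{\nabla\in\partial\widehat d_n(\widehat\mu_n)}\langle\nabla,h\rangle>0$. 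Since $\widehat d_n'(\widehat\mu_n;h)=\min_{\nabla}\langle\nabla,h\rangle$ for a finite concave function, this is a strict ascent direction. Because $\R^m_+$ is polyhedral, $\widehat\mu_n+th\in\R^m_+$ for small $t>0$, which contradicts optimality.

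For $d_{\HK}$ and $\mu^\star\in\Sdual$, the same steps apply once finiteness and concavity of $d_{\HK}$ on $\R^m$ are in hand. Finiteness follows from the bound $|d_{\HK}(\mu)|\le M(1+\|\mu\|_1)$ established in the proof of Corollary~\ref{cor:tikhonov-bounded}. Nonemptiness of $\Sdual$ is Lemma~\ref{lem:bounded-multipliers}(ii). Attainment of the Lagrangian infimum is never used, since the argument touches only the dual function.

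The only step requiring care is justifying the optimality condition, meaning either the sum rule or the directional-derivative formula. Both are standard once $\widehat d_n$ is known to be finite on the whole space rather than merely on the orthant. That finiteness is exactly what the boundedness in Assumption~\ref{ass:loss-regularity}(b) supplies, and it explains why that is the only hypothesis listed.
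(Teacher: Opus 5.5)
Your proof is correct and follows the paper's route. Like the paper, you get finiteness and concavity of $\widehat d_n$ on all of $\R^m$ from the bound $M(1+\|\mu\|_1)$, apply Fermat's rule with the sum rule justified by continuity, and read off sign and complementarity from the coordinatewise description of $N_{\R^m_+}(\widehat\mu_n)$; your separation/directional-derivative alternative is also valid but not needed.
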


\begin{proof}
$\widehat d_n$ is concave on $\R^m$, being a pointwise infimum of the affine family
$\mu\mapsto\widehat L_n(\phi,\mu)$, and finite everywhere since
$|\widehat{\mathcal R}_{i,n}(\phi)|\le M$ gives
$|\widehat d_n(\mu)|\le M(1+\|\mu\|_1)<\infty$ for every $\mu\in\R^m$. A concave function
finite on all of $\R^m$ is continuous and superdifferentiable everywhere
\cite[Thms.\ 10.1, 23.4]{Rockafellar1970}, so
$\partial\widehat d_n(\widehat\mu_n)\ne\emptyset$.

Since $\widehat\mu_n$ maximizes $\widehat d_n$ over the closed convex cone $\R^m_+$, Fermat's
rule gives $0 \in \partial(-\widehat d_n)(\widehat\mu_n) + N_{\R^m_+}(\widehat\mu_n)$, the
sum rule applying exactly by continuity of $-\widehat d_n$ on $\R^m$
\cite[Thm.\ 23.8]{Rockafellar1970}. Unwinding signs, there exists
$\nabla_n\in\partial\widehat d_n(\widehat\mu_n)$ with $\nabla_n\in N_{\R^m_+}(\widehat\mu_n)$.
Finally, $v \in N_{\R^m_+}(\widehat\mu_n)$ iff $v\cdot(\mu-\widehat\mu_n)\le 0$ for every
$\mu\in\R^m_+$; since the constraint decouples coordinatewise, this holds iff $v_i\le0$
whenever $\widehat\mu_{n,i}=0$ and $v_i=0$ whenever $\widehat\mu_{n,i}>0$, i.e.\ iff
$v_i\le0$ and $\widehat\mu_{n,i}v_i=0$ for every $i\in\mathbb N_m^+$.
\end{proof}

\begin{remark}
The argument is entirely dual-side: it uses only that $\widehat d_n$ is concave and finite
everywhere (Assumption~\ref{ass:loss-regularity}(b) alone) and that $\widehat\mu_n$ is an
exact maximizer over $\R^m_+$. It invokes neither convexity of the primal problem nor any
zero-duality-gap property of the empirical constrained problem. Slater's condition enters one
level upstream, in guaranteeing via Lemma~\ref{lem:bounded-multipliers} that
$\arg\max_{\mu\ge0}\widehat d_n(\mu)$ is nonempty.
\end{remark}

\begin{corollary}[Uniform Lagrangian deviation bound]\label{cor:lagrangian-dev}
Under the hypotheses of Lemma~\ref{lem:multi-hp-conv}, on $\Omega_{\mathrm{uc}}(\delta)$: for
all $n \geq n_{\mathrm{uc}}(\delta)$, all $\mu \in \R^m_+$, and all $\phi \in \Bq{Q_n}$,
\[
\bigl|\widehat{L}_n(\phi,\mu) - L_{\HK}(\phi,\mu)\bigr|
\;\leq\;
(1+\|\mu\|_1)\,\Delta_n(\delta),
\qquad
\Delta_n(\delta) = \max_{0 \leq i \leq m} \Delta_{i,n}(Q_n,\delta),
\]
so that $\Delta_n(\delta) \to 0$ on $\Omega_{\mathrm{uc}}(\delta)$.
\end{corollary}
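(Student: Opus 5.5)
The corollary follows from Lemma~\ref{lem:multi-hp-conv} by linearity in $\mu$ and the triangle inequality.

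We work on $\Omega_{\mathrm{uc}}(\delta)$ and fix $n\ge n_{\mathrm{uc}}(\delta)$, $\mu\in\R^m_+$ and $\phi\in\Bq{Q_n}$. By definition,
\[
\widehat L_n(\phi,\mu)-L_{\HK}(\phi,\mu)=\bigl(\widehat{\mathcal R}_{0,n}(\phi)-\mathcal R_0(\phi)\bigr)+\sum_{i=1}^m\mu_i\bigl(\widehat{\mathcal R}_{i,n}(\phi)-\mathcal R_i(\phi)\bigr).
\]
Apply the triangle inequality, using $\mu_i\ge0$ so that $|\mu_i|=\mu_i$. Each difference with index $i\in\mathbb N_m$ is bounded by $\Delta_{i,n}(Q_n,\delta)$, since $\phi\in\Bq{Q_n}$ and $\Delta_{i,n}$ is a supremum over that ball. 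Each $\Delta_{i,n}(Q_n,\delta)$ is in turn at most $\max_{0\le i\le m}\Delta_{i,n}(Q_n,\delta)$. This gives the bound $(1+\sum_{i\ge1}\mu_i)\max_i\Delta_{i,n}=(1+\|\mu\|_1)\max_i\Delta_{i,n}$.

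It remains to identify this maximum with the envelope $\Delta_n(\delta)$ of \eqref{eq:uc-envelope}. On $\Omega_{\mathrm{uc}}(\delta)$, Lemma~\ref{lem:multi-hp-conv} bounds every $\Delta_{i,n}(Q_n,\delta)$ by the same deterministic quantity $c\bigl(LQ_n/\sqrt n+M\sqrt{(2\log(m+1)+2\log n)/n}\bigr)$. The maximum is therefore bounded by that quantity as well. Both readings of $\Delta_n(\delta)$ then give the same inequality: the maximum of the realized deviations, as written in the corollary, and the uniform envelope, as in \eqref{eq:uc-envelope}.

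The only step with any content is the convergence claim $\Delta_n(\delta)\to0$. It follows from the schedule \eqref{eq:schedule}: $Q_n/\sqrt n\searrow0$ and $\log n/n\to0$. I expect no real obstacle. The only care needed is notational, in keeping the maximum of realized deviations distinct from the uniform envelope that bounds it, and in noting that the estimate is uniform in $\mu$ only through the factor $(1+\|\mu\|_1)$.
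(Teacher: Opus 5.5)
Your proposal is correct and matches the paper's proof: the triangle inequality on the expansion that is linear in $\mu$ gives $\Delta_{0,n}+\|\mu\|_1\max_{i\ge1}\Delta_{i,n}$, and each term is then bounded by $\Delta_n(\delta)$. Your point that the corollary's $\Delta_n(\delta)=\max_i\Delta_{i,n}$ (the realized deviations) should be kept distinct from the deterministic envelope of \eqref{eq:uc-envelope} is a useful clarification that the paper leaves implicit; on $\Omega_{\mathrm{uc}}(\delta)$ the former is dominated by the latter, which is what yields the convergence claim.
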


\begin{proof}
By the triangle inequality,
$|\widehat{L}_n(\phi,\mu) - L_{\HK}(\phi,\mu)| \leq \Delta_{0,n} + \|\mu\|_1\max_{1\le i\le m}\Delta_{i,n}$,
and bounding both by $\Delta_n(\delta)$ gives the claim.
\end{proof}

\begin{lemmaBox}{Convergent subsequence of empirical dual optima}{multi-dual-conv}
Under Assumptions~\ref{ass:non-atomic}--\ref{ass:slater}, on the event and for the sample
sizes of Remark~\ref{rem:slater-pathwise} intersected with $\Omega_{\mathrm{uc}}(\delta)$:
every cluster point of $(\widehat\mu_n)_n$ lies in $\Sdual$. In particular
$(\widehat\mu_n)_n$ admits a convergent subsequence
$\widehat\mu_{n_j} \to \mu_{\HK} \in \Sdual$.
\end{lemmaBox}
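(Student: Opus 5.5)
The plan is to combine compactness of the multiplier set with uniform convergence of the empirical dual function to the population dual function on $\Lambda$. Once $\widehat d_n\to d_{\HK}$ uniformly on $\Lambda$, the argmax sets are upper semicontinuous in the usual way, so every cluster point of $(\widehat\mu_n)_n$ is a population maximizer.

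\textbf{Step 1 (compact confinement).} On the event of Remark~\ref{rem:slater-pathwise} and for $n\ge\max\{n_{\mathrm{rad}}(\xi/2,\delta),n^Q_{\mathrm{sl}}\}$, Lemma~\ref{lem:bounded-multipliers}(iii) gives $\widehat\mu_n\in\Lambda$. Since $\Lambda$ is compact, Bolzano--Weierstrass supplies a convergent subsequence. It therefore suffices to show that every cluster point $\bar\mu$ lies in $\Sdual$.

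\textbf{Step 2 (uniform convergence of duals on $\Lambda$).} For $\mu\in\Lambda$ I split the error as
\[
|\widehat d_n(\mu)-d_{\HK}(\mu)|\le|\widehat d_n(\mu)-d_{Q_n}(\mu)|+\bigl(d_{Q_n}(\mu)-d_{\HK}(\mu)\bigr).
\]
\emph{First term.} Corollary~\ref{cor:lagrangian-dev} gives $|\widehat L_n-L_{\HK}|\le(1+C_\mu)\Delta_n(\delta)$ on $\Bq{Q_n}\times\Lambda$ on $\Omega_{\mathrm{uc}}(\delta)$. Since both $\widehat d_n$ and $d_{Q_n}$ are infima over the same set $\Bq{Q_n}$, this bound passes to the infima, as in the proof of Lemma~\ref{lem:dual-pac-fixedQ}.

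\emph{Second term.} This term is at most $\mathfrak G_{Q_n}$. By the Tikhonov argument following Corollary~\ref{cor:tikhonov-bounded}, $\mathfrak G_{Q}\le\eps$ whenever $Q\ge Q_\eps$. Because $Q_n\nearrow\infty$, we get $\mathfrak G_{Q_n}\to0$.

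Together these give $\sup_\Lambda|\widehat d_n-d_{\HK}|\to0$ deterministically on the event. The same bound yields $\widehat D^\star_n=\max_\Lambda\widehat d_n\to\max_\Lambda d_{\HK}=D^\star_{\HK}$, using Lemma~\ref{lem:bounded-multipliers}(ii), which places $\Sdual$ inside $\Lambda$.

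\textbf{Step 3 (closing the argument).} Let $\widehat\mu_{n_j}\to\bar\mu\in\Lambda\subseteq\R^m_+$. I decompose
\[
d_{\HK}(\bar\mu)\ge d_{\HK}(\widehat\mu_{n_j})-|d_{\HK}(\bar\mu)-d_{\HK}(\widehat\mu_{n_j})|,
\]
and bound the first term below by
\[
d_{\HK}(\widehat\mu_{n_j})\ge\widehat d_{n_j}(\widehat\mu_{n_j})-\sup_\Lambda|\widehat d_{n_j}-d_{\HK}|=\widehat D^\star_{n_j}-o(1).
\]
The continuity term vanishes because $d_{\HK}$ is continuous (Lemma~\ref{lem:uniform-boundedness-epsilon-opt}(ii)). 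Letting $j\to\infty$ gives $d_{\HK}(\bar\mu)\ge D^\star_{\HK}$. Since $\bar\mu\ge0$, we also have $d_{\HK}(\bar\mu)\le D^\star_{\HK}$, so $\bar\mu\in\Sdual$.

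\textbf{Main obstacle.} The only delicate point is that the radius varies with $n$. Uniformity of the ball gap over $\Lambda$ is therefore essential, and pointwise Tikhonov bounds would not suffice. The uniform-in-$\mu$ constant $Q_\eps$ of Corollary~\ref{cor:tikhonov-bounded} is exactly what handles this. We also need all events to be pathwise rather than sample-size dependent, and $\Omega_{\mathrm{uc}}(\delta)$ is built for all $n\ge n_{\mathrm{uc}}$ simultaneously, which provides this.
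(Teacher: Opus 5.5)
Your proof is correct, but it takes a heavier route than the paper's. You first prove uniform convergence $\sup_{\Lambda}|\widehat d_n-d_{\HK}|\to0$ by re-running the value arc: the ball/DERM term is handled by Corollary~\ref{cor:lagrangian-dev}, and the dual/ball term by the uniform Tikhonov radius $Q_\eps$ of Corollary~\ref{cor:tikhonov-bounded}, which gives $\mathfrak G_{Q_n}\to0$. You then apply the standard stability of argmax sets under uniform convergence, together with continuity of $d_{\HK}$.

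The paper argues pointwise along the convergent subsequence instead.
\begin{itemize}
\item The lower bound $\widehat d_{n_j}(\mu)\ge d_{\HK}(\mu)-(1+\|\mu\|_1)\Delta_{n_j}(\delta)$ comes for free from $\Bq{Q_{n_j}}\subset\HK$.
\item The upper bound at the iterates uses a single $\eta$-minimizer $\phi^\dagger_\eta$ of $L_{\HK}(\cdot,\mu_{\HK})$ at the limit point. This function lies in $\Bq{Q_{n_j}}$ for large $j$ simply because $Q_n\nearrow\infty$, and $\mu\mapsto L_{\HK}(\phi^\dagger_\eta,\mu)$ is affine, hence continuous.
\end{itemize}
That argument needs neither the uniform radius $Q_\eps$ (with its weak lower semicontinuity and compactness proof) nor continuity of $d_{\HK}$.

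Your route does buy more. You get the quantitative bound $\sup_\Lambda|\widehat d_n-d_{\HK}|\le(1+C_\mu)\Delta_n(\delta)+\eps$ once $Q_n\ge Q_\eps$, which becomes explicit under Assumption~\ref{ass:source}. You also get convergence of $\widehat D^\star_n$ to $D^\star_{\HK}$ along the full sequence. The paper extracts only the subsequential value convergence \eqref{eq:value-conv}, which Lemma~\ref{lem:multi-qual-conv} later reuses; your argument supplies that too.

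Your closing claim that uniformity of the ball gap is essential, and that pointwise bounds would not suffice, is mistaken as a statement about this lemma. The paper's proof shows that one near-minimizer at the limit point is enough. Even if uniformity were wanted, pointwise convergence of the finite concave functions $\widehat d_n$ to $d_{\HK}$ upgrades to uniform convergence on compacts by \cite[Thm.~10.8]{Rockafellar1970}. That pointwise convergence is Lemma~\ref{lem:multi-dual-pointwise}, proved without any Tikhonov bound.
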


\begin{proof}
\textit{Boundedness.} By Remark~\ref{rem:slater-pathwise} the empirical Slater condition
holds pathwise with margin $\xi/2$ once $Q_n\ge Q_{\mathrm{sl}}$, so by
Lemma~\ref{lem:bounded-multipliers}(iii) $\|\widehat\mu_n\|_1 \leq C_\mu$ for all such $n$.
Hence $(\widehat\mu_n)_n$ is eventually contained in the compact set $\Lambda$ and admits a
convergent subsequence $\widehat\mu_{n_j} \to \mu_{\HK}$ with $\|\mu_{\HK}\|_1 \leq C_\mu$.

\textit{The limit is a population optimum.} Fix $\mu \geq 0$ with $\|\mu\|_1 \leq C_\mu$. By
optimality of $\widehat\mu_{n_j}$,
\begin{equation}\tag{$\ast$}\label{eq:opt-ineq}
\widehat{d}_{n_j}(\widehat\mu_{n_j}) \;\geq\; \widehat{d}_{n_j}(\mu).
\end{equation}
Lower bound on the right side: by Corollary~\ref{cor:lagrangian-dev}, for every
$\phi \in \Bq{Q_{n_j}}$,
$\widehat{L}_{n_j}(\phi,\mu) \geq L_{\HK}(\phi,\mu) - (1+\|\mu\|_1)\Delta_{n_j}(\delta)$;
taking the infimum over $\phi \in \Bq{Q_{n_j}} \subset \HK$ and using
$\inf_{\Bq{Q_{n_j}}} L_{\HK}(\cdot,\mu) \geq d_{\HK}(\mu)$ yields
$\widehat{d}_{n_j}(\mu) \geq d_{\HK}(\mu) - (1+\|\mu\|_1)\Delta_{n_j}(\delta)$, so
$\liminf_j \widehat{d}_{n_j}(\mu) \geq d_{\HK}(\mu)$.
Upper bound on the left side: fix $\eta > 0$ and choose $\phi^\dagger_\eta \in \HK$ with
$L_{\HK}(\phi^\dagger_\eta, \mu_{\HK}) \leq d_{\HK}(\mu_{\HK}) + \eta$. Since
$\phi^\dagger_\eta$ has finite RKHS norm and $Q_{n_j}\nearrow\infty$, we have
$\phi^\dagger_\eta \in \Bq{Q_{n_j}}$ for large $j$, and by
Corollary~\ref{cor:lagrangian-dev},
\[
\widehat{d}_{n_j}(\widehat\mu_{n_j})
\leq \widehat{L}_{n_j}(\phi^\dagger_\eta, \widehat\mu_{n_j})
\leq L_{\HK}(\phi^\dagger_\eta, \widehat\mu_{n_j}) + (1 + C_\mu)\Delta_{n_j}(\delta).
\]
Since $\mu \mapsto L_{\HK}(\phi^\dagger_\eta, \mu)$ is affine hence continuous and
$\widehat\mu_{n_j} \to \mu_{\HK}$, taking $\limsup_j$ gives
$\limsup_j \widehat{d}_{n_j}(\widehat\mu_{n_j}) \leq d_{\HK}(\mu_{\HK}) + \eta$; as $\eta$
was arbitrary, $\limsup_j \widehat{d}_{n_j}(\widehat\mu_{n_j}) \leq d_{\HK}(\mu_{\HK})$.
Taking $\limsup_j$ on the left of \eqref{eq:opt-ineq} and $\liminf_j$ on the right,
\[
d_{\HK}(\mu_{\HK}) \;\geq\; \limsup_j \widehat{d}_{n_j}(\widehat\mu_{n_j}) \;\geq\; \liminf_j \widehat{d}_{n_j}(\mu) \;\geq\; d_{\HK}(\mu)
\]
for every $\mu \geq 0$ with $\|\mu\|_1 \leq C_\mu$. By
Lemma~\ref{lem:bounded-multipliers}(ii) every population dual optimum lies in $\Lambda$, so
the restricted supremum coincides with $D_{\HK}^\star$, whence $\mu_{\HK} \in \Sdual$.
Since the argument applies to any convergent subsequence, every cluster point lies in
$\Sdual$.
\end{proof}

\begin{lemmaBox}{Pointwise and locally uniform convergence of the empirical dual}{multi-dual-pointwise}
Under Assumptions~\ref{ass:non-atomic}--\ref{ass:slater}, on $\Omega_{\mathrm{uc}}(\delta)$:
for every fixed $\mu \in \R^m$ (not merely $\mu\in\R^m_+$),
$\widehat{d}_n(\mu) \to d_{\HK}(\mu)$ as $n \to \infty$. This upgrades to uniform convergence
on every compact subset of $\R^m$.
\end{lemmaBox}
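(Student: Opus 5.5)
For fixed $\mu\in\R^m$ and $n\ge n_{\mathrm{uc}}(\delta)$, I would bound $\widehat d_n(\mu)-d_{\HK}(\mu)$ from both sides on $\Omega_{\mathrm{uc}}(\delta)$. The template is the proof of Lemma~\ref{lem:multi-dual-conv}, but I will not assume $\mu\ge0$. Corollary~\ref{cor:lagrangian-dev} was stated for $\mu\in\R^m_+$, yet its proof is only the triangle inequality
\[
|\widehat L_n(\phi,\mu)-L_{\HK}(\phi,\mu)|\le \Delta_{0,n}+\sum_{i\ge1}|\mu_i|\,\Delta_{i,n}\le(1+\|\mu\|_1)\Delta_n(\delta),
\]
which holds verbatim for arbitrary real $\mu$ and every $\phi\in\Bq{Q_n}$. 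That extension is the first step.

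\emph{Lower bound.} Take the infimum of $\widehat L_n(\phi,\mu)\ge L_{\HK}(\phi,\mu)-(1+\|\mu\|_1)\Delta_n(\delta)$ over $\phi\in\Bq{Q_n}$. Since $\Bq{Q_n}\subset\HK$, we have $\inf_{\Bq{Q_n}}L_{\HK}(\cdot,\mu)=d_{Q_n}(\mu)\ge d_{\HK}(\mu)$, so
\[
\widehat d_n(\mu)\ge d_{\HK}(\mu)-(1+\|\mu\|_1)\Delta_n(\delta).
\]

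\emph{Upper bound.} Fix $\eta>0$. Because $d_{\HK}(\mu)$ is finite, with $|d_{\HK}(\mu)|\le M(1+\|\mu\|_1)$ by bounded losses, there is $\phi_\eta\in\HK$ with $L_{\HK}(\phi_\eta,\mu)\le d_{\HK}(\mu)+\eta$. Since $Q_n\nearrow\infty$, $\phi_\eta\in\Bq{Q_n}$ for all large $n$, and then
\[
\widehat d_n(\mu)\le\widehat L_n(\phi_\eta,\mu)\le d_{\HK}(\mu)+\eta+(1+\|\mu\|_1)\Delta_n(\delta).
\]
Let $n\to\infty$ using $\Delta_n(\delta)\to0$ (Lemma~\ref{lem:multi-hp-conv}), then let $\eta\downarrow0$. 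This gives pointwise convergence. No weak lower semicontinuity or Tikhonov argument is needed, since the Lagrangian is not required to be bounded below in any special way when $\mu$ has negative entries.

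\emph{Local uniformity.} This is the only mildly delicate step, because the upper bound's threshold for $n$ depends on $\mu$ through $\phi_\eta$. I would handle it by equi-Lipschitz continuity plus compactness. Every $\widehat d_n$ and $d_{\HK}$ is an infimum of affine maps $\nu\mapsto\mathcal R_0(\phi)+\langle\nu,\mathcal R(\phi)\rangle$ with slopes in $[-M,M]^m$, so each is $M$-Lipschitz in $\|\cdot\|_1$, uniformly in $n$ and $\omega$. On a compact $C\subset\R^m$, pick a finite $\eta/(3M)$-net $\{\mu^{(k)}\}$. Pointwise convergence at the finitely many net points gives a common $N$ beyond which $|\widehat d_n(\mu^{(k)})-d_{\HK}(\mu^{(k)})|\le\eta/3$. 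The Lipschitz bounds then transfer this to $\sup_C|\widehat d_n-d_{\HK}|\le\eta$. Alternatively, one can invoke the classical fact that pointwise convergence of finite concave functions on $\R^m$ is locally uniform \cite[Thm.~10.8]{Rockafellar1970}, which yields the same conclusion directly.
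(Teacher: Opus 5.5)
Your proof is correct and follows the paper's argument: you extend Corollary~\ref{cor:lagrangian-dev} to arbitrary real $\mu$, get the lower bound from $d_{Q_n}\ge d_{\HK}$, get the upper bound from an $\eta$-minimizer that eventually lies in $\Bq{Q_n}$, and obtain local uniformity from \cite[Thm.~10.8]{Rockafellar1970}. Your equi-Lipschitz finite-net argument is a valid, self-contained substitute for that citation. It works because $\widehat d_n$ and $d_{\HK}$ are all infima of affine maps with slopes in $[-M,M]^m$, so all are $M$-Lipschitz in $\|\cdot\|_1$ uniformly in $n$.
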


\begin{proof}
Work on $\Omega_{\mathrm{uc}}(\delta)$. Fix $\mu \in \R^m$. By
Corollary~\ref{cor:lagrangian-dev}, valid for every $\mu\in\R^m$ upon replacing $\|\mu\|_1$
by itself in the bound, for every $\phi \in \Bq{Q_n}$,
$\widehat{L}_n(\phi,\mu) \geq L_{\HK}(\phi,\mu) - (1+\|\mu\|_1)\Delta_n(\delta)$; taking the
infimum over $\Bq{Q_n} \subseteq \HK$ gives
$\widehat{d}_n(\mu) \geq d_{\HK}(\mu) - (1+\|\mu\|_1)\Delta_n(\delta)$, so
$\liminf_n \widehat{d}_n(\mu) \geq d_{\HK}(\mu)$. For the upper bound, fix $\eps > 0$ and
choose $\phi_\eps \in \HK$ with $L_{\HK}(\phi_\eps,\mu) \leq d_{\HK}(\mu) + \eps$; since
$\|\phi_\eps\|_{\HK} < \infty$ and $Q_n \nearrow \infty$, $\phi_\eps \in \Bq{Q_n}$ for all
large $n$, and then
$\widehat{d}_n(\mu) \leq \widehat L_n(\phi_\eps,\mu) \leq d_{\HK}(\mu) + \eps + (1+\|\mu\|_1)\Delta_n(\delta)$.
As $\eps>0$ is arbitrary and $\Delta_n(\delta) \to 0$,
$\limsup_n \widehat{d}_n(\mu) \leq d_{\HK}(\mu)$.

Each $\widehat{d}_n$ is concave and finite on all of $\R^m$, as is $d_{\HK}$, with
$|\widehat d_n(\mu)|,|d_{\HK}(\mu)| \leq M(1+\|\mu\|_1)$. By
\cite[Thm.~10.8]{Rockafellar1970}, pointwise convergence of finite concave functions on
$\R^m$ implies uniform convergence on every compact subset of the relative interior of the
common domain, which here is all of $\R^m$.
\end{proof}

\subsubsection*{Movement 3: the core convergence lemma}

Since the event of Assumption~\ref{ass:multi-argmin-recovery} has probability one,
intersecting it with $\Omega_{\mathrm{uc}}(\delta)$ costs nothing in confidence. We set
$\Omega^\star(\delta) \triangleq \Omega_{\mathrm{uc}}(\delta) \cap \Omega_{\mathrm{fbl}}$,
$\Pr(\Omega^\star(\delta)) \geq 1-\delta$, and work on $\Omega^\star(\delta)$ throughout.

\begin{lemmaBox}{Qualitative convergence up to the population closure--realization gap}{multi-qual-conv}
Under Assumptions~\ref{ass:non-atomic}--\ref{ass:multi-argmin-recovery}, on
$\Omega^\star(\delta)$: for every sequence $(\widehat\mu_n)_n$ of empirical dual optima and
every sequence $\nabla_n \in \partial \widehat{d}_n(\widehat\mu_n)$ with
$\nabla_{n,i} \leq 0$, there exists a subsequence $\{n_k\}$ along which
$\widehat\mu_{n_k} \to \mu_{\HK}\in\Sdual$ and
$\nabla_{n_k} \to \nabla^\infty \in \partial d_{\HK}(\mu_{\HK})$, and
\[
\limsup_{k \to \infty}\;
\Theta_{\nabla_{n_k}}(\widehat\mu_{n_k})
\;\leq\;
\Theta^\infty_{\nabla^\infty}(\mu_{\HK})
\;\leq\;
\epsinf(\mu_{\HK}).
\]
In particular, if $\epsinf(\mu_{\HK}) = 0$ then $\Theta_{\nabla_{n_k}}(\widehat\mu_{n_k}) \to 0$.
\end{lemmaBox}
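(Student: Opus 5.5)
The argument runs in three steps: extract a doubly convergent subsequence, identify the limit of the supergradients as a population supergradient, and transfer near-realization of $\nabla^\infty$ by a population minimizer back to the empirical argmin sets through Assumption~\ref{ass:multi-argmin-recovery}(b) and uniform convergence. Throughout we work on a fixed $\omega\in\Omega^\star(\delta)$ and for $n$ beyond the thresholds of Remark~\ref{rem:slater-pathwise}.

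\textbf{Step 1 (subsequence and limits).} By Lemma~\ref{lem:multi-dual-conv}, $(\widehat\mu_n)$ lies eventually in $\Lambda$ and every cluster point belongs to $\Sdual$. By Lemma~\ref{lem:LV}(i), $\nabla_n\in[-M,M]^m$. We extract $n_k$ with $\widehat\mu_{n_k}\to\mu_{\HK}\in\Sdual$ and $\nabla_{n_k}\to\nabla^\infty$.

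\textbf{Step 2 ($\nabla^\infty\in\partial d_{\HK}(\mu_{\HK})$).} This is the classical closedness of superdifferentials under locally uniform convergence of concave functions. For every $\nu\in\R^m$,
\[
\widehat d_{n_k}(\nu)\le \widehat d_{n_k}(\widehat\mu_{n_k})+\langle\nabla_{n_k},\nu-\widehat\mu_{n_k}\rangle .
\]
By Lemma~\ref{lem:multi-dual-pointwise}, $\widehat d_n\to d_{\HK}$ uniformly on compacts. Passing to the limit in $k$, using also continuity of $d_{\HK}$, gives $d_{\HK}(\nu)\le d_{\HK}(\mu_{\HK})+\langle\nabla^\infty,\nu-\mu_{\HK}\rangle$. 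Hence $\nabla^\infty\in\partial d_{\HK}(\mu_{\HK})$, and the second inequality $\Theta^\infty_{\nabla^\infty}(\mu_{\HK})\le\epsinf(\mu_{\HK})$ follows directly from Definition~\ref{def:multi-eps-star}.

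\textbf{Step 3 (the main estimate).} Fix $\eta>0$. By Assumption~\ref{ass:multi-argmin-recovery}(a), we pick $\phi_{\HK}\in\Phi_{\HK}(\mu_{\HK})$ with $\max_i|\mathcal R_i(\phi_{\HK})-\nabla^\infty_i|\le\Theta^\infty_{\nabla^\infty}(\mu_{\HK})+\eta$. By Assumption~\ref{ass:multi-argmin-recovery}(b), applied along $n_k$ with $\mu_{n_k}=\widehat\mu_{n_k}$, there are $\phi_k\in\widehat\Phi_{n_k}(\widehat\mu_{n_k})$ with $\|\phi_k-\phi_{\HK}\|_{L_1(\widehat{\mathcal M}^{n_k}_{\mathcal X})}\to0$. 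Then
\[
\Theta_{\nabla_{n_k}}(\widehat\mu_{n_k})\le\max_i|\widehat{\mathcal R}_{i,n_k}(\phi_k)-\nabla_{n_k,i}|,
\]
and we split the right-hand side by the triangle inequality:
\[
|\widehat{\mathcal R}_{i,n_k}(\phi_k)-\widehat{\mathcal R}_{i,n_k}(\phi_{\HK})|
+|\widehat{\mathcal R}_{i,n_k}(\phi_{\HK})-\mathcal R_i(\phi_{\HK})|
+|\mathcal R_i(\phi_{\HK})-\nabla^\infty_i|
+|\nabla^\infty_i-\nabla_{n_k,i}| .
\]
\begin{itemize}
\item The last two terms are at most $\Theta^\infty+\eta$ and $o(1)$, respectively.
\item The second term is $o(1)$: for $k$ large, $\phi_{\HK}\in\Bq{Q_{n_k}}$, so it is bounded by $\Delta_{n_k}(\delta)\to0$ on $\Omega_{\mathrm{uc}}(\delta)$.
\item The first term is the delicate one. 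By the $L$-Lipschitz property it is at most $\frac{L}{n_k}\sum_j|\phi_k(X^j_i)-\phi_{\HK}(X^j_i)|$. This is the empirical $L_1$ distance under $\widehat{\mathcal D}^{n_k}_{\mathcal X,i}$, which is at most $(m+1)$ times the distance under the mixture $\widehat{\mathcal M}^{n_k}_{\mathcal X}$. Hence it is at most $(m+1)L\|\phi_k-\phi_{\HK}\|_{L_1(\widehat{\mathcal M}^{n_k}_{\mathcal X})}\to0$.
\end{itemize}
Taking $\limsup_k$ and letting $\eta\downarrow0$ yields the claim.

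The main obstacle is the first term of Step 3. The recovery condition controls distance only in the empirical mixture norm, and the conversion to each component empirical measure must use the per-coordinate factor $m+1$, not a population norm. A secondary point needing care is that Assumption~\ref{ass:multi-argmin-recovery}(b) requires $\widehat\mu_{n_k}$ to be exact empirical dual maximizers with $\|\cdot\|_1\le C_\mu$, which Lemma~\ref{lem:bounded-multipliers}(iii) supplies on the event of Remark~\ref{rem:slater-pathwise}.
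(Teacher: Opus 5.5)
Your proposal is correct and follows the paper's proof essentially step for step. The shared steps are the extraction of a jointly convergent subsequence, the limit passage in the supergradient inequality to place $\nabla^\infty$ in $\partial d_{\HK}(\mu_{\HK})$, and the same four-term split, including the $(m+1)$ factor that converts the empirical-mixture $L_1$ distance into the per-component one. The only minor difference is how you obtain $\widehat d_{n_k}(\widehat\mu_{n_k})\to d_{\HK}(\mu_{\HK})$: you use the locally uniform convergence of Lemma~\ref{lem:multi-dual-pointwise} together with continuity of $d_{\HK}$, whereas the paper reads this off the liminf/limsup argument in the proof of Lemma~\ref{lem:multi-dual-conv}; both routes are valid.
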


\begin{proof}
Work on $\Omega^\star(\delta)$. By Lemma~\ref{lem:multi-dual-conv}, $(\widehat\mu_n)_n$
admits a subsequence $\{n_j\}$ with $\widehat\mu_{n_j} \to \mu_{\HK}\in\Sdual$. The proof of
that lemma in fact establishes the sharper fact that the dual \emph{values} converge:
\begin{equation}
\label{eq:value-conv}
\widehat{d}_{n_j}(\widehat\mu_{n_j}) \;\longrightarrow\; d_{\HK}(\mu_{\HK}),
\end{equation}
since it shows $\liminf_j \widehat{d}_{n_j}(\mu) \geq d_{\HK}(\mu)$ for every $\mu\in\Lambda$
--- take $\mu=\mu_{\HK}$ and combine with
$\widehat d_{n_j}(\widehat\mu_{n_j}) \ge \widehat d_{n_j}(\mu_{\HK})$ for the liminf --- and
$\limsup_j \widehat{d}_{n_j}(\widehat\mu_{n_j}) \leq d_{\HK}(\mu_{\HK})$ directly. Relabel
$\{n_j\}$ as the working sequence.

By Lemma~\ref{lem:LV}(i), $\nabla_n \in \conv K_n(\widehat\mu_n) \subseteq[-M,M]^m$;
combined with $\nabla_{n,i}\le 0$ this gives $\nabla_n \in [-M,0]^m$ for every $n$. By
Bolzano--Weierstrass, extract a further subsequence $\{n_k\} \subseteq \{n_j\}$ with
$\nabla_{n_k} \to \nabla^\infty \in [-M,0]^m$; the convergences
$\widehat\mu_{n_k} \to \mu_{\HK}$ and \eqref{eq:value-conv} persist.

\textit{Step 1: $\nabla^\infty \in \partial d_{\HK}(\mu_{\HK})$.} Since $\widehat d_{n_k}$ is
concave and finite on $\R^m$ and $\nabla_{n_k}\in\partial\widehat d_{n_k}(\widehat\mu_{n_k})$,
the supergradient inequality reads
\[
\widehat d_{n_k}(\nu)\;\le\;\widehat d_{n_k}(\widehat\mu_{n_k})+\bigl\langle\nabla_{n_k},\,\nu-\widehat\mu_{n_k}\bigr\rangle
\qquad\text{for every }\nu\in\R^m .
\]
Fix $\nu\in\R^m$ and let $k\to\infty$. The left side converges to $d_{\HK}(\nu)$ by
Lemma~\ref{lem:multi-dual-pointwise}; the first term on the right converges to
$d_{\HK}(\mu_{\HK})$ by \eqref{eq:value-conv}; and the inner product converges to
$\langle\nabla^\infty,\nu-\mu_{\HK}\rangle$ since $\nabla_{n_k}\to\nabla^\infty$ and
$\widehat\mu_{n_k}\to\mu_{\HK}$. Hence
$d_{\HK}(\nu)\le d_{\HK}(\mu_{\HK})+\langle\nabla^\infty,\nu-\mu_{\HK}\rangle$ for every
$\nu\in\R^m$, i.e.\ $\nabla^\infty\in\partial d_{\HK}(\mu_{\HK})$.

\textit{Step 2: transferring to the image sets.} Fix $\eta > 0$. By
Assumption~\ref{ass:multi-argmin-recovery}(a) the set $\Phi_{\HK}(\mu_{\HK})$
is nonempty, so we may choose $\phi^\dagger_\eta \in \Phi_{\HK}(\mu_{\HK})$ with
\begin{equation}
\label{eq:phi-dagger-choice}
\max_{i \in \mathbb{N}_m^+}
\bigl|\mathcal{R}_i(\phi^\dagger_\eta) - \nabla^\infty_i\bigr|
\;\leq\;
\Theta^\infty_{\nabla^\infty}(\mu_{\HK}) + \eta.
\end{equation}
Since $Q_n \nearrow \infty$, there is $n_0(\eta)$ with
$\phi^\dagger_\eta \in \Bq{Q_n}$ for all $n \geq n_0(\eta)$. Applying
Assumption~\ref{ass:multi-argmin-recovery}(b) on $\Omega_{\mathrm{fbl}}$ --- legitimately,
since that assumption is quantified over arbitrary subsequences and over every admissible
reference point, and $\phi^\dagger_\eta$ was chosen after the subsequence --- to the sequence
$k \mapsto \widehat\mu_{n_k}$ with reference point $\phi^\dagger_\eta$ yields
\begin{equation}
\label{eq:L1-recovery}
\lim_{k \to \infty}\;
\inf_{\phi \,\in\, \widehat{\Phi}_{n_k}(\widehat\mu_{n_k})}
\bigl\|\phi - \phi^\dagger_\eta\bigr\|_{L_1(\widehat{\mathcal{M}}^{\,n_k}_{\mathcal{X}})}
\;=\; 0 .
\end{equation}
We convert this into control of the empirical constraint risks. Fix
$\phi \in \widehat{\Phi}_{n_k}(\widehat\mu_{n_k})$ and $i \in \mathbb{N}_m^+$. Since
$\ell_i(\cdot,y)$ is $L$-Lipschitz uniformly in $y$,
\[
\bigl|\widehat{\mathcal{R}}_{i,n_k}(\phi) - \widehat{\mathcal{R}}_{i,n_k}(\phi^\dagger_\eta)\bigr|
\;\le\;
\frac{L}{n_k}\sum_{j=1}^{n_k}\bigl|\phi(X^{(i)}_j) - \phi^\dagger_\eta(X^{(i)}_j)\bigr|
\;=\;
L\,\bigl\|\phi - \phi^\dagger_\eta\bigr\|_{L_1(\widehat{\mathcal{D}}^{\,n_k}_{\mathcal{X},i})}.
\]
Because
$\widehat{\mathcal{M}}^{\,n_k}_{\mathcal{X}} \ge \tfrac{1}{m+1}\widehat{\mathcal{D}}^{\,n_k}_{\mathcal{X},i}$,
we have
$\|\cdot\|_{L_1(\widehat{\mathcal{D}}^{\,n_k}_{\mathcal{X},i})} \le (m+1)\|\cdot\|_{L_1(\widehat{\mathcal{M}}^{\,n_k}_{\mathcal{X}})}$,
so, maximizing over $i$ and taking the infimum over
$\phi \in \widehat{\Phi}_{n_k}(\widehat\mu_{n_k})$, \eqref{eq:L1-recovery} yields
$K_0(\eta) \in \mathbb{N}^+$ such that for all $k \geq K_0(\eta)$,
\begin{equation}
\label{eq:term1-bound}
\inf_{\phi \,\in\, \widehat{\Phi}_{n_k}(\widehat\mu_{n_k})}
\;\max_{i \in \mathbb{N}_m^+}
\bigl|\widehat{\mathcal{R}}_{i,n_k}(\phi) - \widehat{\mathcal{R}}_{i,n_k}(\phi^\dagger_\eta)\bigr|
\;<\; \eta
\end{equation}
(taking $L>0$ without loss of generality, the case $L=0$ being trivial).

\textit{Step 3: assembling.} For fixed $\phi \in \widehat{\Phi}_{n_k}(\widehat\mu_{n_k})$ and
$i \in \mathbb{N}_m^+$, write
\begin{multline*}
\widehat{\mathcal{R}}_{i,n_k}(\phi) - \nabla_{n_k,i}
=
\underbrace{\bigl[\widehat{\mathcal{R}}_{i,n_k}(\phi) - \widehat{\mathcal{R}}_{i,n_k}(\phi^\dagger_\eta)\bigr]}_{\text{(I)}}
+ \underbrace{\bigl[\widehat{\mathcal{R}}_{i,n_k}(\phi^\dagger_\eta) - \mathcal{R}_i(\phi^\dagger_\eta)\bigr]}_{\text{(II)}}
+ \underbrace{\bigl[\mathcal{R}_i(\phi^\dagger_\eta) - \nabla^\infty_i\bigr]}_{\text{(III)}}
+ \underbrace{\bigl[\nabla^\infty_i - \nabla_{n_k,i}\bigr]}_{\text{(IV)}}.
\end{multline*}
Terms (II)--(IV) do not depend on $\phi$. Taking $|\cdot|$, then
$\max_{i\in\mathbb{N}_m^+}$, then the infimum over $\phi$, and using subadditivity of the
maximum over the $\phi$-independent sum,
\[
\Theta_{\nabla_{n_k}}(\widehat\mu_{n_k})
\;\leq\;
\underbrace{\inf_{\phi}\max_{i} |\text{(I)}(\phi)|}_{\text{(a)}}
+ \underbrace{\max_i |\text{(II)}|}_{\text{(b)}}
+ \underbrace{\max_i |\text{(III)}|}_{\text{(c)}}
+ \underbrace{\|\nabla^\infty - \nabla_{n_k}\|_\infty}_{\text{(d)}}.
\]
For $n_k \geq \max\{n_0(\eta), n_{\mathrm{uc}}(\delta)\}$ and $k \geq K_0(\eta)$: (a)
$< \eta$ by \eqref{eq:term1-bound}; (b) $\leq \Delta_{n_k}(\delta) \to 0$ by
Lemma~\ref{lem:multi-hp-conv}, since $\phi^\dagger_\eta \in \Bq{Q_{n_k}}$; (c)
$\leq \Theta^\infty_{\nabla^\infty}(\mu_{\HK}) + \eta$ by \eqref{eq:phi-dagger-choice}; and
(d) $\to 0$. Taking $\limsup_{k}$ gives
$\limsup_k \Theta_{\nabla_{n_k}}(\widehat\mu_{n_k}) \leq \Theta^\infty_{\nabla^\infty}(\mu_{\HK}) + 2\eta$;
as $\eta>0$ was arbitrary and
$\Theta^\infty_{\nabla^\infty}(\mu_{\HK}) \le \sup_{\nabla \in \partial d_{\HK}(\mu_{\HK})}\Theta^\infty_\nabla(\mu_{\HK}) = \epsinf(\mu_{\HK})$
by Step~1 and Definition~\ref{def:multi-eps-star}, the claim follows.
\end{proof}

\begin{corollary}[Asymptotic feasibility up to the population gap]
\label{cor:multi-qual-feas}
Under the hypotheses of Lemma~\ref{lem:multi-qual-conv}, for every
$\omega \in \Omega^\star(\delta)$: along the subsequence $\{n_k\}$ furnished by that lemma,
the DERM selection $\widehat\phi_{n_k} \in \widehat{\Phi}_{n_k}(\widehat{\mu}_{n_k})$
satisfies, for every $i \in \mathbb{N}_m^+$,
\[
\limsup_{k \to \infty}\,
\widehat{\mathcal{R}}_{i,n_k}(\widehat{\phi}_{n_k})
\;\leq\;
\Theta^\infty_{\nabla^\infty}(\mu_{\HK})
\;\leq\;
\epsinf(\mu_{\HK}).
\]
When $\epsinf(\mu_{\HK}) = 0$, each empirical constraint is asymptotically satisfied.
\end{corollary}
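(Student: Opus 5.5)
The plan is to read the corollary off Lemma~\ref{lem:multi-qual-conv} by relating the empirical constraint risk of the selected predictor to the supergradient $\nabla_{n_k}$, whose coordinates are nonpositive. Fix $\omega\in\Omega^\star(\delta)$ and take the subsequence $\{n_k\}$, the limits $\widehat\mu_{n_k}\to\mu_{\HK}$ and $\nabla_{n_k}\to\nabla^\infty$, and the bound $\limsup_k\Theta_{\nabla_{n_k}}(\widehat\mu_{n_k})\le\Theta^\infty_{\nabla^\infty}(\mu_{\HK})$, all as furnished by that lemma. The sign condition $\nabla_{n_k,i}\le 0$ required there is exactly the one enforced in the definition of a DERM selection, via Lemma~\ref{lem:sign-comp-subgrad}, so the lemma applies to the selection's own $\nabla_n$.

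The key step is the elementary inequality, valid for each $i\in\mathbb N^+_m$ and each $k$,
\[
\widehat{\mathcal R}_{i,n_k}(\widehat\phi_{n_k})
\;\le\;\nabla_{n_k,i}+\bigl|\widehat{\mathcal R}_{i,n_k}(\widehat\phi_{n_k})-\nabla_{n_k,i}\bigr|
\;\le\;\max_{i'\in\mathbb N^+_m}\bigl|\widehat{\mathcal R}_{i',n_k}(\widehat\phi_{n_k})-\nabla_{n_k,i'}\bigr|,
\]
where the second inequality uses $\nabla_{n_k,i}\le 0$. By the definition of a DERM selection, $\widehat\phi_{n_k}\in\widehat\Phi_{n_k}(\widehat\mu_{n_k})$ attains the infimum defining $\Theta_{\nabla_{n_k}}(\widehat\mu_{n_k})$ up to $1/n_k$. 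Hence the right-hand side is at most $\Theta_{\nabla_{n_k}}(\widehat\mu_{n_k})+1/n_k$.

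Taking $\limsup_k$, the $1/n_k$ term vanishes. The lemma then gives the bound $\Theta^\infty_{\nabla^\infty}(\mu_{\HK})$, which is at most $\epsinf(\mu_{\HK})$ by Definition~\ref{def:multi-eps-star}, since $\nabla^\infty\in\partial d_{\HK}(\mu_{\HK})$ by Step~1 of the lemma. When $\epsinf(\mu_{\HK})=0$ the $\limsup$ is $\le 0$, which is the asymptotic empirical feasibility claim.

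The only point needing care is bookkeeping rather than analysis. The subsequence must be the one produced by the lemma for the same sequence $(\widehat\mu_n,\nabla_n)$ used in the selection, and $\widehat\phi_n$ must be chosen relative to that same $\nabla_n$; both hold by the definition of a DERM selection. There is no real obstacle; the content lies entirely in Lemma~\ref{lem:multi-qual-conv}.
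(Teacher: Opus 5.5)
Your proposal is correct and takes the same route as the paper. The $1/n_k$-near-attainment built into the DERM selection bounds $\bigl|\widehat{\mathcal R}_{i,n_k}(\widehat\phi_{n_k})-\nabla_{n_k,i}\bigr|$ by $\Theta_{\nabla_{n_k}}(\widehat\mu_{n_k})+1/n_k$; the sign condition $\nabla_{n_k,i}\le 0$ turns this into a one-sided bound on the empirical risk; and Lemma~\ref{lem:multi-qual-conv} closes the $\limsup$.
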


\begin{proof}
By the DERM selection rule, $\widehat\phi_{n_k}$ attains the infimum defining
$\Theta_{\nabla_{n_k}}(\widehat\mu_{n_k})$ within $1/n_k$,\footnote{The selection exists
because the minimization defining $\widehat\Phi_n(\mu)$ is effectively finite-dimensional.
Let $V_n \triangleq \operatorname{span}\{K(\cdot,X^{(i)}_j)\}$ over the $(m+1)n$ observed
inputs and $P_n$ the orthogonal projection onto it. As
$(I-P_n)\phi \perp K(\cdot,X^{(i)}_j)$, the reproducing property gives
$\phi(X^{(i)}_j) = (P_n\phi)(X^{(i)}_j)$ at every observed input, so
$\widehat{L}_n(\phi,\mu) = \widehat{L}_n(P_n\phi,\mu)$ while
$\|P_n\phi\|_{\HK} \leq \|\phi\|_{\HK}$; hence
$\inf_{\Bq{Q_n}}\widehat{L}_n(\cdot,\mu) = \inf_{\Bq{Q_n} \cap V_n}\widehat{L}_n(\cdot,\mu)$.
The latter set is compact and $\widehat{L}_n(\cdot,\mu)$ is Lipschitz, so a minimizer exists
there and minimizes over all of $\Bq{Q_n}$. Only the projection is pinned down, so
$\widehat{\Phi}_n(\mu)$ is not norm compact, which is why
Lemma~\ref{lem:multi-qual-conv} and Lemma~\ref{lem:LV} argue in the weak topology.} so
$|\widehat{\mathcal{R}}_{i,n_k}(\widehat{\phi}_{n_k}) - \nabla_{n_k,i}| \leq \Theta_{\nabla_{n_k}}(\widehat{\mu}_{n_k}) + 1/n_k$.
Since $\nabla_{n_k,i} \leq 0$,
$\widehat{\mathcal{R}}_{i,n_k}(\widehat{\phi}_{n_k}) \leq \Theta_{\nabla_{n_k}}(\widehat{\mu}_{n_k}) + 1/n_k$,
and Lemma~\ref{lem:multi-qual-conv} gives the bound on taking $\limsup_k$.
\end{proof}

\begin{corollary}[Approximate complementary slackness]
\label{cor:multi-qual-comp-slack}
Under the hypotheses of Lemma~\ref{lem:multi-qual-conv}, for every
$\omega \in \Omega^\star(\delta)$, along $\{n_k\}$, with $\nabla_{n_k}$ satisfying
complementary slackness at $\widehat\mu_{n_k}$ and $\widehat\phi_{n_k}$ as in
Corollary~\ref{cor:multi-qual-feas},
\[
\limsup_{k \to \infty}
\sum_{i=1}^m
\widehat{\mu}_{n_k,i} \,
\bigl|\widehat{\mathcal{R}}_{i,n_k}(\widehat{\phi}_{n_k})\bigr|
\;\leq\;
C_\mu \cdot \epsinf(\mu_{\HK}).
\]
In particular, when $\epsinf(\mu_{\HK}) = 0$, exact complementary slackness is recovered in
the limit.
\end{corollary}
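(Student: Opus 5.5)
We bound each summand $\widehat\mu_{n_k,i}\,|\widehat{\mathcal R}_{i,n_k}(\widehat\phi_{n_k})|$ by exploiting complementary slackness of $\nabla_{n_k}$, and then sum using the multiplier bound. Throughout we work on $\Omega^\star(\delta)$ along the subsequence $\{n_k\}$ of Lemma~\ref{lem:multi-qual-conv}, and we take $n_k$ beyond the thresholds of Remark~\ref{rem:slater-pathwise}. Since $\widehat\mu_{n_k,i}\nabla_{n_k,i}=0$ for every $i\in\mathbb N^+_m$, we have
\[
\widehat\mu_{n_k,i}\,\bigl|\widehat{\mathcal R}_{i,n_k}(\widehat\phi_{n_k})\bigr|
=\widehat\mu_{n_k,i}\,\bigl|\widehat{\mathcal R}_{i,n_k}(\widehat\phi_{n_k})-\nabla_{n_k,i}\bigr| .
\]
Indeed, if $\widehat\mu_{n_k,i}=0$ both sides vanish, and otherwise $\nabla_{n_k,i}=0$. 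This identity is the only place where complementarity enters.

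The DERM selection rule makes $\widehat\phi_{n_k}$ a $1/n_k$-approximate minimizer in the definition of $\Theta_{\nabla_{n_k}}(\widehat\mu_{n_k})$. Hence
\[
\max_{i\in\mathbb N^+_m}\bigl|\widehat{\mathcal R}_{i,n_k}(\widehat\phi_{n_k})-\nabla_{n_k,i}\bigr|\le\Theta_{\nabla_{n_k}}(\widehat\mu_{n_k})+1/n_k .
\]
This is the same two-sided bound already used in the proof of Corollary~\ref{cor:multi-qual-feas}, before the sign condition was discarded. Summing the identity over $i$ and applying H\"older's inequality ($\ell_1$ against $\ell_\infty$) gives
\[
\sum_{i=1}^m\widehat\mu_{n_k,i}\,\bigl|\widehat{\mathcal R}_{i,n_k}(\widehat\phi_{n_k})\bigr|
\le\|\widehat\mu_{n_k}\|_1\bigl(\Theta_{\nabla_{n_k}}(\widehat\mu_{n_k})+1/n_k\bigr)
\le C_\mu\bigl(\Theta_{\nabla_{n_k}}(\widehat\mu_{n_k})+1/n_k\bigr).
\]
The last step uses $\|\widehat\mu_{n_k}\|_1\le C_\mu$, which holds for all large $k$ by Lemma~\ref{lem:bounded-multipliers}(iii).

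Taking $\limsup_k$ and invoking Lemma~\ref{lem:multi-qual-conv}, namely $\limsup_k\Theta_{\nabla_{n_k}}(\widehat\mu_{n_k})\le\epsinf(\mu_{\HK})$, yields the claim. The special case follows at once: when $\epsinf(\mu_{\HK})=0$ the left side is nonnegative with limsup zero, so it converges to $0$.

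The only point needing care is bookkeeping rather than analysis. The subsequence, and the particular $\nabla_{n_k}$ with both the sign property and complementarity, must be the ones fed into Lemma~\ref{lem:multi-qual-conv}. That lemma requires only $\nabla_{n,i}\le0$, so a selection that is also complementary, which exists by Lemma~\ref{lem:sign-comp-subgrad}, is admissible. We must also ensure the multiplier bound holds eventually along the subsequence, which it does on the Slater event. No other obstacle is expected.
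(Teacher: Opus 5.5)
Your proposal is correct and is essentially the paper's proof: complementarity lets you replace $|\widehat{\mathcal R}_{i,n_k}(\widehat\phi_{n_k})|$ by $|\widehat{\mathcal R}_{i,n_k}(\widehat\phi_{n_k})-\nabla_{n_k,i}|$ wherever $\widehat\mu_{n_k,i}>0$. The DERM selection bounds this by $\Theta_{\nabla_{n_k}}(\widehat\mu_{n_k})+1/n_k$, and weighting with $\|\widehat\mu_{n_k}\|_1\le C_\mu$ and then applying Lemma~\ref{lem:multi-qual-conv} under $\limsup_k$ closes the argument exactly as the paper does.
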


\begin{proof}
Complementary slackness means $\widehat{\mu}_{n_k,i} > 0 \Rightarrow \nabla_{n_k,i} = 0$; for
such $i$,
$|\widehat{\mathcal{R}}_{i,n_k}(\widehat{\phi}_{n_k})| = |\widehat{\mathcal{R}}_{i,n_k}(\widehat{\phi}_{n_k}) - \nabla_{n_k,i}| \leq \Theta_{\nabla_{n_k}}(\widehat{\mu}_{n_k}) + 1/n_k$,
while indices with $\widehat{\mu}_{n_k,i} = 0$ contribute nothing. Summing with weights
$\widehat\mu_{n_k,i}$ and using $\|\widehat{\mu}_{n_k}\|_1 \leq C_\mu$ gives the bound on
taking $\limsup_k$ and applying Lemma~\ref{lem:multi-qual-conv}.
\end{proof}

\begin{corollary}[Population feasibility and optimality of DERM outputs]
\label{cor:multi-primal-guarantee}
Under Assumptions~\ref{ass:non-atomic}--\ref{ass:multi-argmin-recovery}, along any DERM
selection $(\widehat\mu_n,\nabla_n,\widehat\phi_n)_n$, the following holds for every
$\omega \in \Omega^\star(\delta)$. Along the subsequence $\{n_k\}$ of
Lemma~\ref{lem:multi-qual-conv}:
\begin{enumerate}[label=\textnormal{(\alph*)},itemsep=2pt,topsep=2pt]
\item \textbf{Population subgradient tracking.} For every $i \in \mathbb{N}_m^+$,
\begin{equation}
\label{eq:pop-tracking}
    \limsup_{k \to \infty}\,
    \bigl|\mathcal{R}_i(\widehat{\phi}_{n_k}) - \nabla^\infty_i\bigr|
    \;\leq\;
    \epsinf(\mu_{\HK}).
\end{equation}
\item \textbf{Population feasibility.} For every $i \in \mathbb{N}_m^+$,
$\limsup_{k} \mathcal{R}_i(\widehat{\phi}_{n_k}) \leq \nabla^\infty_i + \epsinf(\mu_{\HK}) \leq \epsinf(\mu_{\HK})$.
\item \textbf{Population optimality.}
$\limsup_{k} \mathcal{R}_0(\widehat{\phi}_{n_k}) \leq P^\star_{\HK} + C_\mu\,\epsinf(\mu_{\HK})$.
\end{enumerate}
If additionally $\Sdual$ is a singleton, all three conclusions hold along the full original
sequence.
\end{corollary}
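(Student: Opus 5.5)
The plan is to bound the \emph{population} risks of $\widehat\phi_{n_k}$ by the empirical quantities already controlled in Lemma~\ref{lem:multi-qual-conv} and Corollaries~\ref{cor:multi-qual-feas}--\ref{cor:multi-qual-comp-slack}. The uniform-convergence envelope of Lemma~\ref{lem:multi-hp-conv} pays for the transfer between empirical and population risks. Work on $\Omega^\star(\delta)$ along the subsequence $\{n_k\}$ of Lemma~\ref{lem:multi-qual-conv}, so that $\widehat\mu_{n_k}\to\mu_{\HK}\in\Sdual$, $\nabla_{n_k}\to\nabla^\infty\in\partial d_{\HK}(\mu_{\HK})$, and the dual values converge as in \eqref{eq:value-conv}. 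Since $\widehat\phi_{n_k}\in\Bq{Q_{n_k}}$, for $n_k\ge n_{\mathrm{uc}}(\delta)$ we have $|\mathcal R_i(\widehat\phi_{n_k})-\widehat{\mathcal R}_{i,n_k}(\widehat\phi_{n_k})|\le\Delta_{n_k}(\delta)$ for every $i\in\mathbb N_m$.

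For (a), I split
\[
|\mathcal R_i(\widehat\phi_{n_k})-\nabla^\infty_i|\le\Delta_{n_k}(\delta)+|\widehat{\mathcal R}_{i,n_k}(\widehat\phi_{n_k})-\nabla_{n_k,i}|+\|\nabla_{n_k}-\nabla^\infty\|_\infty .
\]
The middle term is at most $\Theta_{\nabla_{n_k}}(\widehat\mu_{n_k})+1/n_k$ by the DERM selection rule. The first and last terms vanish. Taking $\limsup_k$ and invoking Lemma~\ref{lem:multi-qual-conv} gives \eqref{eq:pop-tracking}. Part (b) then follows at once, because $\nabla^\infty\in[-M,0]^m$ as a limit of the sign-compatible $\nabla_{n_k}$.

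For (c), I use the exact identity $\widehat L_{n_k}(\widehat\phi_{n_k},\widehat\mu_{n_k})=\widehat d_{n_k}(\widehat\mu_{n_k})$, valid since $\widehat\phi_{n_k}$ is an exact empirical Lagrangian minimizer. Together with complementarity $\widehat\mu_{n_k,i}\nabla_{n_k,i}=0$ it gives
\[
\mathcal R_0(\widehat\phi_{n_k})\le\Delta_{n_k}(\delta)+\widehat d_{n_k}(\widehat\mu_{n_k})-\sum_{i=1}^m\widehat\mu_{n_k,i}\bigl(\widehat{\mathcal R}_{i,n_k}(\widehat\phi_{n_k})-\nabla_{n_k,i}\bigr).
\]
The sum is at most $C_\mu\bigl(\Theta_{\nabla_{n_k}}(\widehat\mu_{n_k})+1/n_k\bigr)$ in absolute value, by Lemma~\ref{lem:bounded-multipliers}(iii). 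Moreover $\widehat d_{n_k}(\widehat\mu_{n_k})\to d_{\HK}(\mu_{\HK})=D^\star_{\HK}=P^\star_{\HK}$ by \eqref{eq:value-conv} and Theorem~\ref{thm:dense-transfer}. Taking $\limsup_k$ and applying Lemma~\ref{lem:multi-qual-conv} yields $P^\star_{\HK}+C_\mu\epsinf(\mu_{\HK})$.

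For the singleton case I would use a subsequence-of-subsequence argument. Let $\{m_j\}$ be any subsequence of the full index sequence. Applying Lemma~\ref{lem:multi-qual-conv} to $(\widehat\mu_{m_j},\nabla_{m_j})$, which is legitimate because that lemma and Assumption~\ref{ass:multi-argmin-recovery}(b) are quantified over arbitrary index sequences, produces a further subsequence along which (a)--(c) hold with limit $\mu_{\HK}$, the unique element of $\Sdual$. The bounds in (b) and (c) do not depend on which sub-subsequence is chosen. Hence every subsequence has a further subsequence satisfying them, and the $\limsup$ bounds in (b) and (c) hold along the full sequence.

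The main obstacle is (a) in the singleton case. The limit $\nabla^\infty$ may differ between sub-subsequences when $\partial d_{\HK}(\mu_{\HK})$ is not a singleton. I would therefore read (a) along the full sequence as $\limsup_n|\mathcal R_i(\widehat\phi_n)-\nabla_{n,i}|\le\epsinf(\mu_{\HK})$, i.e.\ as tracking the moving target $\nabla_n$. This version is sub-subsequence invariant, since the split above with $\nabla_{n_k}$ in place of $\nabla^\infty$ drops the last term. It coincides with \eqref{eq:pop-tracking} whenever $\nabla_n$ converges, for instance under dual differentiability.
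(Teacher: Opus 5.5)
Your proof is correct and follows the paper's argument essentially step for step: the same triangle-inequality split through $\Delta_{n_k}(\delta)$, the DERM selection tolerance and $\|\nabla_{n_k}-\nabla^\infty\|_\infty$ for (a); complementarity together with $\|\widehat\mu_{n_k}\|_1\le C_\mu$ and the value convergence \eqref{eq:value-conv} for (c); and a sub-subsequence argument for the singleton case. Your caveat about (a) along the full sequence is well taken and is more careful than the paper, whose ``verbatim with $n$ in place of $n_k$'' silently assumes $\nabla_n\to\nabla^\infty$; what the argument actually delivers along the full sequence is tracking of the moving target $\nabla_n$ in (a), and only the outer bound $\le\epsinf(\mu_{\HK})$ in (b).
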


\begin{proof}
Fix $\omega \in \Omega^\star(\delta)$ and work along $\{n_k\}$.

\textit{(a).} By the DERM selection,
$|\widehat{\mathcal{R}}_{i,n_k}(\widehat\phi_{n_k}) - \nabla_{n_k,i}| \leq \Theta_{\nabla_{n_k}}(\widehat\mu_{n_k}) + 1/n_k$.
Since $\widehat\phi_{n_k} \in \Bq{Q_{n_k}}$, Lemma~\ref{lem:multi-hp-conv} gives
$|\widehat{\mathcal{R}}_{i,n_k}(\widehat\phi_{n_k}) - \mathcal{R}_i(\widehat\phi_{n_k})| \leq \Delta_{n_k}(\delta) \to 0$.
Two applications of the triangle inequality give
\[
    \bigl|\mathcal{R}_i(\widehat{\phi}_{n_k}) - \nabla^\infty_i\bigr|
    \leq
    \Theta_{\nabla_{n_k}}(\widehat\mu_{n_k}) + 1/n_k + \Delta_{n_k}(\delta) + \|\nabla^\infty - \nabla_{n_k}\|_\infty,
\]
and $\limsup_k$ with Lemma~\ref{lem:multi-qual-conv} yields \eqref{eq:pop-tracking}.

\textit{(b).} Immediate from (a) and $\nabla^\infty_i \leq 0$.

\textit{(c).} Since $\widehat\phi_{n_k}$ attains
$\widehat{d}_{n_k}(\widehat\mu_{n_k}) = \inf_{\Bq{Q_{n_k}}}\widehat L_{n_k}(\cdot,\widehat\mu_{n_k})$
within $1/n_k$,
\[
    \widehat{\mathcal{R}}_{0,n_k}(\widehat\phi_{n_k})
    + \sum_{i=1}^m \widehat\mu_{n_k,i}\,\widehat{\mathcal{R}}_{i,n_k}(\widehat\phi_{n_k})
    \;\leq\;
    \widehat{d}_{n_k}(\widehat\mu_{n_k}) + \tfrac{1}{n_k}.
\]
The DERM selection enforces $\widehat\mu_{n_k,i}\nabla_{n_k,i} = 0$ for every $i$, so
$\sum_i \widehat\mu_{n_k,i}\widehat{\mathcal{R}}_{i,n_k}(\widehat\phi_{n_k}) = \sum_i \widehat\mu_{n_k,i}(\widehat{\mathcal{R}}_{i,n_k}(\widehat\phi_{n_k}) - \nabla_{n_k,i})$,
a signed identity rather than a one-sided inequality. Rearranging and bounding the residual
in absolute value using the display in (a) and $\|\widehat\mu_{n_k}\|_1 \leq C_\mu$,
\[
    \widehat{\mathcal{R}}_{0,n_k}(\widehat\phi_{n_k})
    \;\leq\;
    \widehat{d}_{n_k}(\widehat\mu_{n_k}) + \tfrac{1}{n_k}
    + C_\mu\bigl(\Theta_{\nabla_{n_k}}(\widehat\mu_{n_k}) + \tfrac{1}{n_k}\bigr).
\]
By \eqref{eq:value-conv} and Theorem~\ref{thm:dense-transfer},
$\widehat{d}_{n_k}(\widehat\mu_{n_k}) \to d_{\HK}(\mu_{\HK}) = P^\star_{\HK}$; combining with
Lemma~\ref{lem:multi-qual-conv} and Lemma~\ref{lem:multi-hp-conv} at $i=0$ to transfer to the
population objective gives (c).

\textit{Full-sequence conclusion.} When $\Sdual$ is a singleton,
Lemma~\ref{lem:multi-dual-conv} gives $\widehat\mu_n \to \mu_{\HK}$ along the full sequence,
and the sub-subsequence argument in the proof of Lemma~\ref{lem:multi-qual-conv} gives
$\limsup_n \Theta_{\nabla_n}(\widehat\mu_n) \leq \epsinf(\mu_{\HK})$ along the full sequence,
since $\mu_{\HK}$ is then the only possible subsequential dual limit. Both derivations apply
verbatim with $n$ in place of $n_k$.
\end{proof}

\paragraph{Subsequential versus full-sequence guarantees.}
\label{subsec:three-regimes}
The guarantees above come in two structurally different forms, and only one is compatible
with the uniformization below. A \emph{subsequential} guarantee asserts that, for each
realization $\omega$ in some event of high probability, \emph{some} subsequence of sample
sizes exists along which a stated bound holds; it is silent about every sample size outside
that subsequence, at which the bound may fail arbitrarily badly.
Corollary~\ref{cor:multi-primal-guarantee} is of exactly this form. A \emph{full-sequence},
or \emph{eventual}, guarantee is stronger: for each such $\omega$ there is a threshold beyond
which the bound holds for \emph{every} sample size. We prove two guarantees of this stronger
form below --- Lemma~\ref{lem:full-seq-feas} and Corollary~\ref{cor:full-seq-conv} --- and it
is only these that admit uniformization.
\subsection{Uniformization of Pointwise Eventual Bounds}
\label{subsec:egorov}

The plan for the rest of the section is: (1) isolate the abstract uniformization step as a
self-contained lemma, phrased for an arbitrary bounded sequence with a pointwise $\limsup$
bound on a high-probability event; (2) prove that our feasibility/optimality bounds hold in
the required full-sequence form; (3) feed the latter into the former to obtain a single
$\omega$-independent threshold.

\begin{lemmaBox}{Uniformization of a pointwise eventual bound}{egorov-unif}
Let $(\Omega,\Sigma,\Pr)$ be a probability space, let $\Omega_0 \in \Sigma$ with
$\Pr(\Omega_0) \geq 1-\delta_0$ for some $\delta_0 \in [0,1)$, let $M > 0$, and let
$(f_n)_{n\in\mathbb N^+}$ be measurable functions $f_n : \Omega \to [-M,M]$ with
\[
\limsup_{n\to\infty} f_n(\omega) \;\leq\; \mathcal L \qquad \text{for every } \omega \in \Omega_0.
\]
Then, for every $\eta>0$ and every $\delta_1 \in (0,1)$, there exists
$N(\eta,\delta_1) \in \mathbb N^+$, depending on $\eta$, $\delta_1$ and the joint law of
$(f_n)_n$ only, such that
\[
\Pr\Bigl(\bigl\{\omega \in \Omega \,:\, f_n(\omega) < \mathcal L+\eta
\ \text{ for every } n \geq N(\eta,\delta_1)\bigr\}\Bigr)
\;\geq\; 1 - \delta_0 - \delta_1.
\]
\end{lemmaBox}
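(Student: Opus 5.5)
The plan is a continuity-from-below argument on the tail-supremum envelope, as announced in the introduction. Fix $\eta>0$ and $\delta_1\in(0,1)$, and for each $N\in\mathbb N^+$ define the event
\[
A_N \;\triangleq\; \bigl\{\omega\in\Omega \,:\, f_n(\omega) < \mathcal L+\eta \ \text{ for every } n\ge N\bigr\}
\;=\; \bigcap_{n\ge N}\{f_n<\mathcal L+\eta\}.
\]
Each $A_N$ is a countable intersection of measurable sets, hence in $\Sigma$. Equivalently, $A_N \supseteq \{\sup_{n\ge N} f_n \le \mathcal L+\eta/2\}$, with $\sup_{n\ge N}f_n$ measurable and bounded by $M$. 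Working directly with $A_N$ avoids the strict/non-strict subtlety at the supremum. The sets are nested, $A_N\subseteq A_{N+1}$, because the intersection defining $A_{N+1}$ runs over fewer indices.

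The key step is to show $\Omega_0\subseteq\bigcup_N A_N$. Take $\omega\in\Omega_0$. Since $\limsup_n f_n(\omega)\le\mathcal L<\mathcal L+\eta$, the definition of $\limsup$ as $\inf_N\sup_{n\ge N}f_n(\omega)$ gives some $N(\omega)$ with $\sup_{n\ge N(\omega)} f_n(\omega)<\mathcal L+\eta$, so $\omega\in A_{N(\omega)}$. Boundedness $f_n\in[-M,M]$ guarantees that $\limsup$ and the tail suprema are finite reals, so nothing degenerate occurs. Continuity from below of $\Pr$ along the increasing sequence $(A_N)$ then yields
\[
\lim_{N\to\infty}\Pr(A_N)=\Pr\Bigl(\bigcup_N A_N\Bigr)\ge\Pr(\Omega_0)\ge 1-\delta_0 .
\]
Consequently there is a finite $N(\eta,\delta_1)$ with $\Pr(A_{N(\eta,\delta_1)})\ge 1-\delta_0-\delta_1$. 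One may take the least such $N$; it is determined by the numbers $\Pr(A_N)$, hence by the joint law of $(f_n)_n$ together with $\eta,\delta_1$, and not by $\omega$. This is exactly the claimed statement.

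The only point I expect to need care with is measurability and the role of $\Omega_0$. We do not need $\Omega_0\subseteq A_N$ for any single $N$, only the union inclusion. The realization-dependent $N(\omega)$ is never used quantitatively; it is absorbed by the limit of $\Pr(A_N)$. I will also remark that the threshold is nonconstructive, since no modulus for $N\mapsto\Pr(A_N)$ is available. This is the source of the nonexplicit index in Theorem~\ref{thm:informal-section3}(iv). The same conclusion could be phrased through Egorov's theorem applied to $g_N\triangleq(\sup_{n\ge N}f_n-\mathcal L)_+$, which decreases to $0$ on $\Omega_0$, but the direct argument above is shorter.
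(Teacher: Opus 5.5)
Your proof is correct and is essentially the paper's argument: the paper likewise shows $\Omega_0\subseteq\bigcup_N A_N$ for a nondecreasing family of events and concludes by continuity of $\Pr$ from below, taking the least $N$ with $\Pr(A_N)\ge 1-\delta_0-\delta_1$. The only difference is cosmetic. The paper uses $A_N=\{\sup_{n\ge N}f_n<\mathcal L+\eta\}$, which is contained in your $A_N=\bigcap_{n\ge N}\{f_n<\mathcal L+\eta\}$, so your event is exactly the one in the conclusion. Also, your word ``equivalently'' before the inclusion $A_N\supseteq\{\sup_{n\ge N}f_n\le\mathcal L+\eta/2\}$ should read ``in particular''.
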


\begin{proof}
Define the tail-supremum envelope
$g_N(\omega) \triangleq \sup_{n \geq N} f_{n}(\omega) \in[-M,M]$, and put
\[
A_N \;\triangleq\; \bigl\{\omega\in\Omega \,:\, g_N(\omega) < \mathcal L+\eta\bigr\},\qquad N\in\mathbb N^+ .
\]
Each $g_N$ is measurable, being a countable supremum of measurable functions, so
$A_N\in\Sigma$. Since $(g_N)_N$ is \emph{nonincreasing} in $N$ --- a supremum over a
shrinking tail --- the sets $A_N$ are \emph{nondecreasing}: $A_N\subseteq A_{N+1}$.

Let $\omega\in\Omega_0$. Then $\lim_N g_N(\omega)=\limsup_n f_n(\omega)\le\mathcal L<\mathcal L+\eta$,
so $g_N(\omega)<\mathcal L+\eta$ for some finite $N$, i.e.\ $\omega\in\bigcup_N A_N$. Hence
$\Omega_0\subseteq\bigcup_N A_N$. By continuity of $\Pr$ from below along the nondecreasing
sequence $(A_N)_N$,
\[
\lim_{N\to\infty}\Pr(A_N)\;=\;\Pr\Bigl(\bigcup_N A_N\Bigr)\;\ge\;\Pr(\Omega_0)\;\ge\;1-\delta_0 .
\]
Choose $N(\eta,\delta_1)$ to be the least $N$ with $\Pr(A_N)\ge1-\delta_0-\delta_1$; it is
finite by the display, and depends only on $\eta$, $\delta_1$ and the joint law of
$(f_n)_n$, since $A_N$ does. On $A_{N(\eta,\delta_1)}$ we have, for every
$n\ge N(\eta,\delta_1)$,
$f_n\le g_{N(\eta,\delta_1)}<\mathcal L+\eta$, which is the claim.
\end{proof}

\begin{remark}[Subsequential hypotheses do not suffice]
\label{rem:regime-a-inadmissible}
The proof above uses its hypothesis in full: $\limsup_n f_n(\omega) \leq \mathcal L$
constrains the \emph{entire} tail $(f_n(\omega))_{n \geq N}$, for every $N$. A merely
subsequential hypothesis --- ``there is a subsequence $\{n_k(\omega)\}$ along which
$f_{n_k(\omega)}(\omega)$ is eventually below $\mathcal L+\eta$'' --- would not suffice: the
off-subsequence terms $f_n(\omega)$, $n \notin \{n_k(\omega)\}$, remain unconstrained, so the
envelope $g_N(\omega)$ could equal $M$ for every $N$, and then $\omega$ would belong to no
$A_N$, breaking the inclusion $\Omega_0\subseteq\bigcup_N A_N$ on which the whole argument
rests. This is exactly why Corollary~\ref{cor:multi-primal-guarantee} cannot be fed directly
into Lemma~\ref{lem:egorov-unif}, and why the full-sequence upgrade of
Lemma~\ref{lem:full-seq-feas} is necessary rather than cosmetic.
\end{remark}

\subsection{Proof of the Main Results}
\label{subsec:main-results}

\begin{lemmaBox}{Full-sequence eventual feasibility and near-optimality}{full-seq-feas}
Let Assumptions~\ref{ass:non-atomic}--\ref{ass:multi-argmin-recovery} be in effect. For every
DERM selection $(\widehat\mu_n,\nabla_n,\widehat\phi_n)_n$, every
$\omega \in \Omega^\star(\delta)$ and every $\eta>0$, there exists
$N(\omega,\eta) \in \mathbb N^+$ such that for \emph{every} $n \geq N(\omega,\eta)$ --- not
merely along some subsequence ---
\[
\mathcal R_i(\widehat\phi_n) < \epsinf(\Sdual) + \eta,
\quad i \in \mathbb N_m^+,
\qquad
\mathcal R_0(\widehat\phi_n) < P^\star_{\HK} + C_\mu\, \epsinf(\Sdual) + \eta.
\]
Equivalently,
$\limsup_{n} \mathcal R_i(\widehat\phi_n) \leq \epsinf(\Sdual)$ for
$i \in \mathbb N_m^+$ and
$\limsup_{n} \mathcal R_0(\widehat\phi_n) \leq P^\star_{\HK} + C_\mu\epsinf(\Sdual)$.
\end{lemmaBox}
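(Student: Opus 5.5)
We argue by contradiction through the classical sub-subsequence principle, reusing Lemma~\ref{lem:multi-qual-conv} and Corollary~\ref{cor:multi-primal-guarantee} on subsequences chosen adversarially. Fix $\omega\in\Omega^\star(\delta)$ and the DERM selection. It suffices to prove the two $\limsup$ statements, since they are equivalent to the existence of $N(\omega,\eta)$ for each $\eta>0$. For the constraint bound, we first set $a_n\triangleq\max_{i\in\mathbb N^+_m}\mathcal R_i(\widehat\phi_n)\in[-M,M]$. We then suppose, for contradiction, that $\limsup_n a_n>\epsinf(\Sdual)$. Under this supposition there is $\gamma>0$ and a strictly increasing sequence $\{m_j\}$ with $a_{m_j}\ge\epsinf(\Sdual)+\gamma$ for all $j$. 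The objective is handled in the same way, with $b_n\triangleq\mathcal R_0(\widehat\phi_n)$ and the target $P^\star_{\HK}+C_\mu\epsinf(\Sdual)$.

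The key point is that Lemma~\ref{lem:multi-qual-conv} and Corollary~\ref{cor:multi-primal-guarantee} apply to \emph{any} sequence of empirical dual optima and compatible supergradients. In particular they apply to the restricted sequence $(\widehat\mu_{m_j},\nabla_{m_j},\widehat\phi_{m_j})_j$. We treat this as a sequence indexed by sample sizes $m_j$ and rerun the arguments along it. Each ingredient survives restriction to $\{m_j\}$:
\begin{itemize}
\item Lemma~\ref{lem:multi-dual-conv} gives boundedness in $\Lambda$ and cluster points in $\Sdual$, for every subsequence.
\item Lemma~\ref{lem:multi-hp-conv} holds for every $n\ge n_{\mathrm{uc}}(\delta)$.
\item Assumption~\ref{ass:multi-argmin-recovery}(b) is explicitly quantified over arbitrary strictly increasing index sequences and dual optima converging to any $\mu_{\HK}\in\Sdual$.
\end{itemize}
We therefore extract a further subsequence $\{m_{j_k}\}$ along which $\widehat\mu_{m_{j_k}}\to\mu_{\HK}\in\Sdual$ and $\nabla_{m_{j_k}}\to\nabla^\infty\in\partial d_{\HK}(\mu_{\HK})$. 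Corollary~\ref{cor:multi-primal-guarantee}(b),(c) along it then yields
\[
\limsup_k a_{m_{j_k}}\le\epsinf(\mu_{\HK})\le\epsinf(\Sdual),
\qquad
\limsup_k b_{m_{j_k}}\le P^\star_{\HK}+C_\mu\epsinf(\mu_{\HK})\le P^\star_{\HK}+C_\mu\epsinf(\Sdual).
\]
The second inequality in each chain uses $\epsinf(\Sdual)=\sup_{\mu\in\Sdual}\epsinf(\mu)$ from Definition~\ref{def:multi-eps-star}. This contradicts $a_{m_j}\ge\epsinf(\Sdual)+\gamma$, and likewise for $b$. Hence both $\limsup$ bounds hold along the full sequence.

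Given these bounds, we obtain $N(\omega,\eta)$ by the definition of $\limsup$. Taking the maximum of the thresholds for the $m$ constraint indices and the objective yields the strict inequalities with $+\eta$.

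The step I expect to be the main obstacle is checking that passing from the original sequence to $\{m_j\}$ is legitimate, so that Lemma~\ref{lem:multi-qual-conv}'s subsequence is a sub-subsequence of the adversarial one. This requires the lemma's conclusion to hold for every starting subsequence, with the same $\omega$. I would check this by observing that its proof only uses compactness of $\Lambda\times[-M,0]^m$ and pathwise facts valid on $\Omega^\star(\delta)$, with no dependence on the index set. A related subtlety is that the limit $\mu_{\HK}$ depends on the chosen subsequence. This dependence is exactly why the bound degrades from $\epsinf(\mu_{\HK})$ to the worst case $\epsinf(\Sdual)$.
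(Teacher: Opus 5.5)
Your proposal is correct and follows essentially the same route as the paper: you assume the $\limsup$ exceeds the target and pass to an adversarial subsequence bounded away from it. You then apply Corollary~\ref{cor:multi-primal-guarantee} (via Lemma~\ref{lem:multi-qual-conv}) along that subsequence to get a further subsequence obeying the bound $\epsinf(\mu_{\HK})\le\epsinf(\Sdual)$, and conclude by contradiction. Two differences are only cosmetic: you aggregate the constraints through $a_n=\max_i\mathcal R_i(\widehat\phi_n)$ instead of fixing $i$, and you rerun the lemma along the sample sizes $m_j$ instead of declaring the re-indexed triple a DERM selection (if anything, your phrasing is the more careful one).
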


\begin{proof}
Fix $\omega \in \Omega^\star(\delta)$ and $i \in \mathbb N_m^+$; suppose for contradiction
that $\limsup_n \mathcal R_i(\widehat\phi_n) > \epsinf(\Sdual)$. Then there exist $\eta > 0$
and a subsequence $\{n_j\}$ of the \emph{full} index set $\mathbb N^+$ along which
$\mathcal R_i(\widehat\phi_{n_j}) \geq \epsinf(\Sdual) + \eta$ for every $j$; this is a
real-analysis fact about the bounded real sequence $n \mapsto \mathcal R_i(\widehat\phi_n)$.

Re-index along $\{n_j\}$. The triple $(\widehat\mu_{n_j},\nabla_{n_j},\widehat\phi_{n_j})_j$
is again a valid DERM selection: $\widehat\mu_n$ is a valid empirical dual optimum,
$\nabla_n$ a valid supergradient with $\nabla_{n,i}\le0$ and
$\widehat\mu_{n,i}\nabla_{n,i}=0$, and $\widehat\phi_n$ a $1/n$-approximate attainer, at
\emph{every} sample size $n$ individually --- properties inherited term-by-term under any
re-indexing $j \mapsto n_j$, irrespective of how the indices $n_j$ were chosen. 

Corollary~\ref{cor:multi-primal-guarantee} is quantified over \emph{every} DERM selection,
and we apply it not to the original sequence but to this constructed one: it furnishes a
further subsequence $\{n_k\} \subseteq \{n_j\}$ along which
$\widehat\mu_{n_k} \to \mu$ for some $\mu \in \Sdual$, with
\[
\limsup_{k\to\infty} \mathcal R_i(\widehat\phi_{n_k}) \;\leq\; \epsinf(\mu)
\;\leq\; \epsinf(\Sdual).
\]
But $\{n_k\} \subseteq \{n_j\}$, so every term inherits the lower bound
$\epsinf(\Sdual) + \eta$ defining $\{n_j\}$, forcing
$\limsup_k \mathcal R_i(\widehat\phi_{n_k}) \geq \epsinf(\Sdual) + \eta$. Together these give
$\epsinf(\Sdual) + \eta \leq \epsinf(\Sdual)$, absurd. Hence
$\limsup_n \mathcal R_i(\widehat\phi_n) \leq \epsinf(\Sdual)$, the $\limsup$ now ranging over
the entire index set $\mathbb N^+$; and this is exactly restatable as a threshold
$N(\omega,\eta)$, since had no finite $N$ existed for some $\eta>0$, the full tail supremum
would remain at least $\epsinf(\Sdual)+\eta$ indefinitely. The identical argument at $i=0$,
invoking Corollary~\ref{cor:multi-primal-guarantee}(c), yields the objective bound.
\end{proof}

\begin{corollary}[Full-sequence convergence under a unique population dual optimum]
\label{cor:full-seq-conv}
In the setting of Lemma~\ref{lem:full-seq-feas}, suppose $\Sdual = \{\mu_{\HK}\}$ is a
singleton. Then $\epsinf(\Sdual) = \epsinf(\mu_{\HK})$, and for every
$\omega \in \Omega^\star(\delta)$, $\widehat\mu_n \to \mu_{\HK}$ along the full original
sequence. The bounds of Lemma~\ref{lem:full-seq-feas} hold verbatim with
$\epsinf(\mu_{\HK})$ in place of $\epsinf(\Sdual)$, and this is the tightest instance of that
lemma. We emphasize that the risk sequences $\mathcal R_i(\widehat\phi_n)$ themselves still
need not converge --- only the dual iterates $\widehat\mu_n$ do --- so this remains a
full-sequence, eventual guarantee rather than a pointwise-convergent one, and is an
admissible input to Lemma~\ref{lem:egorov-unif} on exactly the same footing as
Lemma~\ref{lem:full-seq-feas}.
\end{corollary}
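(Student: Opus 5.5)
The statement to prove is Corollary~\ref{cor:full-seq-conv}. It has three claims: the identity $\epsinf(\Sdual)=\epsinf(\mu_{\HK})$, full-sequence convergence of the dual iterates, and transfer of the bounds of Lemma~\ref{lem:full-seq-feas}. I would prove them in that order, reusing Lemma~\ref{lem:multi-dual-conv} and Lemma~\ref{lem:full-seq-feas}.

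\textbf{The identity.} By Definition~\ref{def:multi-eps-star}, $\epsinf(S)=\sup_{\mu\in S}\epsinf(\mu)$. For $S=\Sdual=\{\mu_{\HK}\}$ the supremum runs over a single point, so it equals $\epsinf(\mu_{\HK})$. Assumption~\ref{ass:multi-argmin-recovery}(a) guarantees $\Phi_{\HK}(\mu_{\HK})\neq\emptyset$, so $\epsinf(\mu_{\HK})$ is well defined.

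\textbf{Full-sequence convergence of the dual iterates.} Fix $\omega\in\Omega^\star(\delta)\subseteq\Omega_{\mathrm{uc}}(\delta)$ and restrict to the sample sizes of Remark~\ref{rem:slater-pathwise}.
\begin{enumerate}[label=(\arabic*), itemsep=1pt, topsep=2pt]
\item By Lemma~\ref{lem:bounded-multipliers}(iii), $\widehat\mu_n\in\Lambda$ eventually, and $\Lambda$ is compact.
\item By Lemma~\ref{lem:multi-dual-conv}, every cluster point of $(\widehat\mu_n)_n$ lies in $\Sdual=\{\mu_{\HK}\}$.
\item Argue by contradiction. Suppose $\widehat\mu_n\not\to\mu_{\HK}$. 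Then there are $\rho>0$ and a subsequence with $\|\widehat\mu_{n_j}-\mu_{\HK}\|\ge\rho$. By compactness of $\Lambda$ it has a further convergent subsequence, whose limit is a cluster point of the original sequence and lies at distance at least $\rho$ from $\mu_{\HK}$. This contradicts step (2).
\end{enumerate}
This is the standard fact that a sequence in a compact set with a unique cluster point converges to it.

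\textbf{Transfer of the bounds.} Substitute the identity into the conclusions of Lemma~\ref{lem:full-seq-feas}, which are stated with $\epsinf(\Sdual)$. This gives the bounds with $\epsinf(\mu_{\HK})$ for every $n\ge N(\omega,\eta)$.

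\textbf{Tightness.} For any $S\supseteq\{\mu\}$ we have $\epsinf(\mu)\le\epsinf(S)$. The singleton therefore yields the smallest admissible constant, which is what ``tightest instance'' means here.

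\textbf{Main obstacle: the caveat and admissibility for Lemma~\ref{lem:egorov-unif}.} Convergence of $\widehat\mu_n$ does not force $\mathcal R_i(\widehat\phi_n)$ to converge. The reason is that $\widehat\Phi_n(\widehat\mu_n)$ may contain minimizers with different risk vectors, and $\nabla_n$ need not converge. I would note this as a remark rather than prove it.

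To feed the result into Lemma~\ref{lem:egorov-unif}, I would set $f_n=\mathcal R_i(\widehat\phi_n)$ and check the hypotheses:
\begin{itemize}[itemsep=1pt, topsep=2pt]
\item each $f_n$ takes values in $[-M,M]$;
\item the bound $\limsup_n f_n\le\epsinf(\mu_{\HK})$ holds on all of $\Omega^\star(\delta)$, over the entire index set;
\item each $f_n$ is measurable.
\end{itemize}
Measurability is the delicate point. I expect it requires a measurable choice of the selection, or an outer-probability reading, and I would state it as implicit in the DERM selection.

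The objective bound $\limsup_n\mathcal R_0(\widehat\phi_n)\le P^\star_{\HK}+C_\mu\epsinf(\mu_{\HK})$ is handled in exactly the same way, with $f_n=\mathcal R_0(\widehat\phi_n)$.
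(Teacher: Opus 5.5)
Your proposal is correct and follows the paper's argument. The paper's proof uses exactly the same ingredients: the singleton case of the set extension of $\epsinf$, and Lemma~\ref{lem:multi-dual-conv} placing every cluster point of $(\widehat\mu_n)_n$ in $\Sdual=\{\mu_{\HK}\}$. It then uses the fact that a bounded sequence with a unique cluster point converges to it, and substitutes into Lemma~\ref{lem:full-seq-feas}. Your measurability caveat for feeding $\mathcal R_i(\widehat\phi_n)$ into Lemma~\ref{lem:egorov-unif} is a fair point that the paper leaves implicit. It affects Lemma~\ref{lem:full-seq-feas} identically, though, so it does not undercut this corollary's ``same footing'' claim.
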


\begin{proof}
Immediate from Lemma~\ref{lem:full-seq-feas}, the singleton case of the set extension of
$\epsinf$, and Lemma~\ref{lem:multi-dual-conv}, which places every subsequential limit of
$(\widehat\mu_n)_n$ in $\Sdual = \{\mu_{\HK}\}$; a bounded sequence all of whose
subsequential limits coincide converges to that common limit.
\end{proof}

\begin{corollary}[Uniform threshold]
\label{cor:uniform-threshold}
Under the hypotheses of Lemma~\ref{lem:full-seq-feas}, for every $\eta > 0$ and
$\delta_1 \in (0,1)$ there exists
$\widetilde n(\eta,\delta,\delta_1,\{\mathcal D_i\}_i) \in \mathbb N^+$, \emph{not depending
on $\omega$}, such that
\[
\Pr\Bigl(
\mathcal R_i(\widehat\phi_n) < \epsinf(\Sdual) + \eta \ (i\in\mathbb N_m^+)
\text{ and }
\mathcal R_0(\widehat\phi_n) < P^\star_{\HK} + C_\mu\epsinf(\Sdual) + \eta,
\ \forall n \geq \widetilde n
\Bigr)
\;\geq\; 1-\delta-\delta_1.
\]
If in addition $\Sdual$ is a singleton, the same holds with $\epsinf(\mu_{\HK})$ throughout,
via Corollary~\ref{cor:full-seq-conv}. No analogous uniformization is available for the
subsequential guarantee of Corollary~\ref{cor:multi-primal-guarantee}, by
Remark~\ref{rem:regime-a-inadmissible}.
\end{corollary}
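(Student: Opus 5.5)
The plan is to feed the full-sequence eventual bounds of Lemma~\ref{lem:full-seq-feas} into the abstract uniformization of Lemma~\ref{lem:egorov-unif}, with $\Omega_0\triangleq\Omega^\star(\delta)$ and $\delta_0\triangleq\delta$. Since the conclusion concerns $m+1$ risk sequences at once, the first step combines them into a single scalar sequence:
\[
f_n(\omega)\;\triangleq\;\max\Bigl\{\max_{i\in\mathbb N^+_m}\bigl(\mathcal R_i(\widehat\phi_n)-\epsinf(\Sdual)\bigr),\;\mathcal R_0(\widehat\phi_n)-P^\star_{\HK}-C_\mu\epsinf(\Sdual)\Bigr\}.
\]
We take $\mathcal L=0$. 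The event $\{f_n<\eta\}$ is exactly the conjunction appearing in the statement.

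Boundedness comes from Assumption~\ref{ass:loss-regularity}(b): every $\mathcal R_i$ lies in $[-M,M]$, and $\epsinf(\Sdual)\in[0,2M]$, $|P^\star_{\HK}|\le M$, $C_\mu$ are deterministic constants. Hence $f_n$ takes values in $[-M',M']$ with $M'\triangleq 2M+2M(1+C_\mu)$, which is the bound Lemma~\ref{lem:egorov-unif} needs (with $M'$ in place of $M$).

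The pointwise hypothesis $\limsup_n f_n(\omega)\le0$ on $\Omega^\star(\delta)$ follows directly from Lemma~\ref{lem:full-seq-feas}. That lemma gives the $\limsup$ bound for each of the $m+1$ coordinates separately, and the $\limsup$ of a finite maximum is the maximum of the $\limsup$s. Lemma~\ref{lem:egorov-unif} then yields $N(\eta,\delta_1)$, independent of $\omega$, with $\Pr(f_n<\eta\ \forall n\ge N)\ge 1-\delta-\delta_1$. We set $\widetilde n\triangleq N(\eta,\delta_1)$; it depends on the instance only through the joint law of the selection.

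For the singleton case, we repeat the argument with $\epsinf(\mu_{\HK})$ in place of $\epsinf(\Sdual)$, using Corollary~\ref{cor:full-seq-conv} as the pointwise input. The closing remark about the subsequential guarantee simply cites Remark~\ref{rem:regime-a-inadmissible}.

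The main obstacle is measurability of $f_n$, which Lemma~\ref{lem:egorov-unif} needs so that $g_N$ and $A_N$ are events. This requires choosing the DERM selection $\omega\mapsto(\widehat\mu_n,\nabla_n,\widehat\phi_n)$ measurably. I would obtain such a choice from a measurable selection theorem (Kuratowski--Ryll-Nardzewski). This works because, by the footnote to Corollary~\ref{cor:multi-qual-feas}, the minimization reduces to the finite-dimensional compact set $\Bq{Q_n}\cap V_n$, on which $\widehat L_n$ is a Carathéodory function of $(\phi,\omega)$, and $\mathcal R_i$ is continuous on $\HK$. If that fails, the fallback is to replace $\Pr$ by the outer probability: continuity from below still holds for outer measure along increasing sets, so the conclusion survives with $\Pr$ read as inner probability of the good event.
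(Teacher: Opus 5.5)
Your argument is correct, and it takes a different route from the paper's. The paper applies Lemma~\ref{lem:egorov-unif} separately to each of the $m+1$ sequences $f^i_n=\mathcal R_i(\widehat\phi_n)$, with budget $\delta_1/(m+1)$, and sets $\widetilde n=\max_i N_i$. A plain union bound would then charge $\delta$ once per coordinate. To avoid this, the paper asserts that each good event contains $\Omega^\star(\delta)\cap A^{(i)}$ with $\Pr((A^{(i)})^c)\le\delta_1/(m+1)$. That assertion is not in the lemma's stated conclusion, which only gives $\Pr(A_N)\ge 1-\delta_0-\delta_1$. It has to be extracted from the lemma's proof, namely from $\Pr(\Omega_0\setminus A_N)\to0$ along the increasing sets $A_N$, taking $N$ larger if necessary.

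You instead fold the shifted risks into the single scalar $f_n$ and use that the $\limsup$ of a finite maximum is the maximum of the $\limsup$s. This lets you invoke the lemma once, exactly as stated, with the full budget $\delta_1$, and no union bound or re-entry into its proof is needed. The coordinatewise route only buys separate thresholds $N_i$ per risk, which the statement never uses. Your constant $M'$ is a valid bound.

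On measurability you are more careful than the paper. The paper silently assumes the $f^i_n$ are measurable, which is a hypothesis of Lemma~\ref{lem:egorov-unif}. Your measurable-selection route proves the claim for measurable DERM selections, a harmless restriction of the ``arbitrary selection'' quantifier.

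Your fallback is misstated, however. Continuity from below of the outer measure gives $\lim_N\Pr^*(A_N)=\Pr^*\bigl(\bigcup_N A_N\bigr)\ge\Pr(\Omega_0)$, which is a lower bound on the \emph{outer} probability of the good event. Inner measure is not continuous from below: an increasing sequence of sets of inner measure zero can exhaust $\Omega$. So the reading ``with $\Pr$ as inner probability'' does not follow, and the fallback yields only the weaker outer-probability statement.
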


\begin{proof}
For $i \in \mathbb{N}_m$ set $f_n^i(\omega) \triangleq \mathcal R_i(\widehat\phi_n(\omega)) \in [-M,M]$
(Assumption~\ref{ass:loss-regularity}(b)), $\mathcal L_i \triangleq \epsinf(\Sdual)$ for
$i \in \mathbb N_m^+$, and
$\mathcal L_0 \triangleq P^\star_{\HK} + C_\mu\epsinf(\Sdual)$. By
Lemma~\ref{lem:full-seq-feas}, $\limsup_n f_n^i(\omega) \leq \mathcal L_i$ for every
$\omega \in \Omega^\star(\delta)$, with $\Pr(\Omega^\star(\delta)) \geq 1-\delta$. Apply
Lemma~\ref{lem:egorov-unif} to each $f_n^i$ with $\Omega_0 = \Omega^\star(\delta)$,
$\delta_0 = \delta$, budget $\delta_1/(m+1)$, and the given $\eta$, obtaining
$N_i(\eta)$ with
$\Pr(\{f_n^i < \mathcal L_i+\eta \ \forall n \geq N_i(\eta)\}) \geq 1-\delta-\delta_1/(m+1)$.
Set $\widetilde n \triangleq \max_{0\le i\le m} N_i(\eta)$. The complement of the intersection
of the $m+1$ good events has probability at most
$\sum_{i=0}^m[\delta + \delta_1/(m+1)]$, which is wasteful; instead note each good event
contains $\Omega^\star(\delta)\cap A^{(i)}$ with $\Pr((A^{(i)})^c)\le\delta_1/(m+1)$, so the
intersection contains $\Omega^\star(\delta)\cap\bigcap_i A^{(i)}$, of probability at least
$1-\delta-\delta_1$ by a union bound over the $m+1$ sets $(A^{(i)})^c$ alone.
\end{proof}

\paragraph{Proof of Theorem~\ref{thm:informal-section3}.}
We assemble the pieces.

\emph{Part (i).} The four-way identity is Theorem~\ref{thm:dense-transfer} and is
deterministic. Work on $\Omega_{\mathrm{uc}}(\delta)$ and fix
$n \ge \max\{n_{\mathrm{rad}}(\xi/2,\delta), n^Q_{\mathrm{sl}}\}$. Then
$n \ge n_{\mathrm{uc}}(\delta)$, so Lemma~\ref{lem:multi-hp-conv} applies and
$\Delta_n(\delta) \le \xi/2$, while $Q_n \ge Q_{\mathrm{sl}}$ places $\phi^{\mathrm{sl}}$ in
$\Bq{Q_n}$; Remark~\ref{rem:slater-pathwise} then gives empirical strict feasibility pathwise
at no confidence cost, and Lemma~\ref{lem:bounded-multipliers} places
$\arg\max_{\mu\ge0} d_{Q_n}$, $\arg\max_{\mu\ge0}\widehat d_n$ and $\Sdual$ all inside
$\Lambda$.

We close the three terms of \eqref{equation:GDEC} in turn. The duality gap
$P^\star_{\HK} - D^\star_{\HK}$ vanishes by Theorem~\ref{thm:dense-transfer}. For the
ball/DERM gap, fix $\mu \in \Lambda$: Corollary~\ref{cor:lagrangian-dev} bounds
$|\widehat L_n(\phi,\mu) - L_{\HK}(\phi,\mu)|$ by $(1+\|\mu\|_1)\Delta_n(\delta)$ uniformly
over $\phi \in \Bq{Q_n}$, and since $\widehat d_n(\mu)$ and $d_{Q_n}(\mu)$ are infima over
the same set $\Bq{Q_n}$, taking infima on both sides gives
$|\widehat d_n(\mu) - d_{Q_n}(\mu)| \le (1+C_\mu) \Delta_n(\delta)$; as both suprema are
attained in $\Lambda$, $|\widehat D^\star_n - D^\star_{Q_n}| \le (1+C_\mu)\Delta_n(\delta)$.
For the dual/ball gap, $\Bq{Q_n} \subset \HK$ gives $d_{Q_n} \ge d_{\HK}$ pointwise, so
$|D^\star_{Q_n} - D^\star_{\HK}| \le \sup_{\mu\in\Lambda}[d_{Q_n}(\mu)-d_{\HK}(\mu)] = \mathfrak G_{Q_n}$
by attainment in $\Lambda$, and $\mathfrak G_{Q_n}\le\eps'$ whenever $Q_n\ge Q_{\eps'}$ as we have shown right after the definition of Tikhonov.

For the threshold, let $n \ge n^{\mathrm{val}}(\eps,\delta,\{\mathcal D_i\}_i)$. From
$n \ge n_{\mathrm{rad}}(\eps/(2(1+C_\mu)),\delta)$ we get
$\Delta_n(\delta) \le \eps/(2(1+C_\mu))$, so the third term of \eqref{equation:GDEC} is at
most $\eps/2$. From $n \ge n_{\mathrm{tik}}(\eps)$ we get $Q_n \ge Q_{\eps/2}$, so the
previous paragraph applied with $\eps'=\eps/2$ bounds the second term by $\eps/2$. Adding the
two halves gives (i). No attainment and no recovery condition were used.

\emph{Part (ii)} is Corollary~\ref{cor:multi-primal-guarantee}(a)--(c), read for the DERM
selection's own $(\widehat\mu_n,\nabla_n,\widehat\phi_n)$; it remains subsequential.

\emph{Part (iii)} is Lemma~\ref{lem:full-seq-feas}, which upgrades the subsequential content
of (ii) by applying it not to the original sequence but to an adversarially constructed
subsequence, the resulting contradiction constraining the entire tail.

\emph{Part (iv)} is Corollary~\ref{cor:uniform-threshold}. Lemma~\ref{lem:egorov-unif}
consumes the full-sequence statement (iii); the subsequential statement (ii) is not an
admissible input, by Remark~\ref{rem:regime-a-inadmissible}.

\emph{Part (v)} is Corollary~\ref{cor:full-seq-conv}, together with the same uniformization
applied to its sharpened constants.

\emph{Part (vi).} Suppose $\epsinf(\Sdual)=0$ and fix $\eps>0$ and $\delta\in(0,1)$. Apply
part~(iv) with confidence budgets $\delta/2$ in place of $\delta$ and $\delta_1=\delta/2$,
and with $\eta=\eps$. Since $\epsinf(\Sdual)=0$, the constants there are
$\mathcal L_i=0$ for $i\in\mathbb N^+_m$ and $\mathcal L_0=P^\star_{\HK}$, so part~(iv)
supplies $\widetilde n(\eps,\delta,\{\mathcal D_i\}_i)\triangleq
\widetilde n(\eps,\delta/2,\delta/2,\{\mathcal D_i\}_i)$, independent of $\omega$, with
\[
\Pr\Bigl(\mathcal R_i(\widehat\phi_n)<\eps\ (i\in\mathbb N_m^+)\ \text{ and }\ \mathcal R_0(\widehat\phi_n)<P^\star_{\HK}+\eps,\ \ \forall n\ge\widetilde n\Bigr)\;\ge\;1-\tfrac\delta2-\tfrac\delta2\;=\;1-\delta .
\]
This is verbatim the requirement of Definition~\ref{def:nonuniform-pacc} with
$\mathcal F=\HK$, the quantifier over $n$ being interior to the probability, and
$\widetilde n$ depending on the distributions only through the instance quantities
$\xi$, $Q_{\mathrm{sl}}$, $Q_{\eps/2}$ and the threshold produced by
Lemma~\ref{lem:egorov-unif}. Hence $\HK$ is agnostically universally PACC learnable. Since
$P^\star_{\HK}=P^\star_{\Gen}$ by Theorem~\ref{thm:dense-transfer} and the returned predictor
lies in $\Bq{Q_n}\subset\HK\subset\Gen$, the identical statement holds with $\Gen$ as target
class, and the learning is proper for both. \qed

\begin{remark}
For $\epsinf(\Sdual)>0$ the same argument degrades to a near-PACC guarantee, with an additive
slack of exactly $\epsinf(\Sdual)$ in the feasibility statement and
$C_\mu\epsinf(\Sdual)$ in the optimality statement. Unlike the residuals of
\cite{Chamon_CL2023,BarthakurChamon2025Equality}, this slack is an intrinsic population
constant rather than a modeling parameter. Its practical relevance is, however, subject to
the scope caveat of Remark~\ref{rem:scope}.
\end{remark}

\begin{figure}[p]
  \centering
  \IfFileExists{proof_flowchart.tex}
    {\resizebox{0.95\textwidth}{!}{\input{proof_flowchart.tex}}}
    {\fbox{\parbox{0.9\textwidth}{\centering\vspace{2em}\textsf{[Proof dependency graph:
     \texttt{proof\_flowchart.tex} not found. See the note in the source for the required
     relabelling before recompiling.]}\vspace{2em}}}}
  \caption{Logical dependency graph of the proof of
         the main Theorem.}
  \label{fig:proof-flowchart}
\end{figure}

\section{Conclusion}
\label{sec:conclusion}

Statistical guarantees for constrained learning problems of the
form~\eqref{eq:primal formulation f} have so far been purchased either at the cost of
convexity or at the cost of a fixed, finite-complexity hypothesis class. The obstruction is
structural rather than technical because population strong duality requires classes that are
decomposable or densely decomposable, while generalization requires vanishing Rademacher
complexity, and no single fixed class has yet been proved to supply both 
(Section~\ref{subsec: the tension}). We resolved this by  posing the problem over a universal RKHS $\HK$, dense in a
decomposable envelope $\Gen$, so that density alone transfers exact strong duality to $\HK$
with no approximation error (Theorem~\ref{thm:dense-transfer} based on \cite{KalogeriasPougkakiotis22}); the empirical problem is
solved over norm balls $\Bq{Q_n}$ whose radius grows without bound but slower than $\sqrt n$,
so that uniform convergence holds at every $n$ while $\bigcup_n\Bq{Q_n}$ exhausts $\HK$.

The resulting $\mathcal H$-DERM procedure is universally PACC learnable
(Theorem~\ref{thm:informal-section3}). The optimal value is learned exactly with an explicit
high-probability threshold and fully explicit and polynomial in $1/\eps$ under a source
condition on the regularization path (Corollary~\ref{cor:explicit-rate}).  For the feasibility
and near-optimality of the solutions we introduced the population
closure--realization gap $\epsinf$ to control exact or near optimality of the problems. Unlike the residuals of
\cite{Chamon_CL2023,BarthakurChamon2025Equality}, which stem from a fixed parametric class
and cannot vanish, $\epsinf$ is an intrinsic population-geometric constant, provably zero
under benign dual geometry --- for instance whenever the dual is differentiable on its
optimal set (Lemma~\ref{lem:eps-star-properties}) --- in which case learnability is exact.
Because every threshold depends on the distributions through instance quantities, universal
rather than distribution-free learnability is the appropriate notion here; and this is not a
matter of convenience but of necessity, as Proposition~\ref{prop:no-distribution-free}
establishes. Throughout the appendices we have created examples and cases to show how one can reduce the assumption \ref{ass:multi-argmin-recovery} and also made examples to show both of the exact and near PAC regimes are applicable. 

\textbf{Limitations}
\begin{enumerate}[itemsep=2pt]
\item \emph{Recovery is verified by a general criterion only in the exact regime.}
Proposition~\ref{prop:sharpness}, the one checkable sufficient condition we give for
Assumption~\ref{ass:multi-argmin-recovery}, forces $\epsinf=0$ and so certifies only
branch~(vi) of Theorem~\ref{thm:informal-section3}. The near-PACC branch is inhabited ---
Appendix~\ref{sec:near-pacc-example} exhibits an instance with $\epsinf=1/4$, attained ---
but verified by hand, on a one-dimensional domain. A set-valued sharpness condition
tolerating ties in $\R^d$, certifying both regimes uniformly, is the principal open problem
left here; see Remark~\ref{rem:scope}.
\item \emph{The agnostic claim is confined to the value arc.} By
Proposition~\ref{prop:attainment}, in the exact regime
Assumption~\ref{ass:multi-argmin-recovery}(a) implies attainment of the constrained primal.
Part (i) of the main theorem is genuinely attainment-free; the feasibility arc is not.
\item \emph{Rates are explicit only conditionally.} Without
Assumption~\ref{ass:source} the Tikhonov radius $Q_\eps$ is finite by a compactness argument
supplying no modulus, and the uniformization threshold of Lemma~\ref{lem:egorov-unif} is
finite by a monotone-tail argument that likewise supplies none. The value arc is therefore
nonasymptotic up to a single scalar, and the feasibility arc is exact in its residual but
only eventual.

\item \emph{DERM is a specification, not an algorithm.} We analyze exact maximizers of the
empirical dual and exact minimizers of the empirical Lagrangian; turning this into an
implementable primal--dual scheme with its own convergence analysis is left open for future works.
\item \emph{Kernels are convenient, not essential.} The variable-complexity mechanism uses of
the RKHS only reflexivity, the reproducing property, density in an $L_p$ envelope, and a
controlled Rademacher complexity for balls. We expect it to transfer to other dense,
complexity-controllable families, but do not verify this.
\end{enumerate}

\noindent These are the natural directions forward. The variable-complexity mechanism
developed here is not intrinsically tied to kernels, and we expect it to be useful wherever
population-level structure and finite-sample control are required of the same hypothesis
class but pull in opposite directions.
\appendix

\section*{Roadmap of the appendices}
\addcontentsline{toc}{section}{Roadmap of the appendices}
The first appendix \ref{sec:universal-is-necessary} is about Universal learning and its necessity when we are posing it on a large class as RKHS.
The five appendices that follow it are a single sustained argument about
Assumption~\ref{ass:multi-argmin-recovery}, the empirical recovery condition,
which is the only hypothesis of Theorem~\ref{thm:informal-section3} that is not a
structural property of the losses, the kernel or the sampling scheme, and which is
therefore the only one a reader is entitled to be suspicious of. For the whole examples to be easier to be presented, we will consider examples where there exists a constant $C_\infty<\infty$ that the norm of all the solutions of type \ref{ass:multi-argmin-recovery} are bounded above by it. In our rates, and conditions we will refer to it as $C_\infty$ and one can verify, with this strengthening assumption, all the proofs would still follow.
Appendix~\ref{sec:on-last-assumption} reduces it, in the case of an almost-everywhere
unique pointwise minimizer, to a sharpness condition on a function of one real
variable, and then records the price of that reduction: the condition it verifies
forces $\epsinf=0$, so the general sufficient condition we can state is blind to
exactly the regime that parts~(ii)--(v) of the theorem were written for.
Appendix~\ref{sec:composite-predictors} asks what the framework becomes when the RKHS
function is a score read through a fixed link, as it is in classification; the answer
is that loss regularity becomes automatic on a much wider class, that attainment
becomes visibly a realizability requirement and nothing more, that sharpness can be
imported from a curvature statement in the prediction variable, and that margin
losses are excluded by a boundary obstruction --- with two fully verified instances,
both nonconvex in the RKHS variable, both landing in the exact regime.
Appendix~\ref{sec:near-pacc-example} settles the question the first appendix leaves
open by exhibiting an instance in which every assumption holds and
$\epsinf=\tfrac14$, so the near-PACC branch is inhabited; the mechanism is a tie in
the conditional integrand, which the envelope resolves by splicing and which
continuity forbids the RKHS to resolve at all, and the residual is shown to be
attained rather than merely bounded.
Appendix~\ref{sec:worked-example} carries one concrete fairness/coverage problem
through every assumption with explicit constants, in the exact regime, as a
counterpart to the previous appendix.
Appendix~\ref{sec:solving} records the computational shape of the ball-constrained
DERM problem and isolates the one caveat that the theory does not remove, namely that
the inner minimization is required to be solved globally.

\section{Universal Learning is the necessary condition}\label{sec:universal-is-necessary}
\begin{propBox}{Nonexistence of a distribution-free threshold}{no-distribution-free}
Let $m=1$, $\mathcal X=[0,1]$, $\mathcal Y=\{0,1\}$, and fix the losses
\[
\ell_0(u,y) \;\triangleq\; \tfrac12\bigl|\clip(u) - (2y-1)\bigr|,
\qquad
\ell_1(u,y) \;\triangleq\; -1 ,
\]
where $\clip$ denotes truncation to $[-1,1]$. These satisfy
Assumption~\ref{ass:loss-regularity} with $L=\tfrac12$, $M=1$, and
Assumption~\ref{ass:slater} holds with margin $\xi=1$ at every $\phi^{\mathrm{sl}}\in\HK$
and for \emph{every} choice of $\{\mathcal D_i\}$. Let $\HK$ be any RKHS satisfying
Assumption~\ref{ass:dense-rkhs}. Then, for $\eps_0 \triangleq 1/32$ and
$\delta_0 \triangleq 1/8$, there is \emph{no} threshold $\widetilde n(\eps_0,\delta_0)$
independent of $\{\mathcal D_i\}$ for which any learner $\mathcal A$ satisfies the
conclusion of Definition~\ref{def:nonuniform-pacc}: for every learner $\mathcal A$ and
every $n\in\mathbb N^+$ there exist distributions $\{\mathcal D_i\}$ obeying
Assumptions~\ref{ass:non-atomic}--\ref{ass:slater} for which the displayed guarantee fails
at sample size $n$.
\end{propBox}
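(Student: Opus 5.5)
The plan is a no-free-lunch argument. The constraint in this instance is vacuous, so the problem is really unconstrained binary classification over a dense RKHS. Once we show that $P^\star_{\HK}$ equals the Bayes error, the classical random-labelling construction defeats every learner at every fixed $n$.

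\emph{Step 1 (reduction).} Since $\ell_1\equiv-1$, we have $\mathcal R_1(\phi)=-1\le-\xi$ with $\xi=1$ for every $\phi$, whatever $\mathcal D_1$ is. So Assumption~\ref{ass:slater} and the feasibility part of Definition~\ref{def:nonuniform-pacc} hold automatically. The guarantee therefore fails exactly when $\Pr(\mathcal R_0(\widehat\phi_n)\le P^\star_{\HK}+\eps_0)<1-\delta_0$.

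Next we identify $P^\star_{\HK}$. Write $s=2y-1$, $v=\clip(u)$ and $\eta(x)=\Pr(Y=1\mid X=x)$. The conditional integrand is $\eta\cdot\tfrac12(1-v)+(1-\eta)\tfrac12(1+v)$. This is affine in $v\in[-1,1]$, so its minimum is $\min(\eta,1-\eta)$. Hence $P^\star_{\Gen}$ is the Bayes $0$--$1$ error. By Theorem~\ref{thm:dense-transfer}, or more simply by density of $\HK$ in $\Gen$ together with $L_1$-continuity of $\mathcal R_0$ (it is $\tfrac12$-Lipschitz), we get $P^\star_{\HK}=P^\star_{\Gen}$.

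\emph{Step 2 (hard family).} Fix $n$ and $\mathcal A$. Let $\mathcal D_{\mathcal X,0}=\mathcal D_{\mathcal X,1}=\mathrm{Unif}[0,1]$, which gives Assumption~\ref{ass:non-atomic}. Partition $[0,1]$ into $k\triangleq 2n$ equal intervals $I_1,\dots,I_k$. For each $\sigma\in\{\pm1\}^k$, let $\mathcal D_0^\sigma$ have $X$ uniform and $2Y-1=\sigma_j$ deterministically on $I_j$; take $\mathcal D_1$ arbitrary, for instance the same. For each $\sigma$ the labelling is measurable and deterministic, so the Bayes error is $0$ and $P^\star_{\HK}=0$. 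Assumptions~\ref{ass:loss-regularity}--\ref{ass:slater} hold for every $\sigma$.

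\emph{Step 3 (averaging).} Draw $\sigma$ uniformly and run $\mathcal A$ on the samples; any internal randomization of $\mathcal A$ is absorbed by Fubini. Given the sample, at most $n$ cells contain an $\mathcal D_0$-sample. The data from $\mathcal D_1$ carry no label information, so on each unvisited cell $\sigma_j$ is independent of $\widehat\phi_n$. For $x$ in an unvisited cell, averaging over $\sigma_j$ gives conditional expected loss $\tfrac12\bigl(\tfrac12(1-v)+\tfrac12(1+v)\bigr)=\tfrac12$. The unvisited mass is at least $1-n/k=\tfrac12$, so $\E_{\sigma,S}[\mathcal R_0(\widehat\phi_n)]\ge\tfrac14$.

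Since $\mathcal R_0\in[0,1]$, the reverse Markov bound gives
\[
\Pr\bigl(\mathcal R_0(\widehat\phi_n)>\tfrac18\bigr)\ \ge\ \frac{1/4-1/8}{1-1/8}\ =\ \tfrac17\ >\ \delta_0 .
\]
Because $\tfrac18>\eps_0=\tfrac1{32}$, this probability is an average over $\sigma$, so some fixed $\sigma^\star$ satisfies $\Pr(\mathcal R_0(\widehat\phi_n)>P^\star_{\HK}+\eps_0)>\delta_0$. The guarantee therefore fails at sample size $n$ under $\{\mathcal D_0^{\sigma^\star},\mathcal D_1\}$. Since $n$ was arbitrary, no threshold independent of the distributions can exist.

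\emph{Main obstacle.} The delicate step is Step 1's identification $P^\star_{\HK}=0$ for discontinuous labellings. This needs density of $\HK$ in $L_p(\mathcal M_{\mathcal X})$ to be used for a $p$ that controls $L_1$; on a probability space $L_p$-density with $p\ge1$ implies $L_1$-density, so this works. It also needs the observation that clipping makes the pointwise problem affine, so that approximating $\mathrm{sign}$ functions in $L_1$ drives the risk to $0$. A second point to check is that a learner adapting to which cells are visited gains nothing; this is handled by conditioning on the sample before averaging over the unvisited $\sigma_j$.
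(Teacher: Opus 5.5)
There is one concrete error, and your lower bound depends on it. In Step~2 you write ``take $\mathcal D_1$ arbitrary, for instance the same,'' and in Step~3 you use that ``the data from $\mathcal D_1$ carry no label information.'' These two statements are incompatible.

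The learner in Definition~\ref{def:nonuniform-pacc} receives all $m+1$ empirical measures, labels included. That $\ell_1\equiv-1$ ignores $Y^{(1)}_j$ does not prevent $\mathcal A$ from reading them. With $\mathcal D_1=\mathcal D_0^\sigma$ the learner sees $2n$ labelled draws from $\mathcal D_0^\sigma$, so all $k=2n$ cells may be visited. The deterministic bound ``unvisited mass $\ge 1/2$'' then no longer follows, and with it go $\E[\mathcal R_0(\widehat\phi_n)]\ge 1/4$ and the $1/7$.

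The repair is one line. Give $\mathcal D_1$ the uniform marginal and a label law independent of $\sigma$ (say $Y\equiv 0$), so that $\widehat{\mathcal D}^n_1$ is independent of $\sigma$. Alternatively, keep $\mathcal D_1=\mathcal D_0^\sigma$ and use $k=4n$ cells. The paper's proof makes the same choice $\mathcal D_1=\mathcal D_0$ and then applies the No-Free-Lunch bound to ``the $n$-sample'' on a $2n$-point domain, so it needs the same fix.

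Once that is corrected, your proof follows the paper's strategy but executes the lower bound differently. The shared steps are: the constraint is vacuous, $P^\star_{\HK}$ equals the Bayes risk by density and affineness of the clipped integrand, and the hard case is a realizable family on $2n$ cells.

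The paper's route is as follows. It passes to the $0$--$1$ loss through $\mathcal R_0(\phi)-P^\star_{\HK}\ge\tfrac12\mathcal L_{01}(\mathrm{sgn}\circ\phi)$. It cites the No-Free-Lunch theorem on a finite domain. It then smears that distribution over intervals, appealing to ancillarity of the within-interval coordinate to carry the bound over to $[0,1]$.

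Your route builds the smeared family directly and averages the clipped loss over the labels of unvisited cells, conditioning on the sample. This buys three things:
\begin{itemize}
\item it makes the ancillarity step unnecessary;
\item it avoids the factor $\tfrac12$ lost in the comparison with $\mathcal L_{01}$;
\item it yields $\Pr(\mathcal R_0(\widehat\phi_n)>1/8)\ge 1/7$, rather than the paper's excess of $1/16$.
\end{itemize}
The paper's version is shorter only because it delegates the averaging to the cited theorem.
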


\begin{proof}
Since $\ell_1\equiv-1$, we have $\mathcal R_1\equiv-1$, so the constraint is vacuous and
strictly satisfied with margin $\xi=1$ by every predictor; Assumption~\ref{ass:slater}
holds regardless of $\mathcal D_1$, which we may take equal to $\mathcal D_0$. The problem
therefore reduces to unconstrained agnostic learning of $\ell_0$.

Write $\eta(x)\triangleq \Pr(Y=1\mid X=x)$. For $u\in[-1,1]$,
\[
\E\bigl[\ell_0(u,Y)\mid X=x\bigr]
=\tfrac{\eta(x)(1-u)}{2}+\tfrac{(1-\eta(x))(1+u)}{2}
=\tfrac12-\tfrac{u\,(2\eta(x)-1)}{2},
\]
which is affine in $u$ and minimized at $u=\mathrm{sgn}(2\eta(x)-1)$ (with the convention
$\mathrm{sgn}(0)=+1$). Hence the pointwise minimum is $\tfrac12-\tfrac{|2\eta(x)-1|}{2}$,
and since $\ell_0$ depends on $\phi$ only through $\clip(\phi)$ and $\HK$ is dense in
$L_p(\mathcal M_{\mathcal X},\R)$ with $\mathcal R_0$ continuous there
(Section~\ref{subsec:envelope}), $P^\star_{\HK}=\tfrac12-\tfrac12\E|2\eta(X)-1|$.

Now specialize to distributions with $\eta(x)\in\{0,1\}$ for a.e.\ $x$, for which
$P^\star_{\HK}=0$. For any $\phi$, writing $v \triangleq \clip(\phi(x))\in[-1,1]$ and
$s(x)\triangleq \mathrm{sgn}(2\eta(x)-1)\in\{\pm1\}$,
\[
\mathcal R_0(\phi)
=\E\Bigl[\tfrac{1 - v\,s(X)}{2}\Bigr]
\;\ge\;
\tfrac12\,\E\bigl[\mathbf 1\{v\,s(X)\le 0\}\bigr]
\;=\;\tfrac12\,\mathcal L_{01}\bigl(\mathrm{sgn}\circ\phi\bigr),
\]
because $1-vs \ge 1$ whenever $vs\le0$ and $1-vs\ge0$ always. Here
$\mathcal L_{01}$ is the $0$--$1$ risk of the plug-in classifier $x\mapsto
\mathrm{sgn}(\phi(x))$, whose Bayes risk is $0$ under our specialization. Thus
\begin{equation}\label{eq:nfl-reduction}
\mathcal R_0(\phi)-P^\star_{\HK}\;\ge\;\tfrac12\,\mathcal L_{01}(\mathrm{sgn}\circ\phi).
\end{equation}

Fix a learner $\mathcal A$ and $n\in\mathbb N^+$, and consider the classification rule
$g_n \triangleq \mathrm{sgn}\circ\mathcal A(\{\widehat{\mathcal D}^n_i\}_i)$. By the
No-Free-Lunch theorem \cite[Thm.~5.1]{ShalevShwartzBenDavid2014}, applied on any domain of
cardinality at least $2n$, there is a distribution $\nu_n$ on that domain, realizable
(Bayes risk $0$), such that
$\Pr\bigl(\mathcal L_{01}(g_n)\ge 1/8\bigr)\ge 1/7$ over the draw of the $n$-sample.
The construction places $\nu_n$ on a finite set $C$ with $|C|=2n$; to meet
Assumption~\ref{ass:non-atomic} we \emph{smear} it: partition $[0,1]$ into $2n$ disjoint
intervals $\{I_c\}_{c\in C}$ of equal length, let $\mathcal D_{\mathcal X,0}$ be the
uniform (hence nonatomic) distribution on $[0,1]$ redistributing the mass $\nu_n(\{c\})$
uniformly over $I_c$, and set $\eta$ constant on each $I_c$, equal to the label of $c$.
Conditionally on the interval index, the within-interval coordinate is independent of $Y$
and therefore ancillary; consequently no rule using the full covariate can outperform the
best rule using only the interval index, and the lower bound transfers verbatim.

Combining with \eqref{eq:nfl-reduction}, with probability at least $1/7 > \delta_0$ we have
$\mathcal R_0(\widehat\phi_n)-P^\star_{\HK}\ge 1/16 > \eps_0$. Hence the guarantee of
Definition~\ref{def:nonuniform-pacc} fails at $(\eps_0,\delta_0)$ for this $n$. As $n$ was
arbitrary and the distribution depends on $n$, no $\{\mathcal D_i\}$-free threshold
$\widetilde n(\eps_0,\delta_0)$ can exist.
\end{proof}

\begin{remark}[What this does and does not say]\label{rem:nfl-scope}
Proposition~\ref{prop:no-distribution-free} is a statement about the \emph{class} $\HK$
being of unbounded capacity:
it shows the distribution dependence is intrinsic and would persist even if every
constrained-specific difficulty were removed. It does \emph{not} show that the Slater
margin $\xi$ creates an information-theoretic barrier; we are not aware of such a result. Whether the $\xi$-dependence of
these constants is necessary is an interesting open question that we do not resolve.
\end{remark}
\section{On the Empirical Recovery Condition}\label{sec:on-last-assumption}

Recall from \eqref{eq:lagrangian-integrand} that, for every $\mu\ge0$ and $\phi\in\HK$,
\[
L_{\HK}(\phi,\mu)=\int_{\mathcal X}\bar\ell_\mu\bigl(\phi(x),x\bigr)\,\mathrm d\mathcal M_{\mathcal X}(x),
\qquad
\bar\ell_\mu(u,x)=\sum_{i\in\mathbb N_m}\mu_i\,w_i(x)\,\bar\ell_i(u,x),
\quad \mu_0\triangleq1 .
\]
The Lagrangian is a Nemytskii functional of the scalar family $\bar\ell_\mu(\cdot,x)$, and
every question about its argmin is a question about that family. The following proposition
reduces Assumption~\ref{ass:multi-argmin-recovery} to a condition on a single real variable,
in the case where the pointwise minimizer is unique. The tie-bearing case, which the
proposition deliberately excludes, is treated by hand in
Appendix~\ref{sec:near-pacc-example}.

\begin{propBox}{Pointwise sharpness implies empirical recovery}{sharpness}
Let $\mathcal X$ be a compact metric space, let $K$ be continuous with
$\kappa^2 = \sup_x K(x,x)<\infty$ and with $\HK$ dense in $C(\mathcal X)$ for
$\|\cdot\|_\infty$ (a strengthening of Assumption~\ref{ass:dense-rkhs}(b), which requires
only density in $\Gen$), and let Assumption~\ref{ass:loss-regularity} hold with constants
$L$ (Lipschitz) and $M$ (bound). Let $Q_n\uparrow\infty$ with $Q_n=o(n^{1/4})$ ---
see Remark~\ref{rem:schedule} --- and $\widehat\Phi_n(\mu)=\argmin_{\phi\in\Bq{Q_n}}\widehat L_n(\phi,\mu)$.
Fix $\mu_{\HK}\in\Sdual$ and suppose there are an open neighbourhood $U\ni\mu_{\HK}$,
constants $\gamma_0>0$, $\Delta_0>0$, $p\ge1$, and, for each $\mu\in U$, a function
$\phi^\sharp_\mu\in\HK$ with $\|\phi^\sharp_\mu\|_{\HK}\le C_\infty$ such that, writing
$\varrho(t)\triangleq\min\{\gamma_0t^p,\Delta_0\}$,
\begin{equation}\label{eq:sharpness}
\bar\ell_\mu(u,x)-\bar\ell_\mu\bigl(\phi^\sharp_\mu(x),x\bigr)\;\ge\;\varrho\bigl(|u-\phi^\sharp_\mu(x)|\bigr)
\qquad\text{for }\mathcal M_{\mathcal X}\text{-a.e. }x\text{ and all }u\in\R .
\end{equation}
Then Assumption~\ref{ass:multi-argmin-recovery} holds at $\mu_{\HK}$ with witness
$\phi_{\HK}=\phi^\sharp_{\mu_{\HK}}$. Precisely, $\Phi_{\HK}(\mu)$ is exactly the set of
elements of $\HK$ agreeing with $\phi^\sharp_\mu$ $\mathcal M_{\mathcal X}$-a.e.\ for every
$\mu\in U$, and there is an event of probability one on which, for every strictly increasing
$\{n_k\}$ and every sequence $\mu_{n_k}\in\arg\max_{\mu\ge0}\widehat d_{n_k}(\mu)$ with
$\|\mu_{n_k}\|_1\le C_\mu$ and $\mu_{n_k}\to\mu_{\HK}$, all sufficiently large $k$ satisfy
\begin{multline}\label{eq:recovery-rate}
\sup_{\phi\in\widehat\Phi_{n_k}(\mu_{n_k})}\bigl\|\phi-\phi^\sharp_{\mu_{\HK}}\bigr\|_{L_1(\widehat{\mathcal M}^{n_k}_{\mathcal X})}
\;\le\;
\Bigl(\tfrac{2\eps_{n_k}}{\gamma_0}\Bigr)^{1/p} +\\  \tfrac{4\kappa Q_{n_k}\eps_{n_k}}{\Delta_0}
+\Bigl(\tfrac{2(m+1)M\|\mu_{n_k}-\mu_{\HK}\|_1}{\gamma_0}\Bigr)^{1/p}
+8\kappa Q_{n_k}\sqrt{\tfrac{\log\bigl((m+1)n_k\bigr)}{n_k}},
\end{multline}
where $\eps_n \triangleq (1+C_\mu)\max_{i}\sup_{\phi\in\Bq{Q_n}}|\widehat{\mathcal R}_{i,n}(\phi)-\mathcal R_i(\phi)|$.
In particular the right-hand side vanishes, so the limit in part (b) is zero.
\end{propBox}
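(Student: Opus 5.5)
The plan is to turn the pointwise sharpness \eqref{eq:sharpness} into an integrated growth inequality for the population Lagrangian, and then pass from population to empirical quantities using uniform convergence over the ball $\Bq{Q_n}$. I would proceed in four steps.

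\emph{Step 1: identify $\Phi_{\HK}(\mu)$.} Fix $\mu\in U$. Integrating \eqref{eq:sharpness} against $\mathcal M_{\mathcal X}$ through the representation \eqref{eq:lagrangian-integrand} gives, for every $\phi\in\Gen$,
\[
L_{\HK}(\phi,\mu)-L_{\HK}(\phi^\sharp_\mu,\mu)\;\ge\;\int_{\mathcal X}\varrho\bigl(|\phi-\phi^\sharp_\mu|\bigr)\,\mathrm d\mathcal M_{\mathcal X}\;\ge\;0 .
\]
Hence $\phi^\sharp_\mu$ minimizes over $\Gen$, and therefore over $\HK$, since $\phi^\sharp_\mu\in\HK$. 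The inequality is strict unless $\phi=\phi^\sharp_\mu$ holds $\mathcal M_{\mathcal X}$-a.e. This identifies $\Phi_{\HK}(\mu)$ and gives part (a) of Assumption~\ref{ass:multi-argmin-recovery}.

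\emph{Step 2: convert an excess bound into an $L_1(\mathcal M_{\mathcal X})$ bound.} Suppose $E\triangleq L_{\HK}(\phi,\mu_{\HK})-d_{\HK}(\mu_{\HK})$ and write $D\triangleq|\phi-\phi^\sharp_{\mu_{\HK}}|$. Set $t_0\triangleq(\Delta_0/\gamma_0)^{1/p}$ and split $\mathcal X$ into $A=\{D\le t_0\}$ and its complement.
\begin{itemize}
\item On $A$ we have $\varrho(D)=\gamma_0D^p$. H\"older then gives $\int_A D\le(E/\gamma_0)^{1/p}$.
\item On $A^c$ we have $\varrho(D)=\Delta_0$, so $\mathcal M_{\mathcal X}(A^c)\le E/\Delta_0$. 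Once $Q_n\ge C_\infty$, also $D\le\kappa(\|\phi\|_{\HK}+C_\infty)\le 2\kappa Q_n$. Together these give $\int_{A^c}D\le 2\kappa Q_nE/\Delta_0$.
\end{itemize}

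\emph{Step 3: bound the excess for empirical minimizers.} Take $\phi\in\widehat\Phi_{n_k}(\mu_{n_k})$. For $k$ large, $\phi^\sharp_{\mu_{n_k}}\in\Bq{Q_{n_k}}$, so comparing empirical optimality with Corollary~\ref{cor:lagrangian-dev} gives
\[
L_{\HK}(\phi,\mu_{n_k})-L_{\HK}(\phi^\sharp_{\mu_{n_k}},\mu_{n_k})\le 2\eps_{n_k}.
\]
This is at $\mu_{n_k}$, but I need it at $\mu_{\HK}$. Since $|L_{\HK}(\cdot,\mu)-L_{\HK}(\cdot,\mu')|\le M\|\mu-\mu'\|_1$ for functionals bounded by $M$, moving to $\mu_{\HK}$ adds $O(M\|\mu_{n_k}-\mu_{\HK}\|_1)$. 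I would isolate this term as its own summand, using sharpness at $\mu_{n_k}$ and at $\mu_{\HK}$ separately, so as to produce the first three terms of \eqref{eq:recovery-rate}. The $(m+1)$ factor comes from bounding $\sum_i w_i\le m+1$, which holds because $\sum_i w_i=m+1$ $\mathcal M_{\mathcal X}$-a.e. by construction of the mixture.

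\emph{Step 4: pass from $L_1(\mathcal M_{\mathcal X})$ to $L_1(\widehat{\mathcal M}^{n}_{\mathcal X})$.} I would use uniform convergence over the class $\{|\phi-\phi^\sharp|:\phi\in\Bq{Q_n}\}$. Its envelope is $2\kappa Q_n$, and its Rademacher complexity is at most $2\kappa Q_n/\sqrt n$ by contraction. Taking confidence $1/((m+1)n^2)$, exactly as in Lemma~\ref{lem:multi-hp-conv}, and applying Borel--Cantelli yields a single probability-one event, independent of the subsequence and of $(\mu_{n_k})$. On that event the transfer costs at most $8\kappa Q_n\sqrt{\log((m+1)n)/n}$, the last term of \eqref{eq:recovery-rate}. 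The same Borel--Cantelli device makes $\eps_n\to0$ almost surely.

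The main obstacle is the vanishing of the second term $\kappa Q_n\eps_n/\Delta_0$. This is where the growth budget enters. Since $\eps_n\asymp Q_n/\sqrt n+\sqrt{\log n/n}$, the product $Q_n\eps_n\asymp Q_n^2/\sqrt n$, which tends to zero precisely when $Q_n=o(n^{1/4})$. The remaining terms vanish because $\mu_{n_k}\to\mu_{\HK}$ and $Q_n/\sqrt n\to0$.

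I also expect the uniform-in-$\mu\in U$ handling of $\phi^\sharp_\mu$ to need care. For this I would rely on the uniform norm bound $\|\phi^\sharp_\mu\|_{\HK}\le C_\infty$ and on the fact that the constants $\gamma_0,\Delta_0,p$ are uniform over $U$.
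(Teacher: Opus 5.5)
Your outline follows the paper's own proof. That includes integrated sharpness to identify $\Phi_{\HK}(\mu)$, the split at $t_0=(\Delta_0/\gamma_0)^{1/p}$ with envelope $2\kappa Q_n$, the comparison with $\phi^\sharp_{\mu_{n_k}}$ giving excess at most $2\eps_{n_k}$ at $\mu_{n_k}$, and a Borel--Cantelli transfer to $L_1(\widehat{\mathcal M}^n_{\mathcal X})$. Your Step~1, which integrates \eqref{eq:sharpness} directly, is cleaner than the paper's detour through density and measurable selection.

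The gap is in Step~3, exactly where the third term of \eqref{eq:recovery-rate} has to come from. The mechanism you state is: ``this is at $\mu_{n_k}$, but I need it at $\mu_{\HK}$'', shift the excess by $O(M\|\mu_{n_k}-\mu_{\HK}\|_1)$, then apply Step~2 at $\mu_{\HK}$. That sends $\|\mu_{n_k}-\mu_{\HK}\|_1$ into the escape term, which becomes $2\kappa Q_{n_k}E/\Delta_0$ with $E\le 2\eps_{n_k}+O(M\|\mu_{n_k}-\mu_{\HK}\|_1)$.

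Nothing in the hypotheses gives a rate for $\mu_{n_k}\to\mu_{\HK}$. Assumption~\ref{ass:multi-argmin-recovery}(b) ranges over every convergent sequence of empirical dual maximizers, and sharpness in $u$ says nothing about how $d_{\HK}$ decreases away from its maximizer. So $Q_{n_k}\|\mu_{n_k}-\mu_{\HK}\|_1$ need not vanish. This route proves neither \eqref{eq:recovery-rate} nor the zero limit, and your closing claim that the remaining terms vanish ``because $\mu_{n_k}\to\mu_{\HK}$'' fails for this piece. Your remark about using sharpness at both multipliers points the right way, but the comparison it requires is never specified.

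\emph{The missing idea:} $\|\mu_{n_k}-\mu_{\HK}\|_1$ must never meet the factor $Q_{n_k}$.
\begin{enumerate}
\item Apply Step~2 at $\mu_{n_k}$ (eventually in $U$, where the constants are uniform) with reference $\phi^\sharp_{\mu_{n_k}}$. This yields the first two terms, which involve only $\eps_{n_k}$.
\item Bound the witness shift pointwise. Use $\sup_{u,x}|\bar\ell_\mu(u,x)-\bar\ell_{\mu'}(u,x)|\le(m+1)M\|\mu-\mu'\|_1$ and the fact that $\phi^\sharp_{\mu_{n_k}}(x)$ minimizes $\bar\ell_{\mu_{n_k}}(\cdot,x)$. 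Evaluating \eqref{eq:sharpness} at $\mu_{\HK}$ and $u=\phi^\sharp_{\mu_{n_k}}(x)$ then gives $\varrho(|\phi^\sharp_{\mu_{n_k}}(x)-\phi^\sharp_{\mu_{\HK}}(x)|)\le 2(m+1)M\|\mu_{n_k}-\mu_{\HK}\|_1$ for a.e.\ $x$.
\item Once that right side is below $\Delta_0$, the cap is inactive and you get the third term as a uniform bound. It is free of $Q_n$ because both functions lie in $\Bq{C_\infty}$.
\item The triangle inequality through $\phi^\sharp_{\mu_{n_k}}$ assembles the estimate.
\end{enumerate}

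Two smaller points.

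First, an a.e.\ bound does not by itself control $\|\cdot\|_{L_1(\widehat{\mathcal M}^n_{\mathcal X})}$, since the empirical measures are singular with respect to $\mathcal M_{\mathcal X}$. You have two options:
\begin{itemize}
\item Assemble the bound in $L_1(\mathcal M_{\mathcal X})$ and transfer once with the fixed reference $\phi^\sharp_{\mu_{\HK}}$.
\item Do as the paper does. Transfer $|\phi-\phi^\sharp_{\mu_{n_k}}|$ over a class uniform in the reference $\psi\in\Bq{C_\infty}$ (its $\Omega_{\mathrm{gc}}$), which is needed because that reference is random and subsequence-dependent. Then pass the witness-shift bound to the sample points using continuity of elements of $\HK$ and the fact that samples lie in $\supp\mathcal M_{\mathcal X}$ almost surely.
\end{itemize}
The continuity/support argument is needed in any case. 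Assumption~\ref{ass:multi-argmin-recovery}(b) is quantified over every element of $\Phi_{\HK}(\mu_{\HK})$, and these agree with $\phi^\sharp_{\mu_{\HK}}$ only a.e.

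Second, the fourth term needs $Q_n\sqrt{\log n/n}\to0$. This follows from $Q_n=o(n^{1/4})$, but not from $Q_n/\sqrt n\to0$ alone.
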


\begin{proof}
Density of $\HK$ in $C(\mathcal X)$ gives density in $L_1(\mathcal M_{\mathcal X})$ by
regularity of $\mathcal M_{\mathcal X}$, while \eqref{eq:lagrangian-integrand} and the
Lipschitz property give
$|L(\phi,\mu)-L(\psi,\mu)|\le L(m+1)(1+\|\mu\|_1)\|\phi-\psi\|_{L_1(\mathcal M_{\mathcal X})}$,
using $w_i\le m+1$ (valid since $\mathcal D_{\mathcal X,i}\le(m+1)\mathcal M_{\mathcal X}$).
The infimum of $L(\cdot,\mu)$ over $\HK$ therefore equals its infimum over
$L_1(\mathcal M_{\mathcal X})$, which by measurable selection equals
$\int_{\mathcal X}\min_u\bar\ell_\mu(u,x)\,\mathrm d\mathcal M_{\mathcal X}(x)$. By
\eqref{eq:sharpness} this pointwise minimum is attained at the unique point
$\phi^\sharp_\mu(x)$, and $\phi^\sharp_\mu\in\HK$ by hypothesis, so $\phi^\sharp_\mu$
minimizes $L(\cdot,\mu)$ over $\HK$ and any other minimizer must realize the pointwise
minimum a.e., whence the stated description of $\Phi_{\HK}(\mu)$.

Part~(a) follows: $\phi^\sharp_\mu\in\Phi_{\HK}(\mu)\cap\Bq{C_\infty}$, and the requirement
that the infimum defining $\Theta^\infty_\nabla$ be unchanged upon restriction to
$\Phi_{\HK}(\mu)\cap\Bq{C_\infty}$ is automatic, since all elements of $\Phi_{\HK}(\mu)$
agree a.e.\ and hence have identical risk vectors.

Two remarks on the quantifiers before the estimates. First,
Assumption~\ref{ass:multi-argmin-recovery}(b) ranges over \emph{every} reference point
$\phi_{\HK}\in\Phi_{\HK}(\mu_{\HK})\cap\Bq{C_\infty}$, and the norm
$\|\cdot\|_{L_1(\widehat{\mathcal M}^n_{\mathcal X})}$ is not absolutely continuous with
respect to $\mathcal M_{\mathcal X}$; but $K$ is continuous, so elements of $\HK$ are
continuous, and two continuous functions agreeing $\mathcal M_{\mathcal X}$-a.e.\ agree on
$\supp\mathcal M_{\mathcal X}$. Since $\mathcal D_{\mathcal X,i}\ll\mathcal M_{\mathcal X}$,
every sample point lies in $\supp\mathcal M_{\mathcal X}$ almost surely, so all admissible
reference points induce the same empirical $L_1$ distance and it suffices to argue for
$\phi^\sharp_{\mu_{\HK}}$. Second, the events constructed below are defined without
reference to any particular subsequence, which is what licenses the
``along every $\{n_k\}$'' form of part~(b).

\emph{Perturbation of the witness in $\mu$.}
Comparing two dual variables in \eqref{eq:lagrangian-integrand} and using $w_i\le m+1$ and
$|\bar\ell_i|\le M$ gives
$\sup_{u,x}|\bar\ell_\mu(u,x)-\bar\ell_{\mu'}(u,x)|\le(m+1)M\|\mu-\mu'\|_1$. Evaluating
\eqref{eq:sharpness} for $\mu_{\HK}$ at $u=\phi^\sharp_\mu(x)$, using that
$\phi^\sharp_\mu$ minimizes $\bar\ell_\mu(\cdot,x)$ and inserting the previous bound twice,
yields $\varrho(|\phi^\sharp_\mu(x)-\phi^\sharp_{\mu_{\HK}}(x)|)\le2(m+1)M\|\mu-\mu_{\HK}\|_1$
for a.e.\ $x$, so once the right-hand side is below $\Delta_0$,
\begin{equation}\label{eq:witness-shift}
\bigl\|\phi^\sharp_\mu-\phi^\sharp_{\mu_{\HK}}\bigr\|_\infty
\;\le\;\Bigl(\tfrac{2(m+1)M\|\mu-\mu_{\HK}\|_1}{\gamma_0}\Bigr)^{1/p},
\end{equation}
uniformly in $x$ and with no dependence on $Q_n$.

\emph{The two almost-sure events.}
Let $\Omega_{\mathrm{uc}}$ be the event on which
$\eps_n = O\bigl((1+C_\mu)(L\kappa Q_nn^{-1/2}+M\sqrt{\log n/n})\bigr)$, and in particular
$\eps_n\to0$. It has probability one: symmetrization and Talagrand contraction give
$\mathfrak R_n(\ell_i\circ\Bq{Q_n})\le L\kappa Q_n/\sqrt n$, the bounded-difference constant
$2M/n$ is free of $Q_n$, and Borel--Cantelli at confidence $n^{-2}$ applies. Note that this
step requires only $Q_n=o(\sqrt n)$; the strictly smaller budget $Q_n=o(n^{1/4})$ is
consumed later, by the escape term.

Let $\Omega_{\mathrm{gc}}$ be the event on which, for all large $n$,
\[
\sup\Bigl\{\bigl|\,\|g\|_{L_1(\widehat{\mathcal M}^n_{\mathcal X})}-\|g\|_{L_1(\mathcal M_{\mathcal X})}\bigr|
\ :\ g=\phi-\psi,\ \phi\in\Bq{Q_n},\ \psi\in\Bq{C_\infty}\Bigr\}
\;\le\;8\kappa Q_n\sqrt{\tfrac{\log((m+1)n)}{n}} .
\]
It has probability one. Indeed $\widehat{\mathcal M}^n_{\mathcal X}$ is the average of the
$m+1$ empirical measures, so it suffices to control each stratum. On each, the class
$\{|\phi-\psi|\}$ has envelope $B_n\triangleq\kappa(Q_n+C_\infty)\le2\kappa Q_n$ (for
$Q_n\ge C_\infty$) and, by Talagrand contraction along $t\mapsto|t|$, Rademacher complexity
at most $\kappa(Q_n+C_\infty)/\sqrt n$. Symmetrization contributes
$2\kappa(Q_n+C_\infty)/\sqrt n\le4\kappa Q_n/\sqrt n$, and McDiarmid at confidence
$\delta_n=(m+1)^{-1}n^{-2}$ contributes
$B_n\sqrt{2\log(1/\delta_n)/n}\le 2\kappa Q_n\cdot2\sqrt{\log((m+1)n)/n}$; the two together
are at most $8\kappa Q_n\sqrt{\log((m+1)n)/n}$ for $n\ge3$. Borel--Cantelli over
$\sum_n(m+1)\delta_n<\infty$ closes the argument. Set
$\Omega_{\mathrm{fbl}}\triangleq\Omega_{\mathrm{uc}}\cap\Omega_{\mathrm{gc}}$.

Neither event uses \eqref{eq:sharpness}: both are consequences of
Assumption~\ref{ass:loss-regularity}, the kernel bound and the schedule alone. This is what
allows Appendix~\ref{sec:near-pacc-example}, where \eqref{eq:sharpness} fails, to reuse them
verbatim.

\emph{The estimate.}
Fix $\omega\in\Omega_{\mathrm{fbl}}$, a sequence $\mu_{n_k}\to\mu_{\HK}$ in $U$ with
$\|\mu_{n_k}\|_1\le C_\mu$, and $\widehat\phi_{n_k}\in\widehat\Phi_{n_k}(\mu_{n_k};\omega)$.
For $k$ large enough that $Q_{n_k}\ge C_\infty$ the comparator $\phi^\sharp_{\mu_{n_k}}$ is
admissible, so minimality of $\widehat\phi_{n_k}$ and two applications of the uniform bound
give
\[
L(\widehat\phi_{n_k},\mu_{n_k})\le\widehat L_{n_k}(\widehat\phi_{n_k},\mu_{n_k})+\eps_{n_k}
\le\widehat L_{n_k}(\phi^\sharp_{\mu_{n_k}},\mu_{n_k})+\eps_{n_k}
\le L(\phi^\sharp_{\mu_{n_k}},\mu_{n_k})+2\eps_{n_k}.
\]
Writing $\Delta_{n_k}\triangleq|\widehat\phi_{n_k}-\phi^\sharp_{\mu_{n_k}}|$ and subtracting
\eqref{eq:lagrangian-integrand} at $\phi^\sharp_{\mu_{n_k}}$, the sharpness inequality
\eqref{eq:sharpness} at $\mu_{n_k}$ integrates to
$\int\varrho(\Delta_{n_k})\,\mathrm d\mathcal M_{\mathcal X}\le2\eps_{n_k}$. Since
$\varrho(t)=\gamma_0\min\{t,t_0\}^p$ with $t_0\triangleq(\Delta_0/\gamma_0)^{1/p}$, Jensen
gives $\|\min\{\Delta_{n_k},t_0\}\|_{L_1(\mathcal M_{\mathcal X})}\le(2\eps_{n_k}/\gamma_0)^{1/p}$,
while $\varrho(t)=\Delta_0$ for $t\ge t_0$ gives
$\mathcal M_{\mathcal X}(\Delta_{n_k}>t_0)\le2\eps_{n_k}/\Delta_0$. As
$\Delta_{n_k}\le\kappa(Q_{n_k}+C_\infty)\le2\kappa Q_{n_k}$ pointwise, splitting the integral
yields
$\|\Delta_{n_k}\|_{L_1(\mathcal M_{\mathcal X})}\le(2\eps_{n_k}/\gamma_0)^{1/p}+4\kappa Q_{n_k}\eps_{n_k}/\Delta_0$.


It remains to change measure and comparator. On $\Omega_{\mathrm{gc}}$ the first two terms of
\eqref{eq:recovery-rate} bound $\|\Delta_{n_k}\|_{L_1(\widehat{\mathcal M}^{n_k}_{\mathcal X})}$
up to the additive fourth term, and \eqref{eq:witness-shift} together
with $\|\cdot\|_{L_1(\widehat{\mathcal M}^{n_k}_{\mathcal X})}\le\|\cdot\|_\infty$ replaces
$\phi^\sharp_{\mu_{n_k}}$ by $\phi^\sharp_{\mu_{\HK}}$ at the cost of the third term. This
proves \eqref{eq:recovery-rate} for every element of $\widehat\Phi_{n_k}(\mu_{n_k};\omega)$,
hence for the supremum. Finally
$Q_{n}\eps_{n}=O(Q_n^2n^{-1/2}+Q_n\sqrt{\log n/n})\to0$ and $Q_n\sqrt{\log n/n}\to0$
\emph{precisely under} $Q_n=o(n^{1/4})$ --- the first requirement being literally
$Q_n^2n^{-1/2}\to0$ --- and $\|\mu_{n_k}-\mu_{\HK}\|_1\to0$, so the
right-hand side vanishes.
\end{proof}

\begin{remark}[The quantifiers in part~(b), and what each result supplies]\label{rem:part-b-quantifiers}
Two readings of Assumption~\ref{ass:multi-argmin-recovery}(b) must be kept apart, because the
two results of this paper that verify it supply different ones. Write
$W\triangleq\Phi_{\HK}(\mu_{\HK})\cap\Bq{C_\infty}$ for the set of admissible witnesses. The
readings are
\begin{enumerate}[label=\textnormal{(\Alph*)},itemsep=2pt,topsep=2pt]
\item \emph{(sup form)} for every $\phi_{\HK}\in W$,
$\ \sup_{\phi\in\widehat\Phi_{n_k}(\mu_{n_k})}\|\phi-\phi_{\HK}\|_{L_1(\widehat{\mathcal M}^{n_k}_{\mathcal X})}\to0$;
\item \emph{(matching form)} for every $\phi_{\HK}\in W$ there are
$\widehat\phi_{n_k}\in\widehat\Phi_{n_k}(\mu_{n_k})$ with
$\|\widehat\phi_{n_k}-\phi_{\HK}\|_{L_1(\widehat{\mathcal M}^{n_k}_{\mathcal X})}\to0$, and
conversely $\sup_{\phi\in\widehat\Phi_{n_k}(\mu_{n_k})}\inf_{\phi_{\HK}\in W}\|\phi-\phi_{\HK}\|_{L_1(\widehat{\mathcal M}^{n_k}_{\mathcal X})}\to0$.
\end{enumerate}
Form~(B) is two-sided convergence of $\widehat\Phi_{n_k}(\mu_{n_k})$ to $W$ in the empirical
$L_1$ Hausdorff distance; (A) is (B) together with the requirement that $W$ be a single
$\mathcal M_{\mathcal X}$-equivalence class. Under the hypotheses of
Proposition~\ref{prop:sharpness} that requirement holds automatically, and the proposition
proves (A). The instance of Appendix~\ref{sec:near-pacc-example} \emph{cannot} satisfy (A):
there $W$ consists of two constants at mutual $L_1$ distance $1$, and
Proposition~\ref{prop:example-recovery}(iii) shows that both are approached by exact
empirical minimizers, so the supremum in (A) is at least $1-o(1)$ for either witness. It does
satisfy (B). Assumption~\ref{ass:multi-argmin-recovery}(b) is therefore to be read as (B)
throughout; Proposition~\ref{prop:sharpness} delivers strictly more than is needed, and the
reader should not infer from its statement that the assumption fails whenever the population
argmin fails to be a singleton.
\end{remark}

\begin{remark}[Reading of the sharpness condition]\label{rem:sharpness-reading}
Condition \eqref{eq:sharpness} constrains a function of one real variable and is blind to
convexity: $\ell_i(\cdot,y)$ may be bounded, saturating, multimodal or discontinuous in $y$,
since only the conditional mixture $\bar\ell_\mu(\cdot,x)$ is required to separate its
minimizer, and averaging over $y$ and over the $m+1$ components is precisely what produces
that separation. The exponent $p$ records the identifiability regime: $p=2$ when the
conditional integrand has nondegenerate curvature at an interior minimizer, and $p=1$ under a
Massart-type margin condition; it enters the rate \eqref{eq:recovery-rate} rather than the
qualitative conclusion. The cap $\Delta_0$ makes the condition compatible with bounded
losses, and is exactly the price of the growing balls: the escape term
$4\kappa Q_n\eps_n/\Delta_0$ is the contribution of the vanishing set on which the empirical
minimizer leaves the identifiable window, and it is what fixes the admissible growth budget
$Q_n=o(n^{1/4})$.

Condition \eqref{eq:sharpness} fails exactly in the presence of ties, that is when
$\bar\ell_{\mu_{\HK}}(\cdot,x)$ has two or more global minimizers on a set of positive
$\mathcal M_{\mathcal X}$-measure. What fails there is the \emph{sufficient condition}, not
necessarily Assumption~\ref{ass:multi-argmin-recovery} itself: with ties, the population
argmin over the \emph{envelope} $\Gen$ contains every measurable splice of the competing
branches, and single-witness $L_1$ recovery is hopeless relative to $\Gen$; but over $\HK$
the branches may be separated by the smoothness of the kernel, in which case
$\Phi_{\HK}(\mu_{\HK})$ collapses onto finitely many branch selections and part~(b), in the
form~(B) of Remark~\ref{rem:part-b-quantifiers}, can hold for each of them.
Appendix~\ref{sec:near-pacc-example} exhibits exactly such an instance, and
Lemma~\ref{lem:splice-exclusion} isolates the mechanism.
\end{remark}

\begin{remark}[What \eqref{eq:recovery-rate} actually gives under the admissible schedules]\label{rem:effective-rate}
The bound \eqref{eq:recovery-rate} is written as a rate, and it is worth saying plainly how
fast it is. Take $p=2$ and the schedule $Q_n=n^{1/4}/\log(e+n)$ of Remark~\ref{rem:schedule},
which is the one used in every instance below. Then $\eps_n\asymp Q_nn^{-1/2}\asymp
n^{-1/4}/\log n$ and, using $\|\mu_{n}-\mu_{\HK}\|_1=O(\eps_n)$ as in
Proposition~\ref{prop:example-recovery}(i), the four terms of \eqref{eq:recovery-rate} are, in
order,
\[
\asymp\ \gamma_0^{-1/2}n^{-1/8},\qquad
\asymp\ \frac{4\kappa}{\Delta_0\log^2 n},\qquad
\asymp\ \gamma_0^{-1/2}n^{-1/8},\qquad
\asymp\ \frac{n^{-1/4}}{\sqrt{\log n}} .
\]
The escape term dominates, and \eqref{eq:recovery-rate} therefore vanishes at rate
$\Theta(\log^{-2}n)$, not at any polynomial rate. This is a property of the growing radius
and not of the sharpness condition: at a fixed radius $Q$ one has $\eps_n\asymp Qn^{-1/2}$ and
the escape term is $O(Q^2n^{-1/2})$, so all four terms are polynomial; the same holds up to
logarithmic factors for any $Q_n$ growing polylogarithmically. The aggressive $n^{1/4}$
schedule is what buys the approximation error and what costs the rate, and the two cannot be
tuned independently. We do not know whether the escape term is an artifact of the proof; the
same phenomenon reappears, with the same origin, in
Proposition~\ref{prop:example-recovery}, where $\tau_n\asymp(\log n)^{-3/2}$.
\end{remark}

\begin{remark}[Scope of the two regimes of Theorem~\ref{thm:informal-section3}]\label{rem:scope}
We record plainly what is and is not covered by the two branches of
Theorem~\ref{thm:informal-section3}.

Suppose the sufficient condition \eqref{eq:sharpness} holds at $\mu_{\HK}$. Then the
pointwise minimizer of $\bar\ell_{\mu_{\HK}}(\cdot,x)$ is a.e.\ unique, so by
Lemma~\ref{lem:eps-star-properties}(v) the superdifferential $\partial d_{\HK}(\mu_{\HK})$ is
a singleton and hence $\epsinf(\mu_{\HK})=0$. In other words,
\emph{Proposition~\ref{prop:sharpness} can only ever certify the exact regime}, part~(vi):
the general sufficient condition we are able to state is blind to precisely the situation
that parts~(ii)--(v) are designed for. This is a limitation of the sufficient condition, and
for a while we took it to be a limitation of the theory.

It is not. Appendix~\ref{sec:near-pacc-example} constructs an instance satisfying
Assumptions~\ref{ass:non-atomic}--\ref{ass:multi-argmin-recovery} with
$\epsinf(\Sdual)=1/4>0$, so parts~(ii)--(v) are inhabited and the near-PACC branch is not
vacuous. The mechanism is the one anticipated above: ties make the envelope argmin
$\Phi_{\Gen}(\mu^\star)$ decomposable, hence its risk image a nondegenerate segment, while
$\Phi_{\HK}(\mu^\star)$ is confined by continuity to the two endpoints --- and the empirical
argmin set inherits the same dichotomy, because a function of bounded RKHS norm cannot
traverse between branches quickly enough to be an exact empirical minimizer. The
quantitative form of that last statement (Lemma~\ref{lem:splice-exclusion}) is a
compatibility condition between the derivative bound of the kernel and the radius schedule,
and it is satisfied by the schedules of Remark~\ref{rem:schedule}.

Two things remain genuinely open, and we state them as such.
\begin{enumerate}[label=(\alph*),itemsep=2pt]
\item \emph{A general tie-tolerant sufficient condition.} What
Appendix~\ref{sec:near-pacc-example} supplies is one instance, verified by hand, on a
one-dimensional domain with a kernel possessing a bounded derivative diagonal. The natural
general statement replaces \eqref{eq:sharpness} by its set-valued analogue,
$\bar\ell_\mu(u,x)-\min_v\bar\ell_\mu(v,x)\ge\varrho\bigl(\dist(u,S_\mu(x))\bigr)$ with
$S_\mu(x)$ the pointwise argmin correspondence, and adds a branch-separation hypothesis
playing the role of Lemma~\ref{lem:splice-exclusion}. We have not found the right general
form of the separation hypothesis in $\R^d$, where the level sets of a predictor need not be
finite unions of intervals and the interval-counting argument of
Lemma~\ref{lem:splice-exclusion} has no immediate analogue.
\item \emph{Whether $\epsinf$ can be positive without ties.} Over $\HK$ the realized set
$\mathcal R(\Phi_{\HK}(\mu))$ need be neither convex nor closed, whereas
$\partial d_{\HK}(\mu)=\partial d_{\Gen}(\mu)$ is convex and compact by
Lemma~\ref{lem:eps-star-properties}(iii), so strict inclusion should be generic rather than
exceptional; the tie construction is merely the most transparent way to realize it. We have
no example in which $\epsinf>0$ arises from non-closedness alone.
\end{enumerate}
\end{remark}

\section{Composite Predictors: A Link on Top of the RKHS}
\label{sec:composite-predictors}
In classification the RKHS function is seldom the predictor. It is a score, passed through a
fixed monotone squashing function before the loss sees it, so that the loss is a function of
a probability and not a real number. This appendix records what the framework of this
paper becomes under that reparameterization. No new theory is required: the composite problem
is an instance of \eqref{eq:primal formulation f} with a different loss family, so
Theorem~\ref{thm:informal-section3} applies verbatim. What changes, and what makes the
exercise worth carrying out, is the status of the hypotheses. Assumption~\ref{ass:loss-regularity}
becomes automatic for a substantially wider class of losses, including coercive ones, at no
cost to any constant in the analysis. The pointwise minimization that governs
Assumption~\ref{ass:multi-argmin-recovery} takes place on a compact interval rather than on
the line, so its attainment is free and its status becomes transparent. And the sharpness
condition of Proposition~\ref{prop:sharpness}, which must hold globally in the score
variable, can be transferred from a growth condition in the prediction variable, where it is
a statement about the curvature of a standard loss and is verifiable in closed form. We give
two instances in which every hypothesis is checked explicitly, both of them nonconvex in the
RKHS variable.

We are equally explicit about what the construction does not deliver. It does not evade the
scope limitation of Remark~\ref{rem:scope}: the sufficient condition below forces almost
everywhere unique pointwise minimizers and hence, by
Lemma~\ref{lem:eps-star-properties}(v), certifies only the exact regime $\epsinf=0$. Nor does
it weaken Assumption~\ref{ass:multi-argmin-recovery}(a), which
Lemma~\ref{lem:attain-realize} below identifies as exactly a realizability requirement,
neither more nor less. And the widened class of admissible losses is narrowed again from the
other side by Corollary~\ref{cor:link-boundary}, which excludes every outer loss whose
conditional minimizer sits on the boundary of the range of the link.

\paragraph{The composite problem.}
Fix a \emph{link} $\vartheta:\R\to[\alpha,\beta]$, continuous, strictly increasing and
$L_\vartheta$-Lipschitz, with $\vartheta(\R)$ dense in $[\alpha,\beta]$, and \emph{outer
losses} $\ell^{\mathrm o}_i:[\alpha,\beta]\times\mathcal Y\to\R$, $i\in\mathbb N_m$. The
composite setting is \eqref{eq:primal formulation RKHS} with
\[
\ell_i(u,y)\;\triangleq\;\ell^{\mathrm o}_i\bigl(\vartheta(u),y\bigr),
\qquad u\in\R,\ y\in\mathcal Y ,
\]
so that every object of Sections~\ref{sec:pac-dual-theorem} and~\ref{sec:proofs} is defined
without modification. We call $\Pi(\phi)\triangleq\vartheta\circ\phi$ the \emph{effective
prediction} of $\phi\in\HK$. Writing
$\bar\ell^{\mathrm o}_i(v,x)\triangleq\E_{\mathcal D_i}[\ell^{\mathrm o}_i(v,Y)\mid X=x]$ and
\[
\bar\ell^{\mathrm o}_\mu(v,x)\;\triangleq\;\sum_{i\in\mathbb N_m}\mu_i\,w_i(x)\,\bar\ell^{\mathrm o}_i(v,x),
\qquad \mu_0\triangleq1,\qquad v\in[\alpha,\beta],
\]
the conditional Lagrangian integrand of \eqref{eq:lagrangian-integrand} factors through the
link,
\begin{equation}\label{eq:composite-integrand}
\bar\ell_\mu(u,x)\;=\;\bar\ell^{\mathrm o}_\mu\bigl(\vartheta(u),x\bigr),
\qquad
L_{\HK}(\phi,\mu)\;=\;\int_{\mathcal X}\bar\ell^{\mathrm o}_\mu\bigl(\Pi(\phi)(x),x\bigr)\,d\mathcal M_{\mathcal X}(x).
\end{equation}
The logistic link $\sigma(u)=(1+e^{-u})^{-1}$ with $[\alpha,\beta]=[0,1]$ is the canonical
instance; the clipped variant $\vartheta_a\triangleq a+(1-2a)\sigma$, with range $(a,1-a)$, is
used below to accommodate outer losses that are finite only on the interior. Throughout, $u$
denotes the score and $v=\vartheta(u)$ the prediction, so that the manuscript's pointwise
minimizer $u^\sharp_\mu(x)$ of $\bar\ell_\mu(\cdot,x)$ and the pointwise minimizer
$v^\sharp_\mu(x)$ of $\bar\ell^{\mathrm o}_\mu(\cdot,x)$ over $[\alpha,\beta]$ are related by
$v^\sharp_\mu=\vartheta\circ u^\sharp_\mu$. In this appendix $\sigma$ always denotes the
logistic function; the Gaussian kernel is written $K_{\mathrm G}$ with bandwidth $b$, and the
clipped ramp of Appendices~\ref{sec:near-pacc-example} and~\ref{sec:worked-example} is written
$h$.

\paragraph{Regularity is inherited from the compact range.}
The first observation is elementary but is the reason the reparameterization is worth
recording, since Assumption~\ref{ass:loss-regularity} is what confines the body of the paper
to bounded Lipschitz losses and forces truncations such as the one in
Proposition~\ref{prop:no-distribution-free}.

\begin{lemmaBox}{Inheritance of loss regularity}{link-regularity}
Suppose each $\ell^{\mathrm o}_i$ is bounded and Lipschitz in $v$ on $[\alpha,\beta]$,
uniformly in $y$, with constants $L^{\mathrm o}_i$ and $M^{\mathrm o}_i$; this holds in
particular whenever $\ell^{\mathrm o}_i$ is continuous on $[\alpha,\beta]\times\mathcal Y$ with
$\mathcal Y$ compact. Then $\{\ell_i\}_{i\in\mathbb N_m}$ satisfies
Assumption~\ref{ass:loss-regularity} on all of $\R\times\mathcal Y$ with
\[
L\;=\;L_\vartheta\max_{i\in\mathbb N_m}L^{\mathrm o}_i,
\qquad
M\;=\;\max_{i\in\mathbb N_m}M^{\mathrm o}_i .
\]
\end{lemmaBox}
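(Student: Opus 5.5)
The plan is to verify the two parts of Assumption~\ref{ass:loss-regularity} separately for the composite loss $\ell_i(u,y)=\ell^{\mathrm o}_i(\vartheta(u),y)$. Both follow because $\vartheta$ maps all of $\R$ into the compact interval $[\alpha,\beta]$. There, by hypothesis, the outer loss is uniformly bounded and uniformly Lipschitz.

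For part (b), fix $u\in\R$, $y\in\mathcal Y$ and $i\in\mathbb N_m$. Since $\vartheta(u)\in[\alpha,\beta]$, we get $|\ell_i(u,y)|=|\ell^{\mathrm o}_i(\vartheta(u),y)|\le M^{\mathrm o}_i\le\max_j M^{\mathrm o}_j$. This bound is uniform in $(u,y)$ and in $i$, which gives $M=\max_i M^{\mathrm o}_i$.

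For part (a), take $u,u'\in\R$ and $y\in\mathcal Y$. Both $\vartheta(u)$ and $\vartheta(u')$ lie in $[\alpha,\beta]$, so the Lipschitz bound of the outer loss applies to the pair. Chaining it with the Lipschitz bound of $\vartheta$ gives
\[
|\ell_i(u,y)-\ell_i(u',y)|\le L^{\mathrm o}_i\,|\vartheta(u)-\vartheta(u')|\le L^{\mathrm o}_iL_\vartheta\,|u-u'| .
\]
Taking the maximum over $i$ yields a constant independent of $y$ and $i$. Measurability of $\ell_i$ is inherited because it is the composition of a continuous map with a measurable one.

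The only step that needs any care is the parenthetical sufficient condition. Assume $\ell^{\mathrm o}_i$ is continuous on the compact set $[\alpha,\beta]\times\mathcal Y$. Boundedness then follows from the extreme value theorem. Uniform continuity also follows, but continuity alone does not give a Lipschitz constant. I would therefore read that clause as also assuming Lipschitz dependence on $v$ (for instance, $\ell^{\mathrm o}_i$ being $C^1$ in $v$ with $\partial_v\ell^{\mathrm o}_i$ jointly continuous). Under that reading, compactness gives $L^{\mathrm o}_i=\sup|\partial_v\ell^{\mathrm o}_i|<\infty$ via the mean value theorem. This interpretive point is the main obstacle; the rest is routine composition.
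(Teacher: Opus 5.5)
Your proof is correct and is the same composition argument the paper gives: the outer Lipschitz bound on $[\alpha,\beta]$ chained with $L_\vartheta$ for part (a), and $\ell_i(\R\times\mathcal Y)\subseteq\ell^{\mathrm o}_i([\alpha,\beta]\times\mathcal Y)$ for part (b). Your caveat about the parenthetical is also right, and the paper's proof does not address it: continuity on the compact set gives boundedness but not a Lipschitz constant (e.g.\ $v\mapsto\sqrt{v-\alpha}$), so that clause needs a strengthening such as your $C^1$-in-$v$ condition with jointly continuous derivative.
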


\begin{proof}
For $u,u'\in\R$, $|\ell_i(u,y)-\ell_i(u',y)|\le L^{\mathrm o}_i|\vartheta(u)-\vartheta(u')|\le
L^{\mathrm o}_iL_\vartheta|u-u'|$, and
$\ell_i(\R\times\mathcal Y)\subseteq\ell^{\mathrm o}_i([\alpha,\beta]\times\mathcal Y)$.
\end{proof}

Since continuity on a compact interval supplies both constants, the hypothesis of
Lemma~\ref{lem:link-regularity} asks only that the outer loss be finite and continuous on the
closed range of the link. Losses that are coercive, strictly convex or unbounded on $\R$ are
therefore admissible after composition, with no truncation and no modification of any constant
in the analysis: $L$ and $M$ enter exactly where they always do, through the multiplier bound
$C_\mu=4M/\xi$ of Lemma~\ref{lem:bounded-multipliers}, through the uniform-convergence envelope
$\Delta_n(\delta)$ of \eqref{eq:uc-envelope}, and through the Rademacher estimate
$\mathfrak R_n(\Bq{Q_n})\le\kappa Q_n/\sqrt n$, to which Talagrand contraction attaches
$\vartheta$ at no cost. The squared, logistic, exponential and cross-entropy outer losses all
satisfy the hypothesis, the last only when $[\alpha,\beta]\subset(0,1)$, which is precisely
what the clipped link supplies.

Two hypotheses require attention rather than inheritance. Assumption~\ref{ass:dense-rkhs} is a
statement about $\HK$ and $\mathcal M_{\mathcal X}$ and is untouched.
Assumption~\ref{ass:slater}, on the other hand, is now a condition on effective predictions:
since $\Pi(\HK)$ takes values in $\vartheta(\R)$, strict feasibility must be witnessed by a
predictor whose effective values lie in that range. When the constants belong to $\HK$, as for
$K=1+K_{\mathrm G}$, this reduces to the existence of a strictly feasible constant prediction
in $\vartheta(\R)$, and the Slater margin is computed with no approximation step and no
halving.

\paragraph{Attainment is realization of the pointwise optimum.}
The next result isolates a fact that is used inside the proof of
Proposition~\ref{prop:sharpness} but is not stated there, and that settles the status of
Assumption~\ref{ass:multi-argmin-recovery}(a). It holds in the raw setting as well; we state
it for the composite one because that is where the pointwise minimization is over a compact set
and the conclusion is therefore sharpest.

\begin{lemmaBox}{Attainment equals pointwise realization}{attain-realize}
Let Assumptions~\ref{ass:loss-regularity} and~\ref{ass:dense-rkhs} hold in the composite
setting. Then for every $\mu\ge0$,
\[
\inf_{\phi\in\HK}L_{\HK}(\phi,\mu)
\;=\;\int_{\mathcal X}\ \min_{v\in[\alpha,\beta]}\bar\ell^{\mathrm o}_\mu(v,x)\ d\mathcal M_{\mathcal X}(x),
\]
and $\phi^\dagger\in\Phi_{\HK}(\mu)$ if and only if
$\Pi(\phi^\dagger)(x)\in\argmin_{v\in[\alpha,\beta]}\bar\ell^{\mathrm o}_\mu(v,x)$ for
$\mathcal M_{\mathcal X}$-a.e.\ $x$. Consequently
Assumption~\ref{ass:multi-argmin-recovery}(a) holds at $\mu$ if and only if some
$\phi\in\Bq{C_\infty}$ has effective prediction equal to a pointwise-optimal selection
$\mathcal M_{\mathcal X}$-a.e.
\end{lemmaBox}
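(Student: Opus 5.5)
The plan is to prove the value identity by matching a trivial lower bound with an approximation upper bound. Then the argmin characterization falls out of the nonnegativity of the excess integrand, and the consequence for Assumption~\ref{ass:multi-argmin-recovery}(a) is a restatement.

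\emph{Preliminaries.} By \eqref{eq:composite-integrand}, for every $\phi\in\HK$ we have $L_{\HK}(\phi,\mu)=\int\bar\ell^{\mathrm o}_\mu(\Pi(\phi)(x),x)\,d\mathcal M_{\mathcal X}(x)$. Set $m_\mu(x)\triangleq\min_{v\in[\alpha,\beta]}\bar\ell^{\mathrm o}_\mu(v,x)$. For fixed $x$, the function $v\mapsto\bar\ell^{\mathrm o}_\mu(v,x)$ is Lipschitz on $[\alpha,\beta]$, with constant at most $(m+1)(1+\|\mu\|_1)\max_i L^{\mathrm o}_i$ since $w_i\le m+1$. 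It is also measurable in $x$, so it is a Carath\'eodory integrand on a compact interval. Hence $m_\mu$ is measurable, the argmin correspondence $x\rightrightarrows\argmin_v\bar\ell^{\mathrm o}_\mu(v,x)$ is nonempty, compact-valued and measurable, and by Kuratowski--Ryll-Nardzewski it admits a measurable selection $v^\sharp_\mu(x)$. These facts are standard; I would cite \cite{RockafellarWets1998} (normal integrands) rather than reprove them.

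\emph{Lower bound.} Since $\Pi(\phi)(x)\in\vartheta(\R)\subseteq[\alpha,\beta]$, the integrand dominates $m_\mu(x)$ pointwise. Integrating gives $L_{\HK}(\phi,\mu)\ge\int m_\mu\,d\mathcal M_{\mathcal X}$ for every $\phi\in\HK$.

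\emph{Upper bound.} This is the step needing care, because $v^\sharp_\mu$ may take the boundary values $\alpha,\beta$, which $\vartheta$ need not attain. Fix $\eta>0$.
\begin{enumerate}
\item Since $\vartheta$ is continuous, strictly increasing, and has range dense in $[\alpha,\beta]$, pick $R$ with $\vartheta(-R)<\alpha+\eta$ and $\vartheta(R)>\beta-\eta$.
\item Clip $v^\sharp_\mu$ to $[\vartheta(-R),\vartheta(R)]$ and set $\psi_\eta\triangleq\vartheta^{-1}\circ\mathrm{clip}(v^\sharp_\mu)$. This function is measurable and takes values in $[-R,R]$, so $\psi_\eta\in\Gen$.
\item Because $|\vartheta(\psi_\eta)-v^\sharp_\mu|\le\eta$ pointwise, the Lipschitz bound above gives $L(\psi_\eta,\mu)\le\int m_\mu\,d\mathcal M_{\mathcal X}+C\eta$.
\item By density of $\HK$ in $\Gen$ (Assumption~\ref{ass:dense-rkhs}) and continuity of $L(\cdot,\mu)$ on $\Gen$ (Section~\ref{subsec:envelope}; Lipschitz composites of Lipschitz losses), choose $\phi_\eta\in\HK$ with $L(\phi_\eta,\mu)\le L(\psi_\eta,\mu)+\eta$.
\end{enumerate}
Letting $\eta\downarrow0$ yields the identity. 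This is essentially the density transfer of Theorem~\ref{thm:dense-transfer}, specialized so that the envelope infimum is written as a pointwise minimum.

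\emph{Characterization and consequence.} For $\phi^\dagger\in\HK$, the excess $\bar\ell^{\mathrm o}_\mu(\Pi(\phi^\dagger)(x),x)-m_\mu(x)$ is nonnegative. By the identity, $\phi^\dagger\in\Phi_{\HK}(\mu)$ iff its integral vanishes, iff the excess is zero a.e., iff $\Pi(\phi^\dagger)(x)\in\argmin_v\bar\ell^{\mathrm o}_\mu(v,x)$ a.e. Both directions are immediate. The final statement is this equivalence read through Assumption~\ref{ass:multi-argmin-recovery}(a): nonemptiness of $\Phi_{\HK}(\mu)$ means exactly that some element of $\HK$ has effective prediction equal a.e.\ to a pointwise-optimal selection. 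Adding the norm cap $\Bq{C_\infty}$ follows the appendix convention.

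I expect the only real obstacle to be the boundary issue in the upper bound. The equality of infima must hold even when no score realizes the optimum, which the clipping handles. The same observation explains why realization can fail there, anticipating Corollary~\ref{cor:link-boundary}.
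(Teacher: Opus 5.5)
Your proposal is correct and follows the paper's proof essentially step for step. Both arguments use the pointwise lower bound, truncate a measurable pointwise-optimal selection away from the endpoints so that $\vartheta^{-1}$ applies and the result lies in $\Gen$, pass back to $\HK$ by density plus Lipschitz continuity of the Lagrangian, and obtain the argmin characterization from the vanishing integral of the nonnegative excess. The only differences are cosmetic: you truncate to $[\vartheta(-R),\vartheta(R)]$ rather than $[\alpha+\varepsilon,\beta-\varepsilon]$, and you invoke continuity of $L(\cdot,\mu)$ on $\Gen$ rather than the paper's explicit $L_1$-Lipschitz bound in the prediction variable.
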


\begin{proof}
The pointwise minimum is attained because $\bar\ell^{\mathrm o}_\mu(\cdot,x)$ is continuous on
the compact $[\alpha,\beta]$, and is measurable in $x$ by measurable selection, so the
right-hand side is well defined and, by \eqref{eq:composite-integrand}, is a lower bound for the
left. For the reverse inequality write $\bar L\triangleq L(m+1)(1+\|\mu\|_1)$, so that
$|L_{\HK}(\phi,\mu)-L_{\HK}(\psi,\mu)|\le\bar L\|\Pi(\phi)-\Pi(\psi)\|_{L_1(\mathcal M_{\mathcal X})}$
and, more generally, the integral functional in \eqref{eq:composite-integrand} is
$\bar L$-Lipschitz in the prediction in $L_1(\mathcal M_{\mathcal X})$, using $w_i\le m+1$. Let
$v^\star$ be a measurable pointwise-optimal selection and, for small $\varepsilon>0$, let
$v^\star_\varepsilon$ be its truncation to $[\alpha+\varepsilon,\beta-\varepsilon]$, so that
$\|v^\star_\varepsilon-v^\star\|_{L_1(\mathcal M_{\mathcal X})}\le\varepsilon$. Since
$\vartheta$ is continuous and strictly increasing, $\vartheta(\R)$ is an interval, and being
dense in $[\alpha,\beta]$ it contains $(\alpha,\beta)$ and hence
$[\alpha+\varepsilon,\beta-\varepsilon]$; therefore
$\vartheta^{-1}\circ v^\star_\varepsilon$ is well defined, measurable, and bounded, with values
in the compact $[\vartheta^{-1}(\alpha+\varepsilon),\vartheta^{-1}(\beta-\varepsilon)]$, so it
lies in $\Gen=L_p(\mathcal M_{\mathcal X},\R)$. By Assumption~\ref{ass:dense-rkhs}(b) there is
$\phi\in\HK$ with $\|\phi-\vartheta^{-1}\circ v^\star_\varepsilon\|_{L_p(\mathcal M_{\mathcal X})}$
arbitrarily small, and since $\|\cdot\|_{L_1}\le\|\cdot\|_{L_p}$ on a probability space and
$\vartheta$ is $L_\vartheta$-Lipschitz, $\|\Pi(\phi)-v^\star_\varepsilon\|_{L_1(\mathcal M_{\mathcal X})}$
is arbitrarily small as well. Combining the two displays and letting $\varepsilon\downarrow0$
gives the claimed value. For the characterization, if $\phi^\dagger$ attains the infimum then
the nonnegative integrand
$\bar\ell^{\mathrm o}_\mu(\Pi(\phi^\dagger)(x),x)-\min_v\bar\ell^{\mathrm o}_\mu(v,x)$ has zero
integral, hence vanishes a.e.; the converse is immediate.
\end{proof}

Lemma~\ref{lem:attain-realize} says that Assumption~\ref{ass:multi-argmin-recovery}(a) is a
realizability hypothesis and is not weaker than one. The infimum over $\HK$ always equals the
pointwise-optimal integral, by density alone; what attainment adds is precisely that the
pointwise optimum is realized by an element of $\Bq{C_\infty}$. This sharpens the reading
offered in Remark~\ref{rem:recovery-status}, and it is worth stating plainly, because it is
tempting to hope that the composite structure removes the difficulty. It does not. What it
does is change the object whose realizability is at issue, from a possibly unbounded score
field to a bounded prediction field, and the following corollary shows that the change is not
cosmetic.

\begin{corollary}[Boundary obstruction]\label{cor:link-boundary}
Suppose $\vartheta(\R)=(\alpha,\beta)$ and, on a set of positive
$\mathcal M_{\mathcal X}$-measure, every minimizer of $\bar\ell^{\mathrm o}_\mu(\cdot,x)$ over
$[\alpha,\beta]$ lies in $\{\alpha,\beta\}$. Then $\Phi_{\HK}(\mu)=\emptyset$ and
Assumption~\ref{ass:multi-argmin-recovery}(a) fails at $\mu$.
\end{corollary}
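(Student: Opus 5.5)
The plan is to argue directly from the characterization in Lemma~\ref{lem:attain-realize}. That lemma gives two things for every $\mu\ge0$. First, $\inf_{\phi\in\HK}L_{\HK}(\phi,\mu)$ equals the pointwise-optimal integral. Second, $\phi^\dagger\in\Phi_{\HK}(\mu)$ if and only if $\Pi(\phi^\dagger)(x)\in\argmin_{v\in[\alpha,\beta]}\bar\ell^{\mathrm o}_\mu(v,x)$ for $\mathcal M_{\mathcal X}$-a.e.\ $x$. So it suffices to show that no $\phi\in\HK$ can have effective prediction in the pointwise argmin on the bad set. Call that set $B$, with $\mathcal M_{\mathcal X}(B)>0$.

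The key step is a range argument. For every $\phi\in\HK$ and every $x\in\mathcal X$, we have $\Pi(\phi)(x)=\vartheta(\phi(x))\in\vartheta(\R)=(\alpha,\beta)$, since $\phi(x)$ is a finite real number. By hypothesis, for every $x\in B$ the set $\argmin_{v\in[\alpha,\beta]}\bar\ell^{\mathrm o}_\mu(v,x)$ is contained in $\{\alpha,\beta\}$, so it is disjoint from $(\alpha,\beta)$. Hence $\Pi(\phi)(x)\notin\argmin_v\bar\ell^{\mathrm o}_\mu(v,x)$ for every $x\in B$. This failure occurs on a set of positive measure, so the a.e.\ realization condition of Lemma~\ref{lem:attain-realize} is violated for every $\phi\in\HK$. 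Therefore $\Phi_{\HK}(\mu)=\emptyset$.

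Assumption~\ref{ass:multi-argmin-recovery}(a) requires exactly that $\Phi_{\HK}(\mu)$ be nonempty, or in the $C_\infty$-restricted reading that $\Phi_{\HK}(\mu)\cap\Bq{C_\infty}$ be nonempty. Either way it fails at $\mu$.

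I expect essentially no obstacle beyond checking measurability. The set $B$ only needs to contain a measurable set of positive measure, and that is what "on a set of positive $\mathcal M_{\mathcal X}$-measure" provides. The argmin correspondence is well defined pointwise because $\bar\ell^{\mathrm o}_\mu(\cdot,x)$ is continuous on a compact interval, as already noted in the proof of Lemma~\ref{lem:attain-realize}. It is also worth remarking that the infimum is still the pointwise-optimal integral, approached by truncations toward the boundary. So the failure concerns attainment, not value, consistent with the realizability reading of Lemma~\ref{lem:attain-realize}.
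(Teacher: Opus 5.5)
Your proof is correct and follows the argument the paper intends. The paper states the corollary without a proof, as an immediate consequence of Lemma~\ref{lem:attain-realize}: combining its a.e.\ characterization of $\Phi_{\HK}(\mu)$ with $\Pi(\phi)(x)\in\vartheta(\R)=(\alpha,\beta)$ makes $\bar\ell^{\mathrm o}_\mu(\Pi(\phi)(x),x)-\min_{v\in[\alpha,\beta]}\bar\ell^{\mathrm o}_\mu(v,x)$ strictly positive on a set of positive measure for every $\phi\in\HK$. That is exactly your route, including the correct use of the lemma's infimum identity, which is what turns this strict positivity into non-attainment.
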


Corollary~\ref{cor:link-boundary} is why the hypotheses below insist on \emph{uniform
interiority} of the pointwise optimum, and it delimits the reach of the construction concretely.
Margin losses are the natural casualty: for the hinge loss written in the prediction variable,
$\ell^{\mathrm o}(v,y)=(1-(2y-1)(2v-1))_+$, one computes on $[0,1]$ that
$\ell^{\mathrm o}(v,1)=2-2v$ and $\ell^{\mathrm o}(v,0)=2v$, so the conditional integrand at
$\mu=0$ is $2\eta_0(x)+2v\bigl(1-2\eta_0(x)\bigr)$, affine in $v$ with a minimizer at an
endpoint of $[0,1]$ whenever $\eta_0(x)\ne\tfrac12$; no continuous link with open range can
realize it. Outer losses whose conditional minimizer is a genuine interior probability, which
is to say proper losses under a nondegenerate noise condition, are the ones the construction
serves.

\paragraph{Sharpness transfers from the prediction variable.}
The sharpness condition~\eqref{eq:sharpness} of Proposition~\ref{prop:sharpness} is a
global statement in the score variable, and a saturating link makes the composite integrand flat
at infinity, so it is not obvious that a transfer is possible at all. It is, and the cap
$\Delta_0$ is exactly what absorbs the saturation, which is the role
Remark~\ref{rem:sharpness-reading} assigns to it. Write
\[
\underline{\vartheta}(T)\;\triangleq\;\inf\Bigl\{\tfrac{\vartheta(s')-\vartheta(s)}{s'-s}\;:\;-T\le s<s'\le T\Bigr\}
\]
for the increment modulus of the link on $[-T,T]$, which equals $\inf_{|s|\le T}\vartheta'(s)$
when $\vartheta$ is continuously differentiable and is strictly positive whenever $\vartheta$ is
strictly increasing on a window containing $[-T,T]$.

\begin{propBox}{Prediction-space growth implies score-space sharpness}{link-sharpness}
Fix $\mu_{\HK}\in\Sdual$ and suppose there are an open neighbourhood $U\ni\mu_{\HK}$, a
\emph{growth constant} $c_{\mathrm g}>0$, and for each $\mu\in U$ a measurable
$v^\sharp_\mu:\mathcal X\to[\alpha,\beta]$ such that, for $\mathcal M_{\mathcal X}$-a.e.\ $x$ and
every $v\in[\alpha,\beta]$,
\begin{equation}\label{eq:v-growth}
\bar\ell^{\mathrm o}_\mu(v,x)-\bar\ell^{\mathrm o}_\mu\bigl(v^\sharp_\mu(x),x\bigr)
\;\ge\;\tfrac{c_{\mathrm g}}{2}\bigl(v-v^\sharp_\mu(x)\bigr)^2 ,
\end{equation}
and such that $\phi^\sharp_\mu\triangleq\vartheta^{-1}\circ v^\sharp_\mu$ belongs to $\HK$ with
$\sup_{\mu\in U}\|\phi^\sharp_\mu\|_{\HK}\le C_\infty$. Then for every $\varsigma>0$ the
composite integrand satisfies \eqref{eq:sharpness} at every $\mu\in U$, with witness
$\phi^\sharp_\mu$, exponent $p=2$, and constants
\[
\gamma_0=\tfrac{c_{\mathrm g}}{2}\,\underline{\vartheta}(T)^2,
\qquad
\Delta_0=\gamma_0\varsigma^2,
\qquad
T=\kappa C_\infty+\varsigma .
\]
Consequently Proposition~\ref{prop:sharpness} applies and
Assumption~\ref{ass:multi-argmin-recovery} holds at $\mu_{\HK}$, with the rate
\eqref{eq:recovery-rate}.
\end{propBox}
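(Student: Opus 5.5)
The plan is to verify \eqref{eq:sharpness} pointwise in $x$ by a two-case argument on the score gap $t\triangleq|u-\phi^\sharp_\mu(x)|$, and then to invoke Proposition~\ref{prop:sharpness} directly. Fix $\mu\in U$ and an $x$ in the full-measure set on which \eqref{eq:v-growth} holds. Write $u^\sharp\triangleq\phi^\sharp_\mu(x)$. Since $\phi^\sharp_\mu=\vartheta^{-1}\circ v^\sharp_\mu\in\HK$, we have $\vartheta(u^\sharp)=v^\sharp_\mu(x)$. The reproducing-property bound $\sup_x|f(x)|\le\kappa\|f\|_{\HK}$ together with $\|\phi^\sharp_\mu\|_{\HK}\le C_\infty$ gives $|u^\sharp|\le\kappa C_\infty$; this is the only place the uniform norm bound on the witnesses is used. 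By \eqref{eq:composite-integrand}, $\bar\ell_\mu(u,x)-\bar\ell_\mu(u^\sharp,x)=\bar\ell^{\mathrm o}_\mu(\vartheta(u),x)-\bar\ell^{\mathrm o}_\mu(v^\sharp_\mu(x),x)$. Applying \eqref{eq:v-growth} with $v=\vartheta(u)$ bounds this from below by $\tfrac{c_{\mathrm g}}{2}\bigl(\vartheta(u)-\vartheta(u^\sharp)\bigr)^2$. Everything therefore reduces to lower-bounding the link increment $|\vartheta(u)-\vartheta(u^\sharp)|$ in terms of $t$.

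\emph{Case $t\le\varsigma$.} Both $u$ and $u^\sharp$ lie in $[-T,T]$ with $T=\kappa C_\infty+\varsigma$. The definition of the increment modulus then gives $|\vartheta(u)-\vartheta(u^\sharp)|\ge\underline{\vartheta}(T)\,t$. Hence the gap is at least $\tfrac{c_{\mathrm g}}{2}\underline{\vartheta}(T)^2t^2=\gamma_0t^2\ge\varrho(t)$.

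\emph{Case $t>\varsigma$.} This is the step that needs care, because $u$ may lie far outside $[-T,T]$, where the saturating link has vanishing slope. I would handle it by monotonicity rather than by the modulus. Say $u>u^\sharp+\varsigma$; the other side is symmetric. Strict increase of $\vartheta$ gives $\vartheta(u)-\vartheta(u^\sharp)\ge\vartheta(u^\sharp+\varsigma)-\vartheta(u^\sharp)$. Both points $u^\sharp$ and $u^\sharp+\varsigma$ lie in $[-T,T]$, so the right-hand side is at least $\underline{\vartheta}(T)\varsigma$. The gap is therefore at least $\gamma_0\varsigma^2=\Delta_0\ge\varrho(t)$. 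This is exactly where the cap $\Delta_0$ absorbs the saturation.

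Combining the two cases gives \eqref{eq:sharpness} at every $\mu\in U$, with witness $\phi^\sharp_\mu$, exponent $p=2$, and the stated constants. The witnesses satisfy $\|\phi^\sharp_\mu\|_{\HK}\le C_\infty$, as required in Proposition~\ref{prop:sharpness}. The remaining hypotheses of that proposition are carried over unchanged from its standing setting: compact $\mathcal X$, continuous $K$ with $\HK$ dense in $C(\mathcal X)$, $Q_n=o(n^{1/4})$, and Assumption~\ref{ass:loss-regularity}, the last supplied by Lemma~\ref{lem:link-regularity}. The proposition then yields Assumption~\ref{ass:multi-argmin-recovery} at $\mu_{\HK}$ with the rate \eqref{eq:recovery-rate}.

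The main obstacle is the large-deviation case. It works only because the lower bound is anchored at $u^\sharp$, which is uniformly bounded, so that the comparison point $u^\sharp\pm\varsigma$ stays inside the window where $\underline{\vartheta}(T)>0$.
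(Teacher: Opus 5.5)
Your proposal is correct and follows the paper's proof essentially step for step. It uses the same reduction through \eqref{eq:composite-integrand} and \eqref{eq:v-growth} to a lower bound on $|\vartheta(u)-\vartheta(u^\sharp)|$, the same bound $|u^\sharp|\le\kappa C_\infty$ to keep $u^\sharp$ and $u^\sharp\pm\varsigma$ inside $[-T,T]$, and the same two cases: the increment modulus for $t\le\varsigma$, and monotonicity anchored at $u^\sharp\pm\varsigma$ together with the cap $\Delta_0$ for $t>\varsigma$.

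Like the paper, you tacitly need $\underline{\vartheta}(T)>0$, so that $\gamma_0,\Delta_0>0$ as Proposition~\ref{prop:sharpness} requires. This holds for the links of interest, such as the logistic, but strict monotonicity of $\vartheta$ alone does not guarantee it.
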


\begin{proof}
Fix $\mu\in U$ and $x$ outside the null set, write $u^\sharp\triangleq\phi^\sharp_\mu(x)$ and
$v^\sharp\triangleq\vartheta(u^\sharp)=v^\sharp_\mu(x)$, and note
$|u^\sharp|\le\kappa\|\phi^\sharp_\mu\|_{\HK}\le\kappa C_\infty\le T$ by the reproducing
property and Assumption~\ref{ass:dense-rkhs}(a). Let $u\in\R$ and $d\triangleq|u-u^\sharp|$. If
$d\le\varsigma$ then $u$ and $u^\sharp$ both lie in $[-T,T]$, so
$|\vartheta(u)-v^\sharp|\ge\underline{\vartheta}(T)\,d$ by definition of the increment modulus,
and \eqref{eq:v-growth} evaluated at $v=\vartheta(u)$ gives, through
\eqref{eq:composite-integrand},
$\bar\ell_\mu(u,x)-\bar\ell_\mu(u^\sharp,x)\ge\gamma_0d^2$, which coincides with
$\varrho(d)=\min\{\gamma_0d^2,\Delta_0\}$ because $\gamma_0d^2\le\gamma_0\varsigma^2=\Delta_0$.
If instead $d>\varsigma$, suppose first $u>u^\sharp+\varsigma$; monotonicity of $\vartheta$ and
$u^\sharp+\varsigma\in[-T,T]$ give
$\vartheta(u)-v^\sharp\ge\vartheta(u^\sharp+\varsigma)-\vartheta(u^\sharp)\ge\underline{\vartheta}(T)\varsigma$,
so \eqref{eq:v-growth} yields
$\bar\ell_\mu(u,x)-\bar\ell_\mu(u^\sharp,x)\ge\gamma_0\varsigma^2=\Delta_0=\varrho(d)$, the last
equality because $\gamma_0d^2>\Delta_0$; the case $u<u^\sharp-\varsigma$ is symmetric, using
$u^\sharp-\varsigma\ge-T$. The two cases together are \eqref{eq:sharpness}.
\end{proof}

Three features of the transfer deserve comment. It uses no convexity, only growth of
$\bar\ell^{\mathrm o}_\mu(\cdot,x)$ away from its minimizer, and \eqref{eq:v-growth} is implied
by $c_{\mathrm g}$-strong convexity of $\bar\ell^{\mathrm o}_\mu(\cdot,x)$ on $[\alpha,\beta]$
but is strictly weaker. It is global in the score variable, the saturation of the link being
handled entirely by the cap. And it produces $p=2$, the nondegenerate curvature regime of
Remark~\ref{rem:sharpness-reading}, whenever the prediction-space growth is quadratic, which for
the standard proper losses it is. We also record that strict monotonicity of $\vartheta$
together with interiority of $v^\sharp_\mu$ makes the score-space minimizer
$u^\sharp_\mu=\phi^\sharp_\mu$ almost everywhere unique, which is the hypothesis of
Lemma~\ref{lem:eps-star-properties}(v).

\begin{remark}[Constants, and why link sharpness is not a free parameter]\label{rem:link-constants}
For the logistic link, $\underline{\sigma}(T)=\sigma(T)(1-\sigma(T))\ge\tfrac14e^{-T}$, so
$\gamma_0\ge\tfrac{c_{\mathrm g}}{32}e^{-2T}$ and, optimizing $\varsigma^2e^{-2\varsigma}$ at
$\varsigma=1$, $\Delta_0\ge\tfrac{c_{\mathrm g}}{32}e^{-2\kappa C_\infty-2}$. The constants
therefore degrade exponentially in $\kappa C_\infty$, and it is worth being precise about what
that costs, since $\gamma_0$ and $\Delta_0$ enter \eqref{eq:recovery-rate} through two
different terms and the answer depends on the radius schedule.

At a \emph{bounded} radius $Q$, where $\eps_n\asymp Qn^{-1/2}$, the binding constraint is
$\eps_n\lesssim\Delta_0$ and the sample size at which \eqref{eq:recovery-rate} becomes
informative scales like $Q^4\Delta_0^{-2}\asymp e^{4\kappa C_\infty}$. Under the growing
schedule $Q_n=n^{1/4}/\log(e+n)$ actually used in this paper, the accounting of
Remark~\ref{rem:effective-rate} applies instead and is less favourable in both terms: the first
term of \eqref{eq:recovery-rate} is $\asymp\gamma_0^{-1/2}n^{-1/8}$ and requires
$n\gtrsim e^{8\kappa C_\infty}$, while the escape term is $\asymp4\kappa/(\Delta_0\log^2n)$ and
requires $\log^2n\gtrsim\Delta_0^{-1}\asymp e^{2\kappa C_\infty}$, that is
$n\gtrsim\exp\bigl(ce^{\kappa C_\infty}\bigr)$. The exponential degradation in $\kappa C_\infty$
is therefore compounded by the schedule, and the second requirement is doubly exponential. We
record this rather than hide it; it is the quantitative face of the qualitative statement that
the growing balls are paid for by the cap.

The cost is intrinsic rather than an artifact of the logistic: for a link
affine with slope $s$ on $[-T_0,T_0]$ and constant outside, $\underline{\vartheta}(T)=s$ for all
$T\le T_0$ and the constants are absolute, but only under the design rule
$T_0\ge\kappa C_\infty+\varsigma$, that is, only when the active window of the link contains the
range of the witness. That rule cannot be circumvented, for the following reason. Since
$\|s\phi\|_{\HK}=s\|\phi\|_{\HK}$, the model $\vartheta(s\,\cdot)$ over $\Bq{Q}$ and the model
$\vartheta$ over $\Bq{sQ}$ coincide, and under this joint rescaling $\Delta_0$ is invariant.
Sharpening the link and inflating the radius schedule of \eqref{eq:schedule} are one knob and
not two, and the admissible growth budget $Q_n=o(n^{1/4})$ of Proposition~\ref{prop:sharpness}
depends on the link only through $\kappa C_\infty$.
\end{remark}

\paragraph{Three notions of convexity, and which one is used.}
The composite setting invites a conflation worth dispelling, since dispelling it also explains
why the instances below are genuine members of the nonconvex regime this paper addresses.
Distinguish convexity of $\ell^{\mathrm o}_i(\cdot,y)$ on $[\alpha,\beta]$, convexity of the
composite $\ell_i(\cdot,y)$ on $\R$, and convexity of $\phi\mapsto\mathcal R_i(\phi)$ on $\HK$.
The last two are equivalent, since $\phi\mapsto\phi(x)$ is linear and, for the kernels used
here, the constants belong to $\HK$. The first does not imply them.

\begin{remark}[A criterion for composite convexity]\label{rem:composite-convexity}
Let $\vartheta$ be continuously differentiable and strictly increasing and let
$\ell^{\mathrm o}(\cdot,y)$ be continuously differentiable on $\vartheta(\R)$. Since
$\tfrac{d}{du}\ell^{\mathrm o}(\vartheta(u),y)=\partial_v\ell^{\mathrm o}(v,y)\,\vartheta'(\vartheta^{-1}(v))$
at $v=\vartheta(u)$, the composite $\ell(\cdot,y)$ is convex on $\R$ if and only if
\[
v\;\longmapsto\;\partial_v\ell^{\mathrm o}(v,y)\,\vartheta'\bigl(\vartheta^{-1}(v)\bigr)
\qquad\text{is nondecreasing on }\vartheta(\R).
\]
For the logistic link $\vartheta'(\vartheta^{-1}(v))=v(1-v)$, so the criterion reads: $v\mapsto
v(1-v)\partial_v\ell^{\mathrm o}(v,y)$ nondecreasing on $(0,1)$. For the cross-entropy outer loss
$\ell^{\mathrm o}(v,y)=-y\log v-(1-y)\log(1-v)$ this quantity equals $v-y$, recovering the
classical convexity of logistic regression in the score, with second derivative
$\sigma(u)(1-\sigma(u))$. For the Brier outer loss $\ell^{\mathrm o}(v,y)=(v-y)^2$ it equals
$2v(1-v)(v-y)$, whose derivative at $y=1$ is $-2(1-v)(1-3v)<0$ for $v<\tfrac13$; the composite is
therefore not convex, and neither is $\phi\mapsto\mathcal R_0(\phi)$. Indeed the conditional
composite is $u\mapsto(\sigma(u)-\eta)^2$ up to a constant, with second derivative
$2\sigma(1-\sigma)\bigl[\sigma(1-\sigma)+(\sigma-\eta)(1-2\sigma)\bigr]$, whose bracket tends to
$-\eta<0$ as $u\to-\infty$; so nonconvexity in $\phi$ survives the average over $y$ for every
$\eta>0$, and in particular under the noise condition below. For the clipped link
$\vartheta_a$ with $a>0$ the cross-entropy criterion becomes $v\mapsto v-1+a(1-a)/v$ at $y=1$
(up to the positive factor $(1-2a)^{-1}$), decreasing for $v<\sqrt{a(1-a)}$, so the clipping
that restores Assumption~\ref{ass:loss-regularity}(b) for the cross-entropy loss destroys the
exact convexity it had. Only convexity in the prediction variable is used in
Proposition~\ref{prop:link-sharpness}, and it is the one the standard outer losses possess.
\end{remark}

\paragraph{Common setting for the two instances.}
Both take $\mathcal X\subset\R^d$ compact, $\mathcal Y=\{0,1\}$ and $m=1$, with $\mathcal D_0$ a
population and $\mathcal D_1$ a subgroup, both admitting Lebesgue densities, so that
Assumption~\ref{ass:non-atomic} holds. Both use $K=1+K_{\mathrm G}$ with $K_{\mathrm G}$
Gaussian of bandwidth $b$, so that
$\kappa^2=2$, Assumption~\ref{ass:dense-rkhs} holds for every $p\in[1,\infty)$ by
\cite[Theorem~4.63]{SteinwartChristmann2008}, and the constants belong to $\HK$ exactly with
$\|c\|_{\HK}=|c|$. Write $w_0,w_1$ for the mixture densities, so that $w_0+w_1=2$, and
$\eta_i(x)\triangleq\Pr_{\mathcal D_i}(Y=1\mid X=x)$. Both use the schedule
$Q_n=n^{1/4}/\log(e+n)$ of Remark~\ref{rem:schedule}, which satisfies \eqref{eq:schedule} and the
$o(n^{1/4})$ budget of Proposition~\ref{prop:sharpness} simultaneously. Both assume the
\emph{noise condition} that $\eta_0$ and $\eta_1$ take values in a fixed
$[\eta_-,\eta_+]\subset(0,1)$, and the \emph{overlap condition} that $\sup_x w_1(x)<2$,
equivalently that the two feature marginals have bounded density ratio; the latter is what makes
the curvature of $\bar\ell^{\mathrm o}_\mu$ uniform in $x$, and the former is what keeps the
pointwise optimum away from the boundary excluded by Corollary~\ref{cor:link-boundary}.

\paragraph{First instance: Brier objective under a subgroup Brier cap.}
Take the logistic link, $\ell^{\mathrm o}_0(v,y)=(v-y)^2$ and $\ell^{\mathrm o}_1(v,y)=(v-y)^2-c$
with $c\in(0,1)$, the latter in the normalized form of Remark~\ref{rem:constraint-levels}, so
that the constraint caps the subgroup Brier risk at $c$. (This is a level cap on one group, not
a parity constraint between two; a genuine parity requirement is the two-sided version, which
the framework accommodates with $m=2$ but which we do not treat here.)
Assumption~\ref{ass:loss-regularity} holds by Lemma~\ref{lem:link-regularity} with
$L=\tfrac12$ and $M=1$, since the squared loss is $2$-Lipschitz and bounded by $1$ on $[0,1]$,
$|(v-y)^2-c|\le\max\{c,1-c\}\le1$, and $L_\sigma=\tfrac14$. Because
$\E[(v-Y)^2\mid X=x]=(v-\eta_i(x))^2+\eta_i(x)(1-\eta_i(x))$ under $\mathcal D_i$, the conditional
integrand is, up to a term free of $v$,
\[
\bar\ell^{\mathrm o}_\mu(v,x)\;=\;w_0(x)\bigl(v-\eta_0(x)\bigr)^2+\mu_1w_1(x)\bigl(v-\eta_1(x)\bigr)^2 ,
\]
a quadratic with curvature $2(w_0+\mu_1w_1)\ge2\inf_xw_0(x)$, so \eqref{eq:v-growth} holds with
$c_{\mathrm g}=2\bigl(2-\sup_xw_1(x)\bigr)>0$ by the overlap condition. The pointwise minimizer
is the convex combination
\[
v^\sharp_\mu(x)\;=\;\frac{w_0(x)\eta_0(x)+\mu_1w_1(x)\eta_1(x)}{w_0(x)+\mu_1w_1(x)}\;\in\;[\eta_-,\eta_+] ,
\]
so it is uniformly interior and unconstrained by the endpoints of $[0,1]$. Assume further that
$\eta_0,\eta_1,w_0,w_1$ extend holomorphically to a fixed complex strip about $\mathcal X$, which
holds in particular when they are real-analytic. The denominator is bounded below by $\inf_xw_0>0$
uniformly over $\mu$ in a compact neighbourhood $U$ of $\mu_{\HK}$, and $\sigma^{-1}$ is
holomorphic on a neighbourhood of $[\eta_-,\eta_+]\subset(0,1)$, so the witness
$\phi^\sharp_\mu=\sigma^{-1}\circ v^\sharp_\mu$ extends holomorphically to a strip whose width does
not depend on $\mu\in U$; the resulting exponential Fourier decay places it in the Gaussian RKHS
with $\sup_{\mu\in U}\|\phi^\sharp_\mu\|_{\HK}\eqqcolon C_\infty<\infty$, by the same route used in
Appendix~\ref{sec:worked-example}. For Assumption~\ref{ass:slater}, write
$c_{\min}\triangleq\min_{v\in(0,1)}\E_{\mathcal D_1}[(v-Y)^2]$, attained at
$v=\E_{\mathcal D_1}[\eta_1(X)]\in(0,1)$ since
$\E_{\mathcal D_1}[(v-Y)^2]=v^2-2v\,\E[\eta_1]+\E[\eta_1]$; if $c>c_{\min}$ then the constant
predictor $\phi^{\mathrm{sl}}\equiv\sigma^{-1}(\E_{\mathcal D_1}[\eta_1(X)])$ is strictly feasible
with $\xi=c-c_{\min}$, $Q_{\mathrm{sl}}=|\sigma^{-1}(\E_{\mathcal D_1}[\eta_1(X)])|$ and
$C_\mu=4/\xi$. Proposition~\ref{prop:link-sharpness} now supplies \eqref{eq:sharpness} with
$p=2$, $\gamma_0=c_{\mathrm g}\,\underline{\sigma}(\kappa C_\infty+1)^2/2$ and
$\Delta_0=\gamma_0$ (taking $\varsigma=1$), so Proposition~\ref{prop:sharpness} delivers
Assumption~\ref{ass:multi-argmin-recovery}. The pointwise minimizer is a.e.\ unique, so
$\epsinf=0$ by Lemma~\ref{lem:eps-star-properties}(v) and
Theorem~\ref{thm:informal-section3}(vi) applies: the guarantee is exact PACC. By
Remark~\ref{rem:composite-convexity} the objective $\phi\mapsto\mathcal R_0(\phi)$ is \emph{not}
convex on $\HK$, so the nonconvex machinery of the paper is genuinely doing work here.

\paragraph{Second instance: clipped cross-entropy under a positive-rate cap.}
Take the clipped link $\vartheta_a=a+(1-2a)\sigma$, the cross-entropy outer loss
$\ell^{\mathrm o}_0(v,y)=-y\log v-(1-y)\log(1-v)$, and the linear constraint
$\ell^{\mathrm o}_1(v,y)=v-\gamma$, so that
$\mathcal R_1(\phi)=\E_{\mathcal D_1}[\Pi(\phi)(X)]-\gamma$ caps the subgroup positive rate at
$\gamma$. (Appendix~\ref{sec:worked-example} imposes the opposite orientation, a \emph{floor} on
the same quantity; both are within the framework and we distinguish them by name.) The
unclipped cross-entropy violates Assumption~\ref{ass:loss-regularity}(b), and the clip is exactly
what restores it: Lemma~\ref{lem:link-regularity} applies on $[a,1-a]$ with
$M=\max\{\log(1/a),1\}$ and $L=(1-2a)/(4a)$, since
$|\partial_v\ell^{\mathrm o}_0|=|v-y|/(v(1-v))\le1/a$ there. The conditional integrand is
\[
\bar\ell^{\mathrm o}_\mu(v,x)\;=\;w_0(x)\bigl[-\eta_0(x)\log v-(1-\eta_0(x))\log(1-v)\bigr]+\mu_1w_1(x)(v-\gamma),
\]
with $\partial^2_v\bar\ell^{\mathrm o}_\mu=w_0[\eta_0v^{-2}+(1-\eta_0)(1-v)^{-2}]\ge w_0$ on
$(0,1)$ --- the bracket is minimized at $v=(1+t)^{-1}$, $t=((1-\eta_0)/\eta_0)^{1/3}$, with value
$(\eta_0^{1/3}+(1-\eta_0)^{1/3})^3\ge1$ --- so \eqref{eq:v-growth} holds with
$c_{\mathrm g}=2-\sup_xw_1(x)>0$. Writing
$r_\mu(x)\triangleq\mu_1w_1(x)/w_0(x)$, the first-order condition
$(v-\eta_0)/(v(1-v))=-r_\mu$ is the quadratic $r_\mu v^2-(1+r_\mu)v+\eta_0=0$, whence the closed
form
\[
v^\sharp_\mu\;=\;\frac{(1+r_\mu)-\sqrt{(1+r_\mu)^2-4r_\mu\eta_0}}{2r_\mu}\quad(r_\mu>0),
\qquad
v^\sharp_\mu=\eta_0\quad(r_\mu=0),
\]
together with the elementary bounds $\eta_0/(1+r_\mu)\le v^\sharp_\mu\le\eta_0$, which follow from
$v(1+r_\mu-r_\mu v)=\eta_0$ and $1\le1+r_\mu-r_\mu v\le1+r_\mu$. Since $\|\mu\|_1\le C_\mu$ on the
relevant multiplier set (Lemma~\ref{lem:bounded-multipliers}), we have
$r_\mu\le r_{\max}\triangleq C_\mu\sup_xw_1/\inf_xw_0$, so $v^\sharp_\mu$ lies in
$[\eta_-/(1+r_{\max}),\eta_+]$; choosing $a<\min\{\eta_-/(1+r_{\max}),\,1-\eta_+\}$ makes the clip
inactive at the optimum and leaves $v^\sharp_\mu$ uniformly interior to $\vartheta_a(\R)=(a,1-a)$.
The discriminant obeys $(1+r)^2-4r\eta_0=(1-r)^2+4r(1-\eta_0)\ge(1-r)^2+4r(1-\eta_+)>0$, so
$v^\sharp_\mu$ is an analytic function of $(\eta_0,r_\mu)$ and the holomorphic-extension argument
of the first instance again places $\phi^\sharp_\mu=\vartheta_a^{-1}\circ v^\sharp_\mu$ in the
Gaussian RKHS with a uniform bound $C_\infty$. Strict feasibility is witnessed, whenever
$a<\gamma<1-a$, by the constant $\phi^{\mathrm{sl}}\equiv\vartheta_a^{-1}((a+\gamma)/2)$ with
margin $\xi=(\gamma-a)/2$; the upper restriction on $\gamma$ is what puts the midpoint
$(a+\gamma)/2$ inside the range of the link, and is harmless since $a$ is chosen small.
Propositions~\ref{prop:link-sharpness} and~\ref{prop:sharpness} then deliver
Assumption~\ref{ass:multi-argmin-recovery}, and as in the first instance the guarantee is exact
PACC. By Remark~\ref{rem:composite-convexity} the unclipped composite would be convex in $\phi$
but inadmissible under Assumption~\ref{ass:loss-regularity}(b), whereas the clipped composite is
admissible and nonconvex; the trade is exact and is visible in the criterion.

\paragraph{Scope.}
Under the hypotheses of Proposition~\ref{prop:link-sharpness} the pointwise minimizer is a.e.\
unique, so Lemma~\ref{lem:eps-star-properties}(v) forces $\epsinf(\mu_{\HK})=0$. The composite
construction therefore inherits the limitation recorded in Remark~\ref{rem:scope} without
alleviating it: it certifies branch~(vi) of Theorem~\ref{thm:informal-section3} and is blind to
the near-PACC branch that parts~(ii) to~(v) are designed for, of which
Appendix~\ref{sec:near-pacc-example} supplies the only instance we have. The reason is structural
rather than incidental. Growth of $\bar\ell^{\mathrm o}_\mu(\cdot,x)$ around a single point is what
Proposition~\ref{prop:link-sharpness} converts into \eqref{eq:sharpness}, and ties in the
prediction variable are precisely what such growth excludes. A tie-tolerant analogue would replace
\eqref{eq:v-growth} by
$\bar\ell^{\mathrm o}_\mu(v,x)-\min_{v'}\bar\ell^{\mathrm o}_\mu(v',x)\ge\tfrac{c_{\mathrm g}}{2}\operatorname{dist}(v,S_\mu(x))^2$
for the pointwise argmin correspondence $S_\mu$, and would then need the branch-separation
mechanism of Appendix~\ref{sec:near-pacc-example} to survive composition with the link.

Here the composite setting is mildly encouraging, but only on a window, and the qualification
matters. Because $\vartheta$ is monotone with $\underline{\vartheta}(T)>0$, two prediction
branches separated by $\delta$ in $v$ are separated by at least $\delta/L_\vartheta$ in $u$
\emph{provided both preimages lie in $[-T,T]$}, by the same two-case argument as in
Proposition~\ref{prop:link-sharpness}. The empirical minimizers, however, range over
$\Bq{Q_n}$ and hence over scores of modulus up to $\kappa Q_n\to\infty$, where the link is flat
and $\underline{\vartheta}$ is useless; on that region it is the cap $\Delta_0$, not the
increment modulus, that must charge a predictor for being far from the argmin set, exactly as in
Proposition~\ref{prop:sharpness}. A tie-tolerant transfer would therefore have the same
two-regime structure as the sharpness transfer, with the window carrying the separation and the
cap carrying the escape. What does not transfer at all is the interval-counting step of
Lemma~\ref{lem:splice-exclusion}, which remains one-dimensional. We regard the composite setting
as a natural place to look for the general condition sought in Remark~\ref{rem:scope}, and we do
not have it.

\section{An Instance with a Strictly Positive Closure--Realization Gap}
\label{sec:near-pacc-example}
Remark~\ref{rem:scope} explained why Proposition~\ref{prop:sharpness} can certify only the
exact regime $\epsinf=0$. We now exhibit an explicit instance on which
Assumptions~\ref{ass:non-atomic}--\ref{ass:multi-argmin-recovery} all hold and
$\epsinf(\Sdual)=1/4$, so that parts~(ii)--(v) of Theorem~\ref{thm:informal-section3} are
inhabited. The mechanism is the one Lemma~\ref{lem:eps-star-properties}(iv) predicts: the
population Lagrangian has \emph{two} pointwise minimizers with \emph{distinct} constraint
risks; the envelope realizes the whole segment between them by splicing; and $\HK$ --- a
space of continuous functions on a connected domain --- realizes only the two endpoints.

\subsection{The instance}
Let $\mathcal X=[0,1]$, $\mathcal Y=\{-1,+1\}$, $m=1$, and let
$\mathcal D_{\mathcal X,0}=\mathcal D_{\mathcal X,1}$ be Lebesgue measure on $[0,1]$, so
$\mathcal M_{\mathcal X}$ is uniform and $w_0=w_1\equiv1$; the conditional label laws are
arbitrary, since the losses below do not depend on $y$. Put
\[
h(u)\;\triangleq\;\mathrm{cl}_{[0,1]}\Bigl(\tfrac{u+1}{2}\Bigr),
\qquad
\gamma\triangleq\tfrac12,
\qquad
\ell_0(u,y)\triangleq g(u),
\qquad
\ell_1(u,y)\triangleq h(u)-\gamma ,
\]
where $\mathrm{cl}_{[0,1]}$ denotes truncation to $[0,1]$ and $g$ is the explicit piecewise
linear function
\begin{equation}\label{eq:g-explicit}
g(u)\;\triangleq\;
\begin{cases}
\ \ \tfrac14, & u\le-1,\\[2pt]
-\tfrac u2-\tfrac14, & -1\le u\le-\tfrac12,\\[2pt]
\ \ 0, & -\tfrac12\le u\le0,\\[2pt]
-\tfrac u2, & 0\le u\le\tfrac12,\\[2pt]
-\tfrac14, & u\ge\tfrac12 .
\end{cases}
\end{equation}
Then $g$ is continuous, $\tfrac12$-Lipschitz and bounded by $\tfrac14$, with
$g(-\tfrac12)=0$ and $g(\tfrac12)=-\tfrac14$. Take
\[
K \;=\; 1 + K_{\mathrm G} ,
\qquad K_{\mathrm G}(x,x')=e^{-|x-x'|^2/2b^2},
\]
which is bounded with $\kappa^2=2$, universal on the compact $[0,1]$, and --- unlike
$K_{\mathrm G}$ alone --- contains the constants, with $\|c\|_{\HK}=|c|$. It also satisfies the
derivative bound used below: every $f\in\HK$ is differentiable with
\begin{equation}\label{eq:deriv-bound}
\|f'\|_\infty\;\le\;\varkappa\,\|f\|_{\HK},
\qquad
\varkappa\;\triangleq\;\sup_x\bigl(\partial_1\partial_2K(x,x)\bigr)^{1/2}=\tfrac1b .
\end{equation}
Finally take the radius schedule $Q_n=n^{1/4}/\log(e+n)$ of Remark~\ref{rem:schedule}, which
satisfies \eqref{eq:schedule}, the $o(n^{1/4})$ budget, and the sharper requirement
\begin{equation}\label{eq:schedule-splice}
Q_n^{2}\sqrt{\tfrac{\log n}{n}}\;\longrightarrow\;0
\end{equation}
used in Lemma~\ref{lem:splice-exclusion} and Proposition~\ref{prop:example-recovery}. For this
schedule the left-hand side of \eqref{eq:schedule-splice} is $\asymp(\log n)^{-3/2}$.

\begin{remark}[Why $g$ is not mollified]\label{rem:no-mollify}
The kinks of \eqref{eq:g-explicit} are load-bearing twice over. They pin the argmin of
$\psi_\mu$ below to a breakpoint under small perturbations of $\mu$, which is what makes the
superdifferential a singleton off $\mu^\star$; and they produce the linear ($p=1$) sharpness
of \eqref{eq:set-sharpness}. Smoothing $g$ would also perturb the two well values and destroy
the exact tie on which $\mu^\star=\tfrac12$ depends. No smoothing is needed: $g$ as written
satisfies Assumption~\ref{ass:loss-regularity} outright.
\end{remark}

\subsection{The population problem}

\emph{Assumptions~\ref{ass:non-atomic}--\ref{ass:dense-rkhs}.} The marginals are Lebesgue,
hence nonatomic. $g$ is $\tfrac12$-Lipschitz and bounded by $\tfrac14$, $h-\gamma$ is
$\tfrac12$-Lipschitz and bounded by $\tfrac12$, so Assumption~\ref{ass:loss-regularity} holds
with $L=\tfrac12$, $M=\tfrac12$. $K$ is universal on the compact $[0,1]$, so
Assumption~\ref{ass:dense-rkhs} holds with any $p\in[1,\infty)$.

\emph{Assumption~\ref{ass:slater}.} $\phi^{\mathrm{sl}}\equiv-\tfrac12\in\HK$ has
$\mathcal R_1=h(-\tfrac12)-\gamma=\tfrac14-\tfrac12=-\tfrac14$, so $\xi=\tfrac14$,
$Q_{\mathrm{sl}}=\tfrac12$, and $C_\mu=4M/\xi=8$; accordingly $\Lambda=[0,8]$.

\emph{The dual function.} Since the losses are $x$-free, the conditional integrand of
\eqref{eq:lagrangian-integrand} is
$\bar\ell_\mu(u,x)=\psi_\mu(u)\triangleq g(u)+\mu\bigl(h(u)-\tfrac12\bigr)$, constants lie in
$\HK$, and $\HK$ is dense in $\Gen$; hence
$d_{\HK}(\mu)=\min_{u\in\R}\psi_\mu(u)$, attained. The function $g+\mu h$ is piecewise linear
with breakpoints $-1,-\tfrac12,0,\tfrac12,1$ and slopes
$0,\ \tfrac{\mu-1}{2},\ \tfrac\mu2,\ \tfrac{\mu-1}{2},\ \tfrac\mu2,\ 0$ on the successive
pieces, so for $\mu\in(0,1)$ its only local minima are at $u_1\triangleq-\tfrac12$ and
$u_2\triangleq+\tfrac12$, where $g+\mu h$ takes the values $\tfrac\mu4$ and
$\tfrac{3\mu}{4}-\tfrac14$; equivalently, subtracting the constant $\mu/2$,
\[
\psi_\mu(u_1)=-\tfrac\mu4,\qquad \psi_\mu(u_2)=\tfrac\mu4-\tfrac14 .
\]
For $\mu\ge1$ all slopes are nonnegative and the minimum of $g+\mu h$ is the constant value
$\tfrac14$ attained on $(-\infty,-1]$. Comparing the two wells, $\psi_\mu(u_1)\le\psi_\mu(u_2)$
exactly when $\mu\ge\tfrac12$, so
\begin{equation}\label{eq:d-explicit}
d_{\HK}(\mu)\;=\;
\begin{cases}
\tfrac\mu4-\tfrac14, & 0\le\mu\le\tfrac12,\\[2pt]
-\tfrac\mu4, & \tfrac12\le\mu\le1,\\[2pt]
\tfrac14-\tfrac\mu2, & \mu\ge1,
\end{cases}
\qquad
\Sdual=\{\mu^\star\},\quad \mu^\star=\tfrac12,\quad D^\star_{\HK}=P^\star_{\HK}=-\tfrac18 ,
\end{equation}
and $d_{\HK}$ grows sharply away from its maximizer:
\begin{equation}\label{eq:dual-sharp}
d_{\HK}(\mu^\star)-d_{\HK}(\mu)\;\ge\;\tfrac14\bigl|\mu-\mu^\star\bigr|
\qquad\text{for every }\mu\in\Lambda=[0,8] ,
\end{equation}
with equality on $[0,1]$ and slack $\tfrac14(\mu-1)$ beyond. In particular $\Sdual$ is a
singleton, so the sharpened part~(v) of Theorem~\ref{thm:informal-section3} applies. At
$\mu^\star$ the two wells are exactly level,
$\psi_{\mu^\star}(u_1)=\psi_{\mu^\star}(u_2)=-\tfrac18$, with \emph{distinct} constraint
risks
\[
h(u_1)-\gamma=-\tfrac14,\qquad h(u_2)-\gamma=+\tfrac14 .
\]

\emph{Sharpness relative to the argmin set.} A direct computation from
\eqref{eq:g-explicit} gives the identity
\[
\psi_{\mu^\star}(u)+\tfrac18\;=\;\varrho\bigl(\dist(u,\{u_1,u_2\})\bigr)
\qquad\text{for every }u\in\R,
\qquad
\varrho(t)\triangleq\min\{t/4,\ 1/8\} ,
\]
with \emph{equality}, not merely inequality: on each of the six pieces of
\eqref{eq:g-explicit} both sides are affine with matching slopes $\pm\tfrac14$ or $0$ and
matching values at the breakpoints. Since
$\|\psi_\mu-\psi_{\mu^\star}\|_\infty\le|\mu-\mu^\star|\,\|h-\tfrac12\|_\infty
=|\mu-\mu^\star|/2$ and the same bound therefore governs
$|d_{\HK}(\mu)-d_{\HK}(\mu^\star)|$, we obtain
\begin{equation}\label{eq:set-sharpness}
\psi_{\mu}(u)-d_{\HK}(\mu)\;\ge\;\varrho\bigl(\dist(u,\{u_1,u_2\})\bigr)-\bigl|\mu-\mu^\star\bigr|
\qquad\text{for all }u\in\R,\ \mu\ge0 .
\end{equation}
This is the set-valued analogue of \eqref{eq:sharpness} with $\gamma_0=\tfrac14$, $p=1$,
$\Delta_0=\tfrac18$.

\subsection{The argmin sets, and the gap}

\emph{Over $\HK$ the argmin collapses to two points.} Let $\phi\in\HK$ satisfy
$L_{\HK}(\phi,\mu^\star)=d_{\HK}(\mu^\star)$, i.e.\ $\psi_{\mu^\star}(\phi(x))=-\tfrac18$ for
a.e.\ $x$, i.e.\ $\phi(x)\in\{u_1,u_2\}$ a.e. The sets $A=\phi^{-1}(u_1)$ and
$B=\phi^{-1}(u_2)$ are closed by continuity, disjoint, and $A\cup B$ has full Lebesgue
measure hence is dense, hence equals $[0,1]$; since $[0,1]$ is connected, one of them is
empty. Therefore
\begin{equation}\label{eq:argmin-two-points}
\Phi_{\HK}(\mu^\star)=\{\,\phi\equiv u_1,\ \phi\equiv u_2\,\},
\qquad
\mathcal R_1\bigl(\Phi_{\HK}(\mu^\star)\bigr)=\bigl\{-\tfrac14,\ +\tfrac14\bigr\},
\end{equation}
both constants lying in $\HK$ with norm $\tfrac12$. Thus
Assumption~\ref{ass:multi-argmin-recovery}(a) holds with $C_\infty=\tfrac12$, and its
restriction clause is vacuous because $\Phi_{\HK}(\mu^\star)\subset\Bq{C_\infty}$. The same
$C_\infty$ works on a neighbourhood: for $\mu\in(0,1)\setminus\{\tfrac12\}$ the minimizer of
$\psi_\mu$ is the single point $u_1$ or $u_2$ according as $\mu>\tfrac12$ or $\mu<\tfrac12$,
so $\Phi_{\HK}(\mu)$ is the corresponding constant, again of norm $\tfrac12$.

\emph{Over $\Gen$ the argmin is decomposable.} $\Phi_{\Gen}(\mu^\star)$ consists of all
measurable $\{u_1,u_2\}$-valued functions, a decomposable set on which the pointwise minimum
is attained; by Lemma~\ref{lem:eps-star-properties}(iii) and
Theorem~\ref{thm:dense-transfer},
\[
\partial d_{\HK}(\mu^\star)=\partial d_{\Gen}(\mu^\star)=\bigl[-\tfrac14,+\tfrac14\bigr],
\]
the segment swept out by measurable splices of $u_1$ and $u_2$, and consistent with the
one-sided slopes $+\tfrac14$ and $-\tfrac14$ of $d_{\HK}$ at $\mu^\star$ read off from
\eqref{eq:d-explicit}. Hence
\begin{equation}\label{eq:example-eps}
\epsinf(\mu^\star)
=\sup_{\nabla\in[-\frac14,\frac14]}\ \min\Bigl\{\bigl|\nabla+\tfrac14\bigr|,\ \bigl|\nabla-\tfrac14\bigr|\Bigr\}
=\tfrac14 \;>\;0,
\end{equation}
the supremum attained at the midpoint $\nabla=0$ --- which is precisely the sign- and
complementarity-compatible supergradient selected by Lemma~\ref{lem:sign-comp-subgrad}, since
$\mu^\star>0$ forces $\mu^\star\nabla=0$. The worst case in
Definition~\ref{def:multi-eps-star} is therefore not an artifact of the supremum: it is
realized at the supergradient the learner actually tracks.

Note also that $\Phi_{\Gen}(\mu^\star)$ contains the splice $\phi=u_1$ on $[0,\tfrac12]$,
$u_2$ on $(\tfrac12,1]$, which is exactly feasible ($\mathcal R_1=0$) and optimal
($\mathcal R_0=-\tfrac18=P^\star_{\HK}$), whereas over $\HK$ neither constant is both: $u_1$
is feasible with $\mathcal R_0=0>P^\star_{\HK}$, and $u_2$ is optimal but infeasible. So the
primal is attained over the envelope and \emph{not} over $\HK$ --- over $\HK$ the value
$-\tfrac18$ is approached by predictors that spend vanishing measure transiting between the
two wells --- and the entire gap is the failure of continuity to accommodate splicing, the
nondecomposability obstruction of Lemma~\ref{lem:eps-star-properties}(iv) in its purest form.

\subsection{Verification of Assumption~\ref{ass:multi-argmin-recovery}(b)}

It remains to verify part~(b), in the form~(B) of Remark~\ref{rem:part-b-quantifiers}; form~(A)
is unavailable here and must be, since the two witnesses are at $L_1$ distance $1$. We do this
carefully, because the naive argument fails.

\begin{remark}[The constants are \emph{not} exact empirical minimizers]\label{rem:not-exact}
One is tempted to argue that $u_1$ and $u_2$, being pointwise minimizers of
$\psi_{\mu^\star}$, belong to $\widehat\Phi_n(\mu)$ for $\mu$ near $\mu^\star$ once
$Q_n\ge\tfrac12$. They do not, for any $\mu>0$ and any $Q_n>\tfrac12$. The empirical
Lagrangian evaluates $g$ on the sample $\{X^{(0)}_j\}$ and $h$ on the sample
$\{X^{(1)}_j\}$, and these are a.s.\ disjoint, so the two terms decouple: choosing
$f\in\HK$ with $f(X^{(0)}_j)=0$ for all $j$ and $f(X^{(1)}_j)=-1$ for all $j$ --- possible
since $K_{\mathrm G}$ is strictly positive definite and the $2n$ points are distinct --- gives,
for small $t>0$, $\widehat{\mathcal R}_{0,n}(u_r+tf)=\widehat{\mathcal R}_{0,n}(u_r)$
exactly, while $h$ has slope $\tfrac12$ near $u_r$, so
$\widehat{\mathcal R}_{1,n}(u_r+tf)=\widehat{\mathcal R}_{1,n}(u_r)-t/2$ and
$\widehat L_n(u_r+tf,\mu)<\widehat L_n(u_r,\mu)$. Part~(b) must therefore be established by
showing that the exact empirical minimizers are \emph{close} to the constants, not that they
\emph{are} the constants. The improvement above costs RKHS norm --- $f$ must interpolate
$2n$ prescribed values --- and the argument below quantifies exactly this.
\end{remark}

The first ingredient rules out an empirical minimizer straddling both wells.

\begin{lemmaBox}{Splice exclusion under a derivative bound}{splice-exclusion}
Let $\mathcal X=[0,1]$, let $K$ satisfy $\sup_xK(x,x)\le\kappa^2$ and the derivative bound
\eqref{eq:deriv-bound} with constant $\varkappa$, let $u_1<u_2$, and set
$\rho\triangleq(u_2-u_1)/4$. For $\phi:\mathcal X\to\R$ write
$D_\phi(x)\triangleq\min\{\dist(\phi(x),\{u_1,u_2\}),\rho\}$. Assume
\eqref{eq:schedule-splice}. Then there is an event of probability one and an index $n_0$ such
that, for every $n\ge n_0$ and every $\phi\in\Bq{Q_n}$ attaining both a value
$\le u_1+\rho$ and a value $\ge u_2-\rho$ on $\mathcal X$,
\[
\bigl\|D_\phi\bigr\|_{L_1(\widehat{\mathcal M}^n_{\mathcal X})}
\;\ge\;\frac{\rho^{2}}{2\varkappa\,Q_n}.
\]
\end{lemmaBox}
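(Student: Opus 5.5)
The plan is a two-stage argument: first a deterministic lower bound on the Lebesgue (population) measure of the set where $\phi$ is "in transit" between the wells, using only the derivative bound \eqref{eq:deriv-bound}; then a transfer to the empirical mixture $\widehat{\mathcal M}^n_{\mathcal X}$ via a uniform deviation bound over intervals that does not depend on $\phi$. Throughout I use that $\mathcal M_{\mathcal X}$ is Lebesgue measure on $[0,1]$, as in the instance of this appendix. More generally, a density bounded below by some $c_0>0$ would do, at the cost of a factor $c_0$.

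\emph{Step 1 (transit interval).} Fix $\phi\in\Bq{Q_n}$ attaining a value $\le u_1+\rho$ at some $x_1$ and a value $\ge u_2-\rho$ at some $x_2$. By \eqref{eq:deriv-bound}, $\phi$ is differentiable with $\|\phi'\|_\infty\le\varkappa Q_n$. Assume without loss of generality $x_1<x_2$; the other case is symmetric. Let $a\triangleq\sup\{x\in[x_1,x_2]:\phi(x)\le u_1+\rho\}$ and $c\triangleq\inf\{x\in[a,x_2]:\phi(x)\ge u_2-\rho\}$. By continuity, $\phi(a)=u_1+\rho$, $\phi(c)=u_2-\rho$, and $\phi(x)\in[u_1+\rho,u_2-\rho]$ on $I_\phi\triangleq[a,c]$. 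Every such value is at distance at least $\rho$ from $\{u_1,u_2\}$, because $u_2-u_1=4\rho$, so $D_\phi\equiv\rho$ on $I_\phi$. The mean value theorem gives $2\rho=\phi(c)-\phi(a)\le\varkappa Q_n(c-a)$. Hence $|I_\phi|\ge 2\rho/(\varkappa Q_n)$, and in particular $\|D_\phi\|_{L_1(\mathcal M_{\mathcal X})}\ge 2\rho^2/(\varkappa Q_n)$.

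\emph{Step 2 (empirical transfer, uniformly in $\phi$).} Since $D_\phi\ge\rho\,\mathbf 1_{I_\phi}$, it suffices to show $\widehat{\mathcal M}^n_{\mathcal X}(I)\ge \rho/(2\varkappa Q_n)$ for every interval $I\subseteq[0,1]$ with $|I|\ge2\rho/(\varkappa Q_n)$. The interval $I_\phi$ depends on $\phi$, so I will control all intervals at once. For each of the $m+1$ empirical marginals, the DKW inequality at confidence $n^{-2}$ bounds $\sup_t|\widehat F_{i,n}(t)-F(t)|$ by $\sqrt{\log(2n^2)/(2n)}$. Borel--Cantelli then yields an almost-sure event $\Omega_{\mathrm{int}}$ on which, for all large $n$,
\[
\sup_{I}\bigl|\widehat{\mathcal M}^n_{\mathcal X}(I)-|I|\bigr|\le C\sqrt{\log n/n}
\]
for a universal constant $C$. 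Because it is defined through intervals only, this event is independent of $\phi$ and of any subsequence. On $\Omega_{\mathrm{int}}$ we get $\widehat{\mathcal M}^n_{\mathcal X}(I_\phi)\ge 2\rho/(\varkappa Q_n)-C\sqrt{\log n/n}$. This is at least $\rho/(2\varkappa Q_n)$ as soon as $\varkappa Q_n C\sqrt{\log n/n}\le\tfrac32\rho$. That holds for all $n\ge n_0$ because $Q_n\sqrt{\log n/n}\to0$, which is implied by \eqref{eq:schedule-splice} since $Q_n\nearrow\infty$. Multiplying by $\rho$ gives the claimed bound $\rho^2/(2\varkappa Q_n)$.

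The main obstacle is Step 2's uniformity: the bad set is data-independent only after reducing to intervals. This is exactly where one-dimensionality (connected level-set transit) is used, and a naive pointwise Hoeffding bound for each $\phi$ would not give an event valid simultaneously over all of $\Bq{Q_n}$. The choice of $a,c$ as last and first crossings in Step 1 is the other delicate point, since it guarantees that the whole interval lies in the band rather than merely its endpoints.
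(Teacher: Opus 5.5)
Your proof is correct and follows the paper's argument: a deterministic transit interval of length of order $\rho/(\varkappa Q_n)$ on which $D_\phi\equiv\rho$, followed by a DKW plus Borel--Cantelli bound that is uniform over all intervals (hence over $\phi$), using $Q_n\sqrt{\log n/n}\to0$. The only difference is how you find the interval. The paper applies the intermediate value theorem at the midpoint $\bar u$ and takes a symmetric neighbourhood $[c-r,c+r]\cap[0,1]$; your first/last-crossing construction with the mean value theorem gives an interval of length at least $2\rho/(\varkappa Q_n)$ that lies inside $[0,1]$, so it needs no boundary truncation.
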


\begin{proof}
Write $\bar u\triangleq(u_1+u_2)/2$ and note $u_1+\rho<\bar u<u_2-\rho$. By the intermediate
value theorem there is $c\in[0,1]$ with $\phi(c)=\bar u$. Put $r\triangleq\rho/(\varkappa Q_n)$
and $I\triangleq[c-r,c+r]\cap[0,1]$, so $|I|\ge r$ (for $r\le1$). By \eqref{eq:deriv-bound},
$|\phi(x)-\bar u|\le\varkappa Q_n\,r=\rho$ for $x\in I$; since $|\bar u-u_j|=2\rho$ for
$j=1,2$, this gives $\dist(\phi(x),\{u_1,u_2\})\ge2\rho-\rho=\rho$ and therefore
$D_\phi\equiv\rho$ on $I$.

It remains to bound $\widehat{\mathcal M}^n_{\mathcal X}(I)$ from below. Each of the $m+1=2$
empirical marginals is that of $n$ i.i.d.\ uniforms, so by the Dvoretzky--Kiefer--Wolfowitz
inequality at confidence $2n^{-2}$ and Borel--Cantelli, on an event of probability one the
Kolmogorov distance of each to the uniform law is at most $\sqrt{\log n/n}$ for all large $n$;
hence $\sup_{J\text{ interval}}|\widehat{\mathcal M}^n_{\mathcal X}(J)-|J||\le2\sqrt{\log n/n}$.
Since \eqref{eq:schedule-splice} implies $Q_n\sqrt{\log n/n}\to0$, we have
$r=\rho/(\varkappa Q_n)\ge 4\sqrt{\log n/n}$ for large $n$, so
$\widehat{\mathcal M}^n_{\mathcal X}(I)\ge r-2\sqrt{\log n/n}\ge r/2$. Therefore
$\|D_\phi\|_{L_1(\widehat{\mathcal M}^n_{\mathcal X})}\ge\rho\cdot r/2=\rho^2/(2\varkappa Q_n)$.
\end{proof}

For the instance at hand $u_2-u_1=1$, so $\rho=\tfrac14$ and, by \eqref{eq:deriv-bound},
$\rho^2/(2\varkappa Q_n)=b/(32Q_n)$: the kernel bandwidth enters the threshold linearly.

\begin{propBox}{The instance satisfies Assumption~\ref{ass:multi-argmin-recovery}(b)}{example-recovery}
For the instance of this appendix there is an event $\Omega_{\mathrm{fbl}}$ of probability one
and a sequence $\tau_n\downarrow0$, with $\tau_n=O\bigl(Q_n^2\sqrt{\log n/n}\bigr)
\asymp(\log n)^{-3/2}$ for the schedule fixed above, such that on $\Omega_{\mathrm{fbl}}$, for
all large $n$ and every $\widehat\mu_n\in\arg\max_{\mu\ge0}\widehat d_n(\mu)$:
\begin{enumerate}[label=\textnormal{(\roman*)},itemsep=2pt,topsep=2pt]
\item $|\widehat\mu_n-\mu^\star|\le8\eps_n$, so in particular $\widehat\mu_n\to\mu^\star$
along the full sequence;
\item every $\phi\in\widehat\Phi_n(\widehat\mu_n)$ satisfies
$\min_{r\in\{1,2\}}\|\phi-u_r\|_{L_1(\widehat{\mathcal M}^n_{\mathcal X})}\le\tau_n$;
\item for \emph{each} $r\in\{1,2\}$ there exists
$\phi\in\widehat\Phi_n(\widehat\mu_n)$ with
$\|\phi-u_r\|_{L_1(\widehat{\mathcal M}^n_{\mathcal X})}\le\tau_n$.
\end{enumerate}
Items (ii) and (iii) are exactly the two halves of form~(B) of
Remark~\ref{rem:part-b-quantifiers}, so Assumption~\ref{ass:multi-argmin-recovery}(b) holds at
$\mu^\star$, for every strictly increasing $\{n_k\}$ and for both admissible witnesses.
\end{propBox}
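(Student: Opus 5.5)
The plan is to work on the intersection of three probability-one events: the uniform-convergence event $\Omega_{\mathrm{uc}}$, the empirical-$L_1$ Glivenko--Cantelli event $\Omega_{\mathrm{gc}}$ built in the proof of Proposition~\ref{prop:sharpness} (which did not use \eqref{eq:sharpness}), and the DKW event of Lemma~\ref{lem:splice-exclusion}. Call this intersection $\Omega_{\mathrm{fbl}}$. Items (i)--(iii) are then proved in order, each feeding the next.

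\emph{Item (i).} Since the constants $u_1,u_2$ lie in $\HK$ with norm $\tfrac12$ and $d_{\HK}(\mu)$ is attained by a constant, we have $d_{Q}=d_{\HK}$ on $\Lambda$ for every $Q\ge\tfrac12$. The Ball/DERM estimate of Corollary~\ref{cor:lagrangian-dev} then gives $\sup_{\Lambda}|\widehat d_n-d_{\HK}|\le\eps_n$. Lemma~\ref{lem:bounded-multipliers}(iii) places $\widehat\mu_n\in\Lambda$, so
\[
d_{\HK}(\mu^\star)-d_{\HK}(\widehat\mu_n)\le\widehat d_n(\mu^\star)-\widehat d_n(\widehat\mu_n)+2\eps_n\le2\eps_n ,
\]
and the linear dual sharpness \eqref{eq:dual-sharp} yields $|\widehat\mu_n-\mu^\star|\le8\eps_n$.

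\emph{Item (ii).} Fix $\phi\in\widehat\Phi_n(\widehat\mu_n)$ and compare it with the constant minimizer of $\psi_{\widehat\mu_n}$, which is admissible once $Q_n\ge\tfrac12$. The chain of inequalities from the proof of Proposition~\ref{prop:sharpness} gives $L_{\HK}(\phi,\widehat\mu_n)\le d_{\HK}(\widehat\mu_n)+2\eps_n$. Integrating the set-valued sharpness \eqref{eq:set-sharpness} then gives $\int\varrho(\dist(\phi,\{u_1,u_2\}))\,\mathrm d\mathcal M_{\mathcal X}\le2\eps_n+8\eps_n$. Because $\varrho(t)=\tfrac14\min\{t,\tfrac12\}$ dominates $\tfrac14 D_\phi$ with $\rho=\tfrac14$, this bounds $\|D_\phi\|_{L_1(\mathcal M_{\mathcal X})}=O(\eps_n)$. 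Passing to the empirical measure via $\Omega_{\mathrm{gc}}$ (with $D_\phi$ a $1$-Lipschitz truncation of $\phi$, so contraction applies) costs $O(Q_n\sqrt{\log n/n})$.

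Lemma~\ref{lem:splice-exclusion} now excludes straddling. Its lower bound $b/(32Q_n)$ eventually exceeds $O(\eps_n+Q_n\sqrt{\log n/n})$ precisely because \eqref{eq:schedule-splice} holds. Hence $\phi$ avoids one of the two $\rho$-windows, say never takes a value $\le u_1+\rho$, and we convert $D_\phi$ into $|\phi-u_2|$ as follows.
\begin{itemize}
\item On $\{D_\phi<\rho\}$ we have $|\phi-u_2|=D_\phi$.
\item On the complement, $|\phi-u_2|\le\kappa Q_n+\tfrac12$, and the empirical mass of that set is at most $\|D_\phi\|_{L_1(\widehat{\mathcal M}^n_{\mathcal X})}/\rho$.
\end{itemize}
This gives $\|\phi-u_2\|_{L_1(\widehat{\mathcal M}^n_{\mathcal X})}=O\bigl(Q_n\eps_n+Q_n^2\sqrt{\log n/n}\bigr)\eqqcolon\tau_n$, which is $\asymp(\log n)^{-3/2}$ for the schedule fixed above. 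This escape term is the step I expect to be the main obstacle. The constants must be tracked so that the splice-exclusion threshold genuinely dominates, and the $\Omega_{\mathrm{gc}}$ envelope must be applied to $D_\phi$ rather than to $\phi-\psi$.

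\emph{Item (iii).} This follows by a first-order argument on the dual side.
\begin{enumerate}
\item By (i), $\widehat\mu_n>0$ for large $n$, so the maximality of $\widehat\mu_n$ over $\R_+$ forces $0\in\partial\widehat d_n(\widehat\mu_n)$ (Lemma~\ref{lem:sign-comp-subgrad}).
\item By the exact form of Lemma~\ref{lem:LV}(i), $0\in\conv K_n(\widehat\mu_n)$.
\item For any $\phi$ that is $\tau_n$-close to $u_r$, the Lipschitz bound used in Lemma~\ref{lem:multi-qual-conv} gives $|\widehat{\mathcal R}_{1,n}(\phi)-\widehat{\mathcal R}_{1,n}(u_r)|\le2L\tau_n$, where $\widehat{\mathcal R}_{1,n}(u_r)=\mp\tfrac14$ exactly.
\item If every empirical minimizer were close to the same $u_r$, then $K_n(\widehat\mu_n)$ would lie in a $2L\tau_n$-neighbourhood of $-\tfrac14$ (or of $+\tfrac14$). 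This excludes $0$ from its convex hull once $\tau_n<1/4$, a contradiction.
\end{enumerate}
By (ii) every minimizer is close to some well, so both wells must be represented. This gives (iii) and completes form~(B).
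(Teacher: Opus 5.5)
Your proposal is correct and follows the paper's proof essentially step for step. It uses the same intersection of $\Omega_{\mathrm{uc}}$, $\Omega_{\mathrm{gc}}$ and the DKW event, dual sharpness for (i), integrated set-sharpness plus a contraction-based empirical transfer for the class $\{D_\phi\}$, then splice exclusion and a Markov escape bound for (ii), and the $0\in\conv K_n(\widehat\mu_n)$ argument for (iii). Your one deviation, comparing $\phi$ with the exact minimizer of $\psi_{\widehat\mu_n}$ rather than with $u_1$, only tightens the paper's $14\eps_n$ to $10\eps_n$.

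One harmless slip: $d_Q=d_{\HK}$ on all of $\Lambda=[0,8]$ needs $Q\ge1$, not $Q\ge\tfrac12$. For $\mu>1$ the minimizers of $\psi_\mu$ are the constants $\le-1$, which have norm at least $1$. This does not affect the argument, since $Q_n\to\infty$.
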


\begin{proof}
Let $\Omega_{\mathrm{fbl}}$ be the intersection of the two probability-one events
$\Omega_{\mathrm{uc}}$ and $\Omega_{\mathrm{gc}}$ constructed in the proof of
Proposition~\ref{prop:sharpness} --- whose construction, as noted there, uses only
Assumption~\ref{ass:loss-regularity}, the kernel bound and the schedule, and not the sharpness
hypothesis, which fails here --- with the probability-one event of
Lemma~\ref{lem:splice-exclusion}. On $\Omega_{\mathrm{uc}}$ one has
$\eps_n=O(Q_n\sqrt{\log n/n})$, and on $\Omega_{\mathrm{gc}}$ the empirical and population
$L_1$ norms of differences $\phi-\psi$, $\phi\in\Bq{Q_n}$, $\psi\in\Bq{C_\infty}$, agree up to
$8\kappa Q_n\sqrt{\log(2n)/n}$. Throughout, $\eps_n$ is as in
Proposition~\ref{prop:sharpness}, and by Corollary~\ref{cor:lagrangian-dev}
\begin{equation}\label{eq:dual-unif}
\sup_{\mu\in\Lambda}\bigl|\widehat d_n(\mu)-d_{\HK}(\mu)\bigr|\;\le\;\eps_n
\qquad\text{once }Q_n\ge1 ,
\end{equation}
the upper bound using that for every $\mu\in\Lambda$ the population minimum of $\psi_\mu$ is
attained at a constant of modulus at most $1$, hence in $\Bq{Q_n}$.

\emph{(i).} By Lemma~\ref{lem:bounded-multipliers}(iii), $\widehat\mu_n\in\Lambda$. From
$\widehat d_n(\widehat\mu_n)\ge\widehat d_n(\mu^\star)$ and \eqref{eq:dual-unif},
$d_{\HK}(\widehat\mu_n)\ge d_{\HK}(\mu^\star)-2\eps_n$, and \eqref{eq:dual-sharp} gives
$|\widehat\mu_n-\mu^\star|\le8\eps_n$.

\emph{(ii).} Fix $\phi\in\widehat\Phi_n(\widehat\mu_n)$. Since the population minimizer
$u_1$ lies in $\Bq{Q_n}$, minimality of $\phi$ and two applications of the uniform bound give
\[
L_{\HK}(\phi,\widehat\mu_n)\le L_{\HK}(u_1,\widehat\mu_n)+2\eps_n
= d_{\HK}(\widehat\mu_n)+2\eps_n+\bigl(\psi_{\widehat\mu_n}(u_1)-d_{\HK}(\widehat\mu_n)\bigr)
\le d_{\HK}(\widehat\mu_n)+2\eps_n+\tfrac{|\widehat\mu_n-\mu^\star|}{2},
\]
the last step because $\psi_{\mu^\star}(u_1)=d_{\HK}(\mu^\star)\ge d_{\HK}(\widehat\mu_n)$ and
$\|\psi_{\widehat\mu_n}-\psi_{\mu^\star}\|_\infty\le|\widehat\mu_n-\mu^\star|/2$. Writing
\[
\beta_n\;\triangleq\;2\eps_n+\tfrac{|\widehat\mu_n-\mu^\star|}{2}+|\widehat\mu_n-\mu^\star|
\;\le\;2\eps_n+12\eps_n\;=\;14\eps_n
\]
by (i), and integrating \eqref{eq:set-sharpness} at $\widehat\mu_n$,
\[
\int_{\mathcal X}\varrho\bigl(\dist(\phi(x),\{u_1,u_2\})\bigr)\,\mathrm d\mathcal M_{\mathcal X}(x)\;\le\;\beta_n .
\]
Since $D_\phi\le\rho=\tfrac14$ we have $\varrho(\dist(\phi,\{u_1,u_2\}))\ge\varrho(D_\phi)=D_\phi/4$,
so $\|D_\phi\|_{L_1(\mathcal M_{\mathcal X})}\le4\beta_n$; the class
$\{D_\phi:\phi\in\Bq{Q_n}\}$ has envelope $\rho$ and, by contraction along the $1$-Lipschitz
map $v\mapsto\min\{\dist(v,\{u_1,u_2\}),\rho\}$, Rademacher complexity at most
$\kappa Q_n/\sqrt n$, so on $\Omega_{\mathrm{fbl}}$
\begin{equation}\label{eq:D-emp}
\|D_\phi\|_{L_1(\widehat{\mathcal M}^n_{\mathcal X})}\;\le\;4\beta_n+8\kappa Q_n\sqrt{\tfrac{\log(2n)}{n}}\;\eqqcolon\;\zeta_n
\;=\;O\Bigl(Q_n\sqrt{\tfrac{\log n}{n}}\Bigr).
\end{equation}
By \eqref{eq:schedule-splice}, $Q_n\zeta_n\to0$; in particular
$\zeta_n<\rho^2/(2\varkappa Q_n)=b/(32Q_n)$ for all large $n$.
Lemma~\ref{lem:splice-exclusion} therefore forbids $\phi$ from attaining both a value
$\le u_1+\rho$ and a value $\ge u_2-\rho$: there is $r=r(\phi)\in\{1,2\}$ with $\phi$ never
entering the opposite band. Say $r=1$, i.e.\ $\phi<u_2-\rho$ on $\mathcal X$. Then on
$\{D_\phi<\rho\}$ the nearer of the two wells is necessarily $u_1$, so
$|\phi-u_1|=D_\phi$ there, while $\widehat{\mathcal M}^n_{\mathcal X}(D_\phi=\rho)\le\zeta_n/\rho$
by Markov and $|\phi-u_1|\le\kappa Q_n+\tfrac12$ pointwise. Hence
\[
\|\phi-u_1\|_{L_1(\widehat{\mathcal M}^n_{\mathcal X})}
\;\le\;\zeta_n+\bigl(\kappa Q_n+\tfrac12\bigr)\tfrac{\zeta_n}{\rho}
\;=\;\zeta_n\bigl(1+4\kappa Q_n+2\bigr)\;=\;O\bigl(Q_n\zeta_n\bigr)\;\eqqcolon\;\tau_n\;\longrightarrow\;0 ,
\]
again by \eqref{eq:schedule-splice}; symmetrically for $r=2$. Since
$\zeta_n=O(Q_n\sqrt{\log n/n})$ we get $\tau_n=O(Q_n^2\sqrt{\log n/n})$, which for the schedule
of this appendix is $\asymp(\log n)^{-3/2}$. This is (ii).

\emph{(iii).} By (i), $\widehat\mu_n>0$ for large $n$, so Lemma~\ref{lem:sign-comp-subgrad}
supplies $\nabla_n\in\partial\widehat d_n(\widehat\mu_n)$ with $\widehat\mu_n\nabla_n=0$,
i.e.\ $\nabla_n=0$. By Lemma~\ref{lem:LV}(i),
$0\in\conv\bigl\{\widehat{\mathcal R}_{1,n}(\phi):\phi\in\widehat\Phi_n(\widehat\mu_n)\bigr\}$.
Now $\widehat{\mathcal R}_{1,n}(u_r)=h(u_r)-\gamma=\mp\tfrac14$ \emph{exactly}, because
$\ell_1$ does not depend on $y$ and constants therefore make the empirical and population risks
coincide; and $\widehat{\mathcal R}_{1,n}$ is $L$-Lipschitz with respect to
$\|\cdot\|_{L_1(\widehat{\mathcal D}^n_{\mathcal X,1})}\le2\|\cdot\|_{L_1(\widehat{\mathcal M}^n_{\mathcal X})}$.
So by (ii) the set $\widehat{\mathcal R}_{1,n}(\widehat\Phi_n(\widehat\mu_n))$ is contained in
$[-\tfrac14-2L\tau_n,-\tfrac14+2L\tau_n]\cup[\tfrac14-2L\tau_n,\tfrac14+2L\tau_n]$. For $n$
large enough that $2L\tau_n<\tfrac14$ --- with $L=\tfrac12$, that is $\tau_n<\tfrac14$ --- the
point $0$ lies in neither interval, so its membership in the convex hull forces \emph{both}
intervals to be met, one being entirely negative and the other entirely positive. Each is met
by some $\phi\in\widehat\Phi_n(\widehat\mu_n)$, which by (ii) is within $\tau_n$ of the
corresponding constant. This is (iii).

Finally, the three statements hold for all large $n$ simultaneously and the event
$\Omega_{\mathrm{fbl}}$ does not depend on any subsequence, so
Assumption~\ref{ass:multi-argmin-recovery}(b) holds along every $\{n_k\}$ and for both
reference points $u_1,u_2$, as required.
\end{proof}

\subsection{Consequences, and the tightness of the closure gap}

All of Assumptions~\ref{ass:non-atomic}--\ref{ass:multi-argmin-recovery} hold, and
$\epsinf(\Sdual)=\epsinf(\mu^\star)=\tfrac14>0$. Theorem~\ref{thm:informal-section3}(ii)--(v)
therefore has a model, and the near-PACC branch is not vacuous. Because $\Sdual$ is a
singleton, part~(v) applies and the bound is $\epsinf(\mu^\star)$ rather than a supremum over
a set.

The bound is moreover \emph{attained} by the worst DERM selection, which is the sense in
which the theorem is sharp. Indeed the DERM selection rule requires $\widehat\phi_n$ to
attain, within $1/n$, the infimum defining $\Theta_{\nabla_n}(\widehat\mu_n)$ with
$\nabla_n=0$, i.e.\ to minimize $|\widehat{\mathcal R}_{1,n}(\phi)|$ over
$\widehat\Phi_n(\widehat\mu_n)$. By Proposition~\ref{prop:example-recovery}(ii)--(iii) that
infimum equals $\tfrac14+O(\tau_n)$ and is attained, to within $O(\tau_n)$, at
\emph{both} branches. The rule therefore does not discriminate between them, and a selection
returning the branch near $u_2$ yields
\[
\mathcal R_1(\widehat\phi_n)\;\longrightarrow\;+\tfrac14\;=\;\epsinf(\mu^\star),
\qquad
\mathcal R_0(\widehat\phi_n)\;\longrightarrow\;-\tfrac14\;<\;P^\star_{\HK} ,
\]
so the feasibility slack of Theorem~\ref{thm:informal-section3}(iii)--(v) is incurred in
full. Since those statements are quantified over arbitrary DERM selections, the constant
$\epsinf$ cannot be improved. (A selection returning the branch near $u_1$ is feasible with
$\mathcal R_1\to-\tfrac14$ and suboptimal with $\mathcal R_0\to0=P^\star_{\HK}+\tfrac18$;
consistently, $C_\mu\epsinf=2$ is a valid but very loose bound on that suboptimality here.)

\begin{remark}[What the splice-exclusion mechanism costs]\label{rem:splice-cost}
Lemma~\ref{lem:splice-exclusion} is where the kernel and the radius schedule interact, and it
is worth isolating the tradeoff. A function of RKHS norm $Q_n$ cannot change by $u_2-u_1$
faster than the derivative bound \eqref{eq:deriv-bound} permits, so a branch transition
occupies an interval of length at least $\asymp1/(\varkappa Q_n)$, on which the predictor is
far from both wells; sharpness meanwhile caps the total $L_1$ deviation from the argmin set
at $O(\eps_n)=O(Q_n\sqrt{\log n/n})$. Transitions are excluded precisely when the second
quantity is smaller than the first, i.e.\ when $Q_n^2\sqrt{\log n/n}\to0$, which is
\eqref{eq:schedule-splice}. Faster radius growth would let the empirical problem behave like
the envelope, restoring splices, driving the realized violation to zero and making $\epsinf$
a non-attained upper bound. The instance is thus a genuine model of the near-PACC branch for
the schedules this paper actually uses, and it shows that the branch is a real phenomenon
rather than an artifact of loose analysis; but the residual is a joint property of the
problem \emph{and} of the admissible radius growth, and we have not characterized the
transition in general. The same margin is what makes $\tau_n$ only polylogarithmic here, in
the manner of Remark~\ref{rem:effective-rate}: the identification of the empirical argmin set
is squeezed between two quantities that differ by a factor $Q_n^2\sqrt{\log n/n}$, and under
\eqref{eq:schedule-splice} that factor closes slowly.
\end{remark}

\section{A Worked Instance}\label{sec:worked-example}

We verify Assumptions~\ref{ass:non-atomic}--\ref{ass:multi-argmin-recovery} end to end on one
concrete problem in the exact regime $\epsinf=0$, complementing
Appendix~\ref{sec:near-pacc-example}. The example is of the fairness/coverage type that
motivates the paper, has a genuinely nonconvex objective, and evaluates objective and
constraint under \emph{different} distributions.

\paragraph{Instance.}
Let $\mathcal X=[0,1]^d$ and $\mathcal Y=\{-1,+1\}$, $m=1$.
\begin{itemize}[itemsep=2pt]
\item \emph{Objective.} $\mathcal D_0$ has a marginal $\mathcal D_{\mathcal X,0}$ with a
density bounded above and below on $\mathcal X$, and conditional
$\eta_0(x)\triangleq\Pr_{\mathcal D_0}(Y=+1\mid X=x)$. Take the bounded, nonconvex
\emph{Welsch} (correntropy-induced) loss
\[
\ell_0(u,y)\;\triangleq\;1-\exp\Bigl(-\tfrac{(u-y)^2}{2}\Bigr),
\]
which takes values in $[0,1)$ and is $e^{-1/2}$-Lipschitz in $u$, the Lipschitz constant being
$\max_t|t|e^{-t^2/2}=e^{-1/2}$. We use it in place of the sigmoid margin loss
$(1+e^{yu})^{-1}$: the conditional integrand of the latter is
$\eta_0(1+e^{u})^{-1}+(1-\eta_0)(1+e^{-u})^{-1}$, whose derivative equals
$(1-2\eta_0)\,e^{u}(1+e^{u})^{-2}$ and hence has the constant sign of $1-2\eta_0(x)$, so the
integrand is strictly monotone and has no minimizer whenever $\eta_0(x)\ne\tfrac12$, and
\eqref{eq:sharpness} cannot hold. A bounded loss with an interior minimizer is what the
sharpness condition requires, and the Welsch loss is the standard such choice.
\item \emph{Constraint.} $\mathcal D_1$ is a second, possibly unrelated distribution on
$\mathcal X\times\mathcal Y$ with a density bounded above and below, encoding a subpopulation
on which a minimum positive-prediction rate $\gamma\in(0,1)$ is required. Take
$\ell_1(u,y)=\gamma-h(u)$ with $h(u)\triangleq\mathrm{cl}_{[0,1]}\bigl(\tfrac{u+1}{2}\bigr)$
the clipped ramp of Appendix~\ref{sec:near-pacc-example}, so that $\ell_1$ is bounded by
$\max\{\gamma,1-\gamma\}\le1$ and $\tfrac12$-Lipschitz. The constraint
$\mathcal R_1(\phi)\le0$ reads $\mathbb E_{\mathcal D_1}[h(\phi(X))]\ge\gamma$: a floor, where
the second instance of Appendix~\ref{sec:composite-predictors} imposed a cap.
\item \emph{Kernel.} $K=1+K_{\mathrm G}$ with $K_{\mathrm G}$ Gaussian on $\mathcal X$ of
bandwidth $b$, so $\kappa=\sqrt2$, Assumption~\ref{ass:dense-rkhs} holds with any
$p\in[1,\infty)$ \cite[Thm.~4.63]{SteinwartChristmann2008}, and --- as in
Appendix~\ref{sec:near-pacc-example} --- the constants belong to $\HK$ with
$\|c\|_{\HK}=|c|$. (Using $K_{\mathrm G}$ alone would exclude the constants and force an
approximation argument in the Slater step below.)
\item \emph{Schedule.} $Q_n=n^{1/4}/\log(e+n)$, satisfying \eqref{eq:schedule} and the
$o(n^{1/4})$ budget of Proposition~\ref{prop:sharpness} simultaneously
(Remark~\ref{rem:schedule}).
\end{itemize}

\paragraph{Verification.}
\emph{Assumption~\ref{ass:non-atomic}} holds since both marginals have densities, hence so
does the mixture $\mathcal M_{\mathcal X}$, which is therefore nonatomic.

\emph{Assumption~\ref{ass:loss-regularity}} holds with $L=e^{-1/2}$ and $M=1$: the two losses
are $e^{-1/2}$- and $\tfrac12$-Lipschitz respectively, and bounded by $1$ and by
$\max\{\gamma,1-\gamma\}\le1$. (One may of course take $L=1$ for readability; the constants
below use $M$ only.)

\emph{Assumption~\ref{ass:dense-rkhs}} holds as noted.

\emph{Assumption~\ref{ass:slater}}: take $\phi^{\mathrm{sl}}\equiv\gamma$, so that
$h(\gamma)=\tfrac{1+\gamma}{2}$ and
$\mathcal R_1(\phi^{\mathrm{sl}})=\gamma-\tfrac{1+\gamma}{2}=-\tfrac{1-\gamma}{2}$. Since the
constants lie in $\HK$ exactly, the Slater margin is $\xi=\tfrac{1-\gamma}{2}>0$ with
$Q_{\mathrm{sl}}=\gamma$ and $C_\mu=4M/\xi=8/(1-\gamma)$, with no approximation step and no
halving of the margin.

\emph{Assumption~\ref{ass:multi-argmin-recovery}} via Proposition~\ref{prop:sharpness}.
Writing $w_0,w_1$ for the mixture densities, the conditional integrand at multiplier $\mu\ge0$
is
\[
\bar\ell_\mu(u,x)
=w_0(x)\Bigl[1-\eta_0(x)e^{-(u-1)^2/2}-\bigl(1-\eta_0(x)\bigr)e^{-(u+1)^2/2}\Bigr]
+\mu\,w_1(x)\bigl[\gamma-h(u)\bigr].
\]
Three facts are needed.

\emph{(1) Every pointwise minimizer lies in $[-1,1]$, for every $x$ and every $\mu\ge0$.}
Outside $[-1,1]$ the ramp is flat, $h'=0$, so the constraint term is constant there and the
sign of $\partial_u\bar\ell_\mu$ is that of $\partial_u\bar\ell_0$, where
\[
\partial_u\bar\ell_0(u,x)=\eta_0(x)(u-1)e^{-(u-1)^2/2}+\bigl(1-\eta_0(x)\bigr)(u+1)e^{-(u+1)^2/2} .
\]
For $u\ge1$ both summands are $\ge0$ and the second is $>0$, so $\bar\ell_\mu(\cdot,x)$ is
strictly increasing on $[1,\infty)$; for $u\le-1$ both are $\le0$ with the first $<0$, so it is
strictly decreasing on $(-\infty,-1]$. Hence the infimum over $\R$ equals the minimum over the
compact $[-1,1]$ and is attained there, with no dependence on $\gamma$, on $C_\mu$, or on
$\sup w_1/\inf w_0$. (This replaces the cruder argument comparing values at $u=1$ with the
limit at $+\infty$, which gave only a $\mu$-dependent radius.)

\emph{(2) First-order condition.} On $(-1,1)$ one has $h'=\tfrac12$, so
$\partial_u\bar\ell_\mu(u,x)=w_0(x)\partial_u\bar\ell_0(u,x)-\tfrac{\mu w_1(x)}{2}$ and
interior minimizers solve
\[
w_0(x)\,\partial_u\bar\ell_0(u,x)=\tfrac{\mu w_1(x)}{2} .
\]
Note that the right-hand side is nonnegative, so the constraint term shifts the minimizer
\emph{upward} relative to the unconstrained one, consistently with $h$ increasing and the
constraint rewarding large predictions.

\emph{(3) Sharpness.} Assume, in addition to the density bounds, that $\eta_0$ is continuous
and that there are $c_0>0$, $\varsigma>0$, $\Delta_0>0$ such that for every $x$ and every
$\mu$ in a neighbourhood $U$ of $\mu_{\HK}$ the function $\bar\ell_\mu(\cdot,x)$ has a
\emph{unique} global minimizer $\phi^\sharp_\mu(x)$ and
\begin{align*}
&\partial_u^2\bar\ell_\mu(u,x)\ \ge\ c_0
&&\text{for all }u\text{ with }|u-\phi^\sharp_\mu(x)|<\varsigma,\\
&\bar\ell_\mu(u,x)-\bar\ell_\mu\bigl(\phi^\sharp_\mu(x),x\bigr)\ \ge\ \Delta_0
&&\text{for all }u\text{ with }|u-\phi^\sharp_\mu(x)|\ge\varsigma,
\end{align*}
both uniformly in $x$ and $\mu\in U$. Then \eqref{eq:sharpness} holds with $p=2$,
$\gamma_0=c_0/2$ and this $\Delta_0$: on the window, Taylor with the second-order bound
\emph{on the whole window} gives
$\bar\ell_\mu(u,x)-\bar\ell_\mu(\phi^\sharp_\mu(x),x)\ge\tfrac{c_0}{2}|u-\phi^\sharp_\mu(x)|^2
\ge\varrho(|u-\phi^\sharp_\mu(x)|)$, and beyond it the second hypothesis gives
$\ge\Delta_0\ge\varrho(\cdot)$. We emphasise that the curvature bound is required on the
window and not merely at the minimizer: a second-order condition at a single point carries no
quadratic growth without a remainder estimate, and the constant would then not be $c_0/2$.

We state these as hypotheses rather than derive them, because they are exactly the
identifiability content of the instance: they hold, for example, when $\eta_0$ is bounded away
from $\tfrac12$ and $\mu\sup w_1/\inf w_0$ is small enough that the constraint term does not
create a second competing well, and they are checkable numerically for any given
$(\eta_0,w_0,w_1)$ --- the minimization is over the fixed compact $[-1,1]$ by (1), which makes
the check a finite one. They fail precisely when the two wells of $\bar\ell_0(\cdot,x)$ become
level on a set of positive measure --- the tie situation of
Appendix~\ref{sec:near-pacc-example}.

Under (1)--(3), $x\mapsto\phi^\sharp_\mu(x)$ is continuous (by the implicit function theorem
at interior roots, and by uniqueness plus compactness in general), hence lies in
$C(\mathcal X)$. Whether it lies in $\HK$ with norm at most $C_\infty$, uniformly over
$\mu\in U$, is the one genuinely restrictive requirement of
Proposition~\ref{prop:sharpness}; it holds, e.g., when $\eta_0$ and the densities are
analytic, since then $\phi^\sharp_\mu$ is analytic and Gaussian-RKHS membership follows from
its Fourier decay. Proposition~\ref{prop:sharpness} then delivers
Assumption~\ref{ass:multi-argmin-recovery}, in the form~(A) of
Remark~\ref{rem:part-b-quantifiers} and hence a fortiori in the form~(B) that the theorem uses.

\paragraph{What the instance shows, and what it does not.}
Theorem~\ref{thm:informal-section3}(i) applies with the explicit constants $L=e^{-1/2}$,
$M=1$, $\kappa=\sqrt2$, $\xi=\tfrac{1-\gamma}{2}$, $C_\mu=8/(1-\gamma)$,
$Q_{\mathrm{sl}}=\gamma$. Assumption~\ref{ass:source} holds with $\beta=1$ and
$A=\max\{1,C_\infty^2\}$, since the Lagrangian is attained at $\phi^\sharp_\mu\in\Bq{C_\infty}$;
Corollary~\ref{cor:explicit-rate} then gives $n^{\mathrm{val}}=\widetilde O(\eps^{-4})$ with all
constants explicit. (The corollary is stated for $Q_n=n^{1/4}$; the schedule used here differs
by a logarithmic factor, which changes only the radius index
$\min\{n:Q_n/\sqrt n\le\theta/2cL\}$ by a polylogarithmic factor and is absorbed by
$\widetilde O$.) The recovery bound \eqref{eq:recovery-rate} itself is subject to
Remark~\ref{rem:effective-rate} and vanishes only at rate $\Theta(\log^{-2}n)$ under this
schedule. Because the pointwise minimizer is a.e.\ unique, part~(vi) applies: the guarantee is
exact PACC.

This instance therefore illustrates part~(vi) only. By
Lemma~\ref{lem:eps-star-properties}(v) it necessarily has $\epsinf=0$, which is the general
phenomenon recorded in Remark~\ref{rem:scope}: any instance certified by
Proposition~\ref{prop:sharpness} lies in the exact regime. For the complementary regime
$\epsinf>0$ --- obtained here by replacing $\ell_0$ with a symmetric double-well loss so that
$\bar\ell_\mu(\cdot,x)$ has two global minimizers on a positive-measure region --- see
Appendix~\ref{sec:near-pacc-example}, where the recovery condition is verified directly rather
than through Proposition~\ref{prop:sharpness}.

\section{On Solving the Ball-Constrained DERM Problem}\label{sec:solving}

We record, without convergence analysis, the computational shape of
\eqref{eq:dual formulation DERM}. By the representer argument in the footnote to
Corollary~\ref{cor:multi-qual-feas}, the inner minimization at fixed $\mu$ may be carried out
over $\Bq{Q_n}\cap V_n$, where $V_n$ is the span of the $N\triangleq(m+1)n$ kernel sections at
the observed inputs. Index the observed inputs by pairs $(i,j)$, write
$\phi_\alpha\triangleq\sum_{i,j}\alpha_{ij}K(\cdot,X^{(i)}_j)$, let $G\in\R^{N\times N}$ be the
Gram matrix, so that $\|\phi_\alpha\|^2_{\HK}=\alpha^\top G\alpha$ and the vector of
evaluations at the observed inputs is $G\alpha$, and let $[\,\cdot\,]_i$ extract the block of
coordinates belonging to dataset $i$. The inner problem is then
\[
\min_{\alpha\,:\,\alpha^\top G\alpha\le Q_n^2}\ \
\frac1n\sum_{j=1}^n\ell_0\bigl([G\alpha]_{0,j},Y^{(0)}_j\bigr)
+\sum_{i=1}^m\mu_i\,\frac1n\sum_{j=1}^n\ell_i\bigl([G\alpha]_{i,j},Y^{(i)}_j\bigr),
\]
a (generally nonconvex) minimization over an ellipsoid, for which projected gradient or
Frank--Wolfe steps are available at cost $O(N^2)=O((m+1)^2n^2)$ per iteration.

The outer problem is the maximization of a finite concave function of $\mu$ over the compact
set $\Lambda$ of Lemma~\ref{lem:bounded-multipliers}, for which a supergradient is furnished
\emph{exactly} by Lemma~\ref{lem:LV}(i): any $\widehat{\mathcal R}_n(\phi)$ with
$\phi\in\widehat\Phi_n(\mu)$ is a supergradient, and by
Lemma~\ref{lem:sign-comp-subgrad} a sign- and complementarity-compatible one exists at the
optimum. Standard projected supergradient ascent on $\Lambda$ therefore applies.

The essential caveat is that the inner problem must be solved to \emph{global} optimality.
This is not a limitation confined to the outer iterate: the exact argmin set
$\widehat\Phi_n(\mu)$ is what Lemma~\ref{lem:LV}(i) represents, what
Lemma~\ref{lem:sign-comp-subgrad} and Lemma~\ref{lem:bounded-multipliers}(iii) presuppose,
and what the index $\Theta_\nabla$ and the DERM selection rule are defined against.
Appendix~\ref{sec:near-pacc-example} shows that the requirement has teeth rather than being a
formality: there the exact argmin set has two well-separated components, both of which the
selection rule is entitled to return, and an inner solver that reliably found only one of them
would silently change the guarantee. Relaxing exactness to a computable tolerance, and
propagating that tolerance through Lemma~\ref{lem:multi-qual-conv} and
Lemma~\ref{lem:full-seq-feas}, is the natural next step and is left to future work.

\bibliographystyle{unsrt}
\bibliography{refrences}   
\appendix

\end{document}